\documentclass[11pt,a4paper]{article}

\usepackage[margin=1in]{geometry}

\usepackage[utf8]{inputenc} 
\usepackage[T1]{fontenc}    
\usepackage[numbers]{natbib} 
\usepackage{hyperref}       
\usepackage{url}            
\usepackage{booktabs}       
\usepackage{amsfonts}       
\usepackage{nicefrac}       
\usepackage{microtype}      
\usepackage{xcolor}         
\usepackage{amsmath,amssymb,amsthm} 
\usepackage{graphicx}       
\usepackage{titletoc}       

\usepackage{microtype}
\usepackage{graphicx}
\usepackage{subfigure}
\usepackage{booktabs} 
\usepackage{multirow}
\usepackage{authblk}

\usepackage{xcolor}
\definecolor{obsOrange}{HTML}{E6B34A}   
\definecolor{latBlue}{RGB}{76,120,168}
\definecolor{targetGreen}{RGB}{84,162,75}
\definecolor{edgePurple}{HTML}{A987B3}  

\usepackage{amsmath}

\DeclareMathOperator*{\argmin}{arg\,min}

\usepackage{caption}
\usepackage{tikz}
\usetikzlibrary{positioning, fit}
\usetikzlibrary{backgrounds}
\usetikzlibrary{tikzmark}
\usetikzlibrary{calc}
\usetikzlibrary{arrows,shapes,positioning,shadows,trees,mindmap}
\usetikzlibrary{arrows.meta}

\usepackage{xcolor}            
\usepackage[normalem]{ulem}    
\usepackage{titletoc}          
\usepackage{tikz}

\usepackage{hyperref}

\usepackage[ruled,vlined,linesnumbered]{algorithm2e}
\DontPrintSemicolon
\SetKwInput{KwIn}{Input}
\SetKwInput{KwOut}{Output}
\SetKw{KwRet}{return}

\SetCommentSty{algcommentstyle} 
\SetKwComment{tcc}{\textcolor{green!50!black}{\(\triangleright\)}\ }{}

\newcommand{\ExoSet}{\mathcal{V}}
\newcommand{\StratExoSet}{\mathcal{V}_L}
\newcommand{\Xset}{\mathcal{X}}

\newcommand{\Lcdf}{L_{cdf}}

\newcommand{\U}{V}
\newcommand{\Objectivev}{J_v}
\newcommand{\Objective}{J}

\newcommand{\Idx}[2]{\mathcal{J}_{#1,#2}}

\usepackage[utf8]{inputenc} 
\usepackage[T1]{fontenc}    
\usepackage{hyperref}       
\usepackage{xr-hyper} 
\usepackage{url}            
\usepackage{booktabs}       
\usepackage{amsfonts}       
\usepackage{nicefrac}       
\usepackage{microtype}      
\usepackage{xcolor}         
\usepackage{bbm}
\colorlet{darkgreen}{green!45!black}
\hypersetup{
  colorlinks,
  citecolor=darkgreen,
  urlcolor=darkgreen}
\usepackage{hyphenat}

\usepackage{pifont}

\usepackage{graphicx}
\usepackage{subfigure}
\usepackage{enumitem}

\usepackage{amssymb}
\usepackage{mathtools}
\usepackage{amsthm}
\usepackage{thmtools, thm-restate}
\usepackage{thmtools, thm-restate}
\usepackage{wrapfig}

\usepackage[capitalize,noabbrev]{cleveref}
\crefname{property}{property}{Properties}
\Crefname{property}{Property}{Properties}

\theoremstyle{plain}
\declaretheorem{theorem}
\declaretheorem[name=Proposition]{proposition}
\declaretheorem[name=Lemma]{lemma}

\declaretheorem[sibling=theorem, numberwithin=section]{property}
\declaretheorem[name=Theorem,numbered=no]{theorem*}
\theoremstyle{definition}
\declaretheorem[name=Definition]{definition}
\declaretheorem[name=Assumption]{assumption}
\theoremstyle{remark}
\declaretheorem[sibling=theorem, numberwithin=section]{remark}

\usepackage[textsize=tiny]{todonotes}

\usepackage[utf8]{inputenc} 
\usepackage[T1]{fontenc}    
\usepackage{hyperref}       
\usepackage{url}            
\usepackage{booktabs}       
\usepackage{amsfonts}       
\usepackage{nicefrac}       
\usepackage{microtype}      
\usepackage{xcolor}         

\title{Counterfactually Fair Regression via Optimal Transport}

\author[1, 3, 4]{Marie Generali Lince \thanks{\texttt{marie.generali@ensae.fr}}}
\author[2]{Solenne Gaucher \thanks{\texttt{solenne.gaucher@polytechnique.edu}}}
\author[1]{Jill-Jênn Vie \thanks{\texttt{jill-jenn.vie@inria.fr}}}
\author[4]{Patrick Loiseau \thanks{\texttt{patrick.loiseau@inria.fr}}}

\affil[1]{Inria, Soda team}
\affil[2]{CMAP, CNRS, Ecole Polytechnique, Institut Polytechnique de Paris}
\affil[3]{CREST, ENSAE, IP Paris}
\affil[4]{Inria, FairPlay joint team}

\begin{document}

\maketitle

\begin{abstract}
  We consider the problem of learning a counterfactually fair regressor.
  We adopt a causal uncertainty view in which counterfactual fairness is defined with resampled noise.
  We focus on obtaining theoretical fairness guarantees for a new post-processing estimator.
  We begin by showing that counterfactual fairness is equivalent to satisfying demographic parity conditional on the latent variable.
  This allows us to provide a closed-form expression of the optimal fair regressor via a barycentric quantile map.
  In order to handle continuous latent variables, we propose a discretized post-processing method.
  Then, under mild regularity assumptions, we prove high-probability finite-sample fairness guarantees for our estimator, providing an unfairness decay at rate $\tilde O(n^{-1/3})$, and establishing a matching risk bound of order $\tilde O(n^{-1/3})$. We provide a matching lower bound on the excess risk of almost fair predictions.
  Finally, we extend our results to the setting of relaxed counterfactual fairness. 
  We validate our approach on real-world and synthetic data. 
      
  \end{abstract}
  
  \section{Introduction}
  
  \noindent
  Machine learning (ML) is increasingly used in high-stakes domains, motivating a rich literature on fair ML \citep{barocas-hardt-narayanan} that aims to mitigate disparate impact or treatment across demographic groups defined by sensitive attributes (e.g., gender).
  In this work, we focus on regression, where the task is to predict a real-valued outcome.
  As a running example, we consider student evaluation: a predictor assigns admission scores based on features such as exam performance and other achievements.
  Our goal is to ensure that the predictor is \emph{fair}. 
  We work in the awareness setting: the sensitive attribute is available at training and inference time and may be used explicitly to enforce fairness constraints \citep{dwork2011fairness}.

  \noindent
  A prominent fairness notion is \emph{demographic parity} (DP), which requires marginal identical prediction distributions across groups. 
  However, such marginal criteria can miss important forms of unfairness \citep{Dhacking,hardt2016equalityopportunitysupervisedlearning}, are sensitive to gerrymandering \citep{gerrymandering,crenshaw2013demarginalizing,pmlr-v81-buolamwini18a}, and to relaxation pitfalls \citep{TooRelaxedToBeFair}. We instead adopt \emph{counterfactual fairness} (CF) \cite{kusner2018counterfactual}.
  Formally, CF requires predictions to be invariant to interventions on sensitive attributes. 
  In practice, complete causal graphs are often unavailable \cite{When_worlds_collide}. We adopt a “causal uncertainty” perspective—specifically, the level-2 setting of \cite{kusner2018counterfactual}—assuming access to a latent ability proxy without specifying the full causal model. 

  \noindent
  Most prior CF regression methods are in-processing approaches that require access to training data and often lack guarantees. 
  We propose a \emph{post-processing} algorithm which recalibrates a pre-trained predictor without retraining. 
  This is crucial when models or datasets are third-party or proprietary or when training data is unavailable.
  Our post-processor comes with finite-sample statistical guarantees.
  
  \paragraph{Contributions.} Our contributions are as follows:
  \begin{itemize}[leftmargin=10pt, topsep=-4pt, partopsep=0pt, parsep=0pt, itemsep=0.3em, beginpenalty=10000]
    \setlength\itemsep{0.3em}
    \setlength{\parskip}{0pt}
    \setlength{\topsep}{-5pt}
    \item  \textbf{CF reduces to \textit{conditional} DP.} We show that under causal uncertainty and awareness, counterfactual fairness is equivalent to demographic parity \emph{conditional} on the ability: within each ability level, group score distributions must coincide. 
    \item \textbf{Closed-form CF-optimal regressor.} We derive the CF-optimal predictor in closed form via a barycentric quantile construction.
    \item \textbf{Practical post-processor with finite-sample guarantees.} 
    We propose a model-agnostic post-processing method requiring no retraining and no labels for calibration, with high-probability finite-sample guarantees: excess risk and unfairness decay both as $\smash{\tilde O(n^{-1/3})}$ in the number $n$ of post-processing samples, and establish matching lower bounds (up to log factors). 
    \item \textbf{Relaxed CF with explicit tuning.}  We extend to relaxed CF and provide explicit tuning rules to meet a prescribed unfairness budget, accounting for both relaxation and estimation error.
\end{itemize}

  \section{Related work}
  \label{sec:related-work}

  \textbf{Counterfactual fairness.}
  Causality-based fairness notions, particularly \textit{counterfactual fairness} have attracted substantial attention \citep{chiappa2019path,creager2020causalmodelingfairnessdynamical,makhlouf2022surveycausalbasedmachinelearning,When_worlds_collide}.
  The dominant approach relies on \emph{structural causal models} (SCMs) \citep{Pearl_2009,pearl2018book}, which provide a formal framework for counterfactual reasoning.
  Within this framework, CF has been studied under known causal structure \citep{kusner2018counterfactual,johansson2018learningrepresentationscounterfactualinference,ijcai2019p199_CF,zhou2025counterfactualfairnesscombiningfactual} or partially known structure \citep{duong2023achievingcounterfactualfairnessimperfect}.
  However, complete SCMs are rarely available in practice. 
  This motivated alternative approaches: \emph{potential-outcomes} formulations that sidestep full causal graphs \citep{coston2020counterfactualriskassessmentsevaluation}, \emph{optimization-based} methods that treat counterfactuals as search problems \cite{ML_CF_Verma}, and \emph{generative/representation} methods that learn to synthesize counterfactuals without explicit causal structure \citep{Ma_2023,zuo2023counterfactuallyfairrepresentation,robertson2024fairpfntransformerscounterfactualfairness}.  Despite progress, challenges remain: most CF methods require training data access and model retraining, and often lack statistical guarantees, or rely on restrictive SCM assumptions to obtain them \citep{zhou2025counterfactualfairnesscombiningfactual}. Moreover, few works address the utility-fairness tradeoff via relaxed CF constraints \citep{When_worlds_collide}, and counterfactual fairness remains difficult to deploy in practice.

 \paragraph{Demographic parity.} A complementary line of work considers regression algorithms that enforce \emph{demographic parity} (DP) in a \emph{post-processing} regime — that is, by applying fairness corrections to an already-trained estimator, while retaining provable statistical guarantees.
  Optimal Transport (OT) has become particularly prominent for DP enforcement \citep{Silvia_Ray_Tom_Aldo_Heinrich_John_2020}. 
  The key insight is that aligning group distributions can be formulated as finding Wasserstein barycenters. 
  This OT viewpoint yields geometric characterizations of the fairness-constrained problem — notably projections and geodesics in Wasserstein space — and naturally gives rise to plug-in recalibration procedures that can be applied to any trained regressor without retraining \citep{agarwal2019fairregressionquantitativedefinitions,plugin_regression_recalibration,gouic2020projectionfairnessstatisticallearning,chiappa2021continuousOT,chzhen2020fairregressionwassersteinbarycenters}, and provide a framework for calibrated risk–fairness trade-offs in regression \citep{chzhen2022minimaxframeworkquantifyingriskfairness}.

  \paragraph{Connection between CF and DP.} 
  To address the need for post-processing algorithms for exact and relaxed CF with rigorous statistical guarantees, we show that in the level-2 CF setting \citep{kusner2018counterfactual}, fixing the SCM reduces CF to \emph{conditional} DP -- a strictly stronger requirement than global DP. 
  This structural reduction allows us to leverage DP methodology to derive closed-form optimal predictors under both exact and relaxed constraints, which form the basis of our post-processing algorithms. 
  The CF-DP link has been source of debate in the literature \citep{rosenblatt2023counterfactualfairnessbasicallydemographic,silva2024counterfactualfairnessdemographicparity}, but only considering global DP; the latest result showing that global DP is insufficient for CF \citep{silva2024counterfactualfairnessdemographicparity}. 
  Our result is the first to connect (positively) CF to conditional DP and hence brings a new perspective to this debate.
  
  \section{Preliminary}
  
  \paragraph{Notation.}
  Bold letters (e.g., $\textbf{X}$, $\textbf{x}$) denote vectors, upper-case letters (e.g., $\textbf{X}$, $X$) random variables, 
  and lower-case letters (e.g., $\textbf{x}$, $x$) their realizations. 
  We write $[K] = \{1,\dots,K\}$ 
  and $\mathcal{P}_2(\mathbb{R})$ the set of probability measures on $\mathbb{R}$ with finite second moment.
  For all $\nu\in \mathcal{P}_2(\mathbb{R})$ we denote $F_\nu$ (\textit{resp.} $F^{-1}_\nu$) its c.d.f. (\textit{resp.} quantile function). 
 Finally, $X \stackrel{\smash{d}}{=} Y$ denotes equality in distribution.

  \subsection{Causal model}
  

  We consider \((\mathbf X,\U,S,Y)\in\Xset\times\ExoSet\times[K]\times\mathbb R\), where \(S\) is a discrete sensitive attribute with group weights \(w_s:=\mathbb P(S=s)\) for \(s\in[K]\). 
  The exogenous variable \(\U\in \ExoSet\) has law \(\nu\) and is independent of \(S\); for readability we assume $\ExoSet\!=\![0, 1]$ and \(\U\sim\mathrm{Unif}[0,1]\) (extensions to bounded densities on a compact support are straightforward). 
  The observed features \(\mathbf X\) are causally influenced by \((\U,S)\), and the outcome \(Y\) by \(\mathbf X\) (see \cref{fig:causal_graph}). 


  \begin{wrapfigure}{R}{0.37\textwidth}
    \vspace{-10pt} 
    \captionsetup{aboveskip=5pt,belowskip=0pt}
    \centering
    \begin{tikzpicture}[
      >={Stealth[length=1.6mm,width=1mm]},
      very thin,
      latent/.style   ={draw,circle,minimum size=8mm,inner sep=0pt,fill=edgePurple!50},
      observed/.style ={draw,circle,minimum size=8mm,inner sep=0pt,fill=obsOrange!50},
      prediction/.style ={draw,circle,minimum size=8mm,inner sep=0pt,fill=targetGreen!25}
    ]
      \node[latent]   (U)   at (-0.7, 1) {$\U$};
      \node[observed] (S)   at ( 0.7, 1) {$S$};
  
      \node[observed] (X)    at (0, 0) {$\mathbf{X}$};
      \node[prediction] (Y) at (0, -1.2) {$Y$};
  
      \draw[->] (U) -- (X);
      \draw[->] (S) -- (X);
      \draw[->] (X) -- (Y);
  
      \node[draw=none, inner sep=2pt, fit=(U)(S)(X)(Y)] (SCMbox) {};
  
      \matrix (legend) [right=8mm of SCMbox.east, yshift=6mm, anchor=west,
                        row sep=2mm, column sep=2mm, nodes={anchor=west, font=\footnotesize}]
      {
        \node[latent, minimum size=4mm] {};    & \node{unobserved}; \\
        \node[observed, minimum size=4mm] {};  & \node{observed}; \\
        \node[prediction, minimum size=4mm] {};& \node{target $Y$}; \\
      };
  
    \end{tikzpicture}
    \caption{Structural Causal Model.}
    \label{fig:causal_graph}
    \vspace{-10pt} 
  \end{wrapfigure}
  
  \paragraph{Illustration.} Throughout the paper, we consider the following example: $S$ is a gender; $\U$ is a student’s intrinsic ability; $\mathbf X$ aggregates
  coursework signals (e.g., homework, class participation of previous year) downstream of $(\U,S)$;
  and $Y$ is the future year course grade, downstream of $\mathbf X$. Both  $\mathbf X$  and $Y$ reflect systemic biases \citep{Colnet2025}.
  Two students with the same $\U$ can have different $Y$ due to this systemic bias and downstream randomness influencing $\mathbf X$ (e.g., stress handling).\\
  
  \noindent
    We adopt a non deterministic causal setting--\textit{causal uncertainty}--where $\U$ influences $\mathbf{X}$ in a non-deterministic manner.
    Under this assumption, the SCM is only approximately known, with unknown noise affecting the feature variable (see \cref{proof:DP_CF} for a discussion).
    Moreover, we make the following assumption.
    \begin{assumption}\label{ass:U_measurable}
      There exists a measurable map $\psi$ such that $\U= \psi(\mathbf{X}, S)$ a.s.
    \end{assumption}
    \vspace{-0.2cm}

    \noindent
    This assumption enables to recover the unobserved variable  $\U$ from observed features. 
    This is a common assumption in counterfactual fairness \citep{ kusner2018counterfactual, chiappa2018pathspecificcounterfactualfairness,zhou2025counterfactualfairnesscombiningfactual, tian2025counterfactualfairnessauxiliaryvariables} and is well-motivated in our context, as $\mathbf{X}$ (e.g., homework performance and class participation) provide substantial information about $\U$ (e.g., a student's underlying ability).
    In the following we assume that $\psi$ is known and do not address its estimation; in practice, $\psi$ can be estimated as in \citep{kusner2018counterfactual,ijcai2019p199_CF,Ma_2023,robertson2024fairpfntransformerscounterfactualfairness}.
  
  \subsection{Counterfactual fairness: a conditional invariance characterization.}
  
  Let $\mathcal{F}$ denote the class of all regression functions ${f:\mathbb{R}^d \times [K] \mapsto \mathbb{R}}$.
  For $f \in \mathcal{F}$ we denote $\hat{Y}= f(\mathbf{X}, S)$ the output of the predictor $f$.
  When writing $\hat Y_{S\leftarrow s'}$, we adopt a re-sampled-noise approach: (i) \textbf{abduction}---from $(\mathbf X=\mathbf x,S=s)$ we recover $v=\psi(\mathbf x,s)$; (ii) \textbf{action}—set $S:=s'$ and draw $\mathbf X'\sim \mathrm{Law}(\mathbf X\mid V=v,S=s')$; (iii) \textbf{prediction}—output $\hat Y_{S\leftarrow s'}:=f(\mathbf X',s')$. Counterfactual fairness is introduced by \cite{kusner2018counterfactual} and reformulated in \cite{When_worlds_collide}.
  
    \begin{definition}[Counterfactual fairness (CF) \citep{When_worlds_collide}] \label{def:CF}
      A predictor $\hat{Y} = f(\textbf{X}, S)$ is counterfactually fair if, for all $s, s' \in [K], \mathbf{x} \in \Xset$, $(\hat{Y}_{S \leftarrow s} \vert \mathbf{X}=\mathbf{x}, S=s) \stackrel{d}{=} (\hat{Y}_{S \leftarrow s'} \vert \mathbf{X}=\mathbf{x}, S=s).$
    \end{definition}
    \vspace{-3pt}
    \noindent
    Under this approach, CF requires that switching $S$ leaves the prediction distribution unchanged at fixed latent context $v$. This is natural: noise captures external disturbances (e.g., a distracting neighbor affecting homework completion), and the counterfactual is computed under a fresh noise realization (see Appendix~\ref{an:cf_operator} for a discussion).

    \begin{restatable}{proposition}{propCFDP}\label{prop:CF_DP}
      Under the SCM in Fig.\ref{fig:causal_graph}, CF (Def.~\ref{def:CF}) is equivalent to: for all $ v \in \ExoSet, s, s' \in [K]$,
      \begin{equation*}
        (f(\mathbf{X},S) \vert \U \!=\!v, S\!=\!s)\overset{d}{=}(f(\mathbf{X},S)\vert \U\!=\!v, S\!=\!s').
        \end{equation*}
    \end{restatable}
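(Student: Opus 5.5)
The plan is to compute, for each $s'$, the conditional law of the counterfactual prediction $\hat Y_{S\leftarrow s'}$ given $(\mathbf X=\mathbf x,S=s)$ explicitly from the three-step resampled-noise construction. I will then show that Definition~\ref{def:CF} reduces to equality of the kernels $\mu_{v,s'}:=\mathrm{Law}(f(\mathbf X,S)\mid \U=v,S=s')$ across $s'$.

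\textbf{Step 1 (law of the counterfactual).} Fix $(\mathbf x,s)$ and set $v=\psi(\mathbf x,s)$, which is well defined by Assumption~\ref{ass:U_measurable}. By the abduction/action/prediction steps, $\mathbf X'\sim\mathrm{Law}(\mathbf X\mid \U=v,S=s')$ and $\hat Y_{S\leftarrow s'}=f(\mathbf X',s')$. Since the fresh draw depends on $(\mathbf x,s)$ only through $v$, we get
\[
(\hat Y_{S\leftarrow s'}\mid \mathbf X=\mathbf x,S=s)\sim \mathrm{Law}(f(\mathbf X,s')\mid \U=v,S=s')=\mu_{v,s'} .
\]
This holds for $s'=s$ as well. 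In that case the ``counterfactual'' is a resampled version of the factual prediction, not $f(\mathbf x,s)$ itself, and this is exactly where the resampled-noise convention matters. I will state this explicitly and note that, because $S$ is conditioned on, $f(\mathbf X,S)=f(\mathbf X,s')$ on $\{S=s'\}$.

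\textbf{Step 2 (equivalence).} By Step 1, Definition~\ref{def:CF} reads $\mu_{\psi(\mathbf x,s),s}=\mu_{\psi(\mathbf x,s),s'}$ for all $\mathbf x,s,s'$.
\begin{itemize}
\item[($\Leftarrow$)] If $\mu_{v,s}=\mu_{v,s'}$ for all $v,s,s'$, then in particular this holds at $v=\psi(\mathbf x,s)$, which gives CF.
\item[($\Rightarrow$)] As $\mathbf x$ ranges over the support of $\mathbf X\mid S=s$, the value $\psi(\mathbf x,s)$ ranges over the support of $\U\mid S=s$. By independence $\U\perp S$, this support is all of $\ExoSet=[0,1]$ (up to null sets). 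Hence $\mu_{v,s}=\mu_{v,s'}$ for all $v$. Doing this for every pair $(s,s')$ yields the stated condition.
\end{itemize}

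\textbf{Main obstacle.} The delicate step is measure-theoretic. Conditional laws given $\U=v$ are only defined $\nu$-a.e., so ``for all $v$'' must be read as ``for $\nu$-a.e.\ $v$'' or for a fixed regular version of the conditional kernel. I will fix a regular conditional distribution $\kappa_{s'}(v,\cdot)$ of $\mathbf X$ given $(\U,S)=(v,s')$, which exists since the spaces are Polish, and define the counterfactual through it. Both sides are then statements about the same kernels, and the equivalence holds exactly when CF is also read for $P$-a.e.\ $\mathbf x$. The surjectivity in ($\Rightarrow$) uses that the pushforward of $\mathrm{Law}(\mathbf X\mid S=s)$ under $\psi(\cdot,s)$ equals $\mathrm{Law}(\U\mid S=s)=\nu$, so a.e.\ statements in $\mathbf x$ transfer to $\nu$-a.e.\ statements in $v$.
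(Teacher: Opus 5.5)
Your proposal is correct and follows essentially the same route as the paper. You first establish the key identity $\mathrm{Law}(\hat Y_{S\leftarrow s'}\mid \mathbf X=\mathbf x,S=s)=\mu_{f\mid \psi(\mathbf x,s),s'}$ from the abduction/action/prediction steps, obtain $(\Leftarrow)$ directly, and get $(\Rightarrow)$ by pushing $P_{\mathbf X\mid S=s}$ forward through $\psi(\cdot,s)$ to $P_{\U\mid S=s}=\nu$ (using $\U\perp S$) and taking a finite union of null sets; your explicit a.e.\ reading and use of regular conditional kernels matches, and slightly sharpens, the paper's appendix restatement, which likewise concludes only for $P_\U$-a.e.\ $v$.
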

\vspace{-0.2cm}
  \noindent
  Proposition~\ref{prop:CF_DP}--proved in Appendix~\ref{proof:DP_CF}--shows that, under the SCM in \cref{fig:causal_graph} with re-sampled noise and \cref{ass:U_measurable}, counterfactual fairness (CF) reduces to demographic parity (DP) conditional on the variable $V=\psi(\mathbf X,S)$. 
  Note that \emph{conditional} DP differs from \emph{marginal} DP \citep{DP_calders}, which requires $(f(\mathbf X,S)\mid S{=}s)\overset{\smash{d}}{=}{(f(\mathbf X,S)\mid S{=}s')}$. 
  Our result is compatible with \cite{silva2024counterfactualfairnessdemographicparity}: global DP is generally insufficient for  CF (see Section \ref{subsec:synthetic}). Crucially, recasting CF as conditional DP lets us leverage existing statistical fairness methods \citep{chzhen2020fairregressionwassersteinbarycenters, chzhen2022minimaxframeworkquantifyingriskfairness} to address CF problems and provides a constructive post-processing approach with finite-sample, high-probability guarantees.
  
  
  \section{Closed-form solution for counterfactually fair regression}\label{sec:optimal_CF_regression}
  In this section, we derive the optimal fair predictor minimizing the risk under the CF constraint from Proposition~\ref{prop:CF_DP}. 
  We first define the conditional risk with respect to $v$ and formalize the fair regression objective. 
  We denote by $f^*(\textbf{X}, S) \!:= \!\mathbb{E}[Y \vert \textbf{X}, S]$ the Bayes optimal predictor.
  All expectations are taken over $(\mathbf X,\U,S,Y)$ unless stated otherwise.
  For $v \in \ExoSet$ and $f \in \mathcal{F}$, the conditional excess risk is
  \begin{equation*}
    \mathcal{R}_v(f):= \sum_{s=1}^K w_s \mathbb{E}[(f(\mathbf{X}, S)-f^*(\mathbf{X}, S))^2 \mid \U = v, S = s].
  \end{equation*}
  Our goal is to minimize the population excess risk $ \mathcal{R}(f) = \int_\ExoSet \mathcal{R}_v(f)\, d\nu(v),$
  subject to the CF constraint (relaxed in \cref{sec:relaxed_case}). 
  Specifically, we seek the optimal predictor 
    \begin{equation}
      \label{eq:optimization_pb}
      f^*_0\in \argmin_{f \in \mathcal{F}} \left\{\mathcal{R}(f) : f\text{ is CF}\right\}.
      \tag{CF}
    \end{equation}
    \textbf{Regularity conditions.} 
    For a predictor $f \in \mathcal{F}$, $\mu_{f \vert {v, s}} := Law(f(\mathbf{X}, S) \mid \U = v, S = s)$ is the distribution of the prediction of $f$ conditional on ($\U=v$ and $S=s$) and $F_{f \vert v, s}$ (resp. $F^{-1}_{f \vert v, s}$) is its c.d.f (resp. its quantile function). 
   The following assumptions ensure that the problem is well-posed. 
  \begin{property}[Continuous predictions with bounded support]
    \label{ass:distrib_densities}
    There exists $M>0$ such that 
    for all $(v,s)\in \ExoSet\times [K]$,
    $\mu_{f \vert v,s}$ is non-atomic 
    and supported in $[-M, M]$.
    \end{property}

    \noindent
    Assuming boundedness of  $Y$ is common in regression, and it further implies that the optimal prediction function is itself bounded. 
    In particular, for all $v \in \ExoSet, s \in [K]$, we have $\mu_{f \vert {v, s}} \in \mathcal{P}_2(\mathbb{R})$. 
    Continuity of the prediction distribution is also a common assumption \citep{NEURIPS2020_ddd80877, chzhen2020fairregressionwassersteinbarycenters},
    as it ensures that fairness can be achieved through deterministic post-processing algorithms. 
  
    \begin{property}[Lipschitz conditional c.d.f.]
    \label{ass:lipschitz_cdf}
    There exists $\Lcdf>0$ such that for all $s\in [K]$ and $t \in \mathbb{R}$, the map
    $v \mapsto F_{f\vert v,s}(t)$ is $\Lcdf$-Lipschitz in $v \in \ExoSet$.
    \end{property}

    \noindent
    The smooth variation of the c.d.f. with respect to the exogenous variable $v$, guaranteed by Property~\ref{ass:lipschitz_cdf}, is essential to control the unfairness gap. 
    Indeed, if the distribution of the predictions can change arbitrarily for nearby values of $v$, then treating individuals with similar values of $v$ in a similar way does not guarantee fairness, and there is no hope of achieving CF. 
    This assumption is reasonable, for instance, when the distribution of $X$ within a group conditional on $V = v$ varies smoothly with $v$, and when the prediction function $f$ is sufficiently regular. 
    In our motivating example, this assumption is natural: the distribution of scores for individuals with similar skill levels should remain close.
  
  \paragraph{Closed form of the fair predictor.}
  For $v \in \ExoSet $, we define the unfairness at $V=v$ of a predictor $f$ as  
  \begin{equation}
    \label{def:unfairness}
    \mathcal{U}_v(f) := \min_{\mu \in \mathcal{P}_2(\mathbb{R})}\sum_{s=1}^K w_s \mathcal{W}_2^2(\mu_{f \vert v, s}, \mu),
  \end{equation}
  where $\mathcal{W}_2$ is the Wasserstein-2 distance (see \cref{an:wasserstein_reminder}). Now, satisfying the criterion in Proposition~\ref{prop:CF_DP} is equivalent to having $\mathcal{U}_v(f)=0$  $\nu$-almost everywhere. Thus, defining 
    $\mathcal{U}(f) := \int_\ExoSet \mathcal{U}_v(f)d\nu(v),$
  Problem~\eqref{eq:optimization_pb} is equivalent to the OT projection
  \begin{equation}
    \label{eq:OT_pb}
    f_0^* \in \argmin_{f \in \mathcal{F}} \left\{\mathcal{R}(f) : \mathcal{U}(f) =0\right\}. 
  \end{equation}
  Building on this formulation, we extend \cite{chzhen2020fairregressionwassersteinbarycenters,gouic2020projectionfairnessstatisticallearning} to obtain the closed-form solution in Proposition~\ref{prop:fair_optimal_predictor}, proved in Appendix.~\ref{proof:reduction_OT}. 
  
  \begin{proposition}\label{prop:fair_optimal_predictor}
  Assume $f^*$ satisfies Property~\ref{ass:distrib_densities}. 
  Then, the solution of \eqref{eq:optimization_pb} admits the closed form
  \begin{equation*}
    f_0^*(\mathbf{x}, s) = \left( \sum_{s'=1}^K w_{s'} F^{-1}_{{f^* \vert v,s'}} \right) \circ F_{{f^* \vert {v , s}}} (f^*(\mathbf{x}, s)), 
  \end{equation*}
  where $v= \psi(\mathbf{x}, s)$, and its risk verifies $\mathcal{R}(f_0^* )\leq \mathcal{U}(f^*)$.
  \end{proposition}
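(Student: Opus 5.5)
We reduce the problem, slice by slice in the latent variable, to the demographic-parity problem solved by \cite{chzhen2020fairregressionwassersteinbarycenters,gouic2020projectionfairnessstatisticallearning}. By Proposition~\ref{prop:CF_DP} and the reformulation \eqref{eq:OT_pb}, a predictor $f$ is CF if and only if, for $\nu$-a.e.\ $v$, the conditional laws $\mu_{f|v,s}$ coincide across $s\in[K]$. Since $\mathcal{R}(f)=\int_\ExoSet \mathcal{R}_v(f)\,d\nu(v)$, it suffices to minimize $\mathcal{R}_v(f)$ separately for each fixed $v$ under the constraint $\mu_{f|v,1}=\dots=\mu_{f|v,K}$. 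We then check that the pointwise minimizers glue into a measurable $f$; this holds because $v=\psi(\mathbf x,s)$ is measurable by \cref{ass:U_measurable}.

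\textbf{Step 1: lower bound for a fixed $v$.} Let $f$ be CF and write $\mu_v$ for the common law of $f(\mathbf X,S)$ given $(V=v,S=s)$. Conditionally on $(V=v,S=s)$, the pair $(f(\mathbf X,S),f^*(\mathbf X,S))$ is a coupling of $\mu_v$ and $\mu_{f^*|v,s}$. Hence
\[
\mathbb E[(f-f^*)^2\mid V=v,S=s]\ \ge\ \mathcal W_2^2(\mu_{f^*|v,s},\mu_v),
\]
and therefore
\[
\mathcal R_v(f)\ \ge\ \sum_s w_s\mathcal W_2^2(\mu_{f^*|v,s},\mu_v)\ \ge\ \mathcal U_v(f^*).
\]
On $\mathbb R$, the minimizer in \eqref{def:unfairness} is the Wasserstein barycenter $\bar\mu_v$, whose quantile function is $\sum_{s'}w_{s'}F^{-1}_{f^*|v,s'}$. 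This follows from the isometry $\mu\mapsto F_\mu^{-1}$ into $L^2([0,1])$, under which the barycenter is the weighted average of the quantile functions. That average is nondecreasing, so it is indeed a quantile function.

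\textbf{Step 2: attainment.} By Property~\ref{ass:distrib_densities}, $\mu_{f^*|v,s}$ is non-atomic. Hence, conditionally on $(V=v,S=s)$, the variable $F_{f^*|v,s}(f^*(\mathbf X,S))$ is uniform on $[0,1]$. Applying the quantile function of $\bar\mu_v$ to it gives a variable with law $\bar\mu_v$. So the displayed $f_0^*$ satisfies $\mu_{f_0^*|v,s}=\bar\mu_v$ for every $s$, which makes it CF by Proposition~\ref{prop:CF_DP}.

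It remains to show that $f_0^*$ attains the lower bound. The map $T_{v,s}=F^{-1}_{\bar\mu_v}\circ F_{f^*|v,s}$ is the monotone rearrangement, which is the optimal transport map from $\mu_{f^*|v,s}$ to $\bar\mu_v$ on $\mathbb R$. Thus
\[
\mathbb E[(f_0^*-f^*)^2\mid v,s]=\int_0^1\big(F^{-1}_{\bar\mu_v}(u)-F^{-1}_{f^*|v,s}(u)\big)^2du=\mathcal W_2^2(\mu_{f^*|v,s},\bar\mu_v),
\]
where we use $F^{-1}_{f^*|v,s}(F_{f^*|v,s}(t))=t$ for $\mu_{f^*|v,s}$-a.e.\ $t$. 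Summing with the weights $w_s$ gives $\mathcal R_v(f_0^*)=\mathcal U_v(f^*)$. Integrating over $\nu$ yields optimality and $\mathcal R(f_0^*)=\mathcal U(f^*)$, which in particular gives the stated inequality $\mathcal R(f_0^*)\le\mathcal U(f^*)$.

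\textbf{Main obstacle.} The delicate point is the measurability and joint regularity in $(v,\mathbf x)$ of $(\mathbf x,s)\mapsto F_{f^*|\psi(\mathbf x,s),s}(f^*(\mathbf x,s))$ and of the barycentric quantile map. This is needed for $f_0^*\in\mathcal F$ and for the conditional-law manipulations given $V=v$ to be legitimate. I would handle it by working with regular conditional distributions of $(f^*(\mathbf X,S))$ given $(V,S)$. These are jointly measurable kernels, and since $[0,1]$ and $\mathbb R$ are standard Borel, the c.d.f.\ $(v,t)\mapsto F_{f^*|v,s}(t)$ and the quantile function $(v,u)\mapsto F^{-1}_{f^*|v,s}(u)$ are jointly measurable, as monotone limits. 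A second, minor point is that the a.e.-$v$ qualifications from Proposition~\ref{prop:CF_DP} must be tracked so that all the identities above hold $\nu$-a.e.
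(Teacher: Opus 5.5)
Your proof is correct, and it reaches the result more directly than the paper does. The paper never proves this proposition on its own. It obtains it as the case $\alpha=0$ of Proposition~\ref{prop:closed_form_relaxed}, via the following steps:
\begin{itemize}
\item it works in the space $(\mathcal M,\bar d)$ of $v$-indexed families of measures, with $\bar d^{\,2}(a,b)=\int_{\ExoSet}\mathcal W_2^2(a_v,b_v)\,d\nu(v)$;
\item it shows this space has the barycenter property (Proposition~\ref{prop:barycenter_property_M}), using the same pointwise-in-$v$ minimization you use;
\item it checks (P1)--(P2) for the geodesic family $b^{(s)}_v=((1-t)\mathrm{Id}+t\,T_{v,s})_{\#}\mu_{f^*|v,s}$ with $t=1-\sqrt{\alpha}$, and applies the abstract geometric lemma of \cite{chzhen2022minimaxframeworkquantifyingriskfairness} (Lemma~\ref{lem:geom_lemma});
\item only then does it pass from measures to predictors, using the coupling bound of Lemma~\ref{lem:B1};
\item the identity $\mathcal R_v(f_0^*)=\mathcal U_v(f^*)$ is imported from Theorem~\ref{thm:risk_opt_unfairness_bayes}, applied at each $v$.
\end{itemize}

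At $\alpha=0$ the geodesic family is just the barycenter $\mu^*_v$ for every $s$, and the geometric lemma reduces to the definition of a barycenter. So your route loses nothing. In it, the exact constraint forces a common law $\mu_v$, and the coupling bound then gives $\mathcal R_v(f)\ge\mathcal U_v(f^*)$ at once. Attainment is checked by hand through the monotone rearrangement and $F^{-1}_{f^*|v,s}\circ F_{f^*|v,s}=\mathrm{Id}$, $\mu_{f^*|v,s}$-a.e. Your argument is also self-contained: you prove the risk identity rather than cite it. You also address the measurability of the glued predictor, which the paper treats only in passing.

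What the paper's heavier machinery buys is that the same proof handles the relaxed constraint $\mathcal U(f)\le\alpha\,\mathcal U(f^*)$. There the budget is a single integral over $v$ that couples the slices and no longer pins down a common law in each slice. Your slice-by-slice argument does not carry over to that case directly.

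Conversely, your route makes it easy to justify the definite article in ``the solution'', which neither argument spells out. Equality in your Step~1 forces $\mu_v=\bar\mu_v$, by uniqueness of the one-dimensional barycenter. It also forces the conditional law of $(f^*,f)$ to be an optimal coupling. That coupling is unique and induced by $T_{v,s}$, since $\mu_{f^*|v,s}$ is non-atomic. Hence every minimizer coincides with $f_0^*$ almost surely.
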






  \paragraph{Illustration.} In the student example, the fair predictor can be interpreted as follows. Consider a student with features $\mathbf{x}$ and sensitive attribute $s$. First, we compute $F_{{f^* \mid v , s}}\!\left(f^*(\mathbf{x}, s)\right)$,
  that is, the proportion of students within the same group and at the same value $v = \psi(\mathbf{x}, s)$ whose predicted grade does not exceed $f^*(\mathbf{x}, s)$. This step amounts to ranking the student relative to others in the same group with the same value of $v$. Next, the prediction is obtained by mapping this rank to the barycenter distribution, which corresponds to the average grade across groups for students with the same rank conditional on $v$. As a consequence, two students who are equally strong relative to their respective groups and intrinsic value $v$ receive the same fair prediction under this transformation.
  

  \section{Estimation of the fair regression function} \label{sec:discrete_post_processing}

  In this section, given an i.i.d.\ sample \(\mathcal D_n\!=\!\{(\mathbf X_i, V_i = \psi(X_i, S_i), S_i)\}_{i=1}^n\) and a black-box predictor \(f^{bb}\) approximating \(f^*\), 
  we construct a post-processing estimator of the CF-optimal regressor \(f_0^*\) from Prop.~\ref{prop:fair_optimal_predictor}. 
  Since fairness is defined \emph{conditionally} on a \emph{continuous} exogenous variable \(\U\), we partition \(\ExoSet\) into \(L\) intervals and, within each interval, map each group's prediction distribution to its Wasserstein barycenter via a closed-form monotone quantile map.
  This post-processor applies to any \(f^{bb}\) and makes explicit the discretization trade-off: larger \(L\) reduces bias while increasing estimation error. 
  We provide finite-sample guarantees at rate \(\tilde{O}(n^{-1/3})\) and a data-driven rule for choosing \(L\). 
  Moreover, we establish a lower bound (Thm.~\ref{thm:mse_lower_bound}) showing that the rate \(\tilde{O}(n^{-1/3})\) is tight for estimating $f^*_0$.

  \subsection{Post-processing estimator}\label{sec:form_estimator}
  Our estimator is constructed in two steps: discretizing the exogenous variable $\U$ into $L$ intervals, then estimating conditional laws and applying the closed-form quantile transform from Proposition~\ref{prop:fair_optimal_predictor} within each interval.
  
  \paragraph{Discretization of $\mathcal{V}$.} We partition $\ExoSet$ into $L$ uniform intervals $\StratExoSet\! =\! \{\mathcal{I}_\ell\}_{\ell=1}^L$. 
  Each sample ${(\textbf{X}, \U, S) \!\in\! \mathcal{X}\!\times\!\ExoSet \!\times [K]}$ is assigned to a unique $\mathcal{I}_\ell$ based on the value of $\U$, enabling comparisons within the same interval. 
  Denote $p_\ell\! =\! \mathbb{P}(\U \!\in\! \mathcal{I}_\ell)$ the probability that $\U$ belongs to the $\ell^{\smash{th}}$ interval.

  \paragraph{Construction of the discretized plug-in estimator of $f_0^*$.}
  Given a black-box predictor $f^{bb}$ satisfying Property~\ref{ass:distrib_densities} and \ref{ass:lipschitz_cdf}
  , for each $(\ell, s)$ the conditional c.d.f. $\smash{F_{f^{bb} \vert \ell, s}}$ and the quantile function $\smash{F^{-1}_{f^{bb} \vert \ell, s}}$ depend only on the distributions of $(\U, S)$ and $\mathbf{X} \vert \U, S$, and thus can be estimated from unlabeled data.
  For each interval $\mathcal{I}_\ell$ and $s \in [K]$, we define $\smash{\Idx{\ell}{s}} =  \{i: S_i = s, \U_i \in \mathcal{I}_\ell\}$, $\smash{N_{\ell, s}\ :=\ |\Idx{\ell}{s}|}$, and $\mathcal{D}_{\ell, s}:=\{\mathbf{X}_{\ell, s}^i\}_{i\in [{N_{\ell, s}}]} = \{\mathbf{X}_i : i \in \Idx{\ell}{s}\}$; then $\mathbf{X}^i_{l,s}  \sim \mathbb P_{X\vert \U \in \mathcal{I}_\ell,S=s}$ and $n\!=\!\sum_{\ell \leq L} \sum_{s\leq K} N_{\ell, s}$. Without loss of generality, we assume that $N_{\ell, s}$ is positive and even.
  Let $\Idx{\ell}{s}^0, \Idx{\ell}{s}^1$ be a partition of $\smash{\Idx{\ell}{s}}$ into two equal parts of size $\smash{N_{\ell, s}/2}$. We use $\smash{\Idx{\ell}{s}^0}$ to estimate the quantiles $F^{-1}_{f^{bb}\vert \ell, s}$ and $\smash{\Idx{\ell}{s}}$  to estimate the c.d.f. $F_{f^{bb} \vert \ell, s}$.
  More precisely, $\mu_{f^{bb} \vert \ell, s}(A) := \nicefrac{1}{p_\ell} \int_{\mathcal{I}_\ell} \mu_{f^{bb} \vert v, s}(A) d\nu(v)$ is the conditional distribution of $f^{bb}(X) \vert V\in \mathcal{I}_{\ell}, S = s$, estimated as 
  $\widehat\mu_{f^{bb} \vert \ell,s}^j\! :=\! \nicefrac{2}{N_{\ell, s}}\sum_{i\in\Idx{\ell}{s}^j} \delta_{\,f(X_{\ell, s}^i, s)}$ for $j \in \{0, 1\}$.
  With the empirical distributions we define the empirical c.d.f. $\smash{\widehat F_{f \vert \ell,s}\!:=\!F_{\widehat\mu_{f \vert \ell,s}^0}}$ and quantile $\widehat F^{-1}_{f \vert \ell,s}\!:=\!F_{\widehat\mu_{f \vert \ell,s}^1}^{-1}$. 
We define $\smash{\widehat{f_{\ell}}}$ the local post-processing on interval $\ell$ and $\widehat{f}_L$ its global aggregation as 
\noindent
\begin{minipage}{.5\textwidth}
  \begin{equation*}\label{eq:local_post_processed}
    \widehat{f_{\ell}}(\mathbf{x}, s) := \left( \sum_{s'=1}^K w_{s'} \widehat{F}_{{f^{bb} \vert {\ell,s'}}}^{-1}\right) \circ \widehat{F}_{{f^{bb}\vert \ell,s}}(f^{bb}(\mathbf{x}, s)),
  \end{equation*}
\end{minipage}%
\begin{minipage}{.5\textwidth}
  \begin{equation}\label{eq:global_post_processed}
    \widehat{f}_L(\mathbf{x}, s) := \sum_{\ell=1}^{L} \widehat{f}_{\ell}(\mathbf{x}, s) \mathbbm{1}_{v \in \mathcal{I}_\ell},
  \end{equation}
\end{minipage}

\vspace{3pt}
\noindent where $v=\psi(\mathbf{x}, s)$. We provide implementation details in Algorithm~\ref{alg:post_processing_compact}. In practice, we estimate $w_s$ as $\hat{w}_s = \frac{1}{n}\sum_{i=1}^n \mathbbm{1}_{S_i=s}$. 
which converges at rate $\smash{\mathcal{O}(n^{-1/2})}$ negligible compared to $\smash{\mathcal{O}(n^{-1/3})}$.

    \begin{algorithm}[]
      \caption{Discretized Post-Processing}
      \label{alg:post_processing_compact}
      \KwIn{Data $\{(x_i,v_i=\psi(x_i,s_i),s_i)\}_{i=1}^n$; black-box $f^{bb}$.}
    
      \textbf{Initialization:} Compute $L^*$ (Thm.~\ref{thm:l_star}); build uniform partition $\{\mathcal I_\ell\}_{\ell=1}^{L^*}$ of $\mathcal V$.\; 
    
      \For{$\ell=1$ \KwTo $L^*$}{
        \For{$s\in[K]$}{
          $\Idx{\ell}{s}\gets \{\,i:\ v_i\in\mathcal I_\ell,\ s_i=s\,\}$, split into $\Idx{\ell}{s}^0\cup \Idx{\ell}{s}^1$.\;
          $Z^{j}\gets\{f^{bb}(x_i,s):i\in\Idx{\ell}{s}^{j}\}$, $j\in \{0,1\}$.\;

          $\widehat F^{-1}_{\ell,s}(u):=\inf\{y:\frac{2}{|\Idx{\ell}{s}|}\sum_{z\in Z^{0}}\mathbbm 1\{z\le y\}\ge u\}$ \tcc*[r]{Quantile (Fold $\Idx{\ell}{s}^0$)}
    
          $\widehat F_{\ell,s}(t):=\frac{2}{|\Idx{\ell}{s}|}\sum_{z\in Z^{1}}\mathbbm 1\{z\le t\}$ \tcc*[r]{c.d.f. (Fold $\Idx{\ell}{s}^1$)}
        }
        $\widehat F^{-1}_{\ell,\star}(u):=\sum_{s=1}^K w_s\,\widehat F^{-1}_{\ell,s}(u)$ \tcc*[r]{Empirical barycenter}
      }
    
      \textbf{Prediction on new $(x,v=\psi(x,s),s)$:}\;
      \qquad Find $\ell$ such that $v\in\mathcal I_\ell$, \KwRet $\widehat f_{L^*}(x,s)\triangleq \widehat F^{-1}_{\ell,\star}(\widehat F_{\ell, s}(f^{bb}(x, s)))$.\;
    \end{algorithm}

  \subsection{Bound on the unfairness of the post-processing estimator}

  The unfairness of the post-processed predictor $\widehat{f}_L$ satisfies the following bound. Proof of Theorem~\ref{thm:l_star} is deferred to Appendix~\ref{proof:l_star}.

  \begin{restatable}{theorem}{Lstar}  \label{thm:l_star}
    Assume $f^{bb}$ satisfies Properties~\ref{ass:distrib_densities} and \ref{ass:lipschitz_cdf}.
    Then, there exists a constant $c>0$, such that if $n\geq c$, for 
    $L^* = \left\lfloor \left( \tfrac{8\Lcdf^2 n}{K \log(2Kn)} \right)^{\smash{1/3}} \right\rfloor,$
  with probability $1 - (KL^*\exp(-n \tfrac{\min_s w_{s}}{8L^*}) + \frac{1}{n})$
  \begin{equation*}
      \mathcal{U}(\widehat f_{L^*})  \leq C\left(\frac{\log(n)}{n}\right)^{1/3},
  \end{equation*}
  where $C = 40M^2\left(\Lcdf K \log(2K)\right)^{1/3}$ and $c$ is a constant dependant of the constants of the problem.
  \end{restatable}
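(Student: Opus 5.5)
The plan is to bound $\mathcal{U}_v(\widehat f_{L^*})$ pointwise for each $v\in\mathcal{I}_\ell$ by plugging a specific candidate into the minimum in \eqref{def:unfairness}, and then integrate over $\nu$. The candidate is the empirical barycenter $\widehat\mu_\ell^\star$, the law of $T_\ell(U)$ with $U\sim\mathrm{Unif}[0,1]$ and $T_\ell:=\sum_{s'}w_{s'}\widehat F^{-1}_{f^{bb}\vert\ell,s'}$. The map $T_\ell$ is non-decreasing and takes values in $[-M,M]$ by Property~\ref{ass:distrib_densities}. For $v\in\mathcal{I}_\ell$ we have $\widehat f_{L^*}=T_\ell\circ\widehat F_{f^{bb}\vert\ell,s}\circ f^{bb}$. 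Hence $\mu_{\widehat f\vert v,s}$ is the law of $T_\ell(\widehat F_{f^{bb}\vert\ell,s}(Z))$ with $Z\sim\mu_{f^{bb}\vert v,s}$.

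\textbf{Step 1 (coupling).} Since $\mu_{f^{bb}\vert v,s}$ is non-atomic, $F_{f^{bb}\vert v,s}(Z)\sim\mathrm{Unif}[0,1]$. This gives a coupling between $\mu_{\widehat f\vert v,s}$ and $\widehat\mu^\star_\ell$, namely the pair $T_\ell(\widehat F_{f^{bb}\vert\ell,s}(Z))$ and $T_\ell(F_{f^{bb}\vert v,s}(Z))$. Both variables lie in $[-M,M]$, so $|a-b|^2\le 2M|a-b|$, and therefore
\[
\mathcal{W}_2^2(\mu_{\widehat f\vert v,s},\widehat\mu^\star_\ell)\le 2M\,\mathbb{E}\bigl|T_\ell(\widehat F_{f^{bb}\vert\ell,s}(Z))-T_\ell(F_{f^{bb}\vert v,s}(Z))\bigr|.
\]
Write $\delta_{\ell,s}:=\sup_t|\widehat F_{f^{bb}\vert\ell,s}(t)-F_{f^{bb}\vert v,s}(t)|$. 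Then $U=F_{f^{bb}\vert v,s}(Z)$ is uniform and $|\widehat F_{f^{bb}\vert\ell,s}(Z)-U|\le\delta_{\ell,s}$. By monotonicity of $T_\ell$, the expectation is at most $\mathbb{E}[T_\ell((U+\delta)\wedge 1)-T_\ell((U-\delta)\vee 0)]$. This is a difference of averages of a monotone function over windows shifted by $2\delta$, so it is bounded by $2\cdot 2M\cdot\delta$ (telescoping the integral of increments). Altogether $\mathcal{W}_2^2\le 8M^2\delta_{\ell,s}$.

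\textbf{Step 2 (splitting $\delta$).} I will use the triangle inequality
\[
\delta_{\ell,s}\le\|\widehat F_{f^{bb}\vert\ell,s}-F_{f^{bb}\vert\ell,s}\|_\infty+\|F_{f^{bb}\vert\ell,s}-F_{f^{bb}\vert v,s}\|_\infty .
\]
The second term is a discretization bias. Because $F_{f^{bb}\vert\ell,s}$ is the $\nu$-average of $F_{f^{bb}\vert v',s}$ over $v'\in\mathcal{I}_\ell$, Property~\ref{ass:lipschitz_cdf} bounds it by $\Lcdf/L$. The first term is controlled by the DKW inequality applied to the fold of size $N_{\ell,s}/2$, together with a union bound over the $KL$ cells at level $1/n$. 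This gives $\sqrt{\log(2KLn)/N_{\ell,s}}$ up to constants on an event of probability at least $1-1/n$. A multiplicative Chernoff bound on $N_{\ell,s}\sim\mathrm{Bin}(n,w_s/L)$, again with a union bound over cells, gives $N_{\ell,s}\ge nw_s/(2L)$ except with probability $KL\exp(-n\min_s w_s/(8L))$. This matches the stated failure probability.

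\textbf{Step 3 (aggregation and choice of $L$).} Averaging over $s$ and integrating over $v$ yields
\[
\mathcal{U}(\widehat f_L)\lesssim M^2\Bigl(\frac{\Lcdf}{L}+\sqrt{\frac{KL\log(2Kn)}{n}}\Bigr).
\]
The $K$ factor arises from bounding $\sum_s w_s/\sqrt{w_s}\le\sqrt{K}$. Balancing the two terms gives $L^*\asymp(\Lcdf^2 n/(K\log(2Kn)))^{1/3}$, and substituting yields the rate $(\log n/n)^{1/3}$ with a constant of the claimed form. The condition $n\ge c$ ensures $L^*\ge1$ and allows $\log(2KL^*n)$ to be absorbed into $\log(2Kn)$.

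The main obstacle is Step 1. The difficulty is to obtain a bound that is linear in $\delta$ even though $T_\ell$ is a discontinuous step function and $\widehat F_{f^{bb}\vert\ell,s}(Z)$ is not uniform. Using the uniform variable $F_{f^{bb}\vert v,s}(Z)$ as a pivot, together with the monotone-increment argument, is what avoids needing any Lipschitz property of the quantiles.
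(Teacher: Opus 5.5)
Your proof is correct, but it organizes the argument differently from the paper.

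The paper splits $\mathcal U(\widehat f_L)$ at the level of the unfairness functional, into $\sum_\ell p_\ell\,\mathcal U_\ell(\widehat f_L)$ plus a bias term $\sum_\ell\int_{\mathcal I_\ell}(\mathcal U_v-\mathcal U_\ell)\,d\nu$. It controls the first term by comparing the post-processed group laws pairwise in Kolmogorov distance: $\mathcal U_\ell\le\sum_{s,s'}w_sw_{s'}\mathcal W_2^2(\mu_{\widehat f\vert\ell,s},\mu_{\widehat f\vert\ell,s'})$, then $\mathcal W_2^2\le 4M^2\,\mathrm{KS}$, and a KS gap of at most $\varepsilon_{\ell,s}+\varepsilon_{\ell,s'}$. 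It controls the bias through a stability lemma (post-processed c.d.f.s stay $\Lcdf$-Lipschitz in $v$ on each interval) together with a lemma that $v\mapsto\mathcal U_v$ is $16M^2\Lcdf$-Lipschitz.

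You instead bound $\mathcal U_v(\widehat f_L)$ pointwise with the single candidate $\widehat\mu^\star_\ell$, coupling through the uniform pivot $F_{f^{bb}\vert v,s}(Z)$. Everything then reduces to $\|\widehat F_{f^{bb}\vert\ell,s}-F_{f^{bb}\vert v,s}\|_\infty$, and the bias/estimation split becomes a triangle inequality between c.d.f.s.

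What your route buys:
\begin{itemize}
\item It dispenses with both auxiliary lemmas and places the discretization error where it actually arises, as the gap between the law at $v$ and the interval mixture $\mu_{f^{bb}\vert\ell,s}$. That comparison is only implicit in the paper: its Lipschitz lemma is stated for two points $v,v'$, while $\mathcal U_\ell$ is computed from interval mixtures. So the paper's bias step really needs that lemma rerun with $\sup_t|F_{\widehat f_L\vert v,s}(t)-F_{\widehat f_L\vert\ell,s}(t)|\le\Lcdf/L$, which is the analogue of your Step~2.
\item It halves the bias constant. Tracking constants gives $\mathcal U(\widehat f_L)\le 8M^2\Lcdf/L+8M^2\sqrt{2KL\log(2KLn)/n}$. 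The paper's manipulations ($1/\lfloor x\rfloor\le 2/x$, $\log(2KL^*n)\le 2\log(2Kn)$, and $\log(2Kn)\le\log(2K)\log n$ for $n\ge 37$) turn this into $(8+16\sqrt2)M^2(\Lcdf K\log(2K))^{1/3}(\log n/n)^{1/3}$, which is within the stated $C$.
\end{itemize}

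What the paper's organization buys: it isolates the interval-level quantity $\mathcal U_\ell$, which it reuses for the relaxed result (convexity of $\mathcal U_\ell$ under mixing, and the bound on $\mathcal U(f^{bb})$). Your Step~1 with $F_{f^{bb}\vert v,s}$ replaced by $F_{f^{bb}\vert\ell,s}$ recovers that interval-level bound as well, so this is not a real advantage over your argument.

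Two minor points. Step~1 can be sharpened to $4M^2\delta_{\ell,s}$ by the Kolmogorov-contraction argument, since the law of $\widehat F_{f^{bb}\vert\ell,s}(Z)$ is within $\delta_{\ell,s}$ of uniform in KS. And because Step~1 holds for any nondecreasing $[-M,M]$-valued $T_\ell$, your argument never uses independence between the two folds.
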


  \noindent
  CF is a pointwise requirement in \(v\): for each \(v\), the group-conditional prediction laws must coincide. 
  Replacing \(v\) by an interval \(\mathcal I_\ell\) averages over heterogeneous values; enforcing parity on \(\mathcal I_\ell\) thus compares non-identical subpopulations whenever within-interval group distributions differ, inducing \emph{discretization bias}. 
  Narrower intervals reduce this bias but increase per-interval \emph{estimation error}; \(L^*\) is chosen by balancing this tradeoff.
  More precisely, the Lipschitz constant $\Lcdf$ captures the problem's complexity: rapidly changing distributions require finer discretization. 
  However, finite-sample validity is constrained by the risk of ``empty cells,'' which is controlled by a term scaling exponentially in $-n\min_s w_s/L^*$. 
  Increasing $L^*$ starves local estimators; thus, practically, one must avoid over-discretizing when sample sizes are small or groups are imbalanced.
  
  \noindent
  In the remainder of the section we provide the main intuitions behind the results. 
  \begin{proof}[Sketch of proof.]

      First, we decompose the unfairness into a discretization \emph{bias} term and a statistical \emph{estimation error} term, and then bound each separately. 
      For interval $\ell$, the within-interval unfairness is defined as
  $\mathcal{U}_{\ell}(f) := \min_{\mu \in \mathcal{P}_2(\mathbb{R})}\sum_{s=1}^K w_s \mathcal{W}_2^2(\mu_{f \vert \ell, s}, \mu).$
  The unfairness of $\widehat f_L$ decomposes as
  \begin{align*}
    \mathcal{U}(\widehat f_L)\!
    =\sum_{\ell=1}^{L} p_\ell \left[\mathcal{U}_\ell(\widehat f_L) +\frac{1}{p_\ell} \smashoperator{\int_{\mathclap{\mathcal{I}_\ell}}}\!\!
    \left(\mathcal{U}_v(\widehat f_L)\!-\!\mathcal{U}_\ell(\widehat f_L)\!\right)\!d\nu(v)\right].
    \end{align*}

    \noindent
      \emph{Bias (right).} If $f^{bb}$ satisfies Properties~\ref{ass:distrib_densities}--\ref{ass:lipschitz_cdf}, 
    the global post-processed predictor $\widehat f_L(\mathbf x,s):=\sum_{\ell} \widehat f_\ell(\mathbf x,s)\,\mathbbm 1_{\{v\in\mathcal I_\ell\}}$ is $\Lcdf$-Lipschitz, with constant $\Lcdf$ on each interval $\mathcal{I}_\ell$. This implies that the unfairness is Lipschitz in $v$, which implies a discretization bias of order $\frac{1}{L}$. 
      \emph{Estimation error (left).} We show that the $W_2$ gap between post-processed group laws reduces to the corresponding Kolmogorov distance,
      before bounding it by the sum of per-group empirical c.d.f. deviations within the interval. 
      A uniform DKW bound over all cells \((\ell,s)\) yields empirical c.d.f. errors of order $N_{\ell,s}^{-1/2}$ with high probability. 
      A multinomial Chernoff bound then ensures $N_{\ell,s}\!\approx\!n\,p_\ell w_s$ simultaneously. 
      The sum of the $p_l$-weighted error rates is of order \(N_{\ell,s}^{-1/2}\!\!\approx\!(nL)^{1/2}\). 
      Choosing $L^* \!\!\approx \!n^{-1/3}$ yields the result.
      \end{proof}
  
  \subsection{Bound on the risk of the post-processing estimator.} \label{sec:risk_bound}
    We turn to establishing a bound on the risk of the estimated fair post-processed predictor at $L^*$ compared to the risk of the oracle post-processed predictor $f^{bb}_0$ defined as 
    \begin{equation*}
      f^{bb}_0(\mathbf x,s) := \left( \sum_{s'=1}^K w_{s'} F^{-1}_{{f^{bb} \vert v,s'}} \right) \circ F_{{f^{bb} \vert {v , s}}} (f^{bb}(\mathbf{x}, s)), 
    \end{equation*}
    for $v=\psi(\mathbf x,s)$. The proof of Theorem~\ref{cor:risk-exact-fair} is in  Appendix~\ref{proof:thm_risk_exact}.
    \begin{restatable}{theorem}{RiskExactFair}
      \label{cor:risk-exact-fair}
      Let $c$ and $L^*$ be as in Thm.~\ref{thm:l_star}. Assume $f^{bb}$ satisfies Properties~\ref{ass:distrib_densities}--\ref{ass:lipschitz_cdf}.
      If $n>c$,  for all $\lambda >0$, with probability $1 - (KL^*\exp(-n \frac{\min_s w_{s}}{8L^*}) + \frac{1}{n})$ we have

      \begin{equation*}
        \mathcal R(\widehat f_{L^\star})
        \;\le\;
        (1+\lambda)\mathcal R(f^{bb}_0) + C_{risk}\left(1 +\frac{1}{\lambda}\right)\cdot\left(\frac{\log(n)}{n}\right)^{1/3}.
      \end{equation*}
      \end{restatable}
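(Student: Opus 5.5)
We start from the elementary inequality $(a+b)^2\le(1+\lambda)a^2+(1+1/\lambda)b^2$. Applied pointwise with $a=f^{bb}_0-f^*$ and $b=\widehat f_{L^*}-f^{bb}_0$, and integrated over $(\mathbf X,\U,S)$ with the weights $w_s$ and $\nu$, it gives
\begin{equation*}
\mathcal R(\widehat f_{L^*})\le(1+\lambda)\mathcal R(f^{bb}_0)+\Bigl(1+\frac1\lambda\Bigr)\,\mathbb E\bigl[(\widehat f_{L^*}(\mathbf X,S)-f^{bb}_0(\mathbf X,S))^2\bigr].
\end{equation*}
So it suffices to show that, on the same high-probability event as in Theorem~\ref{thm:l_star}, the $L^2$ distance $\Delta:=\mathbb E[(\widehat f_{L^*}-f^{bb}_0)^2]$ is at most $C(\log n/n)^{1/3}$. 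That event is the intersection of the uniform DKW bound over all cells $(\ell,s)$ and the multinomial Chernoff bound $N_{\ell,s}\gtrsim n p_\ell w_s$.

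To bound $\Delta$, we fix a cell $(\ell,s)$ and $v\in\mathcal I_\ell$. On that cell both predictors are monotone rearrangements of $f^{bb}(\mathbf X,s)$:
\begin{itemize}
\item $\widehat f_{L^*}=\widehat G_\ell\circ\widehat F_{\ell,s}(f^{bb})$, where $\widehat G_\ell:=\sum_{s'}w_{s'}\widehat F^{-1}_{\ell,s'}$;
\item $f^{bb}_0=G_v\circ F_{v,s}(f^{bb})$, where $G_v:=\sum_{s'}w_{s'}F^{-1}_{f^{bb}|v,s'}$.
\end{itemize}
Let $U:=F_{v,s}(f^{bb}(\mathbf X,s))$, which is uniform on $[0,1]$ given $(\U=v,S=s)$ by Property~\ref{ass:distrib_densities}. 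We then introduce the intermediate maps $G_\ell:=\sum_{s'}w_{s'}F^{-1}_{f^{bb}|\ell,s'}$ and $F_{\ell,s}$, and split the difference into three parts.
\begin{enumerate}
\item \emph{Quantile estimation:} $\widehat G_\ell-G_\ell$.
\item \emph{Discretization bias of the barycenter:} $G_\ell-G_v$.
\item \emph{CDF error:} the difference between evaluating $G_v$ at $\widehat F_{\ell,s}(f^{bb})$ and at $U$. This in turn splits into $\widehat F_{\ell,s}-F_{\ell,s}$ (DKW) and $F_{\ell,s}-F_{v,s}$ (Lipschitz bias of order $\Lcdf/L$ by Property~\ref{ass:lipschitz_cdf}).
\end{enumerate}

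All functions are bounded by $M$, so squared errors are at most $4M^2$ times an absolute error. Quantile differences therefore convert to CDF differences in the way Theorem~\ref{thm:l_star} used them: we use $\int_0^1|F_1^{-1}-F_2^{-1}|\,du=\int|F_1-F_2|\,dt\le 2M\|F_1-F_2\|_\infty$, which is a $\mathcal W_1$ bound, together with boundedness to pass from $\mathcal W_1$ to $\mathcal W_2^2$. This handles items 1 and 2, with DKW giving $N_{\ell,s}^{-1/2}$ for item 1 and Property~\ref{ass:lipschitz_cdf} giving $\Lcdf/L$ for item 2.

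The main obstacle is item 3. Composing $G_v$ with a perturbed argument needs some regularity of $G_v$, which is monotone but possibly not Lipschitz, so the plan relies on monotonicity instead. For a nondecreasing bounded $G$ and uniform $U$, we have $\mathbb E|G(U+\varepsilon)-G(U)|\le 2M|\varepsilon|$ whenever $|\varepsilon|$ is uniformly bounded: the integral of a monotone function over $[0,1]$ shifted by $\varepsilon$ changes by at most $2M\varepsilon$ in total variation. Since $\widehat F_{\ell,s}$ is evaluated at a point not used to build it (the sample split), we can apply this with $\varepsilon=\|\widehat F_{\ell,s}-F_{v,s}\|_\infty\lesssim N_{\ell,s}^{-1/2}\sqrt{\log(Kn)}+\Lcdf/L$.

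Finally, we average over $v\in\mathcal I_\ell$, weight by $p_\ell w_s$, and sum. This gives
\begin{equation*}
\Delta\lesssim M^2\Bigl(\frac{\Lcdf}{L}+K\sqrt{\frac{L\log(Kn)}{n}}\Bigr),
\end{equation*}
which is the same bias–variance trade-off as in Theorem~\ref{thm:l_star}. Plugging in $L^*$ yields $\Delta\le C_{risk}(\log n/n)^{1/3}$ with the stated probability.
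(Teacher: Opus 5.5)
Your route is essentially the paper's. You use Young's inequality to isolate $(1+\lambda)\mathcal R(f^{bb}_0)$, split $\widehat f_{L^*}-f^{bb}_0$ into quantile-estimation, barycenter-discretization and CDF-perturbation pieces, reduce each to Kolmogorov distances controlled by DKW, the Chernoff bound on $N_{\ell,s}$ and Property~\ref{ass:lipschitz_cdf}, and then plug in $L^*$. The paper makes the same split in two stages through the oracle discretized map $f_L$, in Lemmas~\ref{lem:disc_error} and~\ref{lem:estimation_risk}.

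\textbf{Gap: items~1 and~2 are evaluated at a non-uniform point.} Your item~3 is $G_v(\widehat U)-G_v(U)$ with $\widehat U:=\widehat F_{\ell,s}(f^{bb}(\mathbf X,s))$, so the telescoping makes items~1 and~2 equal to $(\widehat G_\ell-G_\ell)(\widehat U)$ and $(G_\ell-G_v)(\widehat U)$. But $\widehat U$ is not uniform: it lives on the lattice $\{0,2/N_{\ell,s},\dots,1\}$ with data-dependent weights. Hence $\mathbb E[(\widehat G_\ell-G_\ell)^2(\widehat U)]\neq\int_0^1(\widehat G_\ell-G_\ell)^2\,du$, and the identity $\int_0^1|F_1^{-1}-F_2^{-1}|\,du=\int|F_1-F_2|\,dt$ does not apply to these terms as you set them up. Two quantile functions with slightly offset jumps differ by order $M$ on a short interval, and that interval can carry a whole lattice atom.

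The repair uses only your own tool. Telescope in the other order, which is the paper's ordering:
\[
\widehat G_\ell(\widehat U)-G_v(U)=\bigl[\widehat G_\ell(\widehat U)-\widehat G_\ell(U)\bigr]+\bigl[\widehat G_\ell(U)-G_\ell(U)\bigr]+\bigl[G_\ell(U)-G_v(U)\bigr].
\]
Items~1--2 then become integrals against the uniform $U$. The monotonicity argument now applies to $\widehat G_\ell$, which is also nondecreasing with values in $[-M,M]$.

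Since $\widehat U-U$ is not a constant shift, your ``shift the integral by $\varepsilon$'' computation does not literally cover this step. Justify it in one of two ways:
\begin{itemize}
\item by the sandwich $\widehat G_\ell(U-\varepsilon)\le\widehat G_\ell(\widehat U)\le\widehat G_\ell(U+\varepsilon)$, which gives $4M\varepsilon$;
\item or by noting that $\widehat U$ is a nondecreasing function of $U$, so the pair is comonotone and $\mathbb E|\widehat G_\ell(\widehat U)-\widehat G_\ell(U)|$ is a $\mathcal W_1$ distance at most $2M\varepsilon$.
\end{itemize}
Either bound is deterministic given $\|\widehat F_{\ell,s}-F_{v,s}\|_\infty\le\varepsilon$, so the sample split plays no role here.

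\textbf{Minor: the $K$-dependence.} The $w_s$-weighted sum of DKW errors gives $\sqrt K$ via $\sum_s\sqrt{w_s}\le\sqrt K$, not $K$. With $K$, your last display gives a factor $K^{5/6}$ at $L^*$ instead of the $K^{1/3}$ in $C_{risk}$. You would also need to track constants to reach the stated $C_{risk}$.

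\textbf{Comparison with the paper on the CDF-perturbation term.} Once repaired, your treatment of this term is sounder than the paper's. In the proof of Lemma~\ref{lem:estimation_risk}, the paper bounds
\[
\bigl(\widehat F^{-1}_{\ell,\star}(\widehat U)-\widehat F^{-1}_{\ell,\star}(U)\bigr)^2\le 4M^2\mathbf 1\{\widehat U\neq U\}\le 4M^2\varepsilon_{\ell,s}^2 .
\]
The last inequality is false, because $\widehat U\neq U$ almost surely. Your monotone-shift bound gives the correct $O(M^2\varepsilon_{\ell,s})$, as does the comonotone-coupling argument the paper itself uses for term~(II) of Lemma~\ref{lem:disc_error}. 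A bound linear in $\varepsilon_{\ell,s}$ is all the $(\log n/n)^{1/3}$ rate requires, and it is the form the paper ends up using anyway.
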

      where $C_{risk} = \left(10M\right)^2\left(\Lcdf K \log(2K)\right)^{1/3}$.
      \begin{proof}[Sketch of proof]

        We condition on $(\U,S)$ and compare the estimator $\smash{\widehat f_L}$ to the \emph{oracle} fair post-processor $f^{bb}_0$. 
        The excess risk is decomposed using Young's inequality into:
        (i) the risk of the oracle $\smash{f^{bb}_0}$ (scaled by $1+\lambda$), and 
        (ii) a deviation term (scaled by $\smash{1+\lambda^{-1}}$), which further splits into \emph{estimation} and \emph{discretization} errors.
        The deviation terms are controlled as in Theorem~\ref{thm:l_star}. 
        Choosing the same optimal $\smash{L=L^*}$ balances these error terms, yielding the same rate of $\smash{\tilde{O}(n^{-1/3})}$ for the excess risk. 
    \end{proof}
\noindent
Theorem~\ref{cor:risk-exact-fair} guarantees that the learned post-processor $\smash{\widehat f_{L^*}}$ converges to the ideal population oracle $\smash{f^{bb}_0}$ at a rate of $\smash{n^{-1/3}}$. 
This has two key practical implications. 
First, if the black-box $\smash{f^{bb}}$ is already fair, then $\smash{f^{bb}_0=f^{bb}}$, meaning our method incurs no asymptotic accuracy penalty. 
Second, if $\smash{f^{bb}}$ is unfair, the excess risk is strictly limited to the inevitable transport cost required to align the groups (bounded by $\smash{\mathcal U(f^{bb})}$). 
Since the same $\smash{L^*}$ controls both unfairness and risk, our estimator simultaneously achieves fairness and matches the oracle's risk in the large-sample limit.

\noindent
We next show that the $\smash{n^{-1/3}}$ rate is optimal for estimating $\smash{f^*_0}$.
Proof of Thm~\ref{thm:mse_lower_bound} is deferred to App~\ref{proof:thm_estimation_only_minimax}.

\begin{restatable}{theorem}{MinimaxMSE}
  \label{thm:mse_lower_bound}
  Let $M, \Lcdf > 0$ and let $\mathcal{P}(M, \Lcdf)$ be the class of data-generating distributions satisfying Properties~\ref{ass:distrib_densities} and \ref{ass:lipschitz_cdf}.
  Then, there exists a constant $\smash{C_{MSE} > 0}$ depending only on $M, \Lcdf$, and $\{w_s\}_s$ such that the mean squared error of any estimator $\smash{\widehat{f}_n}$ relative to the optimal fair predictor $f_0^*$ is lower-bounded by
  \begin{equation*}
      \inf_{\widehat{f}_n} \sup_{\mathbb{P} \in \mathcal{P}(M, \Lcdf)} \mathbb{E}\left[ \|\widehat{f}_n - f_0^*\|^2 \right] \;\ge\; C_{MSE} \, n^{-1/3}.
  \end{equation*}
\end{restatable}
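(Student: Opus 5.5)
We prove the minimax lower bound with a standard Assouad (hypercube) construction, which is the natural tool for a nonparametric rate driven by a Lipschitz condition in $v$. The heuristic is this. The fair predictor $f_0^*$ depends on the conditional quantiles $F^{-1}_{f^*\vert v,s}$, which vary in $v$ with Lipschitz constant $\Lcdf$. Estimating a Lipschitz function of a one-dimensional covariate $v$ from $n$ samples in squared $L^2$ norm gives the classical rate $n^{-2/(2+1)} = n^{-2/3}$. A plain regression-type lower bound would therefore give $n^{-2/3}$, not $n^{-1/3}$. To get $n^{-1/3}$ we must exploit that the estimator only sees $\mathbf X$, through a distribution at each $v$ rather than a noisy scalar value. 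I would design perturbations in which the \emph{rank} map $F_{f^*\vert v,s}$ changes, so that the error in $f_0^*$ is of order the perturbation amplitude $h$ itself, while the information cost per perturbed cell stays of order $n h^3$.

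\textbf{Construction.} Take $K=2$ (other groups kept unperturbed), $\mathcal X=[0,1]\times[-M,M]$ with $\mathbf X=(V,Z)$, so that $\psi(\mathbf x,s)=x_1$, and let $f^*(\mathbf x,s)=z$. Partition $[0,1]$ into $m$ cells of width $h=1/m$. Fix a base conditional law $Z\mid V=v,S=s$ with density bounded away from $0$ and $\infty$ on $[-M/2,M/2]$. For $\omega\in\{0,1\}^m$, on cell $j$ with $\omega_j=1$, modify the group-$1$ conditional law by a mass transfer $Z\mapsto Z+ c\,\phi(\cdot)$. The profile $\phi$ should be chosen so that the c.d.f. $F_{f^*\vert v,1}$ changes by an amount $c\,h\,b((v-v_j)/h)$, with $b$ a smooth bump. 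This keeps Property~\ref{ass:lipschitz_cdf} (c.d.f. Lipschitz in $v$ with constant $\lesssim c\|b'\|_\infty \le \Lcdf$) and Property~\ref{ass:distrib_densities}.

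\textbf{Separation and information.} By Proposition~\ref{prop:fair_optimal_predictor}, $f_0^*(\mathbf x,2)=\big(w_1F^{-1}_{v,1}+w_2F^{-1}_{v,2}\big)\circ F_{v,2}(z)$, so a shift of order $h$ in $F^{-1}_{v,1}$ on a set of $v$ of measure $\asymp h$ changes $f_0^*$ pointwise by $\asymp w_1 h$. Summed, the squared $L^2$ separation per flipped coordinate is $\asymp h^2\cdot h = h^3$. For the information side, the data are $n$ i.i.d. draws. A sample falls in cell $j$ with probability $h$, and there the $\chi^2$/KL divergence between the two conditional laws of $Z$ is $\asymp (\text{density perturbation})^2 \asymp h^2$. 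The per-cell KL is therefore $\asymp n h\cdot h^2=nh^3$. Assouad's lemma gives
\[
\inf_{\widehat f}\sup_\omega \mathbb E\|\widehat f-f_0^*\|^2 \gtrsim m\, h^3\,\big(1-\sqrt{n h^3}\big),
\]
which, with $h\asymp n^{-1/3}$, yields only $m h^3 = h^2 \asymp n^{-2/3}$.

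\textbf{Main obstacle.} This computation shows the naive hypercube gives $n^{-2/3}$. The hardest step is reaching $n^{-1/3}$, and my first attempt would be to make the separation \emph{not} scale with the cell width. The key is that the Lipschitz property constrains $F(t)$ in $v$ but not in $t$, so the perturbation can oscillate sharply in $z$. I would use perturbations that keep the density in $z$ locally unchanged in an $L^1$ sense but rearrange it at a finer scale, making $f_0^*$ (a quantile composition) move by $O(1)$ on a set of measure $h$. The squared separation per coordinate is then $\asymp h$, while the information per cell is still $nh^3$. Then $m\cdot h=1$ times a constant, and balancing via $h\asymp n^{-1/3}$ with a Fano/Le Cam two-point reduction gives $\gtrsim h = n^{-1/3}$. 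Verifying that such a perturbation respects the non-atomic support constraint and the Lipschitz bound, while producing $O(1)$ pointwise movement of $f_0^*$, is the step I expect to require the most care. If it fails, I would fall back on proving the weaker $n^{-2/3}$ bound for the squared norm, which corresponds to an $n^{-1/3}$ rate for the unsquared norm $\|\widehat f-f_0^*\|$.
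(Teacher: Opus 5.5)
\textbf{There is a genuine gap: the proposal never reaches the stated rate.} The only construction you specify is a smooth c.d.f.\ perturbation of a law whose density is bounded away from $0$ and $\infty$. As you compute yourself, it gives squared separation $\asymp h^3$ per cell and hence only $n^{-2/3}$. Your fallback does not prove the theorem either, because the statement lower-bounds the \emph{squared} error $\mathbb{E}\|\widehat f_n-f_0^*\|^2$ by $n^{-1/3}$.

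The proposed repair is also mis-scaled. A squared separation $\asymp h$ per coordinate, with $m=1/h$ coordinates, would give an $\Omega(1)$ lower bound, and no admissible perturbation can achieve that. Suppose two hypotheses differ only on a cell of width $h$. Then Property~\ref{ass:lipschitz_cdf} forces $\|F_{f^*\vert v,1}-F'_{f^*\vert v,1}\|_\infty\le \Lcdf h$ on that cell. Since $\int_0^1|F^{-1}_\mu-F^{-1}_\nu|\,du=\int|F_\mu-F_\nu|\,dt\le 2M\|F_\mu-F_\nu\|_\infty$, the quantiles, and hence the barycentric map, can move by $O(M)$ only on a set of ranks of measure $O(h)$. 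So the per-cell squared separation of $f_0^*$ is at most $O(M^2\Lcdf h^2)$, however finely you rearrange in $z$. A single-cell Le Cam two-point argument therefore yields at most $h^2\asymp n^{-2/3}$. The correct target is $h^2$ per cell, aggregated over all $m$ cells by a hypercube to give $h$ in total.

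\textbf{The missing idea} is a construction that attains this $h^2$, which requires the quantile function to jump. The conditional law must have a gap in its support. Property~\ref{ass:distrib_densities} allows this, but your density bounded below rules it out: with such a density, quantiles are Lipschitz in the rank, so an $O(h)$ c.d.f.\ change moves $f_0^*$ only by $O(h)$.

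The paper's construction works as follows.
\begin{itemize}
\item It takes $\mu_{f^*\vert v,s}$ to be a mixture of uniforms on $[-M,-M/2]$ and $[M/2,M]$.
\item Group $2$ keeps weight $1/2$ on the upper interval.
\item Group $1$ gets upper weight $\tfrac12+\gamma\sum_j\eta_j\varphi_j(v)$, with $\gamma\lesssim \Lcdf/J$ to respect Property~\ref{ass:lipschitz_cdf}.
\end{itemize}
The barycenter then acquires a middle component near $0$ of mass $\gamma\varphi_j(v)$. In group $2$, whose law does not depend on $\eta$, $f_0^*$ behaves differently depending on $\eta_j$:
\begin{itemize}
\item if $\eta_j=+1$, it lifts the top $\gamma\varphi_j$-fraction of the mass on $[-M,-M/2]$ to near $0$;
\item if $\eta_j=-1$, it lowers the bottom $\gamma\varphi_j$-fraction of the mass on $[M/2,M]$ to near $0$.
\end{itemize}
So $f_0^*$ moves by $\asymp M$ on a set of probability $\asymp\gamma\varphi_j$. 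The loss is therefore linear, not quadratic, in the perturbation; the paper encodes this as $\mathcal W_2^2\ge (M/4)^2\,|\text{mass difference across the gap}|$.

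The paper then runs Assouad in $L^1$ on this mass function. The per-coordinate separation is $\gamma\|\varphi\|_{L^1}/J$ and the Bernoulli KL is $\lesssim n\gamma^2/J$. Choosing $J\asymp n^{1/3}$ and $\gamma\asymp n^{-1/3}$ yields the bound $C_{MSE}\,n^{-1/3}$.
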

\begin{proof}[Sketch of proof]
We construct a "least favorable" family of distributions, $\mu_{f^*\vert v,s}$ supported on two disjoint intervals with gap $M$. 
In one group, the mass distribution between these intervals is uniform; in the other, it is perturbed by a high-frequency "bump" function $\theta(v) = 1/2 + \gamma \sum \eta_j \varphi_j(v)$ modulated by a hypercube parameter $\eta \in \{-1, 1\}^J$.
We show that the $L_2$ error of any fair estimator is lower-bounded by the $L_1$ estimation error of the mass transport $\delta(v)$ required to align the groups: $\|\widehat f - f^*_0\|^2 \gtrsim M^2 \|\widehat \delta - \delta_{opt}^{(\eta)}\|_{L^1}$.
Finally, we apply Assouad's Lemma to this mass estimation problem. 
The proof balances two conflicting requirements:
(i) \emph{Indistinguishability:} The bumps must be small enough ($\smash{\gamma \lesssim \sqrt{J/n}}$) so the hypotheses are statistically close in KL-divergence.
(ii) \emph{Regularity:} They must be smooth enough ($\smash{\gamma \lesssim J^{-1}}$) to satisfy the Lipschitz constraint $\smash{\Lcdf}$.
Optimizing the number of bumps $J$ under these constraints yields the resolution $\smash{J \asymp n^{1/3}}$, leading to the rate $\smash{n^{-1/3}}$.
\end{proof}

\noindent
Theorem~\ref{thm:mse_lower_bound} highlights that the $\smash{n^{-1/3}}$ rate is not an artifact of our discretization, but a fundamental limit of fair regression. 
The lower bound arises because estimating the optimal fair predictor requires non-parametric density estimation. 

\begin{restatable}{corollary}{MinimaxExcessRisk}
  \label{cor:risk_lower_bound}
Consider the setting of Theorem~\ref{thm:mse_lower_bound}. Let $u_n = C_1 (\log(n)/n)^{1/3}$ with $C_1$ a sufficiently small constant independent of $n$, and let $F_{u_n} = \{f : \mathcal{U}(f) \le u_n\}$ be the set of approximately fair predictors.
There exists a constant $C_R$ such that for any estimator $\widehat{f}_n$
  \begin{equation*}
      \sup_{\mathbb{P} \in \mathcal{P}(M, \Lcdf)} \left( \mathcal{R}(\widehat{f}_n) - \mathcal{R}(f^*_{u_n}) \right) \;\ge\; C_R \, n^{-1/3},
  \end{equation*}
  where $f^*_{u_n}$ is the risk-optimal predictor within $F_{u_n}$.
\end{restatable}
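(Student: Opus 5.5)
We reduce the corollary to Theorem~\ref{thm:mse_lower_bound}, using the least favorable family built in its proof. The target is a lower bound on the \emph{excess} risk $\mathcal R(\widehat f_n)-\mathcal R(f^*_{u_n})$ over the relaxed fair class $F_{u_n}$. We read $\widehat f_n$ as an estimator constrained to lie in $F_{u_n}$ (otherwise $f^*$ itself has smaller risk and the statement is vacuous). We use the Pythagorean-type structure of the excess risk: $\mathcal R(f)=\|f-f^*\|^2$ in $L^2$ of $(\mathbf X,S)$, conditional on $(\U,S)$.

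\emph{Step 1: from $f^*_{u_n}$ back to $f^*_0$.} On the hard family of Theorem~\ref{thm:mse_lower_bound}, the fair set $\{\mathcal U=0\}$ is handled by the conditional-DP characterization. The projection $f_0^*$ of Proposition~\ref{prop:fair_optimal_predictor} is the $L^2$ projection of $f^*$ onto this set, computed fiberwise in $v$ via barycenters. We first show $\|f^*_{u_n}-f^*_0\|^2\lesssim u_n$. Following the relaxed-fairness geometry (Wasserstein geodesic between $\mu_{f^*|v,s}$ and the barycenter, as in \cite{chzhen2022minimaxframeworkquantifyingriskfairness}), $f^*_{u_n}$ should be a geodesic interpolation $f^*_0+t(f^*-f^*_0)$ with $t^2\,\mathcal U(f^*)\approx u_n$. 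Since $\mathcal U(f^*)$ is of order $M^2$ on the hard family, we get $\|f^*_{u_n}-f^*_0\|^2\asymp u_n$.

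\emph{Step 2: excess-risk decomposition.} For any $f\in F_{u_n}$ we aim to prove a strong-convexity inequality
\[
\mathcal R(f)-\mathcal R(f^*_{u_n})\ \ge\ c\,\|f-f^*_{u_n}\|^2 - (\text{term involving } \mathcal U(f)-u_n).
\]
This should hold because $F_{u_n}$ is convex in the appropriate sense: $\mathcal U$ is convex along the quantile parametrization. We then combine it with the triangle inequality:
\[
\|\widehat f_n-f^*_0\|^2\le 2\|\widehat f_n-f^*_{u_n}\|^2+2\|f^*_{u_n}-f^*_0\|^2 .
\]
By Theorem~\ref{thm:mse_lower_bound}, the left side has worst-case expectation at least $C_{MSE}n^{-1/3}$. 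By Step 1, $\|f^*_{u_n}-f^*_0\|^2\lesssim C_1(\log n/n)^{1/3}$. Choosing $C_1$ small enough, this is at most $C_{MSE}n^{-1/3}/4$ for large $n$, provided the hard family's constants absorb the log factor (we may need to restrict to $n$ where this holds, or use the slack from the $\gamma$ choice in the Assouad construction). Hence $\sup_{\mathbb P}\mathbb E\|\widehat f_n-f^*_{u_n}\|^2\gtrsim n^{-1/3}$, and Step 2 transfers this to excess risk. Passing from expectation to a bound on the supremum is direct, since the supremum of the excess dominates its average under the Assouad prior.

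\emph{Main obstacle.} The hard part is Step 2 together with the log-factor tension in Step 1. The set $F_{u_n}$ is not a linear subspace, so the quadratic-growth inequality around $f^*_{u_n}$ requires care. We would first try working fiberwise in $v$ and in quantile coordinates. There each conditional law is represented by its quantile function in $L^2(0,1)$, so $\mathcal U_v$ becomes a variance-type convex quadratic form. The problem then reduces to a projection onto a convex set in a Hilbert space, for which $\|x-P(y)\|^2\le\|x-y\|^2-\|y-P(y)\|^2$ gives the needed inequality. If the log factor cannot be absorbed, we would instead rerun Assouad directly on the relaxed target $f^*_{u_n}$. Since it is an explicit interpolation between $f^*$ and $f^*_0$ with $t$ bounded away from $1$, its dependence on $\eta$ keeps separation of order $M^2\|\widehat\delta-\delta^{(\eta)}\|_{L^1}$.
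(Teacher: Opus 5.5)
\textbf{Genuine gap.} Your outline is the paper's own reduction. You work with $\widehat f_n\in F_{u_n}$; the paper works on the event $\{\widehat f_n\in F_{u_n}\}$ and tacitly needs its complement to have probability $\lesssim n^{-1/3}$. You then lower-bound the excess risk by a distance to $f^*_{u_n}$ through a projection inequality, pass to $f^*_0$ using $\|f^*_{u_n}-f^*_0\|^2\le u_n$ (with equality when $u_n\le\mathcal U(f^*)$, by Proposition~\ref{prop:closed_form_relaxed}), and invoke Theorem~\ref{thm:mse_lower_bound}.

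The step that fails is the absorption. Since $u_n/n^{-1/3}=C_1(\log n)^{1/3}\to\infty$, no $C_1$ independent of $n$ makes $u_n$ small against $C_{MSE}n^{-1/3}$, and the chain yields a negative bound for large $n$. You flag this but do not resolve it. The paper does not resolve it either: its final display silently replaces $u_n$ by $C_1n^{-1/3}$.

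Your fallback rests on a false premise. On the least-favorable family the two groups differ only by a mass $\gamma|\sum_j\eta_j\varphi_j(v)|$ moved across a gap of width $M$. Hence $\mathcal U(f^*)\ge\tfrac{M^2}{4}\gamma\|\varphi\|_{L^1}$, which is of order $M^2 n^{-1/3}$, not of order $M^2$. So for large $n$ you have $u_n\ge\mathcal U(f^*)$, hence $f^*\in F_{u_n}$, $f^*_{u_n}=f^*$ and $t=1$. The relaxed target then no longer involves the barycenter at all. Moreover $\|f^*_{u_n}-f^*_0\|^2=\mathcal U(f^*)$ already exceeds the Assouad bound, which is below $\tfrac{M^2}{64}\gamma\|\varphi\|_{L^1}$, so no transfer through $f^*_0$ can work on this family.

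What is missing is twofold. You need a least-favorable family in which $\mathcal U(f^*)$ stays bounded below by a constant, so that $f^*_{u_n}$ genuinely depends on the $\eta$-dependent barycenter. You also need an Assouad argument run directly on the conditional laws of $f^*_{u_n}$. Even then $\|f^*_{u_n}-f^*_0\|^2=u_n\gg n^{-1/3}$, so Theorem~\ref{thm:mse_lower_bound} cannot be used as a black box.

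\textbf{Step 2 in $L^2$.} Step~2 cannot be carried out in $L^2(P_{\mathbf X,S})$, which is where the paper does it: it invokes ``projection onto a convex set'' to get $\mathcal R(\widehat f_n)-\mathcal R(f^*_{u_n})\ge\|\widehat f_n-f^*_{u_n}\|^2$. But $F_{u_n}$ is not convex there. Take $X\mid(V,S)\sim\mathrm{Unif}[0,1]$ in both groups, with $f(x,1)=g(x,1)=x$, $f(x,2)=x$, $g(x,2)=1-x$. Both $f$ and $g$ are exactly fair, while $\tfrac12(f+g)$ predicts the constant $\tfrac12$ in group~$2$ and is unfair. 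This also contradicts Lemma~\ref{lem:interval-convexity}, whose proof wrongly treats the $(X^\star,Y^\star)$-marginal of $\Lambda_s$ as a product.

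Worse, no bound of the form $c\|f-f^*_{u_n}\|^2$ holds in general. Rearranging one group's fair predictions on a range where its quantile function is nearly flat changes $\|f-f^*_0\|^2$ by order one but changes the excess risk arbitrarily little.

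Your quantile-coordinate idea is the right repair. There the constraint is a convex variance-type form on a convex cone of monotone functions. Pythagoras, plus Lemma~\ref{lem:B1} with equality for the fiberwise monotone map $f^*_{u_n}$, gives $\mathcal R(f)-\mathcal R(f^*_{u_n})\ge\sum_s w_s\int\mathcal W_2^2(\mu_{f|v,s},\mu_{f^*_{u_n}|v,s})\,d\nu(v)$. But then the triangle inequality and the lower bound must be run in this averaged Wasserstein metric, not with the $L^2$ statement of Theorem~\ref{thm:mse_lower_bound}. That works only because the theorem's Assouad proof actually lower-bounds $\sum_s w_s\mathbb E_V\mathcal W_2^2(\mu_{\widehat f|v,s},\mu_{f^*_0|v,s})$.
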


\noindent
Corollary~\ref{cor:risk_lower_bound} proves that enforcing the fairness constraint imposes a statistical cost. 
For this class of problems, the excess risk cannot vanish faster than the fairness from Thm.~\ref{thm:l_star}. 
Thus, the slow convergence rate is not a limitation of our specific method, but the intrinsic price of achieving fairness.

\section{Relaxed CF under unfairness budget}\label{sec:relaxed_case}
  
In practice, exact fairness is unreachable and can be undesirable. 
  We therefore introduce a relaxed variant of Problem~\eqref{eq:optimization_pb} that enables explicit control of the risk–unfairness trade-off, together with a post-processing estimator admitting finite-sample guarantees for both risk and unfairness.
  We adopt the relaxation approach of \cite{chzhen2022minimaxframeworkquantifyingriskfairness} to the constraint in Eq.~\eqref{eq:OT_pb} to allow for a controlled amount of unfairness $\alpha \in [0, 1]$ relative to the unfairness of the optimal unconstrained predictor $\mathcal{U}(f^*)$. 
  With a slight abuse of notation ($f^*_{u_n}$ targets an absolute statistical rate $u_n$, whereas $f^*_\alpha$ is a relative relaxation of the initial bias $\mathcal{U}(f^*)$ independent of $n$),
  we consider the relaxed problem
  \begin{equation}
    \label{eq:relaxed_OT_pb}
    f^*_\alpha\! \in\! \argmin_{f \in \mathcal{F}} \left\{\mathcal{R}(f):\! \mathcal{U}(f)\! \leq\! \alpha \mathcal{U}(f^*)\right\}.
    \tag{$\alpha$-CF}
  \end{equation}
  Problem \eqref{eq:relaxed_OT_pb} admits a closed-form solution expressed as a convex combination of parameter $\sqrt \alpha$ of the optimal unconstrained predictor $f^*$ and the fair optimal predictor $f_0^*$.
  The proof of Proposition~\ref{prop:closed_form_relaxed} can be found in Appendix~\ref{proof:reduction_OT}.
  
  \begin{restatable}{proposition}{ClosedFormRelaxed}\label{prop:closed_form_relaxed}
    Assume $f^*$ satisfies Property~\ref{ass:distrib_densities}. 
    Let $\alpha \in [0, 1]$. 
    For all $s \in [K]$, the solution of the relaxed problem \eqref{eq:relaxed_OT_pb} is unique and given by
    \begin{align*}\label{an:closed_form_relaxed}
      f_\alpha^*(\mathbf{x}, s)= \sqrt{\alpha} f^*(\mathbf{x}, s) + (1-\sqrt{\alpha}) f^*_0(\mathbf{x}, s).
    \end{align*}
    The optimal risk-unfairness pair is
      $\mathcal{R}(f_\alpha^*) = (1- \sqrt{\alpha})^2 \mathcal{R}(f^*_0) \enspace \text{and} \enspace \mathcal{U}(f_\alpha^*) = \alpha \mathcal{U}(f^*).$
  \end{restatable}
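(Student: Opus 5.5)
The proof works conditionally on each $v\in\ExoSet$: it reduces \eqref{eq:relaxed_OT_pb} to a family of one-dimensional problems in Wasserstein space and then uses the geodesic structure of $\mathcal{P}_2(\mathbb{R})$ between each group law and the barycenter. This follows the strategy of \cite{chzhen2022minimaxframeworkquantifyingriskfairness}, with the extra integration over $v$ handled as in Proposition~\ref{prop:fair_optimal_predictor}.

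\textbf{Step 1 (reduction to laws).} Fix $v$ and write $\mu^*_{v,s}:=\mu_{f^*\vert v,s}$. For any $f$, by \cref{ass:U_measurable} the pair $(\mathbf X,S)$ determines $v$. On $\{V=v,S=s\}$ we therefore have $\mathbb{E}[(f-f^*)^2\mid v,s]\ge \mathcal{W}_2^2(\mu_{f\vert v,s},\mu^*_{v,s})$, with equality when $f=T_{v,s}\circ f^*$ for the monotone map $T_{v,s}$. Such a map exists because $\mu^*_{v,s}$ is non-atomic (Property~\ref{ass:distrib_densities}). Hence
\[
\mathcal{R}_v(f)\ \ge\ \sum_s w_s\mathcal{W}_2^2(\nu_s,\mu^*_{v,s}), \qquad \nu_s:=\mu_{f\vert v,s},
\]
and $\mathcal{U}_v(f)$ depends only on $(\nu_s)_s$. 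So the relaxed problem becomes: minimize $\int\sum_s w_s\mathcal{W}_2^2(\nu_{v,s},\mu^*_{v,s})\,d\nu(v)$ subject to $\int \mathcal{U}_v\big((\nu_{v,s})_s\big)\,d\nu(v)\le \alpha\,\mathcal{U}(f^*)$.

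\textbf{Step 2 (lower bound via a triangle-type inequality).} Let $\bar\mu_v$ be the barycenter of $(\mu^*_{v,s})_s$, so that $\mathcal{U}_v(f^*)=\sum_s w_s\mathcal{W}_2^2(\mu^*_{v,s},\bar\mu_v)$. Write $r_v^2$ for the risk term and $u_v$ for the unfairness of $(\nu_s)$, and let $\mu_\nu$ be the barycenter of $(\nu_s)$. By the triangle inequality and Cauchy--Schwarz in the weighted $\ell^2(w)$ norm,
\[
\sqrt{\mathcal{U}_v(f^*)}\le \Big(\sum_s w_s\mathcal{W}_2^2(\mu^*_{v,s},\mu_\nu)\Big)^{1/2}\le r_v+\sqrt{u_v},
\]
and therefore $r_v\ge \sqrt{\mathcal{U}_v(f^*)}-\sqrt{u_v}$. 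Integrating over $v$ and applying Cauchy--Schwarz once more (the map $(a,b)\mapsto(\sqrt a-\sqrt b)_+^2$ is jointly convex, or one uses Minkowski in $L^2(\nu)$) gives
\[
\sqrt{\mathcal{R}(f)}\ \ge\ \sqrt{\mathcal{U}(f^*)}-\sqrt{\mathcal{U}(f)}\ \ge\ (1-\sqrt\alpha)\sqrt{\mathcal{U}(f^*)}.
\]
Finally, Proposition~\ref{prop:fair_optimal_predictor} shows that $f_0^*$ transports each $\mu^*_{v,s}$ to $\bar\mu_v$, so $\mathcal{R}(f_0^*)=\mathcal{U}(f^*)$; the inequality stated there is tight, with equality following from its proof.

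\textbf{Step 3 (attainment and uniqueness).} Take $f_\alpha^*=\sqrt\alpha f^*+(1-\sqrt\alpha)f_0^*$. Conditionally on $(v,s)$ this equals $\big(\sqrt\alpha\,\mathrm{id}+(1-\sqrt\alpha)T_{v,s}\big)\circ f^*$ with $T_{v,s}=F^{-1}_{\bar\mu_v}\circ F_{\mu^*_{v,s}}$ monotone. Its law is therefore the McCann interpolant on the geodesic from $\mu^*_{v,s}$ to $\bar\mu_v$, which gives
\[
\mathcal{R}_v(f_\alpha^*)=(1-\sqrt\alpha)^2\,\mathcal{U}_v(f^*).
\]
For unfairness, the quantile functions are affine: $F^{-1}_{f^*_\alpha\vert v,s}=\sqrt\alpha F^{-1}_{\mu^*_{v,s}}+(1-\sqrt\alpha)F^{-1}_{\bar\mu_v}$. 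Since in one dimension the barycenter of the group laws has quantile function $\sum_s w_s F^{-1}_{\nu_s}$, the barycenter of the interpolated laws is still $\bar\mu_v$, and a direct computation gives $\mathcal{U}_v(f^*_\alpha)=\alpha\,\mathcal{U}_v(f^*)$. Integrating over $v$ yields both identities in the statement and shows that the lower bound of Step 2 is attained.

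Uniqueness comes from tracking the equality cases. Equality in Step 1 forces $f$ to be a monotone rearrangement of $f^*$ on each cell. Equality in the triangle and Cauchy--Schwarz steps forces the $\nu_s$ to lie on the geodesics toward $\bar\mu_v$, all at a common fraction $t$, and also forces $\mathcal{U}_v(f)$ to be proportional to $\mathcal{U}_v(f^*)$ in $v$. Equality in the constraint then pins $t=1-\sqrt\alpha$.

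I expect this equality analysis to be the main obstacle, and in particular showing that the common fraction $t$ cannot vary with $v$. For that I would use the strict convexity of $t\mapsto(1-t)^2$ together with the integrated constraint, which forces $t$ to be constant $\nu$-a.e.; uniqueness then holds up to null sets.
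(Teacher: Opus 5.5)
Your proof is correct, and it takes a more direct route than the paper. The paper lifts the problem to the space $\mathcal M$ of $v$-indexed families of measures with the averaged metric $\bar d$. It proves that $(\mathcal M,\bar d)$ has the barycenter property (Proposition~\ref{prop:barycenter_property_M}), with barycenter $(\mu^*_v)_v$ obtained by pointwise minimisation. It then checks (P1)--(P2) for the geodesic family $b^{(s)}_v$ at time $1-\sqrt\alpha$, and invokes the abstract geometric lemma (Lemma~\ref{lem:geom_lemma}) together with the coupling bound of Lemma~\ref{lem:B1} to pass from laws to predictors.

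You instead run by hand the argument that sits inside that lemma: the triangle inequality plus Minkowski in $\ell^2(w)$ at each fixed $v$, then Minkowski in $L^2(\nu)$ to integrate. Neither the lifted space nor its barycenter property is needed.

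What each route buys:
\begin{itemize}
\item The paper's route is modular, since any space with barycenters would do. But the abstract lemma by itself only yields $\mathcal U(f^*_\alpha)\le\alpha\,\mathcal U(f^*)$ and says nothing about uniqueness.
\item Your use of the one-dimensional quantile structure (the barycenter of the interpolated laws is still $\bar\mu_v$) gives the equality $\mathcal U(f^*_\alpha)=\alpha\,\mathcal U(f^*)$ asserted in the statement.
\item Your equality-case analysis supplies the uniqueness that the statement claims but the paper never proves. It uses the unique optimal coupling from the non-atomic $\mu^*_{v,s}$, which is induced by a monotone map; the unique 1D barycenter; and collinearity in the equality case of the triangle inequality in $L^2(0,1)$.
\item Your route also avoids the paper's verification of (P1), which is written with squared distances. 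That version would require $(1-\sqrt\alpha)^2+\alpha=1$, which is false for $\alpha\in(0,1)$. The lemma only needs the unsquared additivity, which does hold.
\end{itemize}

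Two small remarks.

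First, the constancy of $t$ in $v$, which you flag as the main obstacle, is already forced by equality in the $L^2(\nu)$ Minkowski step. There, $r_v=t_v\sqrt{\mathcal U_v(f^*)}$ and $\sqrt{u_v}=(1-t_v)\sqrt{\mathcal U_v(f^*)}$ must be proportional with a $v$-independent factor. This pins $t_v$ to a constant on $\{\mathcal U_v(f^*)>0\}$. Off that set, pointwise equality gives $r_v=0$, hence $\nu_{v,s}=\mu^*_{v,s}$. Your Jensen-type argument also works but is not needed.

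Second, citing Proposition~\ref{prop:fair_optimal_predictor} is not circular, even though the paper derives that proposition from this one. You only use the closed form of $f^*_0$, and your Steps 2--3 at $\alpha=0$ establish its optimality directly.
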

  

  \paragraph{Form of the relaxed estimator.}
  For $\alpha\in [0, 1]$, we define the relaxed post-processed estimator with $L$ intervals as
  \begin{equation*}\label{eq:relaxed_post_processed}
    \widehat{f}_{\alpha, L}(\mathbf{x}, s) := \sqrt{\alpha} f^{bb}(\mathbf{x}, s)+(1-\sqrt{\alpha}) \widehat{f}_L(\mathbf{x}, s).
  \end{equation*}
  Suppose we are given an unfairness budget $\mathcal B \geq 0$. Our goal is to select $\alpha^*$ such that the estimator satisfies 
   $\mathcal U(\widehat f_{\alpha^*, L^*}) \leq \mathcal B, \quad w.h.p.$
  However, we cannot simply invert the relation $\mathcal{U}(\widehat f_{\alpha, L}) \approx \alpha\,\mathcal{U}(f^{bb})$ to find $\alpha^*$, because the intrinsic unfairness of the black-box predictor, $\mathcal{U}(f^{bb})$, is unknown. 
  If we underestimate it, the chosen $\alpha$ might violate the budget. 

  \paragraph{Estimation of the unfairness of the black-box predictor.}
  To overcome this 
  issue we estimate the unfairness of 
  $f^{bb}$ using the empirical unfairness within each interval defined as
  \begin{equation*}
    \widehat{\mathcal{U}}_L(f^{bb}):=\sum_{\ell=1}^{L} \sum_{s=1}^K p_\ell w_s \mathcal W_2^2\left(\widehat\mu_{\ell,s},\widehat\mu_{\ell,\star}\right).
  \end{equation*}
  where $\widehat\mu_{\ell,\star}$ is the empirical Wasserstein barycenter of the group distributions on interval $\ell$.

  \noindent
  Theorem~\ref{thm:relaxed_l_star} provides an explicit choice of $\alpha^*$ that attains the budget with high probability whenever $\mathcal B$ is feasible.
  The proof of this result can be found in Appendix~\ref{proof:relaxed_l_star}.
  
    \begin{restatable}{theorem}{RelaxedLstar}
      \label{thm:relaxed_l_star} 
      Assume $f^{bb}$ satisfies \ref{ass:distrib_densities}--\ref{ass:lipschitz_cdf}.
      Let $c, C$, $L^*$ as in Thm.~\ref{thm:l_star} and $\delta^* = C (\log(n)/n)^{1/3}$.
      Given an unfairness budget $\mathcal{B} \!\geq \!\delta^*$, if $n\!>\!c$, for the choice $\alpha^* \!:= \!\min \{ 1, [(\mathcal B - \delta^*)/(2\widehat{\mathcal U}_{L^*}(f^{bb}))]^2 \}$, with probability $1 - ( KL^*\exp(-n \min_s w_{s} / 8L^*) + 1/n)$ we have $\mathcal{U}(\widehat f_{\alpha^*\!,L^*}) \leq \mathcal B.$
    \end{restatable}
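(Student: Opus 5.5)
We work on the high-probability event $\mathcal E$ of Theorem~\ref{thm:l_star}. On $\mathcal E$ the DKW deviations are uniform over all cells $(\ell,s)$ and every $N_{\ell,s}$ is of order $n p_\ell w_s$, so $\mathcal U(\widehat f_{L^*})\le \delta^*$. The proof has three steps.

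\textbf{Step 1 (convexity of $\mathcal U$).} The key structural fact is that $\sqrt{\mathcal U_v(\cdot)}$ behaves like a seminorm along the interpolation $\widehat f_{\alpha,L}=\sqrt\alpha f^{bb}+(1-\sqrt\alpha)\widehat f_L$. Fix $v$ and, for each $s$, couple $f^{bb}(\mathbf X,s)$ and $\widehat f_L(\mathbf X,s)$ on the same $\mathbf X\mid V=v,S=s$. Pick barycenters $\mu^{bb}_v$ and $\widehat\mu_v$ with quantile functions $Q^{bb}$ and $\widehat Q$, and test $\mathcal U_v(\widehat f_{\alpha,L})$ with the candidate measure whose quantile is $\sqrt\alpha Q^{bb}+(1-\sqrt\alpha)\widehat Q$. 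Since $\mathcal W_2$ between laws of real variables is bounded by the $L^2$ norm of any coupling, Minkowski's inequality in $L^2(w\otimes\cdot)$ gives
\begin{equation*}
\sqrt{\mathcal U_v(\widehat f_{\alpha,L})}\le \sqrt\alpha\sqrt{\mathcal U_v(f^{bb})}+(1-\sqrt\alpha)\sqrt{\mathcal U_v(\widehat f_L)}.
\end{equation*}
This step needs care, because the coupling of the group law with its barycenter must be realizable jointly with the coupling between the two predictors. The cleanest route is to use, for each fixed $s$, the comonotone coupling through the common uniform variable $F_{f^{bb}|v,s}(f^{bb}(\mathbf X,s))$. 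This works because $\widehat f_L$ is a nondecreasing function of $f^{bb}$ on each cell, so both predictors are monotone in the same uniform variable. Squaring, applying $(a+b)^2\le 2a^2+2b^2$ and integrating over $\nu$ yields
\begin{equation*}
\mathcal U(\widehat f_{\alpha,L})\le 2\alpha\,\mathcal U(f^{bb})+2\,\mathcal U(\widehat f_L).
\end{equation*}
The constant $2$ in this inequality is what produces the factor $2$ in the definition of $\alpha^*$.

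\textbf{Step 2 (comparing $\mathcal U(f^{bb})$ with $\widehat{\mathcal U}_{L^*}(f^{bb})$).} We need $\mathcal U(f^{bb})\lesssim \widehat{\mathcal U}_{L^*}(f^{bb})+$ error on $\mathcal E$. As in the proof of Theorem~\ref{thm:l_star}, write $\mathcal U_v=\mathcal U_\ell+(\mathcal U_v-\mathcal U_\ell)$. The discretization term is controlled by Property~\ref{ass:lipschitz_cdf} and is $O(M^2\Lcdf/L^*)$. The estimation term $|\mathcal U_\ell(f^{bb})-\widehat{\mathcal U}_\ell(f^{bb})|$ is handled by writing $\mathcal U_\ell$ as $\sum_s w_s\|F^{-1}_{\ell,s}-\sum_{s'}w_{s'}F^{-1}_{\ell,s'}\|_{L^2}^2$. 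On the support $[-M,M]$ the quantile $L^2$ distances are dominated by $2M$ times the Kolmogorov distance, so the DKW bounds already used give $O(M^2\sqrt{\log(Kn)/N_{\ell,s}})$. With $L=L^*$, both terms are at most $\delta^*/2$, up to adjusting $C$.

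\textbf{Step 3 (choice of $\alpha^*$).} If $\alpha^*=1$, the budget must be verified directly from $\mathcal U(f^{bb})\le\widehat{\mathcal U}+\delta^*/2$. This is where the definition of $\alpha^*$ must be checked carefully, since the minimum with $1$ is active exactly when $\mathcal B-\delta^*\ge 2\widehat{\mathcal U}$. Otherwise, plugging $\alpha^*$ into the bound from Step 1 together with Step 2 and Theorem~\ref{thm:l_star} should give $\mathcal U\le \mathcal B$.

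\textbf{Expected obstacle.} The main difficulty is the bookkeeping of constants in Step 3. The squared choice $\alpha^*=[(\mathcal B-\delta^*)/(2\widehat{\mathcal U})]^2$ suggests the intended inequality is really in square-root form, $\sqrt{\mathcal U}\le\sqrt\alpha\sqrt{\mathcal U(f^{bb})}+\sqrt{\mathcal U(\widehat f_L)}$, perhaps combined with $\mathcal U\le 1$-type normalizations. I would first attempt the square-root version with a linear (non-squared) comparison between $\mathcal U(f^{bb})$ and its estimate. If needed, I would absorb the remaining cross terms by inflating $\delta^*$ through the constant $C$.
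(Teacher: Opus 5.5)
Your Step~1 coupling is sound, and it is the right justification here. At fixed $(v,s)$, $\widehat f_L=\widehat T_{\ell,s}\circ f^{bb}$ with $\widehat T_{\ell,s}$ nondecreasing. The two predictors are therefore comonotone, and quantiles of the mixture are mixtures of quantiles. For arbitrary $f,g$, convexity of the unfairness under mixing can fail, and the paper's proof of Lemma~\ref{lem:interval-convexity} glosses over this point.

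The gap is in what you do next. Squaring the Minkowski bound with $(a+b)^2\le 2a^2+2b^2$ gives $\mathcal U(\widehat f_{\alpha,L})\le 2\alpha\,\mathcal U(f^{bb})+2\,\mathcal U(\widehat f_L)$, and this cannot yield the theorem. The bound $2\delta^*$ on the second term alone exceeds $\mathcal B$ whenever $\delta^*\le\mathcal B<2\delta^*$, which the statement allows. For $\alpha^*$ close to $1$, your chain gives roughly $\mathcal B+2\delta^*$. Since $C$, $\delta^*$ and $\alpha^*$ are fixed by the statement, you cannot absorb this by inflating $C$. Likewise, asking each Step~2 error term to be at most $\delta^*/2$ is not supported by the fixed $C$; only the discretization and DKW terms together fit into $\delta^*$. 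Your reading of $\alpha^*$ is also off: the square does not signal a square-root-form inequality, and the factor $2$ does not come from Step~1.

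The missing step is to keep the inequality linear in the mixing weight. Square your pointwise Minkowski bound using convexity of $x\mapsto x^2$, i.e.\ $(\lambda a+(1-\lambda)b)^2\le \lambda a^2+(1-\lambda)b^2$ with $\lambda=\sqrt\alpha$, and integrate in $v$. This gives
\begin{equation*}
\mathcal U(\widehat f_{\alpha,L})\le \sqrt\alpha\,\mathcal U(f^{bb})+(1-\sqrt\alpha)\,\mathcal U(\widehat f_L),
\end{equation*}
which is exactly the paper's convexity step. The square in $\alpha^*$ only reflects that $\sqrt{\alpha^*}$ is the mixing weight.

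The factor $2$ comes from the comparison in Lemma~\ref{lem:unfairness_bb_bound}. Test $\mathcal U_\ell(f^{bb})$ with the empirical barycenter $\widehat\mu_{\ell,\star}$, apply the triangle inequality with $(a+b)^2\le 2a^2+2b^2$, and add the discretization bias. At $L=L^*$ this gives $\mathcal U(f^{bb})\le 2\widehat{\mathcal U}_{L^*}(f^{bb})+\delta^*$. Combine this with $\mathcal U(\widehat f_{L^*})\le\delta^*$ from Theorem~\ref{thm:l_star}. The choice $\sqrt{\alpha^*}=(\mathcal B-\delta^*)/(2\widehat{\mathcal U}_{L^*}(f^{bb}))$ then gives $\sqrt{\alpha^*}\,(2\widehat{\mathcal U}_{L^*}(f^{bb})+\delta^*)+(1-\sqrt{\alpha^*})\,\delta^*=\mathcal B$ exactly. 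If the minimum equals $1$, you have directly $\mathcal U(f^{bb})\le 2\widehat{\mathcal U}_{L^*}(f^{bb})+\delta^*\le\mathcal B$. All bounds hold on the same DKW/Chernoff event, so the probability is that of Theorem~\ref{thm:l_star}.
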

  \begin{proof}[Sketch of proof]
    The result follows from the fact that the per-interval unfairness is convex under affine mixing of predictors.
    $\alpha^*$ is chosen so that the unfairness budget is exactly attained when possible, if the $\mathcal{B}$ is inferior to the bound on the unfairness, $\alpha^*=1$ and we cannot ensure this guarantee.
  \end{proof}

\noindent
  Algorithm~\ref{alg:post_processing_relaxed} summarizes the relaxed post-processor.
  Corollary~\ref{cor:risk-relaxed-fair}--proved in Appendix~\ref{proof:corollary_relaxed_risk}--is obtained by plugging $\widehat f_{\alpha^*\!\!, L^*}$ into the risk. 

  \begin{restatable}{corollary}{RiskRelaxedFair}\label{cor:risk-relaxed-fair}
    Under the assumptions of Theorem~\ref{thm:relaxed_l_star}, for any $\alpha \in [0, 1]$, the risk of the relaxed predictor $\widehat f_{\alpha, L^*}$ satisfies
    \begin{equation*}
      \mathcal{R}(\widehat f_{\alpha, L^*})  \leq (1-\sqrt{\alpha})\mathcal{R}(\widehat f_{L^*}) + \sqrt{\alpha}\mathcal{R}(f^{bb}).
    \end{equation*}
    In particular, this holds for the calibrated $\alpha^*$ returned by Algorithm~\ref{alg:post_processing_relaxed}.
  \end{restatable}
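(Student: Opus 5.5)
The bound follows from convexity of the excess risk along the segment joining $\widehat f_{L^*}$ and $f^{bb}$. By definition of the relaxed estimator, for every $(\mathbf x,s)$,
\begin{equation*}
\widehat f_{\alpha,L^*}(\mathbf x,s)-f^*(\mathbf x,s)=(1-\sqrt\alpha)\bigl(\widehat f_{L^*}(\mathbf x,s)-f^*(\mathbf x,s)\bigr)+\sqrt\alpha\bigl(f^{bb}(\mathbf x,s)-f^*(\mathbf x,s)\bigr),
\end{equation*}
since the weights $1-\sqrt\alpha$ and $\sqrt\alpha$ sum to one for $\alpha\in[0,1]$. The map $t\mapsto t^2$ is convex, so Jensen's inequality for two points gives, pointwise,
\begin{equation*}
\bigl(\widehat f_{\alpha,L^*}-f^*\bigr)^2\le(1-\sqrt\alpha)\bigl(\widehat f_{L^*}-f^*\bigr)^2+\sqrt\alpha\bigl(f^{bb}-f^*\bigr)^2 .
\end{equation*}

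Next I take conditional expectations given $\U=v,S=s$, with the data $\mathcal D_n$ fixed so that $\widehat f_{L^*}$ is a deterministic function. Weighting by $w_s$ and summing over $s$ yields $\mathcal R_v(\widehat f_{\alpha,L^*})\le(1-\sqrt\alpha)\mathcal R_v(\widehat f_{L^*})+\sqrt\alpha\,\mathcal R_v(f^{bb})$. Integrating against $d\nu(v)$ then gives the stated inequality for $\mathcal R$. All quantities are finite because Property~\ref{ass:distrib_densities} bounds $f^{bb}$, and hence $\widehat f_{L^*}$, whose values are averages of empirical quantiles of $f^{bb}$, in $[-M,M]$.

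The inequality holds deterministically for every $\alpha\in[0,1]$ and every realization of the data. It therefore also holds for the data-dependent $\alpha^*$ of Theorem~\ref{thm:relaxed_l_star}: I apply the pointwise inequality with the realized value of $\alpha^*$ before integrating over a fresh $(\mathbf X,\U,S)$, which is independent of $\mathcal D_n$. The only point needing care is this conditioning on $\mathcal D_n$ so that $\alpha^*$ is treated as a constant. No probabilistic event is needed for this corollary itself. If desired, it can be combined with Theorem~\ref{cor:risk-exact-fair} on that theorem's high-probability event to produce an explicit rate.
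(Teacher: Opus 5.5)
Your proof is correct and follows the paper's own route. The paper likewise writes $\widehat f_{\alpha,L^*}-f^*$ as the convex combination $(1-\sqrt\alpha)(\widehat f_{L^*}-f^*)+\sqrt\alpha\,(f^{bb}-f^*)$, applies convexity of the square pointwise, and integrates. Your explicit conditioning on $\mathcal D_n$ makes rigorous the plug-in of the data-dependent $\alpha^*$, which the paper does silently.

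That plug-in needs exactly $\alpha^*\in[0,1]$, and the $\min\{1,\cdot\}$ in Theorem~\ref{thm:relaxed_l_star} guarantees this. Algorithm~\ref{alg:post_processing_relaxed} as written omits this cap, and for $\alpha^*>1$ the convexity inequality reverses.
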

  
  \noindent
  Corollary~\ref{cor:risk-relaxed-fair} establishes that the risk bound interpolates linearly in $\sqrt{\alpha}$ between the fully fair estimator ($\alpha=0$) and the black-box ($\alpha=1$). 
  This explicitly characterizes the price of fairness: providing a smooth guarantee that bridges the gap between the best fair predictor and the most accurate one.

      \begin{figure}[t]
        \centering
        \begin{minipage}[b]{0.5\textwidth}
            \vspace{0pt} 
            \begin{algorithm}[H] 
                \caption{Relaxed Post-Processing}
                \label{alg:post_processing_relaxed}
                \small
                \KwIn{$\{(x_i,v_i,s_i)\}_{i=1}^n$; black-box $f^{bb}$; budget $\mathcal B$.}
              
                Compute $L^*$ and $\delta^*$ via Thm.~\ref{thm:l_star}.\;
                \lIf{$\delta^* \ge \mathcal B$}{$\alpha^* \gets 0$}
                \lElse{Compute $\smash{\widehat{\mathcal U}_{L^*}(f^{bb})}$, $\smash{\alpha^* \gets \left(\frac{\mathcal B - \delta^*}{2\widehat{\mathcal U}_{L^*}(f^{bb})}\right)^2}$}
                
                \textbf{Prediction on new $(x,v,s)$:}
                $\widehat f_{L^*}(x,s)$ via Alg.~\ref{alg:post_processing_compact}.\;
                \KwRet $ \sqrt{\alpha^*}\,f^{bb}(x,s) + (1-\sqrt{\alpha^*})\,\widehat f_{L^*}(x,s)$.\;
            \end{algorithm}
        \end{minipage}%
        \hfill
      \begin{minipage}[b]{0.47\textwidth}
        \captionof{table}{Law School Results relative to Base.}
        \label{tab:law_rel}
            \vspace{3.4pt} 
            \small 
            \begin{tabular*}{\linewidth}{@{\extracolsep{\fill}}llccc}
                \toprule
                Proxy & Method & RMSE $\downarrow$ & CF $\downarrow$ & DP $\downarrow$ \\
                \midrule
                     & \texttt{Fair K} & 1.0861 & \textbf{0.0015} & \textbf{0.0015} \\
                SCM  & WFR             & \textbf{1.0518} & 0.0880 & 0.0046 \\
                     & Ours            & 1.0539 & 0.0091 & \textbf{0.0015} \\
                \midrule
                     & \texttt{Fair K} & 1.0033 & 0.4000 & 0.6871 \\
                VAE  & WFR             & 1.0239 & 16.6000 & \textbf{0.0068} \\
                     & Ours            & \textbf{1.0023} & \textbf{0.2000} & 0.7687 \\
                \bottomrule
            \end{tabular*}
        \end{minipage}
    \end{figure}

\section{Experiments}\label{sec:experiments}

In the main text, we evaluate on a synthetic and real-world Law School (LSAC) dataset, comparing our post-processing method against two baselines:
\textbf{Counterfactual Fairness (\texttt{Fair K})} \cite{kusner2018counterfactual}: a causal approach that predicts $Y$ using \textit{only} the exogenous variable $V$, effectively discarding $X$; and
\textbf{Wasserstein Fair Regression (WFR)} \cite{chzhen2020fairregressionwassersteinbarycenters}: a post-processing method that enforces \textit{Global} Demographic Parity (ignoring $V$). 
Appendix~\ref{an:additional_results} details further evaluations (e.g., Communities \& Crime, in-processing baselines, multi-group extensions, and robustness). Code is in the supplement.


\paragraph{Estimation of the Exogenous Variable.}
Our framework relies on the availability of a proxy for the exogenous variable $V$. 
While identifying $\U$ from observational data is a significant challenge \citep{kusner2018counterfactual, ijcai2019p199_CF}, we rely on existing methods to estimate it either via structural causal models as in \cite{kusner2018counterfactual} or variational autoencoders (VAE). Details about the estimation of $\U$ can be found in Appendix~\ref{an:estimation_V}.

\paragraph{Metrics.}
We assess predictor quality using three metrics.
\textbf{Risk (RMSE):} The root mean squared error on the test set.
\textbf{CF:} Our primary metric $\smash{\mathcal{U}(f)}$, the expected squared $\smash{\mathcal W_2}$ distance between the conditional distributions of predictions given $V$ estimated as stated before.
\textbf{DP:} The Global Demographic Parity violation, the squared $\mathcal{W}_2$ distance between the global distributions of predictions for each group, ignoring $V$. Metrics are stated relative to the base model (<1 indicate improvement).

\subsection{Synthetic Data}\label{subsec:synthetic}



\paragraph{Setup.} We generate samples with $V\!\sim\!\mathcal{U}(0,1)$ and $S\!\sim\!\mathcal{B}(0.5)$. 
Let $S'\!=\!2S\!-\!1$. 
The feature vector is defined as $\textbf{X}\!=\!(X, V)$ where $X\!=\!S'V\!+\!\varepsilon_x$ and the response is $Y\!=\!X\!+\!\varepsilon_y$, with $\varepsilon_x\!\sim\!\mathcal{U}(-0.5,0.5)$ and $\varepsilon_y\!\sim\!\mathcal{U}(-0.01,0.01)$. 
In this setting, global DP aligns high values of $X$ across both groups, whereas CF requires matching high values of $X$ in one group with low values of $X$ in the other. 


\begin{figure}[]
  \centering
  \makebox[\textwidth][c]{%
      \begin{minipage}[b]{0.33\textwidth}
          \centering
          \includegraphics[
              width=\linewidth,
              keepaspectratio,
          ]{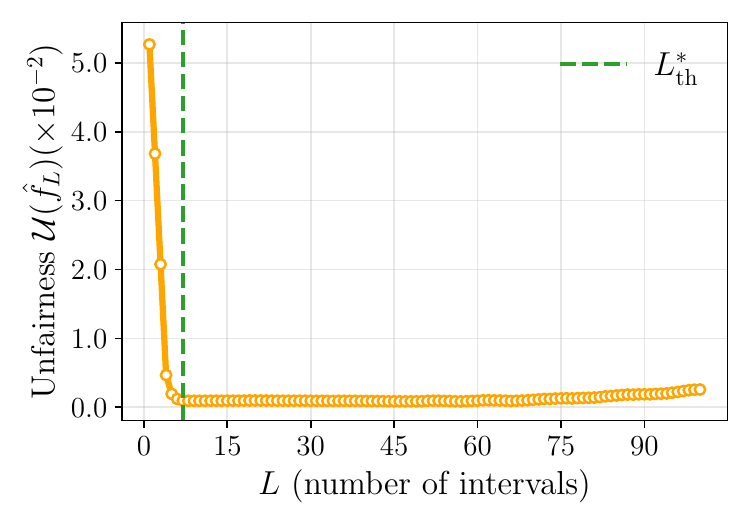}
      \end{minipage}
      \hfill 
      \begin{minipage}[b]{0.33\textwidth}
          \centering
          \includegraphics[
              width=\linewidth,
              keepaspectratio,
          ]{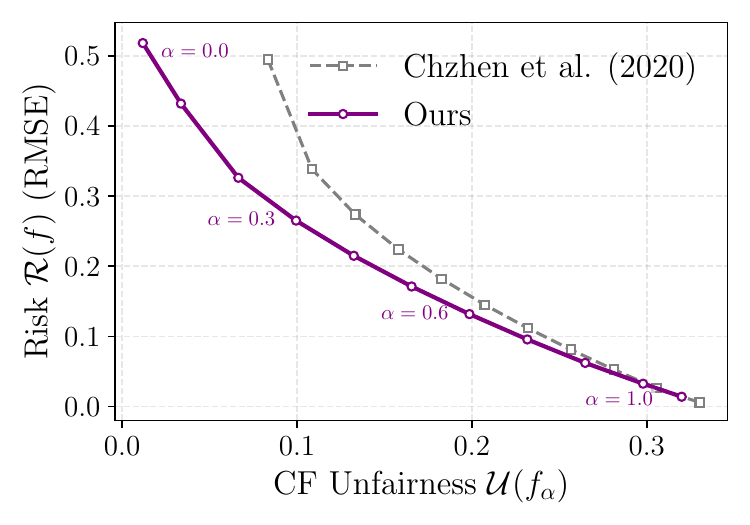}
      \end{minipage}
      \hfill
      \begin{minipage}[b]{0.33\textwidth}
          \centering
          \includegraphics[
              width=\linewidth,
              keepaspectratio,
          ]{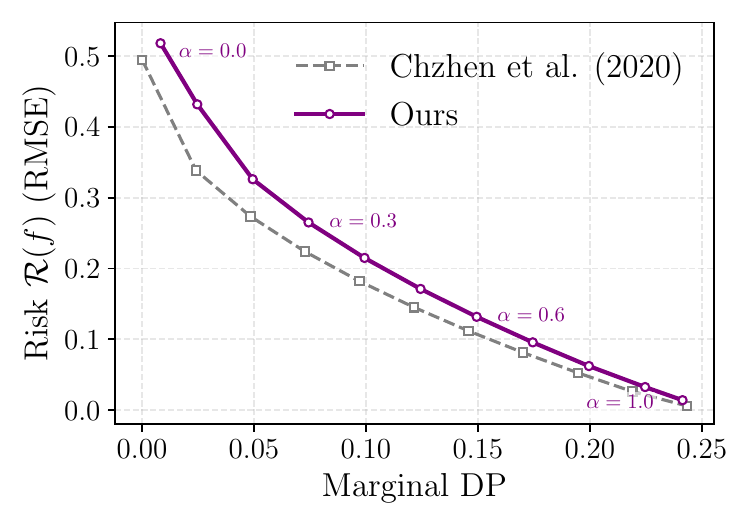}
      \end{minipage}
  }
  \caption{\textbf{Synthetic Dataset.} 
  \textbf{Left:} Unfairness drops and stabilizes near the theoretical $L^*_{\mathrm{th}}$. 
  \textbf{Middle:} Our method (purple) dominates WFR (gray) on the CF tradeoff frontier. 
  \textbf{Right:} WFR exhibits a better DP tradeoff frontier.}  
  \label{fig:synthetic_combined}
\end{figure}


\noindent
\begin{minipage}[c]{0.62\textwidth}
  \textbf{Results.} \cref{fig:synthetic_combined} confirms the theory.
\textbf{Left:} Unfairness drops as discretization bias vanishes, stabilizes near the optimum $L_{th}^\ast$ before estimation error dominates.
\textbf{Middle:} Our estimator dominates WFR on the CF frontier, recovering accuracy as $\alpha$ increases. 
\textbf{Right:} Conversely, when evaluated on global DP, WFR achieves a lower risk tradeoff. 
This is expected, as WFR optimizes for marginal parity, whereas our method enforces CF.
\end{minipage}\hfill
\begin{minipage}[c]{0.36\textwidth}
  \centering
  \captionof{table}{Performance on synthetic data relative to Base.}
  \begin{tabular}{@{}lccc@{}}
    \toprule
    Method & RMSE $\downarrow$ & CF $\downarrow$ & DP $\downarrow$ \\
    \midrule
    \texttt{Fair K} & 110.67 & \textbf{0.00} & \textbf{0.00} \\
    WFR             & \textbf{85.34} & 0.25 & \textbf{0.00} \\
    Ours            & 100.99 & \textbf{0.00} & \textbf{0.00} \\
    \bottomrule
\end{tabular}
  \vspace{0.3em}
  \captionsetup{hypcap=false}
  \label{tab:synthetic_rel}
\end{minipage}\\

\noindent
Table~\ref{tab:synthetic_rel} details this performance relative to the Base Model:
\textbf{\texttt{Fair K}} achieves perfect fairness (0.00) but incurs high error (relative RMSE 110.67) by discarding useful information in $X$. 
While \textbf{WFR} achieves the lowest error (RMSE 85.34), it violates counterfactual fairness (CF 0.25). 
\textbf{Our Method} matches the perfect fairness of \texttt{Fair K} while achieving lower error (RMSE 100.99).

\subsection{Law School Dataset}

\paragraph{Setup.}
Following \cite{kusner2018counterfactual}, we use the LSAC dataset recording the admission credentials and first-year academic performance of over 21,000 students across 163 US law schools. 
We predict First-Year Average (ZFYA) from race ($S$), LSAT, and UGPA. We infer the latent variable $V$ using their Level-2 method. 
Focusing on Black ($S=1$) vs.\ White ($S=0$) students, we train a penalized quadratic regression $f^{bb}$, which satisfies our regularity assumptions.


\paragraph{Interpretation.}
Table~\ref{tab:law_rel} (SCM) shows that baselines occupy extremes: \texttt{Fair K} sacrifices accuracy for safety (RMSE 1.086), while WFR preserves accuracy (1.052) but fails to reduce conditional unfairness (0.088). Our method bridges this gap, matching WFR's accuracy (1.054) while reducing unfairness by an order of magnitude (0.009). In the VAE setting (bottom), baselines degrade significantly, whereas our method remains stable, achieving the lowest RMSE (1.002) and unfairness (0.200).


\section{Discussion} \label{sec:discussion}
We propose a model-agnostic post-processing framework for counterfactual fairness with finite-sample guarantees and a calibrated fairness-utility trade-off. Our results come with the following limitations. First, the fairness guarantee hinges on the validity of our causal model; if the intrinsic ability $V$ remains entangled with the sensitive attribute, or if the map $\psi$ is poorly estimated, bias may persist despite statistical alignment. Second, discretizing the continuous latent variable forces a trade-off: in low-data regimes, we must use wider intervals to ensure statistical stability, which reduces the precision of our fairness method.

\section*{Acknowledgments and Disclosure of Funding}

This work was supported by the Hi! PARIS Center and by the French National Research Agency (ANR) through grant ANR-23-CE23-0002, the PEPR IA FOUNDRY project (grant ANR-23-PEIA-0003), and the France 2030 program (grant ANR-23-IACL-0005).

\clearpage
\bibliographystyle{plainnat}
\bibliography{biblio}
\clearpage
\appendix
\usetikzlibrary{matrix}

\onecolumn
\clearpage
\appendix
\setcounter{page}{1}

\addcontentsline{toc}{section}{Appendix}

\startcontents[appendix]
\begin{center}
    \Large\bfseries Appendix Table of Contents
\end{center}
\hrule
\vspace{1em}
\printcontents[appendix]{}{1}{\setcounter{tocdepth}{2}}
\vspace{1em}
\hrule
\vspace{3em}

%
%






\def\add#1{\textcolor{blue}{#1}}
\def\del#1{\textcolor{red}{\sout{#1}}}
\clearpage
\section{Useful background}

This section introduces useful notation and tools for the proofs.
We first provide a reminder of the notations used in the paper (\S\ref{an:notations}). 
Next, we revisit the one-dimensional Wasserstein--2 space—its quantile formula, barycenters, and an abstract geometric lemma (\S\ref{an:wasserstein_reminder}). 
Then, we sketch how optimal transport encodes demographic parity, including a closed-form projection and a risk–unfairness identity (\S\ref{an:OT_fairness}). 
In parallel, we recall a few basic facts about c.d.f.s (\S\ref{an:cdf_properties})
and generalized inverses. 

\subsection{Additional notations}\label{an:notations}

The conditional distribution of the features $\mathbf{X}$ given $\U=v$ and $S=s$ is denoted by $\mu_{\mathbf X\vert v, s}$, and the conditional distribution of $\mathbf{X}$ given $S=s$ by $\mu_{\mathbf X\vert s}$.

For all predictor $f$, we denote $\mu^*_{f \vert v}$ the Wasserstein-2 barycenter of the group-conditional distributions $(\mu_{f \vert v, s})_{s \in[K]}$ (see Section~\ref{an:wasserstein_reminder} for the definition of barycenters).
In particular, for $v \in \ExoSet$, we $\mu^*_v$ is the Wasserstein-2 barycenter of the Bayes optimal predictor's conditional distributions $(\mu_{f^* \vert v, s})_{s \in[K]}$.

For predictors $f$ satisfying Property~\ref{ass:distrib_densities}, the conditional distributions $\mu_{f \vert v, s}$ are supported on a bounded interval for all $v \in \ExoSet$ and $s \in [K]$, with the support contained in $[-M, M]$ for some constant $M > 0$. 
This ensures that ${\sup_{v, s} \int x^2 d\mu_{f \vert {v, s}}(x) \leq M^2}$ almost surely.

Throughout, we assume that the Bayes optimal predictor $f^*$ satisfies Properties~\ref{ass:distrib_densities}~and~\ref{ass:lipschitz_cdf}.

\subsection{The Wasserstein-2 space.}\label{an:wasserstein_reminder}

We introduce fundamental concepts from one-dimensional optimal transport theory that will be used in our proofs. 
For a comprehensive introduction to optimal transport, we refer to \cite{Santambrogio2015}.

\paragraph{Wasserstein-2 distance.} The Wasserstein distance provides a metric between probability measures based on optimal transport. 
\begin{definition}[Wasserstein--2 distance]
  For two univariate probability measures $\mu$ and $\nu$, 
    the squared Wasserstein--2 distance is defined as
    \begin{equation}
      \label{def:wasserstein}
      \mathcal W_2^2(\mu,\nu)
      \coloneqq \inf_{\gamma \in \Gamma_{\mu,\nu}}
      \int_{\mathbb R\times \mathbb R} |x-y|^2 \, \mathrm d\gamma(x,y),
    \end{equation}
    where $\Gamma_{\mu,\nu}$ denotes the set of all couplings between $\mu$ and $\nu$, i.e., probability measures $\gamma$ on
    $\mathbb R\times \mathbb R$ with marginals $\mu$ and $\nu$: for all measurable sets $A,B\subset\mathbb R$,
    $\gamma(A\times \mathbb R)=\mu(A)$ and $\gamma(\mathbb R\times B)=\nu(B)$.
    \end{definition}

    The coupling $\gamma$ that achieves the infimum is called the optimal coupling.
    In 1D the Wasserstein-2 distance has a closed-form expression in terms of quantile functions, as stated in the following proposition that can be found for example in Proposition 2.17 from \cite{Santambrogio2015}.

\begin{lemma}
  \label{prop:wasserstein_quantiles}
  Let $\mu$ and $\nu$ be two univariate probability measures.
  Then the squared Wasserstein--2 distance between $\mu$ and $\nu$ can be expressed as
  \begin{equation*}
    \mathcal W_2^2(\mu,\nu) \;=\; \int_0^1 \left\vert F^{-1}_\mu(t) - F^{-1}_\nu(t) \right\vert^2 \, \mathrm dt.
  \end{equation*}

\end{lemma}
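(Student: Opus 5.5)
The plan is to reduce the statement to the classical one-dimensional comonotone coupling. Write $F_\mu^{-1}(t)=\inf\{x: F_\mu(x)\ge t\}$ for the generalized inverse.

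\textbf{Step 1: the quantile coupling attains the right-hand side.} Let $U\sim\mathrm{Unif}[0,1]$. Then $F_\mu^{-1}(U)\sim\mu$ and $F_\nu^{-1}(U)\sim\nu$, by the standard equivalence $F_\mu^{-1}(t)\le x \iff t\le F_\mu(x)$. Hence $\gamma^\star:=\mathrm{Law}(F_\mu^{-1}(U),F_\nu^{-1}(U))\in\Gamma_{\mu,\nu}$. Its cost is exactly $\int_0^1|F^{-1}_\mu-F^{-1}_\nu|^2\,dt$, which gives the upper bound $\mathcal W_2^2(\mu,\nu)\le\int_0^1|F^{-1}_\mu-F^{-1}_\nu|^2\,dt$.

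\textbf{Step 2: the lower bound (main step).} We must show that no coupling beats $\gamma^\star$. Expanding $|x-y|^2=x^2+y^2-2xy$, the first two terms depend only on the marginals. So it suffices to show $\int xy\,d\gamma\le\int_0^1F_\mu^{-1}F_\nu^{-1}\,dt$ for every $\gamma\in\Gamma_{\mu,\nu}$, assuming finite second moments (the case relevant in the paper, where all measures are supported in $[-M,M]$).

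For this, use the layer-cake identity: for $x,y$ in a bounded set, $xy$ can be written, up to terms depending only on $x$ or only on $y$, as $\iint \mathbbm 1\{x>a\}\mathbbm 1\{y>b\}\,da\,db$. This gives
\[
\int xy\,d\gamma=\text{(marginal terms)}+\iint \gamma\bigl((a,\infty)\times(b,\infty)\bigr)\,da\,db .
\]
The Fréchet--Hoeffding bound $\gamma((a,\infty)\times(b,\infty))\le\min(1-F_\mu(a),1-F_\nu(b))$ holds for every coupling, and equality holds for $\gamma^\star$, since $\{F_\mu^{-1}(U)>a\}=\{U>F_\mu(a)\}$. Integrating yields the desired inequality. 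For unbounded supports, truncate and pass to the limit by monotone or dominated convergence using the finite second moments. Alternatively, one could argue via cyclical monotonicity: an optimal plan exists by tightness and lower semicontinuity, its support is monotone, and a monotone coupling with the given marginals is unique and equals $\gamma^\star$.

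\textbf{Step 3: conclusion.} Combining the two bounds gives equality. The main obstacle is Step 2, and specifically the bookkeeping of the layer-cake representation when signs and unbounded supports appear. Since the statement is a classical result (Santambrogio, Prop.~2.17), one may also simply cite it; the sketch above is what I would write if a self-contained argument is wanted.
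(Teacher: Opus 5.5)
Your proof is correct, but it takes a different route from the paper, which gives no argument of its own and simply cites Santambrogio. The standard proof behind that citation is essentially your ``alternative'' sketch. For a cost $h(x-y)$ with $h$ strictly convex, an optimal plan exists, and its support is $c$-cyclically monotone, hence monotone. The monotone coupling with prescribed marginals is unique and coincides with the quantile coupling $\gamma^\star$.

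Your main route instead uses the expansion $|x-y|^2=x^2+y^2-2xy$ together with the pointwise Fr\'echet--Hoeffding bound $\gamma((a,\infty)\times(b,\infty))\le\min(1-F_\mu(a),1-F_\nu(b))$. The coupling $\gamma^\star$ attains this bound because $\{F_\mu^{-1}(U)>a\}=\{U>F_\mu(a)\}$; this is Hoeffding's covariance identity in disguise. Compared with the textbook argument:
\begin{itemize}
\item It is more elementary: you need neither existence of an optimizer nor cyclical monotonicity.
\item It handles atoms automatically through the generalized inverse, which is the generality the paper actually relies on in the remark after the barycenter lemma.
\item It is tailored to the quadratic cost, and it shows that $\gamma^\star$ is optimal but not that the optimal plan is unique, which the cyclical-monotonicity argument does give.
\end{itemize}

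One minor point of scope: the lemma is stated for arbitrary univariate probability measures, whereas your Step 2 assumes finite second moments. This is harmless in the paper, where all measures live in $[-M,M]$, but you can remove the restriction by making the truncation explicit. Let $c_R(x):=\max(-R,\min(x,R))$, which is nondecreasing and $1$-Lipschitz. Then every $\gamma\in\Gamma_{\mu,\nu}$ satisfies $\int|x-y|^2\,d\gamma\ge\int|c_R(x)-c_R(y)|^2\,d\gamma\ge\int_0^1|c_R(F_\mu^{-1}(t))-c_R(F_\nu^{-1}(t))|^2\,dt$. The second inequality is your bounded case applied to $(c_R)_\#\mu$ and $(c_R)_\#\nu$, whose quantile functions are $c_R\circ F_\mu^{-1}$ and $c_R\circ F_\nu^{-1}$. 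Since $|c_R(a)-c_R(b)|$ increases to $|a-b|$ as $R\to\infty$, monotone convergence gives the lower bound with values in $[0,\infty]$ and no moment assumption. Your Step 1 then closes the argument.
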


\paragraph{Barycenters in the Wasserstein-2 space.}
We introduce the notion of Wasserstein-2 barycenters and properties of the barycenters of a metric space.
The following lemma characterizes the Wasserstein-2 barycenter of a finite collection of univariate probability measures in terms of their quantile functions stated for instance in Section 5.5.5 of \citep{Santambrogio2015} or Section 6.1 of \cite{agueh_carlier}.
\begin{lemma}[Wasserstein-2 barycenter in 1D]\label{prop:cumulative_bary}
  Let $\mu_1,\ldots,\mu_K$ be univariate probability measures in $\mathcal{P}_2(\mathbb{R})$. For all weights $w_1,\ldots,w_K \ge 0$ with
  $\sum_{s=1}^K w_s = 1$, define
  \begin{equation*}
    \mu^* \in \arg\min_{\mu}\; \sum_{s=1}^K w_s\, \mathcal W_2^2(\mu_s,\mu).
  \end{equation*}
  Then the cumulative distribution function of $\mu^*$ is
  \begin{equation*}
    F_{\mu^*}(t) \;=\; \Bigg(\sum_{s=1}^K w_s\, F^{-1}_{\mu_s}\Bigg)^{-1}(t) \quad t \in [0, 1].
  \end{equation*}
  \end{lemma}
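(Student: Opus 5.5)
The plan is to reduce the barycenter problem to a convex minimization over quantile functions, using the isometric embedding of $(\mathcal P_2(\mathbb R),\mathcal W_2)$ into $L^2(0,1)$ given by $\mu\mapsto F^{-1}_\mu$. By Lemma~\ref{prop:wasserstein_quantiles}, for any candidate $\mu\in\mathcal P_2(\mathbb R)$ we can write
\begin{equation*}
\sum_{s=1}^K w_s \mathcal W_2^2(\mu_s,\mu) = \sum_{s=1}^K w_s \int_0^1 \bigl|F^{-1}_{\mu_s}(t)-F^{-1}_\mu(t)\bigr|^2\,dt .
\end{equation*}
This turns the problem into minimizing a functional of $q:=F^{-1}_\mu$ over the cone $\mathcal Q$ of non-decreasing, left-continuous functions in $L^2(0,1)$. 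Every such $q$ is the quantile function of some measure in $\mathcal P_2(\mathbb R)$, namely the law of $q(U)$ with $U\sim\mathrm{Unif}[0,1]$.

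Next I expand the square pointwise in $t$. Writing $\bar q:=\sum_s w_s F^{-1}_{\mu_s}$ and using $\sum_s w_s=1$, the bias–variance identity gives
\begin{equation*}
\sum_s w_s |F^{-1}_{\mu_s}(t)-q(t)|^2 = |\bar q(t)-q(t)|^2 + \sum_s w_s |F^{-1}_{\mu_s}(t)-\bar q(t)|^2 .
\end{equation*}
The second term does not depend on $q$. The objective is therefore $\|q-\bar q\|_{L^2}^2$ plus a constant, and it is minimized over $L^2(0,1)$, in fact uniquely, by $q=\bar q$.

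The step that needs care is checking that $\bar q$ is admissible, i.e.\ $\bar q\in\mathcal Q$. A nonnegative combination of non-decreasing left-continuous functions is again non-decreasing and left-continuous. It is also square integrable, by Minkowski's inequality and the fact that each $\mu_s\in\mathcal P_2$. Hence $\bar q$ is the quantile function of $\mu^*:=\mathrm{Law}(\bar q(U))\in\mathcal P_2(\mathbb R)$, and $\mu^*$ attains the minimum. Since $\mu\mapsto F^{-1}_\mu$ is injective, this minimizer is unique.

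Finally, I read off the c.d.f.\ via the generalized inverse relation $F_{\mu^*}(t)=\inf\{u\in[0,1]:\bar q(u)>t\}$, more conventionally written as $\sup\{u:\bar q(u)\le t\}$, which is exactly the notation $\bigl(\sum_s w_sF^{-1}_{\mu_s}\bigr)^{-1}(t)$ in the statement. One point I would settle separately, using the c.d.f.\ facts of \S\ref{an:cdf_properties}, is that the c.d.f.\ and the quantile function are mutual generalized inverses. This holds without continuity assumptions as long as the generalized inverse is interpreted correctly. I would also note that the variable in the statement ranges over $\mathbb R$, not only over $[0,1]$.
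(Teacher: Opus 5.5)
Your proposal is correct and follows essentially the same route as the paper's argument (given in the remark after the lemma): the quantile representation of $\mathcal W_2$, pointwise minimization giving $\bar q=\sum_s w_sF^{-1}_{\mu_s}$, admissibility of $\bar q$ as a nondecreasing left-continuous $L^2$ function, and uniqueness from strict convexity. Your explicit use of the right-continuous inverse $\sup\{u:\bar q(u)\le t\}$ is a worthwhile refinement: with the convention of Lemma~\ref{prop:equ_cdf_quantile} one only recovers $F_{\mu^*}(t-)$, which differs from $F_{\mu^*}(t)$ at atoms of $\mu^*$.
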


\begin{remark}
  Although this result is originally stated for non-atomic measures in \cite{agueh_carlier,Santambrogio2015}, the extension to general measures with atoms is straightforward using the generalized inverse definition of the quantile function.
      Let $\mu_1,\ldots,\mu_K \in \mathcal{P}_2(\mathbb{R})$ and weights $w_1,\ldots,w_K \ge 0$ with $\sum_{s=1}^K w_s = 1$.
    For a probability measure $\mu$, write $F_\mu$ for its c.d.f. and
    \begin{equation*}
    F_\mu^{-1}(u) \;:=\; \inf\{x \in \mathbb{R} : F_\mu(x) \ge u\}, \qquad u \in (0,1),
  \end{equation*}

    for its (left-continuous) generalized quantile function.
    Define
    \begin{equation*}
    q^*(u) \;=\; \sum_{s=1}^K w_s \, F^{-1}_{\mu_s}(u), \qquad u \in (0,1),
  \end{equation*}

    and let $\mu^*$ be the pushforward of $\mathrm{Unif}(0,1)$ by $q^*$.
    Then $\mu^*$ is a minimizer of
    \begin{equation*}
    \mu \;\longmapsto\; \sum_{s=1}^K w_s\, \mathcal{W}_2^2(\mu_s,\mu),
  \end{equation*}

    and it is unique. Equivalently,
    \begin{equation*}
    F_{\mu^*} \;=\; \Big(\sum_{s=1}^K w_s\, F^{-1}_{\mu_s}\Big)^{-1}
  \end{equation*}

    in the generalized-inverse sense. Moreover $F^{-1}_{\mu^*} = q^*$ and,
    \begin{equation*}
    \mathcal{W}_2^2(\mu_s,\mu^*) \;=\; \int_0^1 \big(F^{-1}_{\mu_s}(u) - F^{-1}_{\mu^*}(u)\big)^2 \, du
    \quad \text{for each } s\in[K].
  \end{equation*}

    In one dimension, for all $\mu,\nu \in \mathcal{P}_2(\mathbb{R})$,
    \begin{equation*}
    \mathcal{W}_2^2(\mu,\nu) \;=\; \int_0^1 \big(F^{-1}_\mu(u) - F^{-1}_\nu(u)\big)^2 \, du,
  \end{equation*}

    with $F^{-1}$ the generalized inverse (works with atoms). Hence the barycenter problem is
    \begin{equation*}
    \inf_{q} \int_0^1 \sum_{s=1}^K w_s\, \big(F^{-1}_{\mu_s}(u) - q(u)\big)^2\,du,
  \end{equation*}
    where the infimum is over nondecreasing, left-continuous $q \in L^2(0,1)$ (each such $q$ corresponds to a $\mu$ via pushforward of $\mathrm{Unif}(0,1)$).
    Pointwise in $u$, the strictly convex quadratic problem is minimized at
    $q^*(u) = \sum_{s=1}^K w_s F^{-1}_{\mu_s}(u)$.
    Since each $F^{-1}_{\mu_s}$ is nondecreasing and left-continuous, so is $q^*$; thus it is feasible and globally optimal. Strict convexity in $q$ yields uniqueness in $L^2$, hence uniqueness of $\mu^*$. The cost identity follows by substitution.
  \end{remark}

  We recall the definition of the barycenter property of a metric space.
  \begin{definition}{(Barycenter property)} \label{def:barycenter_property}
    We say that a metric space $(\mathcal{M}, d)$ satisfies the barycenter property if for all weight vector $\mathbf{w} \in \Delta^{K-1}$ and tuples $\mathbf{a} = (a_1, \dots, a_K) \in \mathcal{M}^K$ there exists a barycenter
    \begin{equation*}
    C_{\mathbf{a}_{\mathbf{w}}} \in \arg\min_{C \in \mathcal{M}} \sum_{s=1}^K w_s d^2(a_s, C).    
  \end{equation*}
    Moreover, for all tuple $\mathbf{a} = (a_1, \dots, a_K) \in \mathcal{M}^K$ we denote by $C_{\mathbf{a}_{\mathbf{w}}}$ a barycenter of $\mathbf{a}$ weighted by $\mathbf{w} \in \Delta^{K-1}$.
    \end{definition}  

    For metrics spaces satisfying this property, the following abstract geometric Lemma holds (Lemma 4.3 in \cite{chzhen2022minimaxframeworkquantifyingriskfairness}).
    \begin{lemma}[Abstract geometric lemma] \label{lem:geom_lemma}
      Let $(\mathcal{M}, d)$ be a metric space satisfying the barycenter property from Definition~\ref{def:barycenter_property}. Let $\mathbf{a} = (a_1, \dots, a_K) \in \mathcal{M}^K$, $\mathbf{w} = (w_1, \dots, w_K) \in \Delta^{K-1}$ and let $C_{\mathbf{a}_{\mathbf{w}}}$ be a barycenter of $\mathbf{a}$ with respect to weights $\mathbf{w}$. 
      For a fixed $\alpha \in [0,1]$, assume that there exists $\mathbf{b} = (b_1, \dots, b_K) \in \mathcal{M}^K$ which satisfies
      \begin{align}
          d(a_s,C_{\mathbf{a}_{\mathbf{w}}}) &= d(a_s, b_s) + d(b_s, C_{\mathbf{a}_{\mathbf{w}}}), \quad s = 1, \dots, K, \tag{P1} \label{P1} \\
          d(b_s, a_s) &= (1 - \sqrt{\alpha}) d(a_s, C_{\mathbf{a}_{\mathbf{w}}}), \quad s = 1, \dots, K. \tag{P2} \label{P2}
      \end{align}
      Then, $\mathbf{b}$ is a solution of
      \begin{equation*}
      \inf_{\mathbf{b} \in \mathcal{X}^K} \sum_{s=1}^K w_s d^2(b_s, a_s) \quad \text{subject to} \quad \sum_{s=1}^K w_s d^2(b_s, C_{\mathbf{b}_\mathbf{w}}) \leq \alpha \sum_{s=1}^K w_s d^2(a_s, C_{\mathbf{a}_\mathbf{w}}).
    \end{equation*}
    \end{lemma}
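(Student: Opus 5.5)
The plan is to prove optimality of $\mathbf b$ in two steps. First, show that every feasible tuple $\mathbf b'$ costs at least $(1-\sqrt\alpha)^2$ times the unfairness of $\mathbf a$, by combining the triangle inequality with the minimality of the barycenter $C_{\mathbf a_{\mathbf w}}$. Second, show that $\mathbf b$ is feasible and attains this value exactly, using (P1)--(P2).

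Notation (the infimum is over $\mathcal M^K$):
\begin{equation*}
\mathcal U(\mathbf c):=\sum_{s=1}^K w_s d^2(c_s,C_{\mathbf c_{\mathbf w}}), \qquad \mathcal R(\mathbf c):=\sum_{s=1}^K w_s d^2(c_s,a_s).
\end{equation*}
Note that $\mathcal U(\mathbf c)=\min_{C\in\mathcal M}\sum_s w_s d^2(c_s,C)$, since $C_{\mathbf c_{\mathbf w}}$ is a minimizer; this exists by the barycenter property.

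\emph{Lower bound.} Take any $\mathbf b'\in\mathcal M^K$ with $\mathcal U(\mathbf b')\le \alpha\,\mathcal U(\mathbf a)$.
\begin{enumerate}
\item Since $C_{\mathbf a_{\mathbf w}}$ minimizes $C\mapsto\sum_s w_s d^2(a_s,C)$, we have $\mathcal U(\mathbf a)\le \sum_s w_s d^2(a_s, C_{\mathbf b'_{\mathbf w}})$.
\item The triangle inequality gives $d(a_s,C_{\mathbf b'_{\mathbf w}})\le d(a_s,b'_s)+d(b'_s,C_{\mathbf b'_{\mathbf w}})$ for each $s$.
\item Minkowski's inequality in the weighted space $\ell^2(\mathbf w)$ then yields
\begin{equation*}
\sqrt{\mathcal U(\mathbf a)}\le \sqrt{\mathcal R(\mathbf b')}+\sqrt{\mathcal U(\mathbf b')}\le \sqrt{\mathcal R(\mathbf b')}+\sqrt{\alpha}\sqrt{\mathcal U(\mathbf a)}.
\end{equation*}
\end{enumerate}
Rearranging, and using $1-\sqrt\alpha\ge 0$, gives $\mathcal R(\mathbf b')\ge (1-\sqrt\alpha)^2\,\mathcal U(\mathbf a)$.

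\emph{Attainment by $\mathbf b$.}
\begin{enumerate}
\item By (P2), $\mathcal R(\mathbf b)=(1-\sqrt\alpha)^2\sum_s w_s d^2(a_s,C_{\mathbf a_{\mathbf w}})=(1-\sqrt\alpha)^2\,\mathcal U(\mathbf a)$.
\item For feasibility, combining (P1) and (P2) gives
\begin{equation*}
d(b_s,C_{\mathbf a_{\mathbf w}})=d(a_s,C_{\mathbf a_{\mathbf w}})-d(a_s,b_s)=\sqrt\alpha\, d(a_s,C_{\mathbf a_{\mathbf w}}).
\end{equation*}
\item Since $C_{\mathbf b_{\mathbf w}}$ minimizes $C\mapsto\sum_s w_s d^2(b_s,C)$,
\begin{equation*}
\mathcal U(\mathbf b)\le \sum_s w_s d^2(b_s,C_{\mathbf a_{\mathbf w}})=\alpha\,\mathcal U(\mathbf a).
\end{equation*}
\end{enumerate}
So $\mathbf b$ is feasible and matches the lower bound, hence it solves the problem.

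The only step needing care is the first one: it compares $\mathcal U(\mathbf a)$ against the barycenter of $\mathbf b'$ rather than of $\mathbf a$. This is legitimate because $\mathcal U(\mathbf a)$ is a minimum over all $C$, so evaluating at any other point, in particular $C_{\mathbf b'_{\mathbf w}}$, can only increase the sum. Everything else is the triangle inequality and Minkowski's inequality in $\ell^2(\mathbf w)$.
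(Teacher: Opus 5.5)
Your proof is correct. Barycenter minimality, the triangle inequality and Minkowski's inequality in $\ell^2(\mathbf w)$ give the lower bound $(1-\sqrt\alpha)^2\sum_s w_s d^2(a_s,C_{\mathbf a_{\mathbf w}})$ for every feasible tuple, and (P1)--(P2) show that $\mathbf b$ is feasible and attains it. The paper does not reprove this lemma but imports it as Lemma~4.3 of \citet{chzhen2022minimaxframeworkquantifyingriskfairness}, and your argument is essentially that proof.
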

    Informally, Property~\eqref{P1} says that $b_{(s)}$ lies on the $W_2$-geodesic joining $a_s$ and $C_{\mathbf{a}_{\mathbf{w}}}$, while Property~\eqref{P2} fixes its position: $b_{(s)}$ is $(1-\sqrt{\alpha})$ of the way from $a_s$ toward $C_{\mathbf{a}_{\mathbf{w}}}$ along that geodesic.

    \begin{remark}\label{remark:barycenter_W2}
    The metric space $(\mathcal{P}_2(\mathbb{R}), \mathcal{W}_2)$ satisfies the barycenter property, we refer the interested reader to e.g. \cite{agueh_carlier} Proposition 2.3 for details about the existence of Wasserstein barycenters.   
  \end{remark}

\subsection{Optimal transport for fairness} \label{an:OT_fairness}

Optimal transport theory has been successfully applied to address fair prediction problems under demographic parity constraints, as demonstrated in works such as \cite{chzhen2020fairregressionwassersteinbarycenters} and \cite{gouic2020projectionfairnessstatisticallearning}. 
We recall some of their key findings, which serve as the basis for our approach to solving regression problems under causality-based fairness constraints. 
The optimization problem for achieving exact fairness under the demographic parity constraint is formulated as
  \begin{equation}
    \label{eq:OT_pb_DP}
    f_{DP}^* \in \arg\min_{f} \left\{\mathbb{E}\left[(f(\mathbf{X}, S)-f^*(\mathbf{X}, S))^2\right] : \min_{\mu \in \mathcal{P}_2(\mathbb{R})} \sum_{s=1}^K w_s \mathcal{W}_2^2(\mu_{f \vert s}, \mu)=0\right\}.
  \end{equation}
For the predictor $f_{DP}^*$ that solves the optimization problem above, its risk is equal to the unfairness of the optimal predictor $f^*$ for the unconstrained problem. 
Furthermore, $f_{DP}^*$ has a closed-form expression, as demonstrated in Theorem~2.3 of \cite{chzhen2020fairregressionwassersteinbarycenters} and Theorem~6 of \cite{gouic2020projectionfairnessstatisticallearning}.
\begin{theorem} \label{thm:risk_opt_unfairness_bayes}

 For $s \in [K]$, let $\mu_{f^*\vert s}$ be a univariate non-atomic measures and let $w_s = \mathbb{P}(S=s)$ then
  \begin{equation*}
    \mathbb{E}[(f_{DP}^*(\mathbf{X}, S)-f^*(\mathbf{X}, S))^2 ] = \min_{\mu \in \mathcal{P}_2(\mathbb{R})} \sum_{s=1}^K w_s \mathcal{W}_2^2(\mu_{f \vert s}, \mu).
  \end{equation*}
  Additionnally, the expression of the optimal fair predictor is given by 
  \begin{equation*}
    f_{DP}^*(x, s) = \left( \sum_{s' \in [K]}w_{s'} F^{-1}_{f^* \vert s'}\right) \circ F_{f^* \vert s}(f^*(x, s)).
  \end{equation*}
\end{theorem}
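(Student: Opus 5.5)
The statement is the classical demographic-parity result, Theorem~2.3 of \cite{chzhen2020fairregressionwassersteinbarycenters}. I would reprove it by reducing the constrained regression problem to a Wasserstein barycenter problem on the group-conditional prediction laws, and then exhibiting a predictor that attains the resulting lower bound.

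\textbf{Step 1: lower bound.} Take any $f$ satisfying the DP constraint, and let $\mu$ be the common law of $f(\mathbf{X},S)\mid S=s$. Conditioning on $S$ gives
\[
\mathbb{E}[(f-f^*)^2]=\sum_s w_s\,\mathbb{E}[(f(\mathbf{X},s)-f^*(\mathbf{X},s))^2\mid S=s].
\]
For each $s$, the pair $(f^*(\mathbf{X},s),f(\mathbf{X},s))$ conditional on $S=s$ is a coupling of $\mu_{f^*\vert s}$ and $\mu$. By the definition of $\mathcal W_2$, each conditional term is therefore at least $\mathcal W_2^2(\mu_{f^*\vert s},\mu)$. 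Summing over $s$, the risk of $f$ is at least $\sum_s w_s\mathcal W_2^2(\mu_{f^*\vert s},\mu)$, which in turn is at least the barycenter value $\min_\mu\sum_s w_s\mathcal W_2^2(\mu_{f^*\vert s},\mu)$. The minimum exists by Remark~\ref{remark:barycenter_W2}.

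\textbf{Step 2: attainment.} Define $T_s:=q^*\circ F_{f^*\vert s}$, where $q^*=\sum_{s'}w_{s'}F^{-1}_{f^*\vert s'}$, and set $g(x,s)=T_s(f^*(x,s))$.
\begin{itemize}
\item Since $\mu_{f^*\vert s}$ is non-atomic, $F_{f^*\vert s}(f^*(\mathbf{X},s))\sim\mathrm{Unif}(0,1)$ conditional on $S=s$. Hence $g(\mathbf{X},s)\mid S=s$ has law $q^*_\#\mathrm{Unif}(0,1)$, which by Lemma~\ref{prop:cumulative_bary} and the subsequent remark is the barycenter $\mu^*$. This law does not depend on $s$, so $g$ is DP-fair.
\item For the risk, write $U=F_{f^*\vert s}(f^*(\mathbf{X},s))$. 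Almost surely $F^{-1}_{f^*\vert s}(U)=f^*(\mathbf{X},s)$; this holds off a null set because the quantile–cdf composition is the identity $\mu_{f^*\vert s}$-a.s. It follows that
\[
\mathbb{E}[(g-f^*)^2\mid S=s]=\int_0^1(q^*(u)-F^{-1}_{f^*\vert s}(u))^2\,du=\mathcal W_2^2(\mu_{f^*\vert s},\mu^*),
\]
using Lemma~\ref{prop:wasserstein_quantiles}.
\item Summing over $s$ with weights $w_s$ gives exactly the barycenter value.
\end{itemize}
So $g$ attains the lower bound from Step~1. It is therefore optimal, which proves both the risk identity and the closed form $f^*_{DP}=g$.

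\textbf{Main obstacle.} The delicate step is the measure-theoretic claim $F^{-1}_\mu(F_\mu(Z))=Z$ a.s. for $Z\sim\mu$ non-atomic. This can fail only on flat regions of $F_\mu$, and those regions carry zero $\mu$-mass.

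If the claimed uniqueness of the optimizer is also needed, I would get it from equality in Step~1. Equality forces the coupling to be optimal, and the optimal coupling is induced by the monotone map because the source is non-atomic. The target must also be the unique barycenter.
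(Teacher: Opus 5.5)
Your proof is correct. The paper gives no proof of its own here: it imports the result from Theorem~2.3 of \cite{chzhen2020fairregressionwassersteinbarycenters} and Theorem~6 of \cite{gouic2020projectionfairnessstatisticallearning}, and your argument is essentially the standard proof from those sources, namely the coupling lower bound (the same bound as Lemma~\ref{lem:B1}), attainment via the monotone map $q^*\circ F_{f^*\vert s}$, and uniqueness from uniqueness of the 1D barycenter and of the optimal plan out of a non-atomic source. You also rightly read the $\mu_{f\vert s}$ on the right-hand side of the statement as $\mu_{f^*\vert s}$, and your almost-sure identity $F^{-1}_\mu(F_\mu(Z))=Z$ is what makes the argument go through under non-atomicity alone.
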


  The following Lemma (Theorem~3 of \cite{gouic2020projectionfairnessstatisticallearning}) highlights the relationship between the risk of a predictor and the discrepancy between its induced distributions and those of the optimal unconstrained predictor.

    \begin{lemma}\label{lem:B1}
      Let $f\in\mathcal{F}$ and assume that for all $v\in\ExoSet$ and $s\in[K]$, the measure $\mu_{f^*\mid v,s}$ is non-atomic. Then
      \begin{equation*}
      \mathcal{R}_v(f)\ \ge\ \sum_{s=1}^{K} w_s\, \mathcal{W}_2^2\!\big(\mu_{f\mid v,s},\,\mu_{f^*\mid v,s}\big).
      \end{equation*}
      \end{lemma}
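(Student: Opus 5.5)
The proof reduces to a one-dimensional coupling inequality, applied separately within each group. Fix $v\in\ExoSet$ and $s\in[K]$. Work on the conditional probability space given $\U=v$, $S=s$. The pair $(f(\mathbf X,S), f^*(\mathbf X,S))$, with $\mathbf X\sim\mu_{\mathbf X\vert v,s}$, is a random vector whose first marginal is $\mu_{f\vert v,s}$ and whose second marginal is $\mu_{f^*\vert v,s}$. Hence its joint law $\gamma_{v,s}$ belongs to the set of couplings $\Gamma_{\mu_{f\vert v,s},\,\mu_{f^*\vert v,s}}$.

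By the definition of the Wasserstein-2 distance as an infimum over couplings,
\begin{equation*}
\mathbb{E}\big[(f(\mathbf{X},S)-f^*(\mathbf{X},S))^2 \mid \U=v,S=s\big]=\int |x-y|^2\,d\gamma_{v,s}(x,y)\ \ge\ \mathcal W_2^2\big(\mu_{f\vert v,s},\mu_{f^*\vert v,s}\big).
\end{equation*}
This step needs no structure on $f$ beyond measurability. Finiteness is not an issue: if the left-hand side is infinite, the inequality holds trivially. Multiplying by $w_s$ and summing over $s$ then yields the claim by the definition of $\mathcal R_v(f)$.

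The only subtle point is to make sure the conditional laws are well defined, i.e.\ that conditioning on $\U=v$ makes sense pointwise. I would handle this through regular conditional distributions. The disintegration $\mu_{\mathbf X\vert v,s}$ exists because $\mathbf X$ takes values in a standard Borel space. The identities $\mu_{f\vert v,s}=f(\cdot,s)_\#\mu_{\mathbf X\vert v,s}$ and $\mu_{f^*\vert v,s}=f^*(\cdot,s)_\#\mu_{\mathbf X\vert v,s}$ then hold for $\nu$-a.e.\ $v$, which is all that matters when the bound is integrated into $\mathcal R(f)$.

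The non-atomicity assumption on $\mu_{f^*\vert v,s}$ is not actually needed for the inequality itself. It matters only for the equality case used later: with it, a Monge map exists, and the bound is attained by predictors of the form $T\circ f^*$. So I would remark that the hypothesis is kept for consistency with Theorem 3 of \cite{gouic2020projectionfairnessstatisticallearning}. I expect no real obstacle here beyond this measure-theoretic bookkeeping.
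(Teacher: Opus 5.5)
Your proof is correct. The conditional joint law of $(f(\mathbf X,S),f^*(\mathbf X,S))$ given $(\U=v,S=s)$ is an admissible coupling of $\mu_{f\mid v,s}$ and $\mu_{f^*\mid v,s}$, so each conditional squared error dominates the corresponding $\mathcal W_2^2$, and the weighted sum gives the claim.

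The paper does not prove this lemma itself; it cites Theorem~3 of \cite{gouic2020projectionfairnessstatisticallearning}, whose inequality rests on this same coupling step. The paper also reuses the step in \eqref{eq:mse_w2_bound} in the proof of Theorem~\ref{thm:mse_lower_bound}. You are right that non-atomicity of $\mu_{f^*\mid v,s}$ is needed only for attainment via a Monge map, not for the inequality.
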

      
\subsection{Properties of  the c.d.f.} \label{an:cdf_properties}
The following result establishes the almost sure equality between the generalized inverse of the quantile function and the cumulative distribution function (c.f. Lemma 21.1 in \citep{Vaart_1998} for the proof).
\begin{lemma}\label{prop:equ_cdf_quantile}
  Let $\mu$ be a probability measure on $\mathbb{R}$ with c.d.f. $F_\mu$ and quantile function $F_\mu^{-1}$.

  Then, for every $x\in\mathbb{R}$
  \begin{equation*}
  \left(F_\mu^{-1}\right)^{-1}(x)=F_\mu(x-):=\sup_{y<x}F_\mu(y),
\end{equation*}
where $(F_\mu^{-1}\big)^{-1}$ denotes the generalized inverse of $F_\mu^{-1}$.
  In particular, if  $F_\mu$ is continuous  (e.g. $\mu$ is non-atomic), then
  $\big(F_\mu^{-1}\big)^{-1}(x)=F_\mu(x)$ for all $x$.
  \end{lemma}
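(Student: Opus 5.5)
The plan is to reduce everything to the elementary switching relation for the left-continuous quantile function: for $u\in(0,1)$ and $y\in\mathbb{R}$,
\begin{equation*}
F_\mu^{-1}(u)\le y \iff u\le F_\mu(y).
\end{equation*}
I would prove this first. If $u\le F_\mu(y)$, then $y$ belongs to the set $\{x: F_\mu(x)\ge u\}$, so its infimum $F_\mu^{-1}(u)$ is at most $y$. Conversely, if $F_\mu^{-1}(u)\le y$, right-continuity of $F_\mu$ shows that the infimum is attained, so $F_\mu(F_\mu^{-1}(u))\ge u$. Monotonicity then gives $F_\mu(y)\ge u$.

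Write $G:=F_\mu^{-1}$, which is nondecreasing on $(0,1)$, and use the same generalized-inverse convention as in the paper, $G^{-1}(x)=\inf\{u\in(0,1): G(u)\ge x\}$. The key step is to describe the complementary set $A_x:=\{u\in(0,1): G(u)<x\}$. Since $G(u)<x$ holds exactly when $G(u)\le y$ for some $y<x$, the switching relation gives
\begin{equation*}
A_x=\bigcup_{y<x}\{u\in(0,1): u\le F_\mu(y)\}=\bigcup_{y<x}(0,F_\mu(y)]\cap(0,1).
\end{equation*}
This is an initial segment of $(0,1)$ whose supremum is $\sup_{y<x}F_\mu(y)=F_\mu(x-)$; it is either $(0,F_\mu(x-))$ or $(0,F_\mu(x-)]$, depending on whether the supremum is attained. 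In both cases the complement $\{u: G(u)\ge x\}$ is an interval of $(0,1)$ with left endpoint $F_\mu(x-)$. Taking the infimum then gives $G^{-1}(x)=F_\mu(x-)$.

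The main (though mild) obstacle is the bookkeeping at the boundary. If $F_\mu(x-)=1$, the set $\{u: G(u)\ge x\}$ is empty, and the convention $\inf\emptyset=1$ (the right endpoint of $(0,1)$) must be adopted so that the formula still reads $1$. If $F_\mu(x-)=0$, the set is all of $(0,1)$, whose infimum is $0$. I would state this convention explicitly rather than gloss over it.

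Finally, if $\mu$ is non-atomic then $F_\mu$ is continuous, so $F_\mu(x-)=F_\mu(x)$ for every $x$. This immediately gives $(F_\mu^{-1})^{-1}=F_\mu$ everywhere, as used in Lemma~\ref{prop:cumulative_bary} and in the closed forms of Proposition~\ref{prop:fair_optimal_predictor}.
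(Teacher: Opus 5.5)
Your proof is correct and takes essentially the same route as the paper. The paper gives no argument of its own and defers to Lemma 21.1 of \citet{Vaart_1998}, whose core content is exactly your switching relation $F_\mu^{-1}(u)\le y \iff u\le F_\mu(y)$, so your derivation is a self-contained version of that citation. Your explicit convention $\inf\emptyset=1$ for the case $F_\mu(x-)=1$ is a worthwhile addition, since the paper's ``for every $x\in\mathbb{R}$'' silently relies on it.
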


\section{Proof of Proposition~\ref{prop:CF_DP}} \label{proof:DP_CF}

Before proving the proposition, we introduce the necessary notation and definitions from the causality literature (see, e.g., \cite{Pearl_2009} for further details).

\subsection{Counterfactual/Interventional operator.} \label{an:cf_operator}

\paragraph{Noise–re-sampled counterfactual.}
In a stochastic SCM with $\mathbf{X}=\phi(\U,S,\varepsilon_X)$ where $\varepsilon_\mathbf{X}$ is the exogenous noise on $\mathbf{X}$ and $Y=f(\mathbf{X},S)$,
define the counterfactual with re-sampled noise as
\begin{equation*}
\widetilde{Y}_{S\leftarrow s'} \;:=\; f\!\big(\phi(\U,s',\varepsilon'_X),\, s'\big),
\end{equation*}
where $\varepsilon'_X$ is an independent copy of $\varepsilon_\mathbf{X}$. 
This differs from the standard unit-level counterfactual (which replays the realized noise).

    With re-sampled noise, we generally lose almost-sure consistency
    \begin{equation*}
    \widetilde{Y}_{S\leftarrow s} \neq Y \quad\text{in general,}
    \end{equation*}
    but we retain \emph{consistency in distribution},
    $\widetilde{Y}_{S\leftarrow s}\stackrel{d}{=}Y$, and \emph{invariance in distribution}:
    if $Y$ is not a descendant of $S$, then $\widetilde{Y}_{S\leftarrow s'}\stackrel{d}{=}Y$.
    This choice intentionally averages out idiosyncratic shocks:
    even when $s'=s$, a new draw of noise may yield a different value, which is desired in our setting.\\
    \begin{remark}
      By construction, we have $\U_{S \leftarrow s}= \U_{S \leftarrow s'}$.\\
      \end{remark}

This setting is in adequation with the \textbf{level 2} described in Section 4.2 of \cite{kusner2018counterfactual} where the latent variable is assumed to ``act as non-deterministic causes of observable variables''.
By abuse of notation we will use the usual unit-level counterfactual notations $X_{S \leftarrow s'}$ and $\hat Y_{S \leftarrow s'}$ to denote the counterfactuals under the re-sampled noise approach computed as follows.

The counterfactual predictive distribution is the pushforward measure
\begin{equation*}
  \mathrm{Law}\!\left(Y_{S\leftarrow s'}\right)
\;=\;
f(\cdot,s')_{\#}\,\mathrm{Law}\!\left(X_{S\leftarrow s'}\right).
\end{equation*}
    
\paragraph{Connection to the causal fairness framework of \citet{plecko2022causalfairnessanalysis}.}
Our structural causal model fits naturally into the general framework of \emph{causal fairness analysis} of \citet{plecko2022causalfairnessanalysis}. 
We work with a stochastic SCM in which the sensitive attribute $S$, an exogenous latent variable $\U$ and downstream noises $(\varepsilon_{\mathbf{X}},\dots)$ generate the observable features and outcomes via
\begin{equation*}
  \mathbf{X} \;=\; \phi(\U,S,\varepsilon_{\mathbf{X}}), 
\qquad 
Y \;=\; g(\mathbf{X},S,\varepsilon_Y),
\qquad
\widehat Y \;=\; f(\mathbf{X},S),
\end{equation*}

and we assume (Assumption~\ref{ass:U_measurable}) that $\U$ is a measurable function of $(\mathbf{X},S)$, i.e.\ $\U=\psi(\mathbf{X},S)$ almost surely.  
This corresponds to the ``more realistic level~2'' in \citet[Section~4.2]{kusner2018counterfactual}, where latent variables act as non-deterministic causes of the observables: the analyst models the structural equations and the distribution of the exogenous variables, but does \emph{not} condition on the realized downstream noise in counterfactual worlds.  
Accordingly, when we intervene on $S$ we re-sample the exogenous noises downstream of $S$ (e.g.\ $\varepsilon_{\mathbf{X}}$) instead of replaying their realized values; this leads to the resampled-noise counterfactuals $\widehat Y_{S\leftarrow s'}$ used in Proposition~\ref{prop:CF_DP}.  

Within the terminology of \citet{plecko2022causalfairnessanalysis}, one can view our model as a particular instance of their \emph{standard fairness model} (SFM) after grouping variables into blocks: $S$ plays the role of the protected attribute, $\U$ encodes latent background factors that are treated as business-necessary, and $(\mathbf{X},Y,\widehat Y)$ collect observed covariates, outcomes and decisions.  
Our fairness \emph{target} is then the requirement that, once we condition on the latent background $\U$, changing $S$ has no remaining effect on the predictor: for $P_{\U}$-almost every $v$ and all $s,s'$ in the support of $P(S\mid \U{=}v)$,
\begin{equation*}
  \mathrm{Law}\!\big(\widehat Y_{S\leftarrow s} \mid \U{=}v\big)
\;=\;
\mathrm{Law}\!\big(\widehat Y_{S\leftarrow s'} \mid \U{=}v\big),
\end{equation*}

which, by Proposition~\ref{prop:CF_DP}, is equivalent to the demographic-parity-type constraint
\begin{equation*}
  \mathrm{Law}\!\big(f(\mathbf{X},S)\mid \U{=}v,S{=}s\big)
\;=\;
\mathrm{Law}\!\big(f(\mathbf{X},S)\mid \U{=}v,S{=}s'\big).
\end{equation*}

In the language of causal fairness analysis, this can be read as a specific fairness \emph{signature}: conditional on the admissible background $\U$, we rule out any unfair direct or indirect influence of $S$ on the decision $\widehat Y$.  
Thus, our counterfactual constraint is one particular point in the broader causal fairness map of \citet{plecko2022causalfairnessanalysis}, specialised to (i) a level~2 stochastic SCM with resampled noise and (ii) a fairness requirement formulated at the level of the predictor $\widehat Y$ rather than the outcome $Y$.

\subsection{Proof of Proposition~\ref{prop:CF_DP}}

With the definition of counterfactual given above, we are now ready to prove Proposition~\ref{prop:CF_DP}.

\propCFDP*
For clarity, we restate the proposition below before proving it. Under Assumption~\ref{ass:U_measurable}, the SCM is $\mathbf{X}=\phi(\U, S, \varepsilon_\mathbf{X})$ with $\U = \psi(\mathbf{X}, S)$ a.s..
The noise $\varepsilon_\mathbf{X}$ is independent of $S$ conditional on $\U$.
\begin{proposition}\label{prop:CF_DP_resampled}
  Assume Assumption~\ref{ass:U_measurable} is satisfied.
  Let $\widehat Y=f(\mathbf{X},S)$ for some regression function $f\in \mathcal{F}$. 
  In the non-deterministic SCM where $\mathbf{X}=\phi(\U,S,\varepsilon_\mathbf{X})$, under Assumption~\ref{ass:U_measurable} the following two constraints are equivalent: 
  \begin{enumerate}[label=(\roman*)]
  \item \label{it:CFX_resampled}
  For all $s \in [K]$, for $P_{\mathbf{X}\mid S=s}$-almost every $\mathbf{x}$ and all $s'\in[K]$,
  \begin{equation*}
  Law\!\big(\hat Y_{S\leftarrow s}\mid \mathbf{X}{=}\mathbf{x},S{=}s\big)
  =Law\!\big(\hat Y_{S\leftarrow s'}\mid \mathbf{X}{=}\mathbf{x},S{=}s\big).
\end{equation*}
  \item \label{it:DPU_resampled}
  For $P_\U$-almost every $v$ and all $s,s'\in Supp(P(S= \cdot \vert \U = v))$,
  \begin{equation*}
  Law\!\big(f(\mathbf{X},S)\mid \U{=}v,S{=}s\big)
  =Law\!\big(f(\mathbf{X},S)\mid \U{=}v,S{=}s'\big).
\end{equation*}

  \end{enumerate}
  \end{proposition}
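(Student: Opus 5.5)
The plan is to compute both sides of \ref{it:CFX_resampled} explicitly through the resampled-noise recipe, and to show that each counterfactual law depends on $(\mathbf{x},s)$ only through $v=\psi(\mathbf{x},s)$. Fix $s$ and $\mathbf{x}$, and set $v=\psi(\mathbf{x},s)$. By abduction (Assumption~\ref{ass:U_measurable}), conditioning on $\{\mathbf{X}=\mathbf{x},S=s\}$ pins down $\U=v$ almost surely. The counterfactual $\hat Y_{S\leftarrow s'}$ equals $f(\phi(v,s',\varepsilon'_\mathbf{X}),s')$, where $\varepsilon'_\mathbf{X}$ is a fresh copy drawn from the law of the noise given $\U=v$. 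This noise is independent of the realized $(\mathbf{X},S)$ once $v$ is fixed. Hence
\begin{equation*}
\mathrm{Law}\big(\hat Y_{S\leftarrow s'}\mid \mathbf{X}=\mathbf{x},S=s\big)=f(\cdot,s')_{\#}\,\mathrm{Law}\big(\phi(v,s',\varepsilon_\mathbf{X})\mid \U=v\big).
\end{equation*}
The key identification step is to show that $\mathrm{Law}(\phi(v,s',\varepsilon_\mathbf{X})\mid\U=v)=\mu_{\mathbf{X}\mid v,s'}$. This uses the fact that $\varepsilon_\mathbf{X}$ is independent of $S$ given $\U$: we get $\mathrm{Law}(\mathbf{X}\mid \U=v,S=s')=\mathrm{Law}(\phi(v,s',\varepsilon_\mathbf{X})\mid\U=v)$. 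It follows that the right-hand side above equals $\mu_{f\mid v,s'}=\mathrm{Law}(f(\mathbf{X},S)\mid\U=v,S=s')$, and in particular the case $s'=s$ gives $\mu_{f\mid v,s}$. This is the consistency-in-distribution property recalled in \S\ref{an:cf_operator}.

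With this identity, \ref{it:CFX_resampled} at $(\mathbf{x},s)$ reads $\mu_{f\mid\psi(\mathbf{x},s),s}=\mu_{f\mid\psi(\mathbf{x},s),s'}$ for all $s'$.

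For \ref{it:DPU_resampled}$\Rightarrow$\ref{it:CFX_resampled}, let $N$ be the $P_\U$-null set where \ref{it:DPU_resampled} fails. The set of $\mathbf{x}$ with $\psi(\mathbf{x},s)\in N$ has $P_{\mathbf{X}\mid S=s}$-measure $P(\U\in N\mid S=s)=P(\U\in N)=0$, using $\U\perp S$. So the equality holds for $P_{\mathbf{X}\mid S=s}$-a.e.\ $\mathbf{x}$. The support restriction is vacuous here, because $\U\perp S$ gives $\mathrm{Supp}(P(S=\cdot\mid\U=v))=\{s:w_s>0\}$.

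For \ref{it:CFX_resampled}$\Rightarrow$\ref{it:DPU_resampled}, let $A_s$ be the exceptional $P_{\mathbf{X}\mid S=s}$-null set. The image law of $\psi(\cdot,s)$ under $P_{\mathbf{X}\mid S=s}$ is $P_{\U\mid S=s}=P_\U$. Therefore the set of $v$ for which every $\mathbf{x}$ with $\psi(\mathbf{x},s)=v$ lies in $A_s$ is $P_\U$-null; more carefully, $\{v: \text{equality fails at } v\}$ has preimage contained in $A_s$, so it is null. Taking the union over the finitely many $s$ gives \ref{it:DPU_resampled}.

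The main obstacle is measure-theoretic care in the identification step. Conditional laws are defined only up to null sets, so the statement \enquote{$\mathrm{Law}(\cdot\mid\U=v,S=s')$} for each fixed $v$ needs a regular version. I would fix a regular conditional distribution kernel $v\mapsto \mathrm{Law}(\varepsilon_\mathbf{X}\mid\U=v)$ and define every conditional law as a pushforward of this kernel, so that the equalities hold for every $v$ rather than almost every $v$. The only genuinely a.e.\ statements then come from Assumption~\ref{ass:U_measurable}, and those are handled by the null-set transfer argument above.
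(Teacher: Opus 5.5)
Your proposal is correct and follows essentially the same route as the paper. You first establish $\mathrm{Law}(\hat Y_{S\leftarrow s'}\mid \mathbf{X}=\mathbf{x},S=s)=\mu_{f\mid \psi(\mathbf{x},s),s'}$ via abduction and the conditional independence of $\varepsilon_{\mathbf{X}}$ and $S$ given $\U$, then transfer null sets between $P_{\mathbf{X}\mid S=s}$ and $P_{\U}$ through the pushforward by $\psi(\cdot,s)$ and $\U\perp S$, with a finite union over $s$. Your use of a regular conditional kernel, and your remark that the support restriction is vacuous when all $w_s>0$, are small refinements the paper leaves implicit.
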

  
  \begin{proof}
  For $(\mathbf{x},s)\in Supp(P_{\mathbf{X},S})$ and $v:=\psi(\mathbf{x},s)$, let $s' \in [K]$, we have 
  \begin{equation}\label{eq:key}
  Law\!\left(\hat Y_{S\leftarrow s'}\mid \mathbf{X}{=}\mathbf{x},S{=}s\right)
  =Law\!\left(f(\mathbf{X},S)\mid \U{=}v,S{=}s'\right)= \mu_{f \vert v, s'}.
  \end{equation}
  We condition on $(\mathbf{X}{=}\mathbf{x},S{=}s)$ and set $v:=\psi(\mathbf{x},s)$ by Assumption~\ref{ass:U_measurable} (abduction).
  Since $\phi, f$ and $Law(\varepsilon_\mathbf{X}\vert \U=v)$ are unmodified by the intervention on $S$ we have 
\begin{equation*}
Law\!\left(\hat Y_{S\leftarrow s'}\mid \mathbf{X}{=}\mathbf{x},S{=}s\right)
=Law\!\left(f(\phi(v,s',\varepsilon'_\mathbf{X}),s')\right),
\end{equation*}
where $\varepsilon'_\mathbf{X}$ is drawn from the same noise law as in the factual model (i.e. the initial model with unmodified equations) and is independent of $S$.
The exogenous noise is independent of $S$, $Law(\varepsilon'_\mathbf{X} \mid \U{=}v)\stackrel d=Law(\varepsilon_\mathbf{X} \mid \U{=}v)$, hence
\begin{align*}
Law\left(f(\phi(v,s',\varepsilon'_\mathbf{X}),s')\right)
&=Law\left(f(\phi(v,s',\varepsilon_\mathbf{X}),s')\right)\\
&=Law\left(f(\mathbf{X},S)\mid \U{=}v,S{=}s'\right),
\end{align*}
since, given $(\U{=}v,S{=}s')$, the factual model generates $\mathbf{X}=\phi(v,s',\varepsilon_\mathbf{X})$.
This yields \eqref{eq:key}.

  \smallskip\noindent
  (ii)$\Rightarrow$(i): Fix $\mathbf{x},s$ and put $v=\psi(\mathbf{x},s)$. Using \eqref{eq:key} twice,
  \begin{align*}
  Law(\hat Y_{S\leftarrow s}\mid \mathbf{X}{=}\mathbf{x},S{=}s)
  &=\mu_{f \vert v, s},\\
  &=\mu_{f \vert v, s'},\\
  &=Law(\hat Y_{S\leftarrow s'}\mid \mathbf{X}{=}\mathbf{x},S{=}s).
\end{align*}
  
  \smallskip\noindent
  (i)$\Rightarrow$(ii): By (i) and \eqref{eq:key},
  \begin{equation*}
  Law(f(\mathbf{X},S)\mid \U{=}\psi(\mathbf{x},s),S{=}s)
  =Law(f(\mathbf{X},S)\mid \U{=}\psi(\mathbf{x},s),S{=}s')
  \quad\text{for }P_{\mathbf{X}\mid S=s}\text{-a.e. } \mathbf{x}.
\end{equation*}

  Since $\U=\psi(\mathbf{X},S)$ a.s., the pushforward of $P_{\mathbf{X}\mid S=s}$ through $\mathbf{x}\mapsto \psi(\mathbf{x},s)$ equals $P_{\U\mid S=s}$. Hence
  \begin{equation*}
  Law(f(\mathbf{X},S)\mid \U{=}v,S{=}s)
  =Law(f(\mathbf{X},S)\mid \U{=}v,S{=}s')
  \quad\text{for }P_{\U\mid S=s}\text{-a.e. }v.
\end{equation*}
From this identity, push forward $P_{\mathbf{X} \vert S=s}$ by $\mathbf{x} \mapsto \psi(\mathbf{x}, s)$ to get $P_{\U \vert S=s}$.
Then there exists a $P_{\U \vert S=s}$ null set $D_s$ such that for all $v \notin D_s$, and all $s'$, $\mu_{f \vert v, s} = \mu_{f \vert v, s'}$. 
Let $D:=\bigcup_{s=1}^K D_s$, which is a $\mathbb{P}_\U$ null set since the exogenous $\U$ is independent of $S$.

Hence, for $P_\U$-a.e. $v$ and all $s, s'$, $\mu_{f \vert v, s}= \mu_{f \vert v, s'}$, i.e.
\begin{equation*}
  Law(f(\mathbf{X},S)\mid \U{=}v,S{=}s)
  =Law(f(\mathbf{X},S)\mid \U{=}v,S{=}s').
\end{equation*}
which yields (ii).
  \end{proof}
  
  \begin{remark}
  If the predictor is randomized, $\hat Y=\xi(\mathbf{X},S,\varepsilon_{\hat Y})$, with $\xi$ a measurable function and $\varepsilon_{\hat Y}$ an exogenous noise independent of $(\U,S,\varepsilon_\mathbf{X})$.
  Then simply resample $\varepsilon_{\hat Y}$ in the prediction step; the proof goes through with $f$ replaced by the conditional law of $\xi$. 
  Under mild regularity (e.g., continuity of $v\mapsto Law(f(\mathbf{X},S)\mid \U=v,S=s)$), the “$P_\U$-a.e.” qualifier can be strengthened to “for all $v$ in the support of $\U$”.
  \end{remark}

\section{Proof of Propositions~\ref{prop:fair_optimal_predictor} and~\ref{prop:closed_form_relaxed}} \label{proof:reduction_OT}

In this section, we prove the expression of the optimal predictor under the relaxed fairness constraint stated in Proposition \ref{prop:closed_form_relaxed}. 
The proof for the exact fairness case described in Proposition~\ref{prop:fair_optimal_predictor} is a particular case of this one when $\alpha=0$.

We recall that the optimal fair predictor $f^*_0$ is the solution of the optimization problem
\begin{equation*}
  f_0^* \in \argmin_{f \in \mathcal{F}} \left\{\mathcal{R}(f) : \mathcal{U}(f) =0\right\}. 
\end{equation*}

\ClosedFormRelaxed*

First, we define a metric space on the set of family of measures on $\mathbb{R}$ with finite second moment, and show that this metric space satisfies the barycenter property. 
With this property we use an \textit{abstract geometric lemma} from \cite{chzhen2022minimaxframeworkquantifyingriskfairness} to prove the closed form of the optimal predictor under the relaxed fairness constraint and the optimal risk-unfairness pair of this predictor.

The proof of Proposition~\ref{prop:closed_form_relaxed} leverages an abstract geometric lemma (Lemma 4.3 from \cite{chzhen2022minimaxframeworkquantifyingriskfairness}) recalled in Lemma~\ref{lem:geom_lemma}.
Let $(\mathcal{V},\mathcal{A},\nu)$ be a probability space. Define
\begin{equation*}
  \mathcal{M}
  :=\Big\{ a=(a_v)_{v\in\mathcal{V}}:\ \forall v \in \ExoSet, \ a_v\in\mathcal{P}_2(\mathbb{R}), \quad v \mapsto a_v \ \text{is measurable} \Big\},
\end{equation*}
the set of families of measures on $\mathbb{R}$ with finite second moment.
\begin{definition}
For $a,b\in\mathcal{M}$ we define the averaged Wasserstein metric as
  \begin{equation*}
  \bar d(a,b)\ :=\ \Big( \int_{\mathcal{V}} \mathcal{W}_2^2(a_v,b_v)\, d\nu(v) \Big)^{1/2}.
\end{equation*}
  \end{definition}
  The folowing proposition ensures that $(\mathcal{M},\bar d)$ is a metric space satisfying the barycenter property.
  The proof can be found in Section~\ref{sec:proof_auxiliary_lemmas}.
\begin{proposition}\label{prop:barycenter_property_M}
  The metric space $( \mathcal{M}, \bar{d})$ satisfies the barycenter property i.e.,
for all weights $w\in\Delta^{K-1}$ and any $K$-tuple $a=(a^{(1)},\dots,a^{(K)})\in\mathcal{M}^K$
there exists $C_{\mathbf{a}_{\mathbf{w}}}\in\mathcal{M}$ such that
\begin{equation*}
  C_{\mathbf{a}_{\mathbf{w}}}\ \in\ \arg\min_{C\in\mathcal{X}}\ \sum_{s=1}^K w_s\,\bar d^{\,2}\!\big(a^{(s)},C\big).
\end{equation*}
\end{proposition}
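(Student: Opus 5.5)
The plan is to construct the barycenter pointwise in $v$ and then check that this pointwise choice gives a measurable family that minimizes the integrated objective. Before anything else, we confirm that $\bar d$ is a (pseudo)metric on $\mathcal{M}$, identifying families that agree $\nu$-a.e. Since $\mathcal{W}_2^2(a_v,b_v)=\|F^{-1}_{a_v}-F^{-1}_{b_v}\|^2_{L^2(0,1)}$ by Lemma~\ref{prop:wasserstein_quantiles}, we can write $\bar d(a,b)$ as the $L^2(\mathcal{V}\times(0,1),\nu\otimes\mathrm{Leb})$ norm of $(v,u)\mapsto F^{-1}_{a_v}(u)-F^{-1}_{b_v}(u)$. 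Symmetry and the triangle inequality then follow from Minkowski's inequality. Finiteness of $\bar d$ needs $\int \int x^2\,da_v\,d\nu<\infty$, so we either restrict $\mathcal{M}$ to such families or note that in our application every family is supported in $[-M,M]$ by Property~\ref{ass:distrib_densities}.

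\textbf{Candidate.} Given $a^{(1)},\dots,a^{(K)}\in\mathcal{M}$, define $C_v$ for each $v$ as the pushforward of $\mathrm{Unif}(0,1)$ by
\begin{equation*}
q^*_v(u):=\sum_{s=1}^K w_s F^{-1}_{a^{(s)}_v}(u).
\end{equation*}
By Lemma~\ref{prop:cumulative_bary} and the remark after it, $C_v$ is the unique $\mathcal{W}_2$-barycenter of $(a^{(s)}_v)_s$. It lies in $\mathcal{P}_2(\mathbb{R})$ because, by Jensen, $\int (q^*_v)^2\le \sum_s w_s\int (F^{-1}_{a^{(s)}_v})^2<\infty$.

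\textbf{Optimality.} Take any $C\in\mathcal{M}$. Pointwise barycentric optimality gives $\sum_s w_s\mathcal{W}_2^2(a^{(s)}_v,C_v)\le \sum_s w_s\mathcal{W}_2^2(a^{(s)}_v,C'_v)$ for every $v$, where $C'_v$ denotes the $v$-th measure of $C$. Integrating against $\nu$ and exchanging sum and integral (Tonelli, since all terms are nonnegative) yields
\begin{equation*}
\sum_s w_s\bar d^{\,2}(a^{(s)},C)\le \sum_s w_s\bar d^{\,2}(a^{(s)},C').
\end{equation*}
Finiteness of the left-hand side comes from $\mathcal{W}_2^2(a^{(s)}_v,C_v)\le 2\int x^2da^{(s)}_v+2\int (q_v^*)^2$.

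\textbf{Main obstacle: measurability.} We must show $v\mapsto C_v$ is measurable, so that $C\in\mathcal{M}$. The plan is to interpret measurability of $v\mapsto a_v$ as measurability of $v\mapsto a_v(B)$ for every Borel $B$, equivalently of $v\mapsto F_{a_v}(t)$ for each $t$. For fixed $u$, we have $\{v: F^{-1}_{a_v}(u)\le t\}=\{v: F_{a_v}(t)\ge u\}$, which is measurable. Hence $(v,u)\mapsto F^{-1}_{a_v}(u)$ is measurable in $v$ for fixed $u$ and left-continuous and monotone in $u$. Approximating in $u$ by rationals, it is then jointly measurable, and so is $(v,u)\mapsto q^*_v(u)$. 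Consequently $C_v(B)=\int_0^1\mathbbm{1}\{q^*_v(u)\in B\}\,du$ is measurable in $v$ by Fubini. This joint-measurability step via monotone approximation is the one technical point; the rest reduces to the 1D barycenter lemma.
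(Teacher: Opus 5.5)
Your proof is correct and follows the paper's own route. You build the barycenter pointwise through the averaged quantile $q_v^*=\sum_s w_s F^{-1}_{a_v^{(s)}}$, bound its second moment by Jensen, and integrate the pointwise optimality against $\nu$; along the way you make explicit two points the paper only asserts, namely joint measurability of $(v,u)\mapsto F^{-1}_{a_v}(u)$ and the need for integrable second moments so that $\bar d$ is finite.
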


We established that $(\mathcal{M}, \bar{d})$ satisfies the barycenter property.
Next, we determine the form of the optimal predictor for the relaxed problem stated in Proposition~\ref{prop:closed_form_relaxed}. 
We propose a specific form for the predictor and demonstrate its optimality by verifying that it satisfies properties \eqref{P1}-\eqref{P2} from Lemma \ref{lem:geom_lemma}, then applying the lemma to conclude the result.
We use the notation introduced in Section~\ref{an:notations}.

  We can now proceed to the proof of Proposition~\ref{prop:closed_form_relaxed}.

  \paragraph{Risk/unfairness identities.}
    Before proving the form of the solution of the relaxed problem, we start by showing that if it holds then the expressions of the risk follow directly.
To derive the expression for the risk, we leverage Theorem \ref{thm:risk_opt_unfairness_bayes} from \cite{chzhen2020fairregressionwassersteinbarycenters}. For a given $v \in \ExoSet$, we have 
\begin{equation*}
  \mathcal{R}_v(f^*_0) = \mathcal{U}_v(f^*), 
\end{equation*}
with $ \mathcal{R}_v(f):= \sum_{s=1}^K w_s \mathbb{E}[(f(\mathbf{X}, S)-f^*(\mathbf{X}, S))^2 \mid \U = v, S = s]$ the local risk and $\mathcal{U}_v(f)$ the local unfairness as defined in Eq.\eqref{def:unfairness}. By integrating on the values of the exogenous variable we obtain
\begin{align*}
  \mathcal{R}(f^*_0) &= \mathcal{U}(f^*).
\end{align*}
The expression of the risk follows from plugging-in the expression of the $\alpha$-relaxed fair optimal predictor into the risk. 
We obtain
\begin{align*}
  \mathcal{R}(f_\alpha^*) &= \sum_{s=1}^K w_s \mathbb{E}[(f^*(\textbf{X}, S) - f_\alpha^*(\textbf{X}, S))^2\vert S=s] , \\
  &= (1-\sqrt{\alpha})^2 \sum_{s=1}^K w_s \mathbb{E}[(f^*(\textbf{X}, S) - f_0^*(\textbf{X}, S))^2 \vert S=s] , \\
  &= (1- \sqrt{\alpha})^2 \mathcal{R}(f^*_0).
\end{align*}
It remains to prove the form of the solution.
\paragraph{Closed form of the optimal predictor.}
  Considering the metric space $(\mathcal{M}, \bar{d})$ defined above, for $\alpha \in [0, 1]$,  $v \in \ExoSet$ , $s\in [K]$ and $\mathbf{x} \in \mathcal{X}$, the solution of the relaxed problem \eqref{eq:relaxed_OT_pb} is given by
  \begin{equation*}
  f^*_\alpha(\mathbf{x},s) \ :=\ \sqrt{\alpha}\,f^*(\mathbf{x},s) + (1-\sqrt{\alpha})\,T_{v,s}\big(f^*(\mathbf{x},s)\big), 
\end{equation*}
where $\mu_v^*$ is the Wasserstein-2 barycenter of the Bayes predictor group-conditional distributions $\{\mu_{f^* \vert v, s}\}_{s=1}^K$, and
$T_{v,s}:=\ F^{-1}_{\mu^*_v}\circ F_{\mu_{f^* \vert v, s}}$ is the optimal transport plan from $\mu_{f^* \vert v, s}$ to $\mu_{v}^*$.
\begin{remark}
  For all $v \in \ExoSet, s \in [K]$, $T_{v,s}$ is well-defined as a map because $\mu_{f^* \vert v, s}$ is non-atomic, satisfying Property~\ref{ass:distrib_densities}.
\end{remark}
Then proof is structured as follows. We define a candidate family $b$ by moving each $\mu_{f^*\mid v,s}$ along the $\mathcal{W}_2$-geodesic toward the barycenter $\mu_v^*$. 
We then show that $b$ satisfies \eqref{P1}--\eqref{P2} in $(\mathcal{M},\bar d)$, so Lemma~\ref{lem:geom_lemma} implies that $b$ solves the relaxed problem.

  \paragraph{Defining the geodesic candidate \textbf{b}.}

For each $(v,s)$, let $\alpha \in [0, 1]$ and set $b^{(s)}_v:=\gamma_{v,s}(1-\sqrt{\alpha})$, i.e.
  \begin{equation}
  b^{(s)}_v=\left((1-t)\,\mathrm{Id}+t\,T_{v,s}\right)_{\#} \mu_{f^* \vert v, s}.
  \label{eq:geodesic_1D}
\end{equation}
With $t= 1 - \sqrt{\alpha}$.
Our aim is to prove that the candidate family $b^{(s)}=(b^{(s)}_v)_{v\in\ExoSet}$ with a fixed parameter $\alpha\in[0,1]$—\emph{constant across $v$}—solves the relaxed problem~\eqref{eq:relaxed_OT_pb}. 
  To do so, we define the families $\mu_{s}:=\{\mu_{f^* \vert v, s}\}_{v \in \ExoSet} \in \mathcal{M}$, $\mu^*:=\{\mu_v^*\}_{v \in \ExoSet} $ and $b^{(s)}:= \{b_v^{(s)} \}_{v \in \ExoSet} \in \mathcal{M}$. 
  We then show by pointwize minimization that $\mu^*$ is the barycenter of $\{\mu_s\}_{s \in [K]}$ with weights $\textbf{w}$ in the metric space $(\mathcal{M}, \bar d)$. 

  \paragraph{Pointwise minimization: the barycenter.}
  We use the metric space $(\mathcal{M}, \bar{d})$ defined above.
Given the group-conditional families $\mu_s:=\{\mu_{f^*\mid v,s}\}_{v\in\ExoSet}\in  \mathcal{M}$ and weights $\mathbf w=(w_s)_{s=1}^K$,
consider the global \emph{objective}
\begin{align*}
\Objective(\mu)\;:=\;\sum_{s=1}^K w_s\,\bar d^2(\mu,\mu_s)
&\;=\;\int_{\ExoSet}\Bigg[\sum_{s=1}^K w_s\,\mathcal{W}_2^2\!\big(\mu_v,\mu_{f^*\mid v,s}\big)\Bigg]\,d\nu(v), \\
&\;=\;\int_{\ExoSet} \Objectivev(\mu_v)\,d\nu(v),
\end{align*}

Since $\mu \mapsto \Objective(\mu)$ is an integral of functions of $\mu_v$ only, we can minimize independently for each $v$.

For each $v\in\ExoSet$, the \emph{pointwise objective} is
\begin{equation*}
\Objectivev(\nu)\;:=\;\sum_{s=1}^K w_s\,\mathcal{W}_2^2\!\big(\nu,\mu_{f^*\mid v,s}\big).
\end{equation*}

We now show that if, 
\begin{equation*}
\mu_v^* \in \arg\min_{\eta\in\mathcal P_2}\ \Objectivev(\eta),
\end{equation*}
then $\mu^*:=\{\mu_v^*\}_{v\in\ExoSet}$ minimizes the global objective $\Objective$.

Fix any $\mu=(\mu_v)_v$. For each $v$, since $\mu_v^*\in\arg\min_\eta \Objectivev(\eta)$, we have
\begin{equation*}
  \Objectivev(\mu_v) \ge \Objectivev(\mu_v^*).
\end{equation*}
Integrating over $v$ yields
\begin{equation*}
\Objective(\mu)
=\int_{\ExoSet}\Objectivev(\mu_v)\,\nu(v) \ge \int_{\ExoSet}\Objectivev(\mu_v^*)\,\nu(v) = \Objective(\mu^*).
\end{equation*}
Hence, $\mu^*$ is a global minimizer of $\Objective$.
By the definition of $\Objectivev$, each $\mu_v^*$ is the weighted $\mathcal{W}_2$ barycenter at location $v$.
Thus, we have shown that the family $\mu^*=\{\mu_v^*\}_v$ is the barycenter of $\{\mu_s\}_{s=1}^K$ in the metric space $(\mathcal{M}, \bar{d})$. 
Next, we will demonstrate that the family $b^{(s)}$, defined in \eqref{eq:geodesic_1D},  satisfies the conditions of Lemma \ref{lem:geom_lemma}, allowing us to apply the lemma.

\paragraph{Verification of \eqref{P1}--\eqref{P2} along the geodesic.}
  Let us recall Properties~\eqref{P1}-\eqref{P2} from Lemma \ref{lem:geom_lemma} that we need to verify for $b^{(s)}$.
  \begin{align}
  \bar d^2(b^{(s)}, \mu_s) + \bar d^2(b^{(s)}, \mu^*) &= \bar d^2(\mu_s, \mu^*), \tag{P1}\label{P1}\\
  \bar d^2(b^{(s)}, \mu_s) &= (1-\sqrt{\alpha})^2 \bar d^2(\mu_s, \mu^*). \tag{P2}\label{P2}
\end{align}
We can now proceed to the verification. 
  To show $\eqref{P2}$ we use the expression of the distance $\bar{d}$ 

\begin{align*}
  \bar d^2(b^{(s)}, \mu_s) &= \int_{\ExoSet} \mathcal{W}_2^2(b_v^{(s)}, \mu_{f^* \vert v, s}) d\nu(v), \\
  &= (1-\sqrt{\alpha})^2 \int_{\ExoSet} \mathcal{W}_2^2(\mu_{f^* \vert v, s}, \mu_v^*) d\nu(v), \\
  &= (1-\sqrt{\alpha})^2 \bar d^2(\mu_s, \mu^*),
\end{align*}
where the second line follows from the application of the property \eqref{P2} from \cite{chzhen2022minimaxframeworkquantifyingriskfairness} in the metric space $(\mathcal{P}_2(\mathbb{R}), \mathcal{W}_2)$ at fixed $v$. 
Thus, \eqref{P2} is satisfied. 
To prove \eqref{P1}, we combine \eqref{P2} with the following equality
\begin{align}
  \bar d^2(b^{(s)}, \mu^*) &= \int_{\ExoSet} \mathcal{W}_2^2(b_v^{(s)}, \mu_v^*)\, d\nu(v) \nonumber \\
  &= \alpha \int_{\ExoSet} \mathcal{W}_2^2(\mu_{f^* \vert v, s}, \mu_v^*)\, d\nu(v) \nonumber \\
  &= \alpha \bar d^2(\mu_s, \mu^*) \label{eq:last_alpha_bar_d}
\end{align}
where the second line follows from the definition of $b_v^{(s)}$ in \eqref{eq:geodesic_1D} and the property \eqref{P2} in the metric space $(\mathcal{P}_2(\mathbb{R}), \mathcal{W}_2)$.
Combining \ref{eq:last_alpha_bar_d} with property \eqref{P2} and we finally get 
\begin{equation*}
  \bar d^2(b^{(s)}, \mu_s) + \bar d^2(b^{(s)}, \mu^*) = \bar d^2(\mu_s, \mu^*),
\end{equation*}
so \eqref{P1} is satisfied. 
  We have shown that $b^{(s)}$ satisfies \eqref{P1}–\eqref{P2} from Lemma \ref{lem:geom_lemma}. We can thus apply the lemma and deduce
  \begin{equation}
    \sum_{s=1}^K w_s\,\bar d^2 (b^{(s)},\mu_s)= \inf_{\mathbf{b}\in \mathcal{M}^K(\mathbb{R})} \left\{\sum_{s=1}^K w_s\,\bar d^2 (b^{(s)},\mu_s) \quad \text{s.t.} \quad \sum_{s=1}^K w_s\,\bar d^2 (b^{(s)},C_\mathbf{b}) \leq \alpha \sum_{s=1}^K w_s\,\bar d^2 (\mu_s,\mu^*) \right\}.
  \end{equation}
  \textbf{Conclusion.}
  From the expression of $b^{(s)}_v$ in Eq.\eqref{eq:geodesic_1D}, we have $b^{(s)}_v= {f^*_\alpha(\cdot,s)_{\#}\mu_{X\vert v, s}}$.
  We have
  \begin{align*}
    \mathcal{W}_2^2(b_v^{(s)}, \mu_{f^* \vert v, s}) = \mathbb{E}[(f_\alpha(X,S) - f^*(X,S))^2 \vert \U=v, S=s],
  \end{align*}
  so $\mathcal{U}(f_\alpha^*) = \alpha \mathcal{U}(f^*)$ which proves the unfairness espression.
  By definition of the risk
  \begin{align}\label{an:risk_decomposition}
  \mathcal{R}(f) = \int_v \sum_{s=1}^K w_s \mathbb{E}[(f^*(\mathbf{X}, S) - f(\mathbf{X},S))^2 \vert \U=v, S=s] d\nu(v),
  \end{align}
  and by Theorem 3 from \cite{gouic2020projectionfairnessstatisticallearning}, for $f \in \mathcal{F}$  such that $\mathcal{U}(f) \leq \alpha \mathcal{U}(f^*)$ we have 

  \begin{align*}
 \mathcal{R}(f) &\ge \int_v \sum_{s=1}^K w_s \mathcal{W}_2^2(b_v^{(s)}, \mu_{f^* \vert v, s}) d\nu(v). 
 \end{align*}
 Thus, 
 \begin{align*}
  \mathcal{R}(f) &\ge \int_v\sum_{s=1}^K w_s \mathbb{E}[(f_\alpha^*(\mathbf{X},S) - f^*(\mathbf{X},S))^2 \vert \U=v, S=s]d\nu(v), \\
  &= \mathcal{R}(f_\alpha^*),
  \end{align*}
where the second line comes from the optimality of $b^{(s)}$ for the relaxed problem.

Therefore, $f_\alpha^*$ is optimal for the relaxed problem.

\begin{remark}
$b_v^{(s)}$ is the $\mathcal{W}_2$-geodesic from $\mu_{f^* \vert v, s}$ to $\mu_v^*$ (we refer the interested reader to Section 6.1 from \cite{agueh_carlier} for more details about the constant speed geodesic). 

\end{remark}

\section{Proof of theorems and corollaries}  \label{proof:l_star}

Recall that we assume that the unconstrained optimum $f^*$ and the black-box predictor $f^{bb}$ satisfy Properties~\ref{ass:distrib_densities} and \ref{ass:lipschitz_cdf}. 

\paragraph{Structure of the proof of Theorem~\ref{thm:l_star}.} 
First, we establish that post-processed predictors inherit the regularity properties of their inputs (§\ref{sec:stability_fbb}). 
Next, we decompose the unfairness into discretization bias and estimation error terms, then bound each component separately (§\ref{proof:thm_l_star}). 
For the bias term, we leverage the $\frac{1}{2}$-Hölder continuity of the unfairness function with respect to the exogenous variable $\U$ (Lemma~\ref{proof:lemma_bias_unfairness}). 
For the error term, we combine Wasserstein-to-Kolmogorov-Smirnov bounds (Lemma~\ref{lemma:KS_W2}) with the Dvoretzky-Kiefer-Wolfowitz inequality to control empirical deviations (Lemma~\ref{lem:variance_unfairness}).
Finally, we optimize the resulting bias-error tradeoff to determine the optimal number of intervals $L^*$ (§\ref{proof:thm_l_star}).

\paragraph{Extension to relaxed fairness.} 
For Theorem~\ref{thm:relaxed_l_star}, we additionally show that interval unfairness is convex under predictor mixing (Lemma~\ref{lem:interval-convexity}), which allows us to decompose the relaxed problem into independent discretization and relaxation steps (§\ref{sec:proof_relaxed_l_star}).

\subsection{Regularity properties of the post-processed predictors}
\label{sec:stability_fbb}

Before proving the results we recall the assumptions under which we work. 
We assume that the distributions of the predictions of the unconstrained optimal predictor $f^*$ and black-box $f^{bb}$'s distributions $\left\{\mu_{f^* \vert v, s}\right\}_{v \in \ExoSet, s\in [K]}$ and $\left\{\mu_{f^{bb} \vert v, s}\right\}_{v \in \ExoSet, s\in [K]}$ respectively are non-atomic and are supported on an interval $[-M, M]$ (Property~\ref{ass:distrib_densities}). 
This implies in particular that the measures admit finite second moment bounded by $M^2$.
We assume that their c.d.f. verify Property~\ref{ass:lipschitz_cdf}.

We define a property for the post-processed predictor.
\begin{property}[Piecewise in $v$ Lipschitzness]
  \label{ass:piecewise_v_lipschitz}
  A predictor $f$ is piecewise in $v$ Lipschitz for a partition $\{\mathcal I_\ell\}_{\ell=1}^L$ of $\ExoSet$ if there exists a constant $\Lcdf>0$ such that, for every $\ell\in[L]$, $s\in[K]$, and $t\in\mathbb R$, the map $v\mapsto F_{f\mid v,s}(t)$ is $\Lcdf$-Lipschitz on $\mathcal I_\ell$, i.e.,
  \begin{equation*}
      \left\vert F_{f\mid v,s}(t)-F_{f\mid v',s}(t)\right\vert\le \Lcdf\ \vert v-v'\vert 
  \quad\text{for all } v,v'\in\mathcal I_\ell.
\end{equation*}

  \end{property}

In this section, we derive the regularity assumption that the post-processed predictor inherits from the black-box predictor. 

We show that the bounded support of the distributions and the Lipschitz property of the c.d.f. are preserved by the post-processed predictor $\widehat{f}_\ell$. 
The proof of the following lemma is provided in Section~\ref{proof:lem_stability_monotone}.

\begin{lemma}[Stability under plug-in post-processing]\label{lem:stability_monotone}
  Assume Properties~\ref{ass:distrib_densities}--\ref{ass:lipschitz_cdf} hold for the black-box predictor $f^{bb}$.
  For a fixed interval $\mathcal I_\ell$,
  Almost surely with respect to the randomness of training sample, the post-processed predictor $\widehat f_\ell$ satisfies:
  
  \begin{enumerate}[label=(\roman*)]
  \item \textbf{Bounded support:} For all $(v,s)$, $Supp(\mu_{\widehat f_\ell\mid v,s}) \subseteq [-M,M]$.
  \item \textbf{Piecewise $\Lcdf$-Lipschitz c.d.f. in $v$:} For all $v,v'\in \mathcal I_\ell$,
  \begin{equation*}
  \sup_{t\in\mathbb R}\left|\widehat G_{v,s}(t)-\widehat G_{v',s}(t)\right| \;\le\; \Lcdf\,|v-v'|,
  \end{equation*}
  where $\widehat G_{v,s}$ denotes the conditional c.d.f.\ of $\widehat f_\ell(X,s)$ given $(V=v,S=s)$.
  \end{enumerate}
  In particular, the global post-processed predictor $\widehat f_L(\mathbf x,s):=\sum_{\ell} \widehat f_\ell(\mathbf x,s)\,\mathbbm 1_{\{v\in\mathcal I_\ell\}}$ satisfies Property~\ref{ass:piecewise_v_lipschitz}.
  \end{lemma}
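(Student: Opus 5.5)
The plan is to exploit the fact that, on a fixed interval $\mathcal I_\ell$ and for fixed $s$, the post-processed predictor is a deterministic monotone transform of the black-box output that does not depend on $v$. Conditionally on the post-processing sample, write
\[
\widehat T_{\ell,s}:=\Big(\sum_{s'=1}^K w_{s'}\widehat F^{-1}_{f^{bb}\vert \ell,s'}\Big)\circ \widehat F_{f^{bb}\vert \ell,s},
\]
so that $\widehat f_\ell(\mathbf x,s)=\widehat T_{\ell,s}(f^{bb}(\mathbf x,s))$. The point to stress is that $\widehat T_{\ell,s}$ is built from the empirical c.d.f.s and quantiles of the whole cell $(\ell,s)$, and is therefore the same map for every $v\in\mathcal I_\ell$. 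Both claims then reduce to properties of the black box transported through a fixed nondecreasing map.

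For \emph{(i)}, I would note that each empirical quantile function $\widehat F^{-1}_{f^{bb}\vert \ell,s'}$ takes values among the sample points $f^{bb}(\mathbf X_i,s')$. By Property~\ref{ass:distrib_densities}, almost surely these lie in $[-M,M]$. A convex combination with weights $w_{s'}$ then stays in $[-M,M]$, so $\widehat T_{\ell,s}$ maps into $[-M,M]$ and the support claim follows.

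For \emph{(ii)}, the plan is as follows.

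1. Since $\widehat F_{f^{bb}\vert\ell,s}$ and each empirical quantile are nondecreasing, and the weights are nonnegative, $\widehat T_{\ell,s}$ is nondecreasing.
2. For a nondecreasing map $T$ and any $t$, the set $A_t:=\{y: T(y)\le t\}$ is an interval unbounded below, i.e. of the form $(-\infty,a_t]$ or $(-\infty,a_t)$, possibly empty or all of $\mathbb R$.
3. Hence
\[
\widehat G_{v,s}(t)=\mathbb P\big(f^{bb}(\mathbf X,s)\in A_t\mid V=v,S=s\big),
\]
which equals $F_{f^{bb}\vert v,s}(a_t)$ in either case, because $\mu_{f^{bb}\vert v,s}$ is non-atomic, so the endpoint carries no mass.
4. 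Since $a_t$ does not depend on $v$, Property~\ref{ass:lipschitz_cdf} gives
\[
|\widehat G_{v,s}(t)-\widehat G_{v',s}(t)|=|F_{f^{bb}\vert v,s}(a_t)-F_{f^{bb}\vert v',s}(a_t)|\le \Lcdf|v-v'|.
\]
5. The degenerate cases $A_t=\emptyset$ and $A_t=\mathbb R$ give difference $0$.
6. Taking the supremum over $t$ gives the bound.

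The main obstacle is the care needed in step 3. The two possible forms of $A_t$ must be handled, and that is exactly where non-atomicity (Property~\ref{ass:distrib_densities}) is used. One must also check that $\widehat G_{v,s}$ is a genuine conditional c.d.f. of $\widehat f_\ell$ given $(V=v,S=s)$. This requires conditioning on the sample $\mathcal D_n$, which is independent of the new point, so that $\widehat T_{\ell,s}$ can be treated as fixed.

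Finally, for the global predictor, on each $\mathcal I_\ell$ we have $\widehat f_L=\widehat f_\ell$ for every $\mathbf x$ with $\psi(\mathbf x,s)\in\mathcal I_\ell$. Therefore the conditional laws given $V=v\in\mathcal I_\ell$ coincide with those of $\widehat f_\ell$, and Property~\ref{ass:piecewise_v_lipschitz} follows from (ii) applied interval by interval. Note that no Lipschitz claim is made across interval boundaries, which is consistent with the statement.
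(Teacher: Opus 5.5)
Your proposal is correct and follows the paper's proof. Conditionally on the sample, $\widehat f_\ell(\cdot,s)=\widehat T_{\ell,s}\circ f^{bb}(\cdot,s)$, where $\widehat T_{\ell,s}$ is a $v$-independent nondecreasing map into $[-M,M]$, so each sublevel set pulls back to a fixed half-line and Property~\ref{ass:lipschitz_cdf} transfers directly. Your handling of $\{y:\widehat T_{\ell,s}(y)\le t\}$ through its right endpoint $a_t$ plus non-atomicity is in fact more careful than the paper's identity $\widehat G_{v,s}(t)=F_{f^{bb}\mid v,s}\big(\inf\{x:\widehat T_{\ell,s}(x)\ge t\}\big)$. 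That identity is off at the levels $t$ where the step function $\widehat T_{\ell,s}$ is flat, although the conclusion is unaffected since any $v$-independent threshold suffices.
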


  \subsection{Proof of Theorem~\ref{thm:l_star}.} \label{proof:thm_l_star}

We start by decomposing the unfairness of the post-processed predictor into a bias and an estimation error term. 
\paragraph{Unfairness decomposition.}
Let $f$ be a regressor. The discretization of the exogenous variable $\U$ into $L$ intervals $\StratExoSet= \{\mathcal{I}_\ell\}_{\ell=1 }^L$ induces the following \textit{bias-error} tradeoff decomposition of the unfairness
\begin{align}\label{an:decomp_unfairness}
  \mathcal{U}(f) = \sum_{\ell=0}^{L-1}  p_\ell \left[\underbrace{\mathcal{U}_{\ell}(f)}_{\text{estimation error}} + \frac{1}{p_\ell}\underbrace{\int_{\mathcal{I}_\ell} \left( \mathcal{U}_v(f) - \mathcal{U}_{\ell}(f)\right)d\nu(v)}_{\text{discretization bias}} \right],
  \end{align}
where for each interval $\ell$, $p_\ell = \mathbb{P}(\U \in \mathcal{I}_\ell)$ is the interval weight, in the case of a uniform discretization, $p_\ell= \frac{1}{L}$ and $\mathcal{U}_\ell$ is the local unfairness defined as $\mathcal{U}_{\ell}(f) := \min_{\mu \in \mathcal{P}_2(\mathbb{R})}\sum_{s=1}^K w_s \mathcal{W}_2^2(\mu_{f \vert \ell, s}, \mu).$.

When considering the post-processed predictor $\widehat f_L$, we have the following decomposition; 
The first term $\mathcal{U}_\ell(\widehat{f}_L)$ represents the \textit{estimation error} of the unfairness within each interval due to using empirical estimates of the conditional distributions. 
The second term captures the \textit{discretization bias} from approximating the pointwise unfairness $\mathcal{U}_v(\widehat{f}_L)$ by the interval-wise unfairness $\mathcal{U}_{\ell}(\widehat f_L)$.
In the following sections, we bound these two terms separately 
using the regularity conditions satisfied by $\widehat f_L$ established in Section~\ref{sec:stability_fbb}.

\paragraph{Bound on the bias term.}
Fix $\ell \in [L]$.
Our objective is to establish a bound for the bias component in Eq.~\eqref{an:decomp_unfairness}. Specifically, we seek to find $\delta_{\text{bias}}(L)$ such that
\begin{equation*}
  \int_{\mathcal{I}_\ell} \left(\mathcal{U}_v(\widehat f_L)-\mathcal{U}_\ell(\widehat f_L)\right)d\nu(v) \leq \delta_{\text{bias}}(L).
\end{equation*}

The key insight is that the unfairness function is Lipschitz continuous with respect to the exogenous variable $v$, and this property is preserved when transitioning to the Wasserstein barycenter. This regularity allows us to control how much the pointwise unfairness $\mathcal{U}_v(\widehat f_L)$ can deviate from the interval-wise unfairness $\mathcal{U}_\ell(\widehat f_L)$ within each discretization interval.

We establish this through the following sequence of results that are proved in Section~\ref{sec:proof_auxiliary_lemmas}. 
First, we show that the unfairness inherits the Lipschitz property from our regularity assumptions (the proof can be found in \ref{proof2:lemma_bias_unfairness}). 

  \begin{lemma}[Lipschitz continuity of Unfairness]
    \label{proof:lemma_bias_unfairness}
    Let $\mathcal{I}\subset [0, 1]$ and $f$ be a predictor with bounded distribution support satisfying Property~\ref{ass:lipschitz_cdf} on $\mathcal{I}$. 
    The unfairness of $f$ is Lipschitz continuous on $\mathcal{I}$. 
    Specifically, for all $v, v' \in \mathcal{I}$
    \begin{equation*}
      \vert \mathcal{U}_v(f) - \mathcal{U}_{v'}(f) \vert \leq 16 M^2 \Lcdf \vert v - v' \vert, \quad a.s.
  \end{equation*}
  where $M>0$ is the constant bounding the distribution's support.
  \end{lemma}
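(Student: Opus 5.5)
The plan is to write the unfairness at $v$ through quantile functions and barycenters, and compare it with the unfairness at $v'$ by a triangle-type argument. Set $\mu_s:=\mu_{f\vert v,s}$ and $\mu'_s:=\mu_{f\vert v',s}$. Let $\mu^*_v$ and $\mu^*_{v'}$ be the corresponding barycenters given by Lemma~\ref{prop:cumulative_bary}.

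The first step uses minimality of the barycenter. Because $\mu^*_{v'}$ is feasible in the problem defining $\mathcal U_v$,
\begin{equation*}
\mathcal U_v(f)-\mathcal U_{v'}(f)\le \sum_{s=1}^K w_s\Big(\mathcal W_2^2(\mu_s,\mu^*_{v'})-\mathcal W_2^2(\mu'_s,\mu^*_{v'})\Big),
\end{equation*}
and the reverse inequality follows by exchanging $v$ and $v'$. It therefore suffices to bound $\vert \mathcal W_2^2(\mu_s,\nu)-\mathcal W_2^2(\mu'_s,\nu)\vert$ uniformly over measures $\nu$ supported in $[-M,M]$. The barycenter is such a measure, since its quantile function is a convex combination of quantiles with values in $[-M,M]$.

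The second step writes this difference using Lemma~\ref{prop:wasserstein_quantiles}. With $a=F^{-1}_{\mu_s}$, $a'=F^{-1}_{\mu'_s}$ and $q=F^{-1}_\nu$, we have
\begin{equation*}
\int_0^1 \big[(a-q)^2-(a'-q)^2\big]\,du=\int_0^1 (a-a')(a+a'-2q)\,du .
\end{equation*}
All functions take values in $[-M,M]$, so $\vert a+a'-2q\vert\le 4M$. The difference is therefore at most $4M\int_0^1\vert a-a'\vert\,du=4M\,\mathcal W_1(\mu_s,\mu'_s)$.

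The third step is the key one. In one dimension, $\int_0^1\vert F^{-1}_\mu-F^{-1}_{\mu'}\vert\,du=\int_{\mathbb R}\vert F_\mu(t)-F_{\mu'}(t)\vert\,dt$. Both c.d.f.s agree (they are $0$ or $1$) outside $[-M,M]$, so this equals $\int_{-M}^M \vert F_{f\vert v,s}(t)-F_{f\vert v',s}(t)\vert\,dt$. By Property~\ref{ass:lipschitz_cdf} on $\mathcal I$, this is at most $2M\Lcdf\vert v-v'\vert$.

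Combining the steps gives $\vert \mathcal U_v(f)-\mathcal U_{v'}(f)\vert\le \sum_s w_s\cdot 4M\cdot 2M\Lcdf\vert v-v'\vert=8M^2\Lcdf\vert v-v'\vert$. This is within the stated constant $16M^2\Lcdf$, so there is slack for cruder bounds, for example bounding $\vert a+a'-2q\vert$ by $8M$ if the supports are only known to lie in a set of diameter $2M$. The ``a.s.'' qualifier is inherited from applying the assumptions to the post-processed predictor, which is random through the sample.

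The only delicate point is the passage from a c.d.f. Lipschitz bound, which is a Kolmogorov-type control, to a quantile-function control. A direct quantile comparison fails without densities bounded below. The $\mathcal W_1$ identity, which converts the quantile integral into an integral of c.d.f. differences over a bounded support, circumvents this, and it is exactly why bounded support is needed.
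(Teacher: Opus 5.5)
Your proof is correct. It differs from the paper's in how the barycenter is handled.

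The paper compares the summands $\mathcal W_2^2(\mu_{f|v,s},\mu^*_{f|v})$ and $\mathcal W_2^2(\mu_{f|v',s},\mu^*_{f|v'})$ directly in quantile form, writing the difference as $\int_0^1 (A-B)(A+B)$ with $A=F^{-1}_{v,s}-F^{-1}_{*,v}$ and $B=F^{-1}_{v',s}-F^{-1}_{*,v'}$. It therefore also has to control how far the barycenter quantile moves, $\int_0^1|F^{-1}_{*,v}-F^{-1}_{*,v'}|$. It does this through the linear form $F^{-1}_{*,v}=\sum_s w_sF^{-1}_{v,s}$ and the same $\mathcal W_1$/c.d.f.\ identity you use. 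That contributes a second $2M\Lcdf|v-v'|$, giving $4M\cdot 4M\Lcdf=16M^2\Lcdf$.

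You instead use that $\mathcal U_v$ is a minimum. Plugging $\mu^*_{v'}$ into the problem at $v$ (and vice versa) freezes the barycenter, so only the group laws move and the barycenter shift never has to be estimated. This envelope-type argument gives the sharper constant $8M^2\Lcdf$. It needs nothing about the barycenter beyond its quantile taking values in $[-M,M]$, so it would apply unchanged to any unfairness functional of the form $\min_\mu\sum_s w_s c(\mu_s,\mu)$.

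The paper's route gives in return a slightly stronger, per-group statement: each map $v\mapsto \mathcal W_2^2(\mu_{f|v,s},\mu^*_{f|v})$ is itself Lipschitz. Your argument controls only the weighted sum, which is all the lemma asks for.

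One small point: your side remark that one might need $8M$ when supports lie only in a set of diameter $2M$ is overly pessimistic. Even then $|a+a'-2q|\le|a-q|+|a'-q|\le 4M$.
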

    
  From Lemma~\ref{lem:stability_monotone}, we know that $\widehat f_\ell$ has a bounded distribution support and is $\Lcdf$-Lipschitz in $v$ on interval $\mathcal{I}_\ell \in \StratExoSet$.
  Consequently, for a uniform discretization of the variable $\U$ into $L$ intervals, the bias term can be bounded as
\begin{align*}
  \int_{\mathcal{I}_\ell} \left(\mathcal{U}_v(\widehat f_L)-\mathcal{U}_\ell(\widehat f_L)\right)d\nu(v) 
  & =   \int_{\mathcal{I}_\ell} \left(\mathcal{U}_v(\widehat f_\ell)-\mathcal{U}_\ell(\widehat f_\ell)\right)d\nu(v), \\
  &\leq \frac{16M^2\Lcdf}{L},
  \end{align*}
  where the last line follows from Lemma~\ref{proof:lemma_bias_unfairness}.

\paragraph{Bound of the estimation error term.}

We now address the estimation error term in Eq.~\eqref{an:decomp_unfairness}.
The control proceeds in two steps: (i) relate the Wasserstein and Kolmogorov–Smirnov (KS) distances under the regularity of the post-processed predictor; (ii) apply the Dvoretzky–Kiefer–Wolfowitz inequality \citep{Massart1990} to bound the KS gaps between group-wise empirical laws.

The next result links the KS distance to the unfairness of the post-processed predictor.
The proof of all statements can be found in Appendix~\ref{proof:lem_KS_W2}.

\begin{lemma}\label{lemma:KS_W2}
  Let $\mu$ and $\nu$ be two univariate probability measures supported on a compact set of diameter $2M$. Then, the following inequality holds
  \begin{equation*}
    \mathcal{W}_2^2(\mu, \nu) \leq 4M^2 \sup_{t \in \mathbb{R}} \vert F_\mu(t) - F_\nu(t) \vert.
  \end{equation*}
\end{lemma}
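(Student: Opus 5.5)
We use the quantile representation of Lemma~\ref{prop:wasserstein_quantiles}, which writes $\mathcal W_2^2(\mu,\nu)$ as an integral over $(0,1)$ of squared differences of quantile functions. Since both measures are supported on a compact interval $[a,a+2M]$, every value of $F^{-1}_\mu$ and $F^{-1}_\nu$ on $(0,1)$ lies in that interval. Hence
\begin{equation*}
\big|F^{-1}_\mu(u)-F^{-1}_\nu(u)\big| \le 2M \quad \text{for all } u\in(0,1).
\end{equation*}
It follows that
\begin{equation*}
\mathcal W_2^2(\mu,\nu)=\int_0^1 \big|F^{-1}_\mu(u)-F^{-1}_\nu(u)\big|^2\,du \le 2M\int_0^1 \big|F^{-1}_\mu(u)-F^{-1}_\nu(u)\big|\,du = 2M\,\mathcal W_1(\mu,\nu).
\end{equation*}
So the problem reduces to bounding $\mathcal W_1$.

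Next we use the standard one-dimensional identity that the $L^1$ distance between quantile functions equals the $L^1$ distance between c.d.f.s:
\begin{equation*}
\int_0^1 \big|F^{-1}_\mu(u)-F^{-1}_\nu(u)\big|\,du=\int_{\mathbb R}\big|F_\mu(t)-F_\nu(t)\big|\,dt .
\end{equation*}
Geometrically, both sides compute the area of the region between the graphs of $F_\mu$ and $F_\nu$, measured once horizontally and once vertically. To prove it, write the area as the Lebesgue measure of the set $\{(t,u): u\in(0,1),\ t \text{ lies between } F^{-1}_\mu(u) \text{ and } F^{-1}_\nu(u)\}$. Then apply Fubini, together with the Galois relation $F^{-1}(u)\le t \iff u\le F(t)$. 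For each fixed $t$, this relation shows the $u$-section of the set has length $|F_\mu(t)-F_\nu(t)|$.

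Finally, $F_\mu-F_\nu$ vanishes outside the common support interval, which has length $2M$. Therefore
\begin{equation*}
\int_{\mathbb R}|F_\mu-F_\nu|\,dt\le 2M\sup_t|F_\mu(t)-F_\nu(t)|.
\end{equation*}
Chaining the three steps gives $\mathcal W_2^2(\mu,\nu)\le 2M\cdot 2M\sup_t|F_\mu-F_\nu| = 4M^2\sup_t|F_\mu-F_\nu|$, as claimed.

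The only delicate step is the quantile/c.d.f. area identity when the measures have atoms or flat parts of the c.d.f., where generalized inverses must be handled. We would handle this with the left-continuous generalized inverse and the equivalence $F^{-1}(u)\le t \iff u\le F(t)$ (cf. Lemma~\ref{prop:equ_cdf_quantile}). This equivalence holds for every $u\in(0,1)$, so the Fubini computation goes through with no continuity assumption. Alternatively, one can simply cite the identity as the classical formula $\mathcal W_1(\mu,\nu)=\int|F_\mu-F_\nu|$.
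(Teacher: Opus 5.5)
Your proof is correct and follows the same route as the paper: the quantile representation of $\mathcal W_2^2$, bounding one factor of the squared quantile gap by $2M$, the $L^1$ quantile/c.d.f.\ identity (which the paper cites from Santambrogio, Prop.~2.17), and bounding the c.d.f.\ integral by the support length times $\sup_t|F_\mu(t)-F_\nu(t)|$. Your last step is actually tidier than the paper's. The paper's final line integrates over $[0,1]$ in $x$ and writes the constant $2M^2$; the integral should instead run over the common support interval of length $2M$, which gives exactly your $2M\cdot 2M=4M^2$.
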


We remind the Dvoretzky-Kiefer-Wolfowitz inequality (e.g., Corollary 1 of \citep{Massart1990}).
\begin{theorem}[Dvoretzky-Kiefer-Wolfowitz]  \label{lem:2sampleDKW}
  Let $F$ and $G$ be c.d.f.'s on $\mathbb{R}$ and let $F_m, G_n$ be the empirical c.d.f.'s built from $m$ and $n$ i.i.d. samples from $F$ and $G$, respectively. For all $\varepsilon>0$,
  \begin{equation}
  \mathbb{P}\!\left(\,\sup_{t\in\mathbb{R}}\bigl|(F_m-G_n)(t)-(F-G)(t)\bigr|>\varepsilon\,\right)
  \;\le\; 2\exp\!\Big(-\,\tfrac{2mn}{m+n}\,\varepsilon^2\Big).
\end{equation}

  Equivalently, with probability at least $1-\kappa$,
  \begin{equation*}
  \sup_{t\in\mathbb{R}}\bigl|(F_m-G_n)(t)-(F-G)(t)\bigr|
  \;\le\;\sqrt{\tfrac{m+n}{2mn}\,\log\tfrac{2}{\kappa}}.
\end{equation*}  
  \end{theorem}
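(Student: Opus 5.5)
The plan is to treat this as the two-sample form of Massart's sharp DKW inequality and take the sharp tail from \citep{Massart1990}, reducing the stated general form to the setting covered there. The ``equivalently'' clause then follows by inversion.

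The first step is a reduction to uniform samples. Write $X_i = F^{-1}(U_i)$ and $Y_j = G^{-1}(U'_j)$ with independent $U_i, U'_j \sim \mathrm{Unif}(0,1)$. Since $\mathbbm{1}\{F^{-1}(u)\le t\}=\mathbbm{1}\{u\le F(t)\}$, we have $F_m(t)=\Gamma_m(F(t))$ and $G_n(t)=\Gamma'_n(G(t))$, where $\Gamma_m,\Gamma'_n$ are the uniform empirical c.d.f.'s. The centred process therefore becomes
\begin{equation*}
Z(t) \;=\; \bigl(\Gamma_m-\mathrm{id}\bigr)(F(t)) \;-\; \bigl(\Gamma'_n-\mathrm{id}\bigr)(G(t)),
\end{equation*}
the difference of two independent uniform empirical bridges evaluated along the curve $t\mapsto (F(t),G(t))$. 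This curve is coordinatewise nondecreasing. The supremum over $t$ is thus a supremum over a monotone path in $[0,1]^2$, and is dominated by the worst monotone path.

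The second step is the case $F=G$. The path is then the diagonal, and $\sup_t|Z(t)|$ is exactly the classical two-sample Kolmogorov--Smirnov statistic $\sup_u|\Gamma_m(u)-\Gamma'_n(u)|$. Its tail bound $2\exp(-\tfrac{2mn}{m+n}\varepsilon^2)$ is Massart's Corollary 1, which I would invoke directly. The structural reason for the effective sample size $mn/(m+n)$ is that the variance of $Z(u)$ is $u(1-u)\bigl(\tfrac1m+\tfrac1n\bigr)$, which mimics a single empirical bridge with $N=mn/(m+n)$ points. The sharp constant $2$ comes from Massart's reflection/Bennett argument, which I would not reproduce.

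The third step, and the main obstacle, is the general case $F\neq G$, where the path leaves the diagonal. My first attempt would be a monotone time change: parametrize the path by $r(t)=\tfrac{n}{m+n}F(t)+\tfrac{m}{m+n}G(t)$ and view $Z$ as a process indexed by $r$. I would then compare it, through a pooled-sample permutation/conditioning argument, to the diagonal two-sample process, and show that its supremum is stochastically dominated by the $F=G$ statistic. The idea is that, among monotone paths, the diagonal maximizes the variance profile pointwise.

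If this comparison does not go through cleanly, the fallback is honest: cite the inequality in exactly the form used, since this is how the paper uses \citep{Massart1990}. The downstream application only needs the DKW rate $\sqrt{\tfrac{m+n}{mn}\log(2/\kappa)}$ up to constants, so a slightly worse absolute constant would propagate harmlessly into $C$ in Theorem~\ref{thm:l_star}.

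Finally, for the ``equivalently'' statement, set $\kappa = 2\exp(-\tfrac{2mn}{m+n}\varepsilon^2)$ and solve for $\varepsilon$. This gives $\varepsilon=\sqrt{\tfrac{m+n}{2mn}\log\tfrac{2}{\kappa}}$, and taking complements yields the high-probability form.
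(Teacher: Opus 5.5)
Your step~2 relies on a result that does not exist, and no route can prove the statement as written, because it is false for small $m,n$. Massart's Corollary~1 is the \emph{one-sample} two-sided bound $\mathbb{P}(\sup_t|F_m(t)-F(t)|>\varepsilon)\le 2e^{-2m\varepsilon^2}$. The paper gives no proof beyond that citation, and the corollary says nothing about $\sup_u|\Gamma_m(u)-\Gamma'_n(u)|$.

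The two-sample bound with prefactor $2$ already fails in your most favourable case $F=G$. Take $m=n=1$ with $F=G$ continuous. Then $\sup_t|F_1(t)-G_1(t)|=1$ almost surely (evaluate at the smaller of the two sample points), so the left-hand side equals $1$ for every $\varepsilon<1$. The right-hand side $2e^{-\varepsilon^2}$ is strictly below $1$ for $\varepsilon\in(\sqrt{\log 2},1)$. Similarly, for $m=n=2$ the statistic equals $1$ with probability $1/3>2e^{-2}$. So neither step~2 nor your fallback of citing the inequality ``in exactly the form used'' can work.

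Step~3 has separate problems. Your time change $r$ does give pointwise domination of the variance profile, by concavity of $x(1-x)$. That does not imply stochastic domination of the supremum: Gaussian comparison inequalities of Slepian type need control of increments or covariances, not just variances, and they apply to Gaussian limits rather than to the finite-sample process. In addition, a pooled-sample permutation argument requires the $m+n$ observations to be exchangeable, which holds only when $F=G$.

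What is true, and what the paper actually needs, is a version with weaker constants, and your own step~1 reduction delivers it. Since $F_m(t)=\Gamma_m(F(t))$, one has $\|F_m-F\|_\infty\le\|\Gamma_m-\mathrm{id}\|_\infty$, so the one-sample Massart bound holds for arbitrary $F$. Then $\sup_t|Z(t)|\le\|F_m-F\|_\infty+\|G_n-G\|_\infty$. Split $\varepsilon=\varepsilon_1+\varepsilon_2$ with $\varepsilon_1=\varepsilon\,m^{-1/2}/(m^{-1/2}+n^{-1/2})$. Both exponents then equal $2\varepsilon^2/(m^{-1/2}+n^{-1/2})^2\ge\tfrac{mn}{m+n}\varepsilon^2$, hence $\mathbb{P}(\sup_t|Z(t)|>\varepsilon)\le 4\exp(-\tfrac{mn}{m+n}\varepsilon^2)$. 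Equivalently, with probability $1-\kappa$ the deviation is at most $\sqrt{\tfrac{m+n}{mn}\log\tfrac{4}{\kappa}}$, for all $F,G,m,n$.

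Moreover, the only place the theorem is used is the bound on $\varepsilon_{\ell,s}=\sup_t|\widehat F_{f^{bb}|\ell,s}(t)-F_{f^{bb}|\ell,s}(t)|$ in Lemma~\ref{lem:variance_unfairness}. That is a one-sample deviation with $N_{\ell,s}/2$ points. Massart's Corollary~1 applies to it directly and, after a union bound over cells, gives exactly $\sqrt{\log(2KL/\kappa)/N_{\ell,s}}$. So the two-sample statement should simply be replaced by the one-sample one.
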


  With this theorem, we have all the ingredients to derive the following bound on the estimation error term of the unfairness bias-error tradeoff in high-probability.

      \begin{lemma}\label{lem:variance_unfairness}
        Fix $\ell\in[L]$ and let $\widehat f_\ell$ be the local post-processed predictor and sample-split defined in Section.~\ref{sec:form_estimator}.
        Define 
        $\eta_L :=KL\exp(-n \frac{\min_s w_{s}}{8L}) + \frac{1}{n}$.
        With probability at least $1-\eta_L$ 
        \begin{equation}\label{eq:variance_term_bound_main}
        \mathcal{U}_{\ell}(\widehat{f}_\ell) \leq 8M^{2}\!
        \left[
        \sum_{s=1}^K   w_s \,\varepsilon_{\ell,s}
        \right],
        \end{equation}
        where
        \begin{equation*}
        \varepsilon_{\ell,s}\ :=\
        \sup_{t} \big| \widehat F_{{f^{bb}|\ell,s}}(t) - F_{{f^{bb}|\ell,s}}(t)\big|
        \ \le\
        \sqrt{\frac{2}{\,n\, p_{\ell,s}}\,
\log\!\Big(2KLn\Big)},
      \end{equation*}
      where $ p_{\ell, s} = \mathbb{P}(\U\in \mathcal{I}_\ell, S=s)$.
        \end{lemma}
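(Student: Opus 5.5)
The plan is to bound the interval unfairness $\mathcal U_\ell(\widehat f_\ell)$ by the distance from each group law to one common reference law, reduce those Wasserstein distances to Kolmogorov distances with Lemma~\ref{lemma:KS_W2}, and then control the Kolmogorov distances with Theorem~\ref{lem:2sampleDKW} together with a Chernoff bound on the cell counts.

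\textbf{Step 1: a common reference law.} Since $\mathcal U_\ell$ is a minimum over $\mu$, it suffices to exhibit one reference law. Take $\bar\mu_\ell:=\big(\sum_{s'}w_{s'}\widehat F^{-1}_{f^{bb}|\ell,s'}\big)_{\#}\mathrm{Unif}(0,1)$, the empirical barycenter, which is a fixed measure conditionally on the sample. This gives
\[
\mathcal U_\ell(\widehat f_\ell)\le \sum_s w_s\,\mathcal W_2^2\big(\mu_{\widehat f_\ell|\ell,s},\bar\mu_\ell\big).
\]

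\textbf{Step 2: reduction to the c.d.f. error.} Conditionally on the data, $\widehat f_\ell(\cdot,s)$ is the map $q\circ\widehat F_{f^{bb}|\ell,s}$ applied to $f^{bb}(\mathbf X,s)$, where $q=\sum_{s'}w_{s'}\widehat F^{-1}_{f^{bb}|\ell,s'}$ is nondecreasing. Hence
\[
\mu_{\widehat f_\ell|\ell,s}=q_{\#}\rho_s, \qquad \bar\mu_\ell=q_{\#}\mathrm{Unif}(0,1),
\]
where $\rho_s:=\mathrm{Law}\big(\widehat F_{f^{bb}|\ell,s}(f^{bb}(\mathbf X,s))\mid V\in\mathcal I_\ell,S=s\big)$. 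Both laws are supported in $[-M,M]$ because $q$ takes values there.

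By Lemma~\ref{lemma:KS_W2}, $\mathcal W_2^2\le 4M^2\,\mathrm{KS}(q_\#\rho_s,q_\#\mathrm{Unif})$. For monotone $q$, each level set $\{q\le t\}$ is an interval $[0,u_t]$ (or $[0,u_t)$), so the Kolmogorov distance does not increase under pushforward:
\[
\mathrm{KS}(q_\#\rho_s,q_\#\mathrm{Unif})\le \sup_u|F_{\rho_s}(u)-u|.
\]
Because $F_{f^{bb}|\ell,s}$ is continuous (Property~\ref{ass:distrib_densities}), $F_{f^{bb}|\ell,s}(f^{bb}(\mathbf X,s))$ is exactly uniform. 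Combined with the monotonicity of $\widehat F$, this gives $\sup_u|F_{\rho_s}(u)-u|\le 2\varepsilon_{\ell,s}$; the factor 2 accounts for the jumps and the generalized inverse. This yields the constant $8M^2$ in \eqref{eq:variance_term_bound_main}.

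The \textbf{main obstacle} is this last transfer: handling the generalized inverses and the half-open level sets carefully. The sample split helps here, since $q$ (built from fold $0$) is independent of $\widehat F$ (built from fold $1$), so conditioning on fold $0$ makes $q$ deterministic.

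\textbf{Step 3: high-probability bound on $\varepsilon_{\ell,s}$.} Conditionally on $N_{\ell,s}$, the fold-$1$ points are i.i.d. from $F_{f^{bb}|\ell,s}$. The one-sample DKW (Theorem~\ref{lem:2sampleDKW} with $G$ trivial) gives
\[
\varepsilon_{\ell,s}\le\sqrt{\log(2/\kappa)/N_{\ell,s}}
\]
with probability at least $1-\kappa$. Take $\kappa=1/(KLn)$ and apply a union bound over all $KL$ cells; the total failure probability is then at most $1/n$.

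It remains to control the cell counts. A multiplicative Chernoff bound on the binomial $N_{\ell,s}\sim\mathrm{Bin}(n,p_{\ell,s})$, with $p_{\ell,s}=w_s/L$, gives $N_{\ell,s}\ge np_{\ell,s}/2$ except with probability $\exp(-np_{\ell,s}/8)\le\exp(-n\min_s w_s/(8L))$. A union bound over the $KL$ cells gives total failure probability $KL\exp(-n\min_s w_s/(8L))$.

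On the intersection of these two events, $\varepsilon_{\ell,s}\le\sqrt{2\log(2KLn)/(np_{\ell,s})}$. The overall failure probability is at most $\eta_L$, which proves the lemma.
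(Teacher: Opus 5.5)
Your proof is correct, and it reaches the same constant by a different route in Steps~1--2. Step~3 coincides with the paper's argument: one-sample DKW on the fold of size $N_{\ell,s}/2$, a union bound over the $KL$ cells at total level $1/n$, and the multiplicative Chernoff bound with $\gamma=1/2$.

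\textbf{How the paper handles Steps 1--2.} It keeps the true barycenter of the post-processed laws. Via Jensen it bounds $\mathcal U_\ell(\widehat f_\ell)$ by the pairwise sum $\sum_{s,s'}w_sw_{s'}\mathcal W_2^2(\mu_{\widehat f_\ell|\ell,s},\mu_{\widehat f_\ell|\ell,s'})$. It then applies Lemma~\ref{lemma:KS_W2} to each pair and bounds each pairwise Kolmogorov distance by $\varepsilon_{\ell,s}+\varepsilon_{\ell,s'}$, passing through the uniform c.d.f. This pairwise comparison is what doubles the constant to $8M^2$.

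\textbf{How your route differs.} You take the empirical barycenter $q_{\#}\mathrm{Unif}(0,1)$ as a candidate reference law. This is natural, because it is the image of the uniform law under the same monotone map $q$ that defines $\widehat f_\ell$. Each group is then compared to a single law, and no Jensen or pairwise step is needed.

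\textbf{Your ``main obstacle'' is a one-liner.} Write $F=F_{f^{bb}|\ell,s}$, $\widehat F=\widehat F_{f^{bb}|\ell,s}$, and let $Y\sim\mu_{f^{bb}|\ell,s}$ be independent of the sample. Then $|\widehat F(Y)-F(Y)|\le\varepsilon_{\ell,s}$ pointwise and $F(Y)\sim\mathrm{Unif}(0,1)$, so
\[
\{F(Y)\le u-\varepsilon_{\ell,s}\}\subseteq\{\widehat F(Y)\le u\}\subseteq\{F(Y)\le u+\varepsilon_{\ell,s}\}.
\]
The same inclusions hold with strict inequalities, which covers the half-open level sets of $q$. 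Hence $\sup_{u\in[0,1]}|F_{\rho_s}(u)-u|\le\varepsilon_{\ell,s}$ with no factor $2$. Your route therefore gives $\mathcal U_\ell(\widehat f_\ell)\le 4M^2\sum_s w_s\varepsilon_{\ell,s}$, a factor $2$ sharper than the statement.

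\textbf{Two small remarks.}
\begin{itemize}
\item The sample split plays no role in this step. Conditionally on the whole sample, both $q$ and $\widehat F$ are fixed maps and the evaluation point is a fresh draw, so the inclusions above hold whichever folds built them.
\item As in the paper, the claim that $q$ takes values in $[-M,M]$ needs a convention for $\widehat F^{-1}_{f^{bb}|\ell,s'}(0)$. The generalized inverse gives $-\infty$ there, and $\rho_s$ puts positive mass on $\{0\}$.
\end{itemize}
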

We prove a more general result Lemma~\ref{lem:variance_unfairness_general} in \ref{proof:lem_variance_unfairness}.

We can now prove Thm.~\ref{thm:l_star}, which we retsate here for the reader's convenience.
\Lstar*

\paragraph{Trade-off between discretization bias and estimation error.} 
We combine the bounds derived above.
By Lemmas~\ref{proof:lemma_bias_unfairness} and~\ref{lem:variance_unfairness} and Eq.\eqref{an:decomp_unfairness}, for a uniform discretization $\mathcal{I}_\ell = [\frac{\ell-1}{L}, \frac{\ell}{L}] $ we have with probability $1-\eta_L$

\begin{equation}
    \label{eq:global_bound_optimization}
    \mathcal U(\widehat f_L)\ \le\
  \frac{16 M^2 \Lcdf}{L}
    \;+\;
  8M^2\,\sqrt{\frac{2L\log\!\left( 2KLn\right)}{n}}\left(\sum_s \sqrt{w_s} \right).
\end{equation}
where we used $p_{\ell}=1/L$ and $p_{\ell,s}= w_s/L$. Now, Jensen inequality implies that $\sum_{s=1}^K \sqrt{ w_s} \leq \sqrt{K}$, so 
\begin{equation*}
  \mathcal U(\widehat f_L)\ \le\
  \frac{16 M^2 \Lcdf}{L}
\;+\;
8M^2\,\sqrt{\frac{2LK\log\!\left( 2KLn\right)}{n}}.
\end{equation*}
Set
\begin{equation*}
    L^* = \left\lfloor \left( \frac{8\Lcdf^2 n}{K \log(2Kn)} \right)^{1/3} \right\rfloor.
\end{equation*}
On the one hand, using the inequality $1/\lfloor x \rfloor \leq 2/x$ for $x \geq 1$. Then, if $L^* \geq 1$ we have
\begin{equation*}
  \frac{16 M^2 \Lcdf}{L}\leq 16M^2\left(\frac{\Lcdf  K \log(2Kn)}{n}\right)^{1/3}.
  \end{equation*}

    On the other hand, for $n \geq 37$, we have $\log(2KL^*n) \leq 2\log(2Kn)$. Thus
\begin{align*}
    8M^2\,\sqrt{\frac{2L^*K\log\!\left( 2KL^*n\right)}{n}} 
    &= 16 \sqrt{2} M^2 \left(\frac{\Lcdf K \log(2Kn)}{n}\right)^{1/3}.
\end{align*}

    Now, $2K \geq 4$ so for $n\geq 37$, $\log(2Kn)\leq \log(2K)\log(n)$. Thus, we get that if $L^* \geq 4$ with probability $1 - \eta_{L^*}$ we have 
    \begin{equation*}
        \mathcal U(\widehat f_{L^*})\ \le\
    40M^2\left(\Lcdf  K \log(2K)\frac{\log(n)}{n}\right)^{1/3 }.
    \end{equation*}
    This holds in particular if $n/\log(n) \geq K/(8\Lcdf^2)$. 
    Choosing $c\geq 37$ such that $n \geq c \implies n/\log(n) \geq 40MK/\Lcdf$ and $C = 40M^2\left(\Lcdf K \log(2K)\right)^{1/3}$ concludes the proof of Theorem \ref{thm:l_star}.

    \subsection{Proof of Theorem~\ref{cor:risk-exact-fair}} \label{proof:thm_risk_exact}
    Let $f_L$  be the oracle post-processor computed with the true conditional distributions $\mu_{f^{bb}|\ell,s}$ instead of their empirical estimates, i.e.,
    \begin{equation}\label{eq:oracle_post_processed_discretized}
      f_{L}(\mathbf{x}, s) := \sum_\ell \sum_{s'=1}^K w_{s'} {F}_{{f^{bb} \vert {\ell,s'}}}^{-1} \circ {F}_{{f^{bb}\vert \ell,s}}(f^{bb}(\mathbf{x}, s))\mathbbm{1}_{v \in \mathcal{I}_\ell}.
    \end{equation}
    
    For an interval $\ell\in [L]$, $v\in \ExoSet$ and a group $s\in [K]$, we will denote by $F_{\ell, s}= F_{{f^{bb}|\ell,s}}$, $F_{v, s}= F_{{f^{bb}|v,s}}$ the cumulative distribution functions of the intervalwise and ideal CF maps, respectively.
    Similarly, $F^{-1}_{\ell, s}= F^{-1}_{\mu_{f^{bb}|\ell,s}}$ and $F^{-1}_{v, s}= F^{-1}_{\mu_{f^{bb}|v,s}}$ will denote the corresponding quantile functions.
    The c.d.f of the wasserstein barycenter will be noted $F_{v, \star}= \left( \sum_{s} w_s F^{-1}_{v, s} \right)^{-1}$ and $F_{\ell, \star}= \left( \sum_{s} w_s F^{-1}_{\ell, s} \right)^{-1}$ for the ideal and intervalwise barycenters, respectively.
    The quantile functions of the barycenters will be noted $ F^{-1}_{v, \star}$ and $F^{-1}_{\ell, \star}$. We define the ideal and intervalwise CF maps at level $v$ and $\ell$ as
    \begin{equation*}
      T_{v,s}(y)=F^{-1}_{v, \star}(F_{v,s}(y)),\qquad
      T_{\ell,s}(y)=F^{-1}_{\ell, \star}(F_{\ell,s}(y)),
      \end{equation*}
    $Y_{v,s}\sim \mu_{f^{bb}|v,s}$ and $Y_{\ell,s}\sim \mu_{f^{bb}|\ell,s}$.
    We recall the oracle post-processed predictor defined as $f^{bb}_0:= \left( \sum_{s'=1}^K w_{s'} F^{-1}_{{f^{bb} \vert v,s'}} \right) \circ F_{{f^{bb} \vert {v , s}}} (f^{bb}(\mathbf{x}, s)),$ at $\U= v$. 
    where $v=\psi(\mathbf x,s)$.
    We define
    \begin{equation*}
      \mathcal{R}^{est}_v= \sum_{s} w_s \mathbb{E}[(\widehat T_{\ell,s}(Y_{v,s})-T_{\ell,s}(Y_{v,s}))^2]
    \end{equation*}
    the bound on the estimation error due to the empirical estimation of the conditional distributions $\mu_{f^{bb}|\ell,s}$ and 
    \begin{equation*}
      \mathcal{R}^{disc}_v= \sum_{s} w_s \mathbb{E}[(T_{\ell,s}(Y_{v,s})-T_{v,s}(Y_{v,s}))^2]
    \end{equation*}
    the bound on the discretization error due to the discretization of the exogenous variable $\U$ into $L$ intervals.
   
    The risk of the estimated post-processed predictor $\widehat f_L$
    \begin{equation*}
        \mathcal{R}_v(\widehat f_L) = \sum_s w_s\mathbb{E}[(\widehat f_L(X, S) - f^*(X, S))^2\vert \U=v, S=s]
    \end{equation*}
     can be decomposed as
    
    \begin{align}
      \mathcal{R}_v(\widehat f_L)
      &\le (1+\lambda)\,\mathcal{R}_v(f^{bb}_0)\notag \\
         &+2\!\left(1+\frac{1}{\lambda}\right)
           \sum_{s} w_s\Big(
             \mathbb{E}\!\big[(\widehat f_L(\mathbf X,S)-f_L(\mathbf X,S))^2\mid \U=v,S=s\big]
             +\mathbb{E}\!\big[(f_L(\mathbf X,S)-f^{bb}_0(\mathbf X,S))^2\mid \U=v,S=s\big]\Big)
      \notag\\[2pt]
      &= (1+\lambda)\,\mathcal{R}_v(f^{bb}_0)
         +2\!\left(1+\frac{1}{\lambda}\right)
           \sum_{s} w_s\Big(
             \mathbb{E}\!\big[(\widehat T_{\ell,s}(Y_{v,s})-T_{\ell,s}(Y_{v,s}))^2\big]
             +\mathbb{E}\!\big[(T_{\ell,s}(Y_{v,s})-T_{v,s}(Y_{v,s}))^2\big]\Big)
      \notag\\[2pt]
      &=: (1+\lambda)\,\mathcal{R}_v(f^{bb}_0)
         +2\!\left(1+\frac{1}{\lambda}\right)\Big(\mathcal{R}^{est}_v+\mathcal{R}^{disc}_v\Big),
      \qquad \forall\,\lambda>0, \label{eq:risk_decomposition}
      \end{align}

      where in the first inequality we use the identity $(a+b)^2 \leq (1+\lambda)a^2 + (1+\frac{1}{\lambda})b^2$ for all $\lambda>0$.
      
    We start by showing a bound on the discretisation error that is proved in \ref{proof:lem_disc_error}.
    \begin{lemma}[Discretization error]\label{lem:disc_error}
      Assume $f^{bb}$ verifies Property~\ref{ass:distrib_densities}--\ref{ass:lipschitz_cdf}.
      Let $v \in\ExoSet, s \in [K]$.
      Then the discretization error is bounded as 
      \begin{equation*}
        \mathbb E\!\left[(T_{\ell,s}(Y_{v,s})-T_{v,s}(Y_{v,s}))^2\right]
        \ \le\ \frac{4M^2\,\Lcdf}{L},
    \end{equation*}
    
      \end{lemma}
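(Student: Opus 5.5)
Since both $T_{\ell,s}$ and $T_{v,s}$ are quantile functions of barycenters composed with c.d.f.s, they take values in $[-M,M]$ by Property~\ref{ass:distrib_densities}. My plan is to first trade the square for an absolute value: $(a-b)^2\le 2M|a-b|$ for $a,b\in[-M,M]$. It then suffices to show $\mathbb E|T_{\ell,s}(Y_{v,s})-T_{v,s}(Y_{v,s})|\lesssim M\Lcdf/L$.

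Let $U:=F_{v,s}(Y_{v,s})$. Because $\mu_{f^{bb}\mid v,s}$ is non-atomic, $U\sim\mathrm{Unif}[0,1]$ and $T_{v,s}(Y_{v,s})=F^{-1}_{v,\star}(U)$. I will use the triangle inequality with the intermediate quantity $F^{-1}_{\ell,\star}(U)$:
\begin{equation*}
|T_{\ell,s}(Y_{v,s})-T_{v,s}(Y_{v,s})|\le \big|F^{-1}_{\ell,\star}(F_{\ell,s}(Y_{v,s}))-F^{-1}_{\ell,\star}(U)\big|+\big|F^{-1}_{\ell,\star}(U)-F^{-1}_{v,\star}(U)\big|.
\end{equation*}

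\textbf{Second term (barycenter discrepancy).} Its expectation is $\int_0^1|F^{-1}_{\ell,\star}-F^{-1}_{v,\star}|\,du$. By Lemma~\ref{prop:cumulative_bary}, $F^{-1}_{\cdot,\star}=\sum_{s'} w_{s'}F^{-1}_{\cdot,s'}$, so this is at most $\sum_{s'}w_{s'}\int_0^1|F^{-1}_{\ell,s'}-F^{-1}_{v,s'}|\,du$. Each summand is a $\mathcal W_1$ distance, hence equals $\int_{-M}^{M}|F_{\ell,s'}(t)-F_{v,s'}(t)|\,dt$. Now $F_{\ell,s'}(t)=\frac1{p_\ell}\int_{\mathcal I_\ell}F_{v',s'}(t)\,d\nu(v')$ is an average over $v'$ with $|v-v'|\le 1/L$. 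Property~\ref{ass:lipschitz_cdf} therefore gives $\sup_t|F_{\ell,s'}(t)-F_{v,s'}(t)|\le \Lcdf/L$, and the second term is at most $2M\Lcdf/L$.

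\textbf{First term (rank perturbation), the main obstacle.} Here the argument of the monotone map $Q:=F^{-1}_{\ell,\star}$ is perturbed, and $Q$ need not be Lipschitz. By the same averaging argument, $\delta:=\sup_y|F_{\ell,s}(y)-F_{v,s}(y)|\le \Lcdf/L$, so $F_{\ell,s}(Y_{v,s})\in[U-\delta,U+\delta]$. Since $Q$ is nondecreasing, $|Q(F_{\ell,s}(Y_{v,s}))-Q(U)|\le Q(U+\delta)-Q(U-\delta)$, where $Q$ is extended by $\pm M$ outside $(0,1)$. Integrating over $U\sim\mathrm{Unif}[0,1]$ and telescoping, a monotone function with range of length $2M$ gives $\int_0^1\big(Q(u+\delta)-Q(u-\delta)\big)\,du\le 2\delta\cdot 2M$. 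So the first term is $O(M\Lcdf/L)$.

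Combining, $\mathbb E[(T_{\ell,s}-T_{v,s})^2]\le 2M\cdot O(M\Lcdf/L)$, which is the claimed $1/L$ rate. The delicate point is matching the exact constant $4M^2\Lcdf$. If the constant matters, I would sharpen the telescoping step, for example by comparing directly $\int_0^1|Q(F_{\ell,s}\circ F^{-1}_{v,s}(u))-Q(u)|\,du$ through the one-dimensional $\mathcal W_1$–c.d.f. identity. Otherwise I would accept a constant larger by a small numerical factor, which does not affect the rate used in Theorem~\ref{cor:risk-exact-fair}.
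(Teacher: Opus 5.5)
\textbf{The gap: the constant.} Your argument is sound and gives the $1/L$ rate, but as written it does not prove the bound in the statement. The window estimate $Q(U+\delta)-Q(U-\delta)$ costs $4M\delta$ for the rank term, you use $\delta\le\Lcdf/L$, and the barycenter term costs $2M\Lcdf/L$. After $(a-b)^2\le 2M|a-b|$ you therefore get $12M^2\Lcdf/L$, not $4M^2\Lcdf/L$.

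Two sharpenings close this exactly. First, since $\nu$ is uniform, $p_\ell=1/L$ and $v\in\mathcal I_\ell$, the averaging step gives
\[
\sup_t\bigl|F_{\ell,r}(t)-F_{v,r}(t)\bigr|\le \frac{\Lcdf}{p_\ell}\int_{\mathcal I_\ell}|v'-v|\,d\nu(v')\le\frac{\Lcdf}{2L}\qquad\text{for every } r\in[K],
\]
with the worst case at an endpoint of $\mathcal I_\ell$. This makes your barycenter term $M\Lcdf/L$ and gives $\delta\le\Lcdf/(2L)$. Second, replace the two-sided window by comonotonicity. Both $Q(F_{\ell,s}(Y_{v,s}))$ and $Q(U)=Q(F_{v,s}(Y_{v,s}))$ are nondecreasing functions of $Y_{v,s}$, so their coupling is $\mathcal W_1$-optimal, and
\[
\mathbb E\bigl|Q(F_{\ell,s}(Y_{v,s}))-Q(U)\bigr|=\int_{-M}^{M}\bigl|F_{Q(F_{\ell,s}(Y_{v,s}))}(t)-F_{Q(U)}(t)\bigr|\,dt\le 2M\sup_{x\in[0,1]}\bigl|\mathbb P(F_{\ell,s}(Y_{v,s})\le x)-x\bigr|\le 2M\delta .
\]
The middle inequality holds because a nondecreasing map cannot increase the Kolmogorov distance. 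The last one follows from the sandwich $\mathbb P(U\le x-\delta)\le\mathbb P(F_{\ell,s}(Y_{v,s})\le x)\le\mathbb P(U\le x+\delta)$. Then $\mathbb E|T_{\ell,s}(Y_{v,s})-T_{v,s}(Y_{v,s})|\le M\Lcdf/L+M\Lcdf/L=2M\Lcdf/L$, and your first reduction gives exactly $4M^2\Lcdf/L$.

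\textbf{Comparison with the paper.} The paper uses the same intermediate point $F^{-1}_{\ell,\star}(F_{v,s}(Y_{v,s}))$, but splits with $(a+b)^2\le 2a^2+2b^2$ and bounds each squared piece by $4M^2$ times a Kolmogorov distance (Lemma~\ref{lemma:KS_W2}).
\begin{itemize}
\item The barycenter shift goes through Jensen on the barycentric quantiles, i.e.\ $\sum_r w_r\mathcal W_2^2(\mu_{f^{bb}|\ell,r},\mu_{f^{bb}|v,r})$.
\item The c.d.f.\ shift goes through the same comonotone coupling as above.
\end{itemize}
Each piece ends at $2M^2\Lcdf/L$, so combined correctly the paper's route gives $8M^2\Lcdf/L$. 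Its ``Combining (I) and (II)'' display drops the factor $2$ from the split, and its subsequent bound on $\mathcal R^{disc}_v$ indeed reads $8M^2\Lcdf/L$.

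Both routes rest on the same ingredients: linearity of barycentric quantiles, and comonotone coupling with Kolmogorov contraction. The difference is where the square is removed. Passing to $L^1$ before the triangle inequality, as you do, avoids the factor $2$. So once sharpened, your route is the one that actually reaches the stated constant.
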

    
    Then we show the following bound on the estimation error. 
      \begin{lemma}[Estimation error]\label{lem:estimation_risk}
        Under the assumptions in Lemma~\ref{lem:disc_error} and with $c$ as in Theorem.~\ref{thm:l_star}.
        If $n>c$,  with probability 1- $\eta_L$ where $\eta_L = KL^*\exp(-n \frac{\min_s w_{s}}{8L^*}) + \frac{1}{n}$ we have 
        \begin{equation*}
          \mathcal{R}^{est}_v \leq 16M^2\sqrt{\frac{2L\,\log\!\left( 2KLn\right)}{n}}\,
          \sum_{s=1}^K \sqrt{w_s}.
        \end{equation*}
      \end{lemma}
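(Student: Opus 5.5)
We need to bound $\mathcal{R}^{est}_v=\sum_s w_s\,\mathbb E[(\widehat T_{\ell,s}(Y_{v,s})-T_{\ell,s}(Y_{v,s}))^2]$, where $\ell$ is the interval containing $v$. Write $\widehat T_{\ell,s}=\widehat F^{-1}_{\ell,\star}\circ \widehat F_{\ell,s}$ and $T_{\ell,s}=F^{-1}_{\ell,\star}\circ F_{\ell,s}$, with $\widehat F^{-1}_{\ell,\star}=\sum_{s'}w_{s'}\widehat F^{-1}_{\ell,s'}$. We insert the intermediate map $\widehat F^{-1}_{\ell,\star}\circ F_{\ell,s}$ and use $(a+b)^2\le 2a^2+2b^2$. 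This splits the error into a \emph{quantile term} and a \emph{c.d.f. term}:
\[
A_s:=\mathbb E\big[(\widehat F^{-1}_{\ell,\star}(F_{\ell,s}(Y))-F^{-1}_{\ell,\star}(F_{\ell,s}(Y)))^2\big], \qquad B_s:=\mathbb E\big[(\widehat F^{-1}_{\ell,\star}(\widehat F_{\ell,s}(Y))-\widehat F^{-1}_{\ell,\star}(F_{\ell,s}(Y)))^2\big],
\]
with $Y=Y_{v,s}$. Everything is supported in $[-M,M]$ by Property~\ref{ass:distrib_densities} and Lemma~\ref{lem:stability_monotone}, so every squared difference is at most $2M\cdot|\cdot|$. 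The strategy is to reduce each term to an $L^1$ quantity, which is controlled by Kolmogorov distances.

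\textbf{Quantile term.} By Jensen (convexity of the square over the weights $w_{s'}$), $A_s\le\sum_{s'}w_{s'}\,\mathbb E[(\widehat F^{-1}_{\ell,s'}(U)-F^{-1}_{\ell,s'}(U))^2]$ with $U=F_{\ell,s}(Y_{v,s})$. The obstacle is that $U$ is uniform only when $v$-law and $\ell$-law coincide. However, Property~\ref{ass:lipschitz_cdf} makes the law of $U$ close to uniform: $|\mathbb P(U\le u)-u|\le \Lcdf/L$. Its contribution is therefore at most $2M\int_0^1|\widehat F^{-1}-F^{-1}|\,du$ plus $O(M^2\Lcdf/L)$. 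If one prefers to avoid the extra term, bound it crudely through the density-free identity $\int_0^1|\widehat F^{-1}-F^{-1}|=\int|\widehat F-F|\le 2M\varepsilon_{\ell,s'}$. Either way this gives $A_s\lesssim 4M^2\sum_{s'}w_{s'}\varepsilon_{\ell,s'}$, up to a discretization-type remainder. That remainder is absorbed into the $1/L$ bias, which has the same order as the stated rate at $L=L^*$; I would first try to argue directly with $\int|\cdot|$ so that it does not appear.

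\textbf{C.d.f. term.} $\widehat F^{-1}_{\ell,\star}$ is nondecreasing with range in $[-M,M]$. By monotonicity, $|\widehat F^{-1}_{\ell,\star}(a)-\widehat F^{-1}_{\ell,\star}(b)|$ integrated against the argument differences is controlled by the total variation of $\widehat F^{-1}_{\ell,\star}$, which is at most $2M$, times $\sup|\widehat F_{\ell,s}-F_{\ell,s}|=\varepsilon_{\ell,s}$. Concretely, $|a-b|\le\varepsilon$ implies
\[
\mathbb E\,|q(a)-q(b)|\le \int_0^1 \big(q(u+\varepsilon)-q(u-\varepsilon)\big)\,du\le 4M\varepsilon,
\]
so $B_s\le 2M\cdot 4M\varepsilon_{\ell,s}$. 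This step is where the sample split matters: $\widehat F^{-1}$ and $\widehat F$ come from independent folds, which is what keeps the two estimation errors from interacting.

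\textbf{Combining.} Summing with weights $w_s$ gives $\mathcal R^{est}_v\le 16M^2\sum_s w_s\varepsilon_{\ell,s}$. On the event of Lemma~\ref{lem:variance_unfairness}, which has probability $1-\eta_L$ and combines the multinomial Chernoff bound on $N_{\ell,s}$ with a union-bound DKW over the $KL$ cells, we have $\varepsilon_{\ell,s}\le\sqrt{2L\log(2KLn)/(nw_s)}$. Then $w_s\varepsilon_{\ell,s}\le \sqrt{w_s}\sqrt{2L\log(2KLn)/n}$, which is the claimed bound. The main obstacle is the non-uniformity of $F_{\ell,s}(Y_{v,s})$ in the quantile term.
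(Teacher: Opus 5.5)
Your proposal has the same structure as the paper's proof. It inserts the same intermediate map $\widehat F^{-1}_{\ell,\star}\circ F_{\ell,s}$, treats the quantile term by Jensen over the weights plus the $\mathcal{W}_2$--Kolmogorov control of Lemma~\ref{lemma:KS_W2}, and uses the DKW/Chernoff event of Lemma~\ref{lem:variance_unfairness}.

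On the c.d.f.\ term your route is the sound one. The paper bounds it by $(2M)^2\mathbf{1}\{\widehat U\neq U\}\le 4M^2\varepsilon_{\ell,s}^2$, which is not valid: $\widehat U$ takes finitely many values while $U$ is uniform, so the indicator equals $1$ almost surely. Your monotone-shift estimate $\mathbb{E}|q(\widehat U)-q(U)|\le\int_0^1\big(q(u+\varepsilon)-q(u-\varepsilon)\big)\,du\le 4M\varepsilon_{\ell,s}$ is correct. Getting order $\varepsilon$ instead of $\varepsilon^2$ does not change the rate, since the paper relaxes $\varepsilon^2\le\varepsilon$ afterwards anyway. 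This estimate uses only $\|\widehat F_{\ell,s}-F_{\ell,s}\|_\infty\le\varepsilon_{\ell,s}$ and the monotonicity and boundedness of $q$. So it does not rely on the sample split, contrary to your remark.

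Where your argument does not close is the non-uniformity you identify. The identity $\int_0^1|\widehat F^{-1}-F^{-1}|\,du=\int|\widehat F-F|$ is an integral against Lebesgue measure on $(0,1)$, so it presupposes $U$ uniform and cannot remove the $O(M^2\Lcdf/L)$ remainder. Your c.d.f.-term bound above also silently uses the uniformity you flagged as failing for $U=F_{\ell,s}(Y_{v,s})$. Read pointwise in $v$, your argument therefore gives the stated bound plus a term of order $M^2\Lcdf/L$, which is not the lemma.

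The missing step is to average over the cell before bounding:
\begin{itemize}
\item $\widehat T_{\ell,s}$ and $T_{\ell,s}$ do not depend on $v\in\mathcal{I}_\ell$, and $\frac{1}{p_\ell}\int_{\mathcal{I}_\ell}\mu_{f^{bb}|v,s}\,d\nu(v)=\mu_{f^{bb}|\ell,s}$.
\item Hence $\frac{1}{p_\ell}\int_{\mathcal{I}_\ell}\mathcal{R}^{est}_v\,d\nu(v)=\sum_s w_s\,\mathbb{E}\big[(\widehat T_{\ell,s}(Y_{\ell,s})-T_{\ell,s}(Y_{\ell,s}))^2\big]$ with $Y_{\ell,s}\sim\mu_{f^{bb}|\ell,s}$.
\item Now $F_{\ell,s}(Y_{\ell,s})$ is exactly uniform, because $\mu_{f^{bb}|\ell,s}$ is a mixture of non-atomic laws.
\end{itemize}
This averaged quantity is what the paper's proof actually bounds; it works with $Y_{\ell,s}$ throughout. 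It is also all that Theorem~\ref{cor:risk-exact-fair} needs, since $\mathcal{R}^{est}_v$ enters only through $\int\mathcal{R}^{est}_v\,d\nu(v)$. With this change both of your terms go through with no remainder, and nothing has to be absorbed into the discretization bias.

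Finally, recheck your constant. With the split $(a+b)^2\le 2a^2+2b^2$, your own bounds $A_s\le 4M^2\sum_{s'}w_{s'}\varepsilon_{\ell,s'}$ and $B_s\le 8M^2\varepsilon_{\ell,s}$ give $24M^2\sum_s w_s\varepsilon_{\ell,s}$, not $16M^2$. Staying with your $L^1$ strategy fixes this:
\begin{itemize}
\item Write $(\widehat T_{\ell,s}-T_{\ell,s})^2\le 2M\,|\widehat T_{\ell,s}-T_{\ell,s}|$.
\item Use the triangle inequality to split the absolute difference into the two pieces defining $A_s$ and $B_s$.
\item Their expectations are at most $2M\sum_{s'}w_{s'}\varepsilon_{\ell,s'}$ and $4M\varepsilon_{\ell,s}$.
\end{itemize}
This yields $12M^2\sum_s w_s\varepsilon_{\ell,s}$, which is within the claimed bound.
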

      \begin{remark}
        In order to bound the risk we need to control two terms. 
        First the estimation error $\mathcal{R}^{est}_v$ that occurs when taking the empirical post-processed predictor $\widehat f_{L}$ instead of the non estimated discretized $f_L$ defined in Eq.\eqref{eq:oracle_post_processed_discretized}. 
        Second the discretization error $\mathcal{R}^{disc}_v$ that occurs when taking $f_L$ instead of the population CF–optimal $f_0^{bb}$.
    
        The risk can be bounded as 
        \begin{equation*}
        \mathcal{R}(\widehat f_L)\leq 
        (1+\lambda)\,\mathcal{R}(f^{bb}_0)
        +2\!\left(1+\frac{1}{\lambda}\right) \int_\ExoSet \left(\mathcal{R}^{est}_v+\mathcal{R}^{disc}_v\right)d\nu(v).
        \end{equation*}

    \paragraph{Discretization error $\mathcal{R}^{disc}_v$.}
    Lemma~\ref{lem:disc_error} yields
    \begin{align*}
    \mathcal R^{disc}_v
    & \leq 4M^2\,\frac{\Lcdf}{L}.
    \end{align*}
    \paragraph{Estimation error $\mathcal{R}^{est}_v$.}
    Lemma~\ref{lem:variance_unfairness} and Lemma~\ref{lem:estimation_risk} yield, with probability at least $1-\eta_L$,
    \begin{equation*}
    \mathcal{R}^{est}_v \leq 16M^2 \sqrt{\frac{2L\,\log\!\left( 2KLn\right)}{n}}\,
      \sum_{s=1}^K \sqrt{w_s}.
    \end{equation*}
    
      \end{remark}
      
    We now proceed to the proof of Theorem~\ref{cor:risk-exact-fair}. 
    \RiskExactFair*
      
      \begin{proof}[Proof of Theorem~\ref{cor:risk-exact-fair}]
        Using Lemma~\ref{lem:disc_error} and Lemma~\ref{lem:estimation_risk}  and $\sum_s \sqrt{w_s} \leq \sqrt{K}$, integrating \eqref{eq:risk_decomposition} over $v$ gives

    \begin{align}\label{eq:risk_bound_for_any_L}
      \mathcal{R}(\widehat f_{L})
      \le (1 +\lambda)\mathcal R(f^{bb}_0)+ 2 \left(1+\frac{1}{\lambda}\right)\left(4M^2\,\frac{\Lcdf}{L} +16M^2  \sqrt{\frac{2LK\,\log\!\left( 2KLn\right)}{n}}\,
    \right).
    \end{align}
        
        Let \(L^\star\) be the discretization in Theorem~\ref{thm:l_star}:
    \begin{equation*}
      L^* = \left\lfloor \left( \frac{8\Lcdf^2 n}{K \log(2Kn)} \right)^{1/3} \right\rfloor.
  \end{equation*}
    We use the same arguments as in the proof of Theorem~\ref{thm:l_star}.
    On the one hand, for $x\geq 4$, $1/\lfloor x \rfloor\leq 2/x$. Then, if $L^*\geq 4$ we have

    \begin{equation*}
      4M^2\,\frac{\Lcdf}{L}\leq 4 M^2 \left(\frac{\Lcdf K \log(2Kn)}{n}\right)^{1/3}.
  \end{equation*}
    On the other hand, for $n\geq \sqrt{\frac{\Lcdf}{M}}$, $\log(2KnL^*) \leq 2\log(2Kn)$, thus

    \begin{equation*}
      16M^2\,\sqrt{\frac{2L^*K\log\!\left( 2KL^*n\right)}{n}} \leq 32\sqrt{2}M^2\left(\frac{\Lcdf K \log(2Kn)}{n}\right)^{1/3}.
      \end{equation*}

        \begin{equation*}
          \mathcal R(\widehat f_{L^\star})
          \;\le\;
          (1 +\lambda)\mathcal R(f^{bb}_0) 
          + 2\left(1 +\frac{1}{\lambda}\right)\cdot\left( 4+32\sqrt{2}\right) M^2 \left(\frac{\Lcdf K \log(2Kn)}{n}\right)^{1/3}.
          \end{equation*}
        Now, $2K \geq 4$ so for $n\geq 37$, $\log(2Kn)\leq \log(2K)\log(n)$. Thus, we get that if $L^* \geq 4$ with probability $1 - \eta_{L^*}$ we have 
        
        \begin{equation*}
          \mathcal R(\widehat f_{L^\star})
          \;\le\;
          (1 +\lambda)\mathcal R(f^{bb}_0) 
          + 2\left(1 +\frac{1}{\lambda}\right)\cdot \left( 4+32\sqrt{2}\right) M^2 \left(\frac{\Lcdf K \log(2K)\log(n)}{n}\right)^{1/3}
          \end{equation*}

        Setting $C_{risk} = 100M^2\left(\Lcdf K \log(2K)\right)^{1/3}$ we get 
        \begin{equation*}
          \mathcal R(\widehat f_{L^\star})
          \;\le\; 
          (1+\lambda)\mathcal R(f^{bb}_0) + C_{risk}\left(1 +\frac{1}{\lambda}\right)\cdot\left(\frac{\log(n)}{n}\right)^{1/3}.
        \end{equation*}
        \end{proof}
        \subsection{Proof of Theorem~\ref{thm:mse_lower_bound}} \label{proof:thm_estimation_only_minimax}
      
      The proof relies on Le Cam's method of "least favorable priors" applied to the estimation of the optimal fair transfer function. 
      We construct a family of distributions where determining the correct fair prediction requires estimating a parameter that is statistically hard to recover. The constraint on unfairness forces the predictor to attempt this estimation, thereby incurring an unavoidable cost in risk. The following Lemma is derived from the classical statement
        \citep[Lemma~2.12]{tsybakov2009introduction}; for the sake of completeness its proof is included in Appendix \ref{proof:lem_assouad}.
      \begin{lemma}[Assouad lemma](derived, e.g, from \citet{tsybakov2009introduction})
        \label{lem:assouad}
      Let $\{\mathbb{P}_\eta:\eta\in\{-1,+1\}^J\}$ be a hypercube of distributions and let $\phi(\eta)$ be the parameter of interest valued in a metric space $(\mathcal{T},d)$.
      Assume that whenever $\eta$ and $\eta'$ differ in exactly one coordinate,
      \begin{equation*}
          d\big(\phi(\eta),\phi(\eta')\big)\ \ge\ \Delta\quad\text{and}\quad
      \mathrm{KL}\!\left(\mathbb{P}_\eta\middle \vert \mathbb{P}_{\eta'}\right)\ \le\ \kappa.
      \end{equation*}
      
      Then for any estimator $\widehat\phi$,
      \begin{equation*}
      \sup_{\eta}\ \mathbb{E}_\eta\,d\big(\widehat\phi,\phi(\eta)\big)
      \ \ge\ \frac{J\,\Delta}{2}\,\Big(1-\sqrt{\tfrac{\kappa}{2}}\Big).
      \end{equation*}
      
      \end{lemma}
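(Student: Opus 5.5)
The plan is the standard Assouad argument: average over the hypercube, split the loss coordinate by coordinate, and lower-bound each coordinate by a two-point testing problem controlled through Pinsker's inequality. One structural point must be made explicit first. The conclusion scales with $J$, so it can only hold if the loss decomposes along coordinates. I will therefore read the hypothesis as it is used in the proof of Theorem~\ref{thm:mse_lower_bound}:
\begin{itemize}
\item $d(\tau,\tau')=\sum_{j=1}^J d_j(\tau,\tau')$, where each $d_j$ is a pseudo-metric on $\mathcal T$;
\item $\phi_j(\eta)$, the part of $\phi(\eta)$ seen by $d_j$, depends only on $\eta_j$;
\item $d_j(\phi(\eta),\phi(\eta^{(j)}))\ge\Delta$, where $\eta^{(j)}$ is $\eta$ with coordinate $j$ flipped.
\end{itemize}
This is the $L^1$ loss over the disjointly supported bumps $\varphi_j$. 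For a general metric the statement would be false (take $J$ large and $\mathcal T$ bounded), so I will state this reading at the start.

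\textbf{Step 1 (averaging).} Bound the supremum by the average under the uniform prior on $\{-1,+1\}^J$:
\begin{equation*}
\sup_\eta \mathbb E_\eta d(\widehat\phi,\phi(\eta))\ \ge\ 2^{-J}\sum_\eta\sum_{j=1}^J \mathbb E_\eta d_j(\widehat\phi,\phi(\eta)).
\end{equation*}
For fixed $j$, group the $2^J$ vertices into the $2^{J-1}$ pairs $\{\eta,\eta^{(j)}\}$ that differ only in coordinate $j$.

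\textbf{Step 2 (two-point bound).} Fix one such pair $(\eta,\eta')$. Since $\min(a,b)\le a$ and $\min(a,b)\le b$ pointwise,
\begin{equation*}
\mathbb E_\eta d_j(\widehat\phi,\phi(\eta))+\mathbb E_{\eta'} d_j(\widehat\phi,\phi(\eta'))\ \ge\ \int \big(d_j(\widehat\phi,\phi(\eta))+d_j(\widehat\phi,\phi(\eta'))\big)\min(d\mathbb P_\eta,d\mathbb P_{\eta'}).
\end{equation*}
By the triangle inequality for $d_j$ and the separation hypothesis, the integrand is at least $d_j(\phi(\eta),\phi(\eta'))\ge\Delta$. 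The remaining mass is $\int\min(d\mathbb P_\eta,d\mathbb P_{\eta'})=1-\mathrm{TV}(\mathbb P_\eta,\mathbb P_{\eta'})$. Hence each pair contributes at least $\Delta(1-\mathrm{TV})$.

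\textbf{Step 3 (Pinsker and summation).} Pinsker's inequality gives $\mathrm{TV}\le\sqrt{\mathrm{KL}/2}\le\sqrt{\kappa/2}$. Summing over the $2^{J-1}$ pairs and dividing by $2^J$ shows that each coordinate contributes at least $\frac{\Delta}{2}\big(1-\sqrt{\kappa/2}\big)$. Summing over $j=1,\dots,J$ yields the claimed bound $\frac{J\Delta}{2}\big(1-\sqrt{\kappa/2}\big)$.

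\textbf{Main obstacle.} The only delicate point is the decomposability and measurability setup in Step 2. The estimator $\widehat\phi$ is a function of the sample, so the integral against $\min(d\mathbb P_\eta,d\mathbb P_{\eta'})$ must be taken with respect to a common dominating measure of the sample laws. The coordinatewise triangle inequality also needs $d_j$ to be a genuine pseudo-metric. Both hold for the $L^1$ bump construction, and I would check them explicitly there rather than for abstract $(\mathcal T,d)$.
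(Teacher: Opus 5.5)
Your argument is correct, and restricting to an additive loss is necessary rather than a convenience. With only neighbour separation in an arbitrary metric, the statement is false. Take $\mathcal T=\{-1,+1\}^J$ with the discrete metric, $\phi(\eta)=\eta$ and all $\mathbb P_\eta$ equal: then $\Delta=1$ and $\kappa=0$, while $d\le 1<J/2$ for $J\ge 3$.

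Your route also differs from the paper's. The paper cites Tsybakov's Hamming-loss Assouad bound, $\inf_{\widehat\eta}\max_\eta\mathbb E_\eta\rho(\widehat\eta,\eta)\ge\frac J2(1-\mathrm{TV})$, and applies Pinsker. It then turns $\widehat\phi$ into a discrete $\widehat\eta$ by a coordinatewise closest-neighbour test, and asserts $d(\widehat\phi,\phi(\eta))\ge\frac{\Delta}{2}\rho(\widehat\eta,\eta)$ ``by the triangle inequality''.

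That step does not follow from neighbour separation alone. A single coordinate error costs $\Delta/2$, but several errors add up only if $d$ splits over coordinates as you assume, or if separation grows like $\Delta\rho(\eta,\eta')$. Even granting it, $\frac{\Delta}{2}\cdot\frac J2$ gives $\frac{J\Delta}{4}(1-\sqrt{\kappa/2})$, half the stated constant.

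Your averaging-and-pairing argument skips the rounding to $\widehat\eta$, which is where that factor $2$ is lost. You bound $d_j(\widehat\phi,\phi(\eta))+d_j(\widehat\phi,\phi(\eta^{(j)}))\ge\Delta$ directly against $\min(d\mathbb P_\eta,d\mathbb P_{\eta^{(j)}})$. This proves exactly the stated constant under a hypothesis that the $L^1$ bump construction of Theorem~\ref{thm:mse_lower_bound} satisfies, since the blocks $I_j$ partition $[0,1]$. The paper's version is shorter because it cites a black box; yours is self-contained, makes the needed structure explicit, and is sharp.

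One small simplification: your second bullet ($\phi_j$ depends only on $\eta_j$) is never used. The pairing only needs $d=\sum_j d_j$ with each $d_j$ a pseudo-metric, and $d_j(\phi(\eta),\phi(\eta^{(j)}))\ge\Delta$ for all $\eta$ and $j$.
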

      
      

              \textbf{Construction of the least favorable case.}
              We restrict our attention to the covariate space $V \in [0,1]$. We partition this space into $J$ disjoint intervals of equal width
              \begin{equation*}
                  I_j = \left[\frac{j-1}{J}, \frac{j}{J}\right[,  \quad \text{for } j=1, \dots, J-1, \quad I_J = \left[\frac{J-1}{J}, 1\right].
              \end{equation*}
              We want to define $\varphi: \mathbb{R} \to [0, 1]$ a smooth "bump" function supported on $(0,1)$ such that $\varphi(0)=0$ and $\varphi(1)=0$. 
              We set $\varphi(x) \propto \exp(-1/(x(1-x)))$, normalized such that $\|\varphi'\|_\infty = 1$. For each interval $j$, we define a localized perturbation function
              \begin{equation*}
                  \varphi_j(v) = \varphi(Jv - (j-1))\mathbbm{1}\{v \in I_j\}.
              \end{equation*}
              Consider the hypercube described by $\eta = (\eta_1, \dots, \eta_J) \in \{-1, +1\}^J$. For each $\eta$, we define the data-generating distribution $\mathbb{P}_\eta$ as follows
              \begin{itemize}
                  \item The marginal distribution of $V$ is uniform on $[0,1]$.
                  \item The marginal distribution of the sensitive attribute $S$ is uniform on $\{1, 2\}$ (so $w_1=w_2=1/2$).
                  \item Conditioned on $V=v$ and $S=s$, $X$ is distributed such that the output of $f^{*}(X,S)$ follows a mixture of two uniform distributions on disjoint intervals $I_- = [-M, -M/2]$ and $I_+ = [M/2, M]$.
              \end{itemize}
             The distribution of $f^*(X,S)$ is fully characterized by the probability of $f^*(X,S)$ falling in the positive interval $I_+$. In the second group, this probability is constant, equal to $1/2$. In the first group, it is denoted by $\theta_s^{(\eta)}(v)$. We define
              \begin{align}
                  \theta_1^{(\eta)}(v) &= \frac{1}{2} + \gamma \sum_{j=1}^J \eta_j \varphi_j(v), \label{eq:theta_1} \\
                  \theta_2^{(\eta)}(v) &= \frac{1}{2}, \label{eq:theta_2}
              \end{align}
              where $0< \gamma \leq 1/4 $ is a small amplitude parameter to be chosen later.\\
              Thus, the distribution of $f^*(X,S)$ in the second group is fixed throughout the different values of $V$, and is equally distributed in $I_+$ and $I_-$, while in the first group it is fixed, with a small excess of mass in either $I_+$ and $I_-$. To make the distributions non distinguishable we need to choose $\gamma$ not too large w.r.t. $J$ and $n$. With this construction, the barycenter of these distributions depends on this small excess of mass in group 1. In what follows we will show that the estimator cannot determine if the corresponding bump is positive of negative leveraging Assouad's lemma \ref{lem:assouad}.

              \paragraph{Verifying the regularity assumption.}
              We must ensure that for all $\eta$, $\mathbb{P}_\eta \in \mathcal{P}(M, \Lcdf)$.
              By construction, the support is contained in $I_- \cup I_+ \subset [-M, M]$, and $\nu_{f^*\vert v,s}$ is non-atomic, thus satisfying Property~\ref{ass:distrib_densities}.
              
              We now verify Property~\ref{ass:lipschitz_cdf}. In our setting, the c.d.f. $F_{f^*|v,s}(t)$ depends linearly on the weight $\theta_s^{(\eta)}(v)$. Thus, for any $t \in [-M,M]$, the mapping $v \mapsto F_{f^*|v,s}(t)$ is Lipschitz if and only if the parameter function $\theta_s^{(\eta)}(v)$ is Lipschitz.
              
              By construction, $\theta_1^{(\eta)}$ is a differentiable function. Its derivative is bounded by
              \begin{equation*}
                  |\theta_1'^{(\eta)}(v)| = \left| \gamma \sum_{j=1}^J \eta_j \cdot J \cdot \varphi_j'(Jv - (j-1)) \right| \le \gamma J \|\varphi'\|_\infty.
              \end{equation*}
              To satisfy the Lipschitz condition with constant $\Lcdf$, we must enforce 
              \begin{equation}
                   \gamma J \|\varphi'\|_\infty \le \Lcdf \implies \gamma \le c_2 J^{-1},
                   \label{eq:constraint_lipschitz}
              \end{equation}
              where $c_2 = \frac{\Lcdf}{ \Vert \varphi' \Vert_{\infty}}$.
              This is the first condition on the parameters.

              \paragraph{Decomposition of the MSE.}
              The oracle fair predictor $f_0^{*}$ corresponds to the Wasserstein barycenter of the two groups. 
              Given the disjoint supports of our construction, this barycenter is a mixture of three distributions: one supported on the negative interval $I_-$, one on the positive interval $I_+$, and one on the middle interval $I_0 = [-M/4, M/4]$. The weights of these components are fully determined by the parameter $\theta^{*,(\eta)}$ defined as
              \begin{align*}
                  \theta^{*,(\eta)}(v) &= \frac{1}{2}\theta_1^{(\eta)}(v) + \frac{1}{2}\theta_2^{(\eta)}(v) \\
                  &= \frac{1}{2} + \frac{\gamma}{2} \sum_{j=1}^J \eta_j \varphi_j(v).
              \end{align*}
              Specifically, the barycenter assigns mass $\theta^*$ to $I_+ \cup I_0$, and mass $1-\theta^*$ to $I_-$.
              Consequently, the optimal transport for the first group consists of shifting a probability mass of at least $|\theta_1^{(\eta)} - \theta^{*,(\eta)}|$ to align with this target.
        
              To achieve this fair distribution, the optimal fair predictor must transport a mass $\delta_{opt}^{(\eta)}(v)$ from $I_+\cup I_0$ to $I_-$, where
              \begin{align*}
                  \delta_{opt}^{(\eta)}(v) &:= \theta_1^{(\eta)}(v) - \theta^{*,(\eta)}(v), \\
                  &= \frac{\gamma}{2} \sum_{j=1}^J \eta_j \varphi_j(v).
              \end{align*}

            \paragraph{Connection between risk and Wasserstein.}
              We first establish a lower bound on the error of any predictor relative to the fair optimum.
              For any input $v$ and group $s$, the pair of random variables $(\widehat{f}_n(X,S), f_0^{*}(X,S))$ constitutes a valid coupling of the distributions $\mu_{\widehat{f}|v,s}$ and $\mu_{f_0^{*}|v,s}$.
              Since the squared Wasserstein distance $\mathcal{W}_2^2$ is defined as the infimum of the squared error over all possible couplings, it is necessarily upper-bounded by the error of this specific coupling induced by the estimator:
              \begin{equation}
                   \mathbb{E}\left[ (\widehat{f}_n - f_0^*)^2 \mid V=v, S=s \right] \;\ge\; \mathcal{W}_2^2(\mu_{\widehat{f}|v,s}, \mu_{f_0^*|v,s}).
              \end{equation}
              Aggregating over $V$ and $S$, we obtain the bound:
              \begin{equation}
                  \label{eq:mse_w2_bound}
                  \mathbb{E}\left[ \|\widehat{f}_n - f_0^*\|^2 \right] \;\ge\; \sum_{s=1}^K w_s \mathbb{E}_V \left[ \mathcal{W}_2^2(\mu_{\widehat{f}|v,s}, \mu_{f_0^*|v,s}) \right]
              \end{equation}
              where $\mathbb{E}_V$ is the expectation taken wrt $V$. Recall that our construction has a gap $M/4$ between the negative interval $I_-$ and the positive intervals $I_+\cup I_0$. 
              For any two distributions supported on $I_+\cup I_0$ and $I_-$, the Wasserstein distance is naturally lower-bounded by the cost of moving the net probability mass difference from one interval to the other.
              Specifically, if an estimator $\widehat{f}_n$ puts a different amount of mass in $I_+\cup I_0$ than the target $f_0^*$, it incurs a penalty
              \begin{equation*}
                  \mathcal{W}_2^2(\mu_{\widehat{f}_n}, \mu_{f_0}) \;\ge\; M^2/16 \left| \mathbb{P}(\widehat{f}_n \in I_+\cup I_0) - \mathbb{P}(f_0^* \in I_+\cup I_0) \right|.
              \end{equation*}
              We define the estimator $\widehat{\delta}(v) := \mathbb{P}(\widehat{f}_n \in I_+\cup I_0) - \theta^*(v)$ as the deviation from the reference parameter.
              Since the optimal transport $\delta_{opt}^{(\eta)}$ is exactly the mass adjustment required for the first group, the term on the right becomes precisely $|\widehat{\delta}(v) - \delta_{opt}^{(\eta)}(v)|$.
              Integrating this lower bound over $V$ yields
              \begin{equation*}
                  \mathbb{E}\left[ \|\widehat{f}_n - f_0^*\|^2 \right] 
                  \;\ge\; w_1 M^2/16 \cdot \mathbb{E} \left[ \| \widehat{\delta} - \delta_{opt}^{(\eta)} \|_{L^1} \right].
              \end{equation*}
      
              Let $\varphi(\eta) := \delta_{opt}^{(\eta)} \in L^1([0,1])$ be the parameter of interest.
          
              \paragraph{Verifying Assouad's Lemma's conditions.}
              We apply Lemma~\ref{lem:assouad} to the estimation of $\phi(\eta)$. To do so, we need to calculate the separation $\Delta$ and the KL divergence $\kappa$.  We start by the Separation $\Delta$.
              Consider two configurations $\eta, \eta'$ that differ exactly at one coordinate $j$ (meaning they have an Hamming distance of 1).
              \begin{align*}
                  d(\phi(\eta), \phi(\eta')) &= \| \delta_{opt}^{(\eta)} - \delta_{opt}^{(\eta')} \|_{L^1} \\
                  &= \int_{0}^{1} \left| \frac{\gamma}{2} \eta_j \varphi_j(v) - \frac{\gamma}{2} \eta'_j \varphi_j(v) \right| dv \\
                  &= \gamma \int_{I_j} |\varphi(Jv - (j-1))| dv \\
                  &= \frac{\gamma}{J} \|\varphi\|_{L^1}.
              \end{align*}
              Thus, we set $\Delta = \frac{\gamma}{J} \|\varphi\|_{L^1}$.
              
              Then we compute the KL Divergence $\kappa$.
              The dataset consists of $n$ i.i.d. observations $(V_i, S_i, Y_i)$ where $Y_i= f^*(X_i, S_i)$. The joint distribution $\mathbb{P}_\eta$ factors as $P(v)P(s)P_\eta(y|v,s)$. 
              Since the marginal distributions of the covariates $V$ and sensitive attribute $S$ do not depend on $\eta$, the KL divergence simplifies using the chain rule and the independence of samples is given by
              \begin{align*}
                  \mathrm{KL}(\mathbb{P}_\eta^n \| \mathbb{P}_{\eta'}^n) 
                  &= n \cdot \mathrm{KL}(\mathbb{P}_\eta \| \mathbb{P}_{\eta'}),  \\
                  &= n \cdot \mathbb{E}_{V, S} \left[ \mathrm{KL}(\mu_{f|V,S}^{(\eta)} \| \mu_{f|V,S}^{(\eta')}) \right],  \\
                  &= n \int_0^1 \sum_{s=1}^2 w_s \mathrm{KL}(\mu_{f|v,s}^{(\eta)} \| \mu_{f|v,s}^{(\eta')}) \, dv.
              \end{align*}
              Now, we can use the specific properties of our construction where $\eta$ and $\eta'$ differ only at index $j$.
              For the second group, the distributions are identical ($\theta_2^{(\eta)} = \theta_2^{(\eta')} = 1/2$), so the KL term is 0.
              For the first, the distributions differ only for $v \in I_j$. Outside this interval, the term is 0.
              Finally, the distribution $\mu_{f|v,1}$ is a mixture of two disjoint uniforms with weights $\theta_1(v)$ and $1-\theta_1(v)$. 
              Distinguishing two such mixtures equivalent to distinguishing two Bernoulli distributions with parameters corresponding to the weights.
      
              Applying these simplifications, we have
              \begin{align*}
                  \mathrm{KL}(\mathbb{P}_\eta^n \| \mathbb{P}_{\eta'}^n) 
                  &= n \int_{I_j} w_1 \, \mathrm{KL}(\mu_{f|v,1}^{(\eta)} \| \mu_{f|v,1}^{(\eta')}) \, dv,  \\
                  &= n w_1 \int_{I_j} \mathrm{KL}\Big( \mathrm{Bern}(\theta_1^{(\eta)}(v)) \, \Big\| \, \mathrm{Bern}(\theta_1^{(\eta')}(v)) \Big) \, dv.
              \end{align*}
      
              Using the inequality $\mathrm{KL}(\text{Bern}(p) \| \text{Bern}(q)) \le \frac{(p-q)^2}{q(1-q)}$ and noting that probabilities are bounded in $[1/4, 3/4]$ (for small $\gamma$), we have a constant $c_4$ such that 
              \begin{align*}
                  \kappa &\le c_4 w_1\, n \int_{I_j} \left( \theta_1^{(\eta)}(v) - \theta_1^{(\eta')}(v) \right)^2 dv \\
                  &= c_4 w_1\, n \int_{I_j} \left( 2\gamma \varphi_j(v) \right)^2 dv \\
                  &= c_5 \, \frac{n \gamma^2}{J},
              \end{align*}
              with constant $c_5= 32w_1 \Vert \varphi \Vert_{L^2}^2$,
              because the $L^2$ norm of the bump is a constant.
              Assouad's Lemma requires $1 - \sqrt{\kappa/2} > 0$. We choose $\gamma$ and $J$ such that $\kappa \le 1$, which imposes the constraint
              \begin{equation}
                  \label{eq:constraint_kl}
                  \frac{n \gamma^2}{J} \le c_5^{-1} \implies \gamma \le c_6 \sqrt{\frac{J}{n}},
              \end{equation}
              where $c_6 = \sqrt{1/\left(32w_1 \Vert \varphi \Vert_{L^2}^2\right)}$.
              This is the second constraint on our parameters.

              \paragraph{Conclusion.}
              By Lemma~\ref{lem:assouad}, we have the lower bound
              \begin{align*}
                  \sup_{\eta} \mathbb{E} \| \widehat{\delta} - \delta_{opt}^{(\eta)} \|_{L^1} &\;\ge\; \frac{J \Delta}{2} \left(1 - \sqrt{\frac{\kappa}{2}}\right) , \\
                  &\;\ge\; c_8 \, J \left( \frac{\gamma}{J} \right) , \\
                  &\;=\; c_8 \, \gamma,
              \end{align*}
              where $c_8 = \frac{2-\sqrt{2}}{4}$.
              We maximize this bound where $\gamma$ subject to constraints \eqref{eq:constraint_lipschitz} and \eqref{eq:constraint_kl}, i.e. when $\gamma \leq c_2J^{-1}$ and $\gamma \leq c_6 \sqrt{\frac{J}{n}}$.
              Solving for $J$ we get 
              \begin{equation*}
                  J = \left( \frac{32\Lcdf^2 w_1 \Vert \varphi \Vert_{L^2}^2n}{\Vert \varphi' \Vert_{\infty}^2} \right)^{1/3}.
              \end{equation*}
              and
              \begin{equation*}
                  \gamma = \left( \frac{\Lcdf}{32 w_1 \Vert \varphi \Vert_{L^2}^2 \Vert \varphi' \Vert_{\infty} n } \right)^{1/3}, 
              \end{equation*}
              so as to verify the constraints and obtain

              \begin{equation*}
                  \mathbb{E} \| \widehat{\delta} - \delta_{opt} \|_{L^1} \ge c_9 \, n^{-1/3}.
              \end{equation*}
              where $c_9 = \left( \frac{\Lcdf}{32 w_1 \Vert \varphi \Vert_{L^2}^2 \Vert \varphi' \Vert_{\infty} } \right)^{1/3} $.
              Finally 
              \begin{align*}
                \mathbb{E}\left[ \|\widehat{f}_n - f_0^*\|^2 \right] 
                &\;\ge\; c_2 M^2 (c_9 n^{-1/3}), \\
                &\;=\; c_{10} \, n^{-1/3}.
              \end{align*}
              where $c_{10} = M^2 \left( \frac{\Lcdf^4}{32 w_1 \Vert \varphi \Vert_{L^2}^2 \Vert \varphi' \Vert_{\infty}^4 } \right)^{1/3}$.

    This concludes the proof of Theorem \ref{thm:mse_lower_bound}.

\subsection{Proof of Corollary \ref{cor:risk_lower_bound}}    
We establish the lower bound on the excess risk by considering the risk relative to the optimal predictor within the constrained set $F_{u_n} = \{ f : \mathcal{U}(f) \le u_n \}$ where $u_n = C_1 \left( \frac{\log(n)}{n}\right)^{1/3}$ with $C_1$ a constant to be determined. We also define the event $\mathcal{E} = \{\widehat{f}_n \in F_{u_n}\}$. Recall that $f^*_{u_n} = \argmin_{f \in F_{u_n}} \|f - f^*\|^2$ is the projection of the unconstrained Bayes optimum $f^*$ onto $F_{u_n}$, where $\| \cdot \|$ is the $L_2$-norm.

By the property of projection onto a convex set, on the event $\mathcal{E}$ the excess risk is lower-bounded by the squared distance since
              \begin{align*}
                  \mathcal{R}(\widehat{f}_n) - \mathcal{R}(f^*_{u_n}) &= \|\widehat{f}_n - f^*\|^2 - \|f^*_{u_n} - f^*\|^2  \\
                  & = \|\widehat{f}_n - f^*_{u_n}\|^2 + 2 \left \langle \widehat{f}_n - f^*_{u_n}\vert f^*_{u_n} - f^* \right\rangle \\
                  &\ge \|\widehat{f}_n - f^*_{u_n}\|^2.
              \end{align*}
Now, $\mathcal{R}(\widehat{f}_n)- \mathcal{R}(f^*_{u_n}) \geq - M^2$, so
\begin{align*}
\mathbb{E}[\mathcal{R}(\widehat{f}_n)] - \mathcal{R}(f^*_{u_n}) &\geq \mathbb{E}[\mathcal{R}(\widehat{f}_n)- \mathcal{R}(f^*_{u_n})\vert \mathcal{E}]\mathbb{P}(\mathcal{E}) - \mathbb{P}(\overline{\mathcal{E}})M^2\\
&\geq\frac{1}{2}\mathbb{E}[ \|\widehat{f}_n - f^*_{u_n}\|^2\vert \mathcal{E}] - C_2M^2n^{-1/3}
 \end{align*}
and
\begin{align*}
\mathbb{E}[\|\widehat{f}_n - f^*_{u_n}\|^2\vert \mathcal{E}] &= \frac{\mathbb{E}[\|\widehat{f}_n - f^*_{u_n}\|^2] - \mathbb{E}[ \|\widehat{f}_n - f^*_{u_n}\|^2\vert \overline{\mathcal{E}}]\mathbb{P}(\overline{\mathcal{E}})}{\mathbb{P}(\mathcal{E})}\\
& \geq \mathbb{E}[\|\widehat{f}_n - f^*_{u_n}\|^2]\left(1 - \frac{M^2 \mathbb{P}(\overline{\mathcal{E}})}{\mathbb{E}[\|\widehat{f}_n - f^*_{u_n}\|^2]}\right).
 \end{align*}
To obtain a lower bound on $\mathbb{E}[\|\widehat{f}_n - f^*_{u_n}\|^2]$, we use the expansion $\|a-b\|^2 \ge \|a\|^2 - 2\|a\|\|b\|$. Setting $a = \widehat{f}_n - f^*_0$ and $b = f^*_{u_n} - f^*_0$, and taking the expectation over $\mathcal{D}_n$
              \begin{equation*}
                  \mathbb{E}\left[ \|\widehat{f}_n - f^*_{u_n}\|^2 \right] \ge \mathbb{E}\left[ \|\widehat{f}_n - f^*_0\|^2 \right] - 2 \|f^*_{u_n} - f^*_0\| \, \mathbb{E}\left[ \|\widehat{f}_n - f^*_0\| \right].
              \end{equation*}
              Applying the Cauchy-Schwarz inequality $\mathbb{E}[Z] \le \sqrt{\mathbb{E}[Z^2]}$ to the term on the right, we obtain
              \begin{equation}\label{eq:triangulation_inequality}
                   \mathbb{E}\left[ \|\widehat{f}_n - f^*_{u_n}\|^2 \right] \ge \mathbb{E}\left[ \|\widehat{f}_n - f^*_0\|^2 \right] \left( 1 - 2 \frac{\|f^*_{u_n} - f^*_0\|}{\sqrt{\mathbb{E}[\|\widehat{f}_n - f^*_0\|^2]}} \right).
              \end{equation}
              From the fairness rate assumption, we have the deterministic upper bound 
              \begin{equation*}
                  \|f^*_0 - f^*_{u_n}\|^2 \le u_n \le C_1 (\log(n)/n)^{-1/3}.
            \end{equation*}
            Now, Theorem \ref{thm:mse_lower_bound} implies that for some $f^*$, 
            \begin{equation*}
                  \mathbb{E}[\|\widehat{f}_n - f^*_0\|^2] \ge c_{10} n^{-1/3}
            \end{equation*}
            for the constant $c_{10}$ specified in the proof of this Theorem.
            Thus, for $C_1$ small enough, we have
    \begin{equation}\label{eq:triangulation_inequality}
                   \mathbb{E}\left[ \|\widehat{f}_n - f^*_{u_n}\|^2 \right] \ge \frac{1}{3}\mathbb{E}\left[ \|\widehat{f}_n - f^*_0\|^2 \right] \geq \frac{c_{10}n^{-1/3}}{2},
    \end{equation}
    and for $C_2$ small enough
\begin{align*}
\mathbb{E}[\|\widehat{f}_n - f^*_{u_n}\|^2\vert \mathcal{E}] & \geq \frac{1}{2}\mathbb{E}[\|\widehat{f}_n - f^*_{u_n}\|^2]\\
&\geq \frac{c_{10}n^{-1/3}}{4}.
 \end{align*}
            Substituting these into \eqref{eq:triangulation_inequality}, and choosing $C_1$ such that $C_1 \leq c_{10}/16$ we have 
              \begin{equation*}
                  \mathcal{R}(\widehat{f}_n) - \mathcal{R}(f^*_{u_n})\;\ge\; c_{10} n^{-1/3} \left( 1 - 2 \sqrt{\frac{C_1}{c_{10}}} \right)- C_2M^2n^{-1/3} \geq c_{R}n^{-1/3}.
              \end{equation*}
              for $C_2$ small enough, and some positive constant $c_{R}$.

              \paragraph{Extension to general $K$.}
              The lower bound construction derived above for the binary case ($K=2$) extends naturally to the general setting with $K \ge 2$ sensitive groups. 
              To establish the lower bound, it suffices to construct a specific "least favorable" subfamily of distributions where the learning task is maximally difficult.
              We achieve this by fixing the distributions for $K-1$ groups (e.g., $s=2, \dots, K$) to be static and uninformative (e.g., $\theta_s(v) = 1/2$), while perturbing only the first group $s=1$ using the hypercube construction $\theta_1^{(\eta)}$ defined in Eq.~\eqref{eq:theta_1}.
              
              In this "one-vs-rest" scenario, the optimal fair predictor must still estimate the parameter $\theta_1^{(\eta)}$ to align the first group with the barycenter. The difficulty of this estimation is determined by the effective sample size for the perturbed group, which is approximately $n w_1$.
              The KL divergence calculation in the application of Assouad's Lemma is modified simply by restricting the sum over groups to the single perturbed component:
              \begin{equation*}
                  \mathrm{KL}(\mathbb{P}_\eta^n \| \mathbb{P}_{\eta'}^n) = n \sum_{s=1}^K w_s \mathrm{KL}(\mu_{f|v,s}^{(\eta)} \| \mu_{f|v,s}^{(\eta')}) = n w_1 \mathrm{KL}(\mu_{f|v,1}^{(\eta)} \| \mu_{f|v,1}^{(\eta')}).
              \end{equation*}
              This replaces the factor $n/2$ (from $w_1=1/2$) with $n w_1$ in the indistinguishability constraint. 
              Consequently, the balancing of the bias ($J^{-1}$) and variance ($\sqrt{J/(n w_1)}$) yields the same rate $J \asymp (n w_1)^{1/3}$. 
              Since the group weights $w_s$ are constants bounded away from zero, the asymptotic convergence rate remains $n^{-1/3}$, confirming that the difficulty of the problem does not decrease with additional uninformative groups.
      
              This completes the proof.

    \subsection{Proof of Theorem~\ref{thm:relaxed_l_star}} \label{sec:proof_relaxed_l_star}

    In the relaxed fairness setting, we impose a global budget $\mathcal{B}$. 
    We first prove a lemma showing that interval unfairness is convex under affine mixtures of predictors. 
    This lemma then lets us decompose the unfairness bound for the relaxed post-processed predictor into two parts: a discretization term and a relaxation term.
    The proof of the following lemma can be found in \ref{proof:lem_interval_convexity}.
\begin{lemma}[Convexity of interval unfairness under affine mixing]\label{lem:interval-convexity}
  Fix an interval $\mathcal I_\ell$ and group weights $(w_s)_{s=1}^K$.
  For all two predictors $f,g \in \mathcal F$ with bounded distribution support, and any $\lambda\in[0,1]$, define the mixture
  \begin{equation*}
  h \;:=\; \lambda f + (1-\lambda) g.
\end{equation*}
  Then the interval-level unfairness verifies
  \begin{equation*}
  \mathcal U_\ell(h)\ \le\ \lambda\,\mathcal U_\ell(f)\ +\ (1-\lambda)\,\mathcal U_\ell(g).
  \end{equation*}
  That is, $\mathcal U_\ell$ is convex with respect to mixtures of predictors.
  \end{lemma}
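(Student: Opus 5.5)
The plan is to work entirely on the quantile side, using Lemma~\ref{prop:wasserstein_quantiles} and the one-dimensional barycenter formula of Lemma~\ref{prop:cumulative_bary}. That formula gives, for any predictor $f$,
\begin{equation*}
\mathcal U_\ell(f)=\sum_{s=1}^K w_s\int_0^1\Big(q^f_s(u)-\sum_{s'}w_{s'}q^f_{s'}(u)\Big)^2du=\Big\|\,(q^f_s)_s-\Pi (q^f_s)_s\Big\|^2_{w},
\end{equation*}
where $q^f_s:=F^{-1}_{f\mid \ell,s}$. The vector $(q^f_s)_s$ lives in $L^2(0,1)^K$ with the $w$-weighted inner product, and $\Pi$ is the linear map $(q_s)_s\mapsto(\sum_{s'}w_{s'}q_{s'})_s$, which is the orthogonal projection onto constant-in-$s$ tuples. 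Thus $\mathcal U_\ell(f)=\|(I-\Pi)\mathbf q^f\|_w^2$ is a convex quadratic function of the quantile tuple $\mathbf q^f$. Convexity of $\mathcal U_\ell$ along $h=\lambda f+(1-\lambda)g$ therefore follows immediately once the quantile tuple of $h$ is the same mixture of the tuples of $f$ and $g$:
\begin{equation*}
q^h_s=\lambda q^f_s+(1-\lambda)q^g_s \quad\text{for each } s.
\end{equation*}
Then Jensen's inequality for the squared norm (equivalently, expanding the square) gives $\mathcal U_\ell(h)\le\lambda\,\mathcal U_\ell(f)+(1-\lambda)\,\mathcal U_\ell(g)$.

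The key step is therefore the quantile identity. It holds whenever, conditionally on $V\in\mathcal I_\ell$ and $S=s$, the variables $f(\mathbf X,s)$ and $g(\mathbf X,s)$ are comonotone, that is, $g(\cdot,s)=\phi_{\ell,s}\circ f(\cdot,s)$ with $\phi_{\ell,s}$ nondecreasing. In that case $h(\cdot,s)=(\lambda\,\mathrm{Id}+(1-\lambda)\phi_{\ell,s})\circ f(\cdot,s)$ is a nondecreasing transform of $f$. The quantile of a nondecreasing transform is the transform of the quantile, up to left-continuity conventions on a null set, which do not affect the $L^2$ integrals. Hence $q^h_s=\lambda q^f_s+(1-\lambda)\phi_{\ell,s}\circ q^f_s=\lambda q^f_s+(1-\lambda)q^g_s$. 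Bounded supports ensure that all quantities are finite, and non-atomicity (Property~\ref{ass:distrib_densities}) lets $F_{f\mid\ell,s}(f)$ be uniform if one prefers the representation with a common uniform variable.

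The main obstacle is that the statement as written, for arbitrary $f,g$, is false without this comonotonicity. Take $X$ symmetric with the same law in both groups. Set $f=X$ in both groups, $g=-X$ in group $1$ and $g=X$ in group $2$. Then $f$ and $g$ are both fair, but $h=(2\lambda-1)X$ in group $1$ and $h=X$ in group $2$ is unfair for $\lambda\notin\{0,1\}$. I would therefore prove the lemma under the comonotone hypothesis, and check that the hypothesis holds in the only place it is used: $f=f^{bb}$ and $g=\widehat f_L$, where on each cell $(\ell,s)$ we have $\widehat f_\ell(\cdot,s)=\big(\sum_{s'}w_{s'}\widehat F^{-1}_{\ell,s'}\big)\circ\widehat F_{\ell,s}\circ f^{bb}(\cdot,s)$. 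This is a composition of nondecreasing maps applied to $f^{bb}$, so the argument above applies directly to the relaxed estimator $\widehat f_{\alpha,L}$ in Theorem~\ref{thm:relaxed_l_star}.
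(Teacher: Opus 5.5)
Your counterexample is valid, so the lemma cannot be proved as stated. Take $\mathbf X=(V,Z)$ with $Z$ bounded, symmetric and independent of $(V,S)$, and set $f=Z$, $g(\cdot,1)=-Z$, $g(\cdot,2)=Z$. Both $f$ and $g$ are fair on every cell. Yet for $\lambda\in(0,1)$ one gets $\mathcal U_\ell(h)=w_1w_2(1-|2\lambda-1|)^2\,\mathbb E[Z^2]>0$.

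The paper's proof breaks exactly where your example says it must. It asserts that the second marginal of $\Lambda_s(dx\,dy\,dx^\star dy^\star)=\rho_{\ell,s}(dx,dy)\,\kappa_f^s(x,dx^\star)\,\kappa_g^s(y,dy^\star)$ is the product $\mu^\star_{f,\ell}\otimes\mu^\star_{g,\ell}$. In fact it is $\int\rho_{\ell,s}(dx,dy)\,\kappa_f^s(x,\cdot)\otimes\kappa_g^s(y,\cdot)$. This is a product only when $f(\mathbf X,s)$ and $g(\mathbf X,s)$ are conditionally independent. It also depends on $s$, so its image under $T$ is not a common candidate barycenter. In your example the kernels are Dirac masses at the identity, and these marginals are the laws of $(Z,-Z)$ and $(Z,Z)$, neither of which is $\mathrm{Law}(Z)\otimes\mathrm{Law}(Z)$.

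Your restricted version is correct and is the right repair. Under comonotonicity on each cell, the quantile tuple of $h$ is the $\lambda$-mixture of those of $f$ and $g$, and $\mathcal U_\ell=\|(I-\Pi)\,\cdot\,\|_w^2$ is a convex quadratic of that tuple. The hypothesis holds for $(f^{bb},\widehat f_L)$, since $\widehat f_\ell(\cdot,s)=\widehat T_{\ell,s}\circ f^{bb}(\cdot,s)$ with $\widehat T_{\ell,s}$ nondecreasing.

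Compared with the paper's kernel construction, your quantile argument is shorter and gives more. Since $\mathbf q\mapsto\|(I-\Pi)\mathbf q\|_w$ is a seminorm, you actually get $\mathcal U_\ell(h)\le\big(\lambda\sqrt{\mathcal U_\ell(f)}+(1-\lambda)\sqrt{\mathcal U_\ell(g)}\big)^2$. With $\lambda=\sqrt\alpha$ and $\mathcal U_\ell(\widehat f_L)$ small, this gives roughly $\alpha\,\mathcal U_\ell(f^{bb})$ rather than $\sqrt\alpha\,\mathcal U_\ell(f^{bb})$, in line with Proposition~\ref{prop:closed_form_relaxed}.

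The paper's coupling route can also be repaired under your hypothesis. Realize $(f,g)$ and both barycenters through one common uniform variable. The barycentric pair then has the comonotone law of $(\mu^\star_{f,\ell},\mu^\star_{g,\ell})$, which no longer depends on $s$. This is your quantile identity in coupling language.

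One point to tighten when you invoke the lemma for Theorem~\ref{thm:relaxed_l_star}. That theorem needs the inequality for $\mathcal U=\int\mathcal U_v\,d\nu$, and $\sum_\ell p_\ell\,\mathcal U_\ell\neq\mathcal U$ in general, so summing the interval inequalities over $\ell$ (as the paper does) does not give it. Run your argument conditionally on $(V=v,S=s)$ instead. Because $\widehat T_{\ell,s}$ does not depend on $v\in\mathcal I_\ell$, the pair is comonotone there too, and integrating the pointwise inequality over $v$ yields the global bound.
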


Recall that $\widehat f _{\alpha, L}= \sqrt{\alpha}f^{bb} + (1- \sqrt{\alpha})\widehat f_L$, Lemma  \ref{lem:interval-convexity} allows us to bound the unfairness of the relaxed post-processed predictor by the sum of two terms: the unfairness of the black-box predictor and the unfairness of the optimal relaxed predictor.
Since the unfairness of the black-box predictor is unknown, we will use Lemma~\ref{lem:unfairness_bb_bound} to upper bound it with high probability.
The proof of Lemma~\ref{lem:unfairness_bb_bound} can be found in \ref{proof:lem_unfairness_bb_bound}.

For notational simplicity, when no ambiguity arises we denote by
$\widehat\mu_{\ell,s} := \mathrm{Law}\!\big(f^{bb}(X,s)\mid \U\in\mathcal I_\ell,S=s\big)$
the (empirical) conditional distribution of the black‑box predictor on interval $\ell$ and group $s$. 
Similarly, we denote by $\mu_{\ell,s} := \mathrm{Law}\!\big(f^{bb}(X,s)\mid \U\in\mathcal I_\ell,S=s\big)$ the true conditional distribution and $\mu_{\ell,\star}$ the corresponding barycenter.

\begin{lemma}\label{lem:unfairness_bb_bound}
  Assume $f^{bb}$ satisfies Property~\ref{ass:distrib_densities}.
  Consider the sample-split introduced in Section~\ref{sec:form_estimator}. Let $\eta_L$ be defined as in Theorem~\ref{thm:l_star}. 
  We denote $\widehat\mu_{\ell,\star}$ the distribution of the empirical quantile of the barycenter,
  and the empirical unfairness
  \begin{equation*}
  \widehat{\mathcal U}_\ell(f^{bb})\ :=\ \sum_{s=1}^K  w_s\,\mathcal W_2^2\left(\widehat\mu_{\ell,s},\widehat\mu_{\ell,\star}\right),
  \qquad
  \widehat{\mathcal U}_L(f^{bb})\ :=\ \sum_{\ell=1}^L p_\ell\,\widehat{\mathcal U}_\ell(f^{bb}).
\end{equation*}

With probability at least
  \begin{equation*}
  1-\;\eta_L,
\end{equation*}

  we have
\begin{equation*}
  \mathcal U(f^{bb})
  \ \le\
  2\,\widehat{\mathcal U}_L(f^{bb}) + 8M^2\sqrt{\frac{2LK}{\,n}\,
  \log\!\left(2KLn\right)} + \frac{16M^2\Lcdf}{L}
\end{equation*}

  \end{lemma}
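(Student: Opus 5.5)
We need to bound $\mathcal U(f^{bb})$ by twice the empirical interval unfairness plus estimation and discretization terms. The plan follows the same bias/estimation split used in the proof of Theorem~\ref{thm:l_star}, applied to $f^{bb}$ itself rather than to $\widehat f_L$.

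\emph{Step 1 (discretization).} Use the decomposition \eqref{an:decomp_unfairness} with $f=f^{bb}$:
\[
\mathcal U(f^{bb})=\sum_\ell p_\ell\,\mathcal U_\ell(f^{bb})+\sum_\ell\int_{\mathcal I_\ell}\big(\mathcal U_v(f^{bb})-\mathcal U_\ell(f^{bb})\big)\,d\nu(v).
\]
Since $f^{bb}$ satisfies Properties~\ref{ass:distrib_densities}--\ref{ass:lipschitz_cdf} globally, Lemma~\ref{proof:lemma_bias_unfairness} applies exactly as in the bias bound for $\widehat f_L$. Summed over $\ell$, the second term is at most $16M^2\Lcdf/L$.

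\emph{Step 2 (population vs.\ empirical interval unfairness).} Fix $\ell$. By the variational definition, $\mathcal U_\ell(f^{bb})\le\sum_s w_s\mathcal W_2^2(\mu_{\ell,s},\widehat\mu_{\ell,\star})$. Apply $(a+b)^2\le 2a^2+2b^2$ through the triangle inequality:
\[
\mathcal W_2^2(\mu_{\ell,s},\widehat\mu_{\ell,\star})\le 2\,\mathcal W_2^2(\widehat\mu_{\ell,s},\widehat\mu_{\ell,\star})+2\,\mathcal W_2^2(\mu_{\ell,s},\widehat\mu_{\ell,s}).
\]
Summing with weights gives
\[
\mathcal U_\ell(f^{bb})\le 2\,\widehat{\mathcal U}_\ell(f^{bb})+2\sum_s w_s\mathcal W_2^2(\mu_{\ell,s},\widehat\mu_{\ell,s}).
\]
All measures are supported in $[-M,M]$ (the empirical ones because $f^{bb}$ takes values there a.s.). Lemma~\ref{lemma:KS_W2} then gives $\mathcal W_2^2(\mu_{\ell,s},\widehat\mu_{\ell,s})\le 4M^2\varepsilon_{\ell,s}$, where $\varepsilon_{\ell,s}$ is the sup-norm c.d.f.\ deviation of Lemma~\ref{lem:variance_unfairness}. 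The remainder is therefore $8M^2\sum_s w_s\varepsilon_{\ell,s}$.

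\emph{Step 3 (high-probability control).} This step is the main bookkeeping obstacle. I would reuse the good event from Lemma~\ref{lem:variance_unfairness}, which has probability at least $1-\eta_L$. On it, a multinomial Chernoff bound guarantees $N_{\ell,s}\gtrsim n p_{\ell,s}$ for all cells, and a union-bounded DKW bound (Theorem~\ref{lem:2sampleDKW}, one-sample version) over the $KL$ cells with confidence $1/(nKL)$ each gives
\[
\varepsilon_{\ell,s}\le\sqrt{\tfrac{2}{np_{\ell,s}}\log(2KLn)}\quad\text{simultaneously for all }(\ell,s).
\]
One must check that the fold used to define $\widehat\mu_{\ell,s}$ in $\widehat{\mathcal U}_\ell$ is the one for which this DKW control was established, so the event coincides and no extra failure probability is incurred. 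With $p_\ell=1/L$ and $p_{\ell,s}=w_s/L$,
\[
\sum_\ell p_\ell\,8M^2\sum_s w_s\varepsilon_{\ell,s}\le 8M^2\sqrt{\tfrac{2L\log(2KLn)}{n}}\sum_s\sqrt{w_s}\le 8M^2\sqrt{\tfrac{2LK}{n}\log(2KLn)},
\]
using Jensen's inequality $\sum_s\sqrt{w_s}\le\sqrt K$.

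\emph{Combining.} Adding Steps 1--3 yields the claimed inequality with probability $1-\eta_L$. One constant needs checking: the factor $2$ from Step 2 appears to double the estimation term. I would either absorb it by the identical accounting as in Lemma~\ref{lem:variance_unfairness}, whose bound $8M^2\sum_s w_s\varepsilon_{\ell,s}$ already includes this factor, or note that the constant $8M^2$ follows once the $4M^2$ from Lemma~\ref{lemma:KS_W2} is multiplied by $2$.
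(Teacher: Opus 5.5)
Your proposal is correct and is essentially the paper's own proof: the empirical barycenter $\widehat\mu_{\ell,\star}$ as a suboptimal candidate, the triangle inequality with $(a+b)^2\le 2a^2+2b^2$, Lemma~\ref{lemma:KS_W2}, the DKW/Chernoff event of Lemma~\ref{lem:variance_unfairness} with $p_{\ell,s}=w_s/L$ and $\sum_s\sqrt{w_s}\le\sqrt K$, and the $16M^2\Lcdf/L$ discretization term. One caveat on that last term, which your write-up shares with the paper's: Lipschitzness of $v\mapsto\mathcal U_v$ alone does not compare $\mathcal U_v$ with the mixture unfairness $\mathcal U_\ell$, so you should rerun the computation in the proof of Lemma~\ref{proof:lemma_bias_unfairness} with $\mu_{f^{bb}\mid\ell,s}$ in place of $\mu_{f^{bb}\mid v',s}$, using $\|F_{f^{bb}\mid v,s}-F_{f^{bb}\mid\ell,s}\|_\infty\le\Lcdf/L$ for $v\in\mathcal I_\ell$, which gives the same $16M^2\Lcdf/L$; and your closing worry about the constant is unfounded, since the factor $2$ from Step~2 times the $4M^2$ of Lemma~\ref{lemma:KS_W2} is exactly the $8M^2$ you already wrote, which is also how the paper obtains it.
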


  We can now prove Theorem~\ref{thm:relaxed_l_star} on the unfairness of the relaxed post-processed predictor $\widehat f_{\alpha,L}$ defined in Eq.\eqref{eq:relaxed_post_processed}.
    \RelaxedLstar* 

\begin{proof}[Proof of Theorem \ref{thm:relaxed_l_star}]\label{proof:relaxed_l_star}
  By Lemma~\ref{lem:interval-convexity}, the interval unfairness $\mathcal U_\ell$ is convex with respect to mixtures of predictors.
   Hence, the unfairness of the post-processed predictor $\widehat f_\ell$ on interval $\ell$ satisfies
   \begin{equation*}
   \mathcal U_\ell(\widehat f_{\alpha, L})\ \le\ \sqrt{\alpha} \ \mathcal U_\ell(f^{bb})+(1-\sqrt{\alpha})\mathcal U_\ell\ (\widehat f_L).
   \end{equation*}
   Summing over $\ell$ and using the fact that $\sum_\ell p_\ell=1$ gives
   \begin{align*}
   \mathcal U(\widehat f_{\alpha, L})\ &\le\ \sqrt{\alpha}\ \mathcal U(f^{bb})\ +\ (1-\sqrt{\alpha})\ \mathcal U(\widehat f_L).
   \end{align*}
   \paragraph{Bound on the unfairness of the post-processed predictor.}
   Since the unfairness of the post-processed predictor is inferior to the unfairness of the black-box predictor (otherwise the post-processing would not be used), the RHS bound is increasing in $\alpha$. 
   Therefore, in order to minimize the bound on the unfairness of $\widehat f_{\alpha, L}$, it suffices to minimize the bound on the unfairness of the post-processed predictor $\widehat f_L$ over $L$.
   So 
   \begin{equation*}
     \mathcal U (\widehat f_{\alpha,L^*})\ \le\ \sqrt{\alpha}\ \mathcal U(f^{bb})\ +\ (1-\sqrt{\alpha})\ \mathcal U(\widehat f_{L^*}).
   \end{equation*}
   By Theorem~\ref{thm:l_star}, for $L=L^*$, we have
   \begin{equation*}
   \mathcal U(\widehat f_{L^*})\ \le\ \delta^*.
   \end{equation*}
   where $\delta^*$ is the bound in Theorem~\ref{thm:l_star}.

   \paragraph{Bound on the unfairness of the black-box predictor.} Lemma~\ref{lem:unfairness_bb_bound} gives a high-probability upper bound on the unfairness of the black-box predictor.
   We have with probability at least $1-\eta_L $
   \begin{equation*}
    \mathcal U(f^{bb})
    \ \le\
    2\,\widehat{\mathcal U}_L(f^{bb}) + 8M^2\sqrt{\frac{2LK}{\,n}\,
    \log\!\left(2KLn\right)} + \frac{16M^2\Lcdf}{L}
  \end{equation*}
   taking $L=L^*$ with 
  \begin{equation*}
    L^* = \left\lfloor \left( \frac{8\Lcdf^2 n}{K \log(2Kn)} \right)^{1/3} \right\rfloor.
\end{equation*}
we have, for $x\geq 4$, $1/\sqrt{\lfloor x \rfloor}\leq 1/\sqrt{x - 1} \leq \sqrt{2/x}$. Then, if $L^*\geq 4$
      \begin{equation*}
        \frac{16 M^2 \Lcdf}{L}\leq 16M^2\left(\frac{\Lcdf  K \log(2Kn)}{n}\right)^{1/3}.
        \end{equation*}
      and for $n\geq \sqrt{\frac{\Lcdf}{M}}$, $\log(2KnL) \leq 2 \log(2Kn)$, thus
      \begin{align*}
        8M^2\,\sqrt{\frac{2L^*K\log\!\left( 2KL^*n\right)}{n}} 
        &= 16 \sqrt{2} M^2 \left(\frac{\Lcdf K \log(2Kn)}{n}\right)^{1/3}.
    \end{align*}
    Now, $2K \geq 4$ so for $n\geq 37$, $\log(2Kn)\leq \log(2K)\log(n)$, thus with probability at least $1-\eta_{L^*}$
   \begin{align*}
    \mathcal{U}(f^{bb}) &\leq
    2\,\widehat{\mathcal U}_{L^*}(f^{bb}) +\delta^*.
   \end{align*}

   For $\mathcal B>\delta$, to choose $\alpha$ so that $\mathcal U(\widehat f_{\alpha,L^*})\le \mathcal B$, it suffices to take
   \begin{equation*}
   \sqrt{\alpha}\ :=\ \frac{\mathcal B - \delta^*}{2\,\widehat{\mathcal U}_{L^*}(f^{bb})},
   \end{equation*}
  If  $\mathcal B<\delta^*$ the problem is infeasible because the budget is already spent due to the discrtization of the exogenous variable.
   Finally, the unfairness of the relaxed post-processed predictor satisfies
   \begin{equation*}
    \mathcal U(\widehat f_{\alpha,L^*})\ \le\ \mathcal B.
   \end{equation*}
   This concludes the proof.
     \end{proof}
 
    \subsection{Proof of Corollary~\ref{cor:risk-relaxed-fair}}\label{proof:corollary_relaxed_risk}
The bound on the risk of the relaxed-fair predictor $\widehat f_{\alpha^*, L^*}$ follows directly from the expression of the predictor and Theorem~\ref{cor:risk-exact-fair}.
\RiskRelaxedFair*
\begin{proof}
  Plugging in the expression of $\widehat f_{\alpha^*, L^*}$ in the risk definition, we have
  \begin{align*}
    \mathcal{R}(\widehat f_{\alpha^*, L^*}) &= \sum_\ell \sum_s \mathbb{E}[(\widehat f_{\alpha^*, L^*}(\mathbf{X}, S) - f^*(\mathbf{X}, S))^2\vert \U \in \mathcal{I}_\ell, S=s], \\
    &= \sum_\ell \sum_s \mathbb{E}[((1- \sqrt{\alpha^*})\widehat f_{L^*}(\mathbf{X}, S) + \sqrt{\alpha^*}f^{bb}(\mathbf{X}, S) - f^*(\mathbf{X}, S))^2\vert \U \in \mathcal{I}_\ell, S=s], \\
    &= \sum_\ell \sum_s  \mathbb{E}[(1 - \sqrt{\alpha^*})(\widehat f_{L^*}(\mathbf{X}, S)-f^*(\mathbf{X}, S)) -\sqrt{\alpha^*}(f^{bb}(\mathbf{X}, S)- f^*(\mathbf{X}, S))^2\vert \U \in \mathcal{I}_\ell, S=s], \\
    &\leq (1-\sqrt{\alpha^*})\mathcal{R}(\widehat f_{L^*}) + \sqrt{\alpha^*}\mathcal{R}(f^{bb}).
  \end{align*}
  With the expression we just showed for $\mathcal{R}(\widehat f_{L^*})$ in Theorem~\ref{cor:risk-exact-fair}, we obtain the result.
\end{proof}

\subsection{Proof of auxiliary lemmas}\label{sec:proof_auxiliary_lemmas}

\subsubsection{Proof of Lemma~\ref{lem:stability_monotone}} \label{proof:lem_stability_monotone}

  Fix an interval $\mathcal I_\ell$ and a group $s\in[K]$. 
  Write, for brevity,
  \begin{equation*}
  F_{\mu_{f|\ell,s}}:=F_{f\mid \ell,s},\qquad 
  F^{-1}_{\ell, \star}:=\sum_{s'=1}^K w_{s'}\,F^{-1}_{f\mid \ell, s'},
  \qquad 
  F_{\ell,\star}:= \left(F^{-1}_{\ell, \star}\right)^{-1}.
\end{equation*}
Following the plug-in maps using sample splitting 
(as introduced in Section~\ref{sec:form_estimator}), define the local post-processing map 
\begin{equation*}
\widehat T_{\ell,s}(\cdot):=\left(\sum_{s'} \widehat w_{s'}\,\widehat F^{-1}_{f^{bb}\mid \ell,s'}\right)\circ \widehat F_{f^{bb}\mid \ell,s}(\cdot),
\end{equation*}

  By definition,
  \begin{equation*}
  \widehat T_{\ell,s}(t)= \widehat F^{-1}_{\ell, \star}\!\big(\widehat F_{{f|\ell,s}}(t)\big),\qquad 
  \widehat f_\ell(\mathbf x,s)= \widehat T_{\ell,s}\big(f^{bb}(\mathbf x,s)\big).
\end{equation*}

\paragraph{Bounded support is preserved.}
Condition on the training sample used to build $\widehat F_{f^{bb}\mid \ell,\cdot}$ and $\{\widehat F^{-1}_{f^{bb}\mid \ell,\cdot}\}$. 
By Property~\ref{ass:distrib_densities}, for all $(v,s)$ the score $f^{bb}(X,s)\mid(V=v,S=s)$ is supported in $[-M,M]$. 
Hence each empirical law $\widehat\mu^{j}_{f^{bb}\mid \ell,s'}$ (and thus its quantile $\widehat F^{-1}_{f^{bb}\mid \ell,s'}$) takes values in $[-M,M]$. 
Therefore their convex combination $\widehat F^{-1}_{\ell,\star}:=\sum_{s'} \widehat w_{s'}\,\widehat F^{-1}_{f^{bb}\mid \ell,s'}$ also maps $[0,1]$ into $[-M,M]$. 
Since $\widehat F_{f^{bb}\mid \ell,s}$ takes values in $[0,1]$, the plug–in transport
\begin{equation*}
\widehat T_{\ell,s}:=\widehat F^{-1}_{\ell,\star}\circ \widehat F_{f^{bb}\mid \ell,s}
\end{equation*}
maps $[-M,M]$ into $[-M,M]$. Consequently,
\begin{equation*}
  \mathrm{Supp}\big(\mu_{\widehat f_\ell\mid v,s}\big)\ \subseteq\ [-M,M]
\quad\text{for }\nu\text{-a.e.\ }v\in\mathcal I_\ell,
\end{equation*}

and in particular $\sup_{v,s}\int x^2\,d\mu_{\widehat f_\ell\mid v,s}(x)\le M^2$.

  \paragraph{Lipschitz continuity in $v$ of the c.d.f. is preserved.}

Let $\widehat G_{v,s}$ be the conditional c.d.f.\ of $\widehat f_\ell(X,s)=\widehat T_{\ell,s}(f^{bb}(X,s))$ given $(V=v,S=s)$.
Then, for all $v,v'\in \mathcal I_\ell$, we show that
\begin{equation*}
\sup_{t\in\mathbb R}\big|\widehat G_{v,s}(t)-\widehat G_{v',s}(t)\big| \;\le\; \Lcdf\,|v-v'|.
\end{equation*}

Indeed, conditioned on the training sample, $\widehat T_{\ell,s}$ is deterministic and does not depend on $v$.
Since $\widehat T_{\ell,s}$ is nondecreasing, for all $t$ let $\widehat T_{\ell,s}^{-1}(t):=\inf\{x:\widehat T_{\ell,s}(x)\ge t\}$ as its generalized inverse. 
Then, we can define
\begin{align*}
\widehat G_{v,s}(t)&=\mathbb P\!\big(\widehat T_{\ell,s}(f^{bb}(X,S))\le t\mid V=v,S=s\big), \\
&= F_{f^{bb}\mid v,s}\big(\widehat T_{\ell,s}^{-1}(t)\big).
\end{align*}

Hence for $v,v'\in\mathcal I_\ell$,
\begin{align*}
\sup_t\big|\widehat G_{v,s}(t)-\widehat G_{v',s}(t)\big|
&= \sup_t\big|F_{f^{bb}\mid v,s}(\widehat T_{\ell,s}^{-1}(t)) - F_{f^{bb}\mid v',s}(\widehat T_{\ell,s}^{-1}(t))\big|, \\
&\le \sup_x\big|F_{f^{bb}\mid v,s}(x)-F_{f^{bb}\mid v',s}(x)\big|,\\
&\le \Lcdf|v-v'|.
\end{align*}
  This concludes the proof. 

\subsubsection{Proof of Proposition~\ref{prop:barycenter_property_M}} \label{proof:prop_5}

  We prove the result by reducing the problem to the barycenter problem in the metric space $(\mathcal{P}_2(\mathbb{R}),\mathcal{W}_2)$ which is known to satisfy the barycenter property (c.f. Remark~\ref{remark:barycenter_W2}).
  We first show that the barycenter $C_{\mathbf{a}_{\mathbf{w}}}$ exists and belongs to $\mathcal{M}$, then we prove that it is optimal.

Let $v\in\mathcal{V}$ and $(a_v^{(s)})_{s \in [K]}\in \mathcal{P}_2(\mathbb{R})$. Since the metric space $(\mathcal{P}_2(\mathbb{R}),\mathcal{W}_2)$ satisfies the barycenter property (c.f. Remark~\ref{remark:barycenter_W2}), it follows that the barycenter of $(a_v^{(s)})_{s \in [K]}$ exists, and we denote it by $b_v$. We have 
\begin{equation} \label{an:bary_quantile}
b_v \in \arg\min_{\rho\in\mathcal{P}_2(\mathbb{R})}\ \sum_{s=1}^K w_s\,\mathcal{W}_2^2\!\big(a^{(s)}_v,\rho\big),
\qquad
F^{-1}_{b_v}(t)=\sum_{s=1}^K w_s F^{-1}_{a^{(s)}_v}(t),
\end{equation}
where the second equation follows from the Lemmas~\ref{prop:cumulative_bary} and \ref{prop:equ_cdf_quantile} reminded at the begining of the Appendix.\\
To prove Proposition~\ref{prop:barycenter_property_M}, we define the candidate barycenter $C_{\mathbf{a}_{\mathbf{w}}}$, and we show that it is the barycenter of $(a^{(s)})_{s \in [K]}$ in the metric space $(\mathcal{M},\bar d)$.
Define $C_{\mathbf{a}_{\mathbf{w}}}:=(b_v)_{v\in\mathcal{V}}$. 

We start by showing that $C_{\mathbf{a}_{\mathbf{w}}}\in \mathcal{M}$.
From the formula of the quantile of $b_v$ in Eq.~\eqref{an:bary_quantile} we have
\begin{align}
\int x^2 db_v(x) &=\int_0^1 \Big|\sum_{s=1}^K w_s F^{-1}_{a^{(s)}_v}(t)\Big|^2 dt,
\end{align}
because if $Z\sim \mathcal{U}([0,1])$ then $F^{-1}_{b_v}(Z)\sim b_v$.
Then, using Jensen's inequality along with the earlier argument regarding the c.d.f. of $a_v^{(s)}$, we obtain
\begin{align*}
  \int x^2 db_v(x)  &\le\ \sum_{s=1}^K w_s \int_0^1 |F^{-1}_{a^{(s)}_v}(t)|^2 dt,\\
&= \sum_{s=1}^K w_s \int x^2 da^{(s)}_v(x).
\end{align*}
After integrating with respect to $v$, we obtain
\begin{align*}
\int \left(\int x^2 db_v(x)\right) d\nu(v)\le \sum_s w_s \int \int \left( x^2 da^{(s)}_v(x) \right) d\nu(v)<\infty,
\end{align*}
so $C_{\mathbf{a}_{\mathbf{w}}}\in\mathcal{M}$. Measurability in $v$ follows from the measurability of $v\mapsto F^{-1}_{a^{(s)}_v}(t)$
for each $t$ and the fact that $b_v$ is given by their convex combination.

Next we show that $C_{\mathbf{a}_{\mathbf{w}}}$ is optimal.
For all competitor $C=(c_v)_{v}\in\mathcal{M}$,
\begin{align*}
\sum_{s=1}^K w_s\,\bar d^{\,2}\!\big(a^{(s)},C\big)
&=\int_{\mathcal{V}} \sum_{s=1}^K w_s\,\mathcal{W}_2^2\!\big(a^{(s)}_v,c_v\big)\,d\nu(v), \\
&\ge\ \int_{\mathcal{V}} \min_{\rho\in\mathcal{P}_2(\mathbb{R})}\sum_{s=1}^K w_s\,\mathcal{W}_2^2\!\big(a^{(s)}_v,\rho\big)\,d\nu(v),
\end{align*}
and the inner minimum is attained at $\rho=b_v$ by definition of the barycenter in the metric space $(\mathcal{P}_2(\mathbb{R}), \mathcal{W}_2)$. Hence
\begin{align*}
\sum_{s=1}^K w_s\,\bar d^{\,2}\!\big(a^{(s)},C\big)\ &\ge\ \int_{\mathcal{V}} \sum_{s=1}^K w_s\,\mathcal{W}_2^2\!\big(a^{(s)}_v,b_v\big)\,d\nu(v),\\
&=\sum_{s=1}^K w_s\,\bar d^{\,2}\!\big(a^{(s)},C_{\mathbf{a}_{\mathbf{w}}}\big),
\end{align*}
which proves that $C_{\mathbf{a}_{\mathbf{w}}}$ is a barycenter.

  \subsubsection{Proof of Lemma \ref{proof:lemma_bias_unfairness}} \label{proof2:lemma_bias_unfairness}
    
    Let $\mathcal{I} \subset [0, 1]$ and $f$ be a predictor with bounded distribution support in $[-M, M]$ satisfying Property~\ref{ass:lipschitz_cdf} on $\mathcal{I}$.
    We aim to show that the map $v \mapsto \mathcal{U}_v(f)$ is Lipschitz continuous on $\mathcal{I}$.
    The unfairness is defined as
    \begin{equation*}
        \mathcal{U}_v(f) = \sum_{s=1}^K w_s \mathcal{W}_2^2(\mu_{f|v,s}, \mu^*_{f|v}).
    \end{equation*}
    We bound the variation of the term $\mathcal{W}_2^2(\mu_{f|v,s}, \mu^*_{f|v})$ with respect to $v$. Let $v, v' \in \mathcal{I}_\ell$.
    Using the quantile representation of the Wasserstein distance
    \begin{align*}
        \Delta_v &= \left| \mathcal{W}_2^2(\mu_{f|v,s}, \mu^*_{f|v}) - \mathcal{W}_2^2(\mu_{f|v',s}, \mu^*_{f|v'}) \right| \\
        &= \left| \int_0^1 \left( F^{-1}_{v,s}(t) - F^{-1}_{*,v}(t) \right)^2 dt - \int_0^1 \left( F^{-1}_{v',s}(t) - F^{-1}_{*,v'}(t) \right)^2 dt \right|,
    \end{align*}
    where $F_{*,v}^{-1} = \sum w_s F_{v,s}^{-1}$.
    Using the identity $a^2 - b^2 = (a-b)(a+b)$, let $A(t) = F^{-1}_{v,s}(t) - F^{-1}_{*,v}(t)$ and $B(t) = F^{-1}_{v',s}(t) - F^{-1}_{*,v'}(t)$.
    Note that since the support is bounded in $[-M, M]$, $|A(t)| \le 2M$ and $|B(t)| \le 2M$, so $|A(t) + B(t)| \le 4M$.
    Thus
    \begin{equation*}
        \Delta_v \le \int_0^1 |A(t) - B(t)| \cdot |A(t) + B(t)| dt \le 4M \int_0^1 |A(t) - B(t)| dt.
    \end{equation*}
    Now substitute the definitions of $A$ and $B$, by the triangle inequality, this integral splits into the sum of two terms
    \begin{align*}
        |A(t) - B(t)| &= \left| (F^{-1}_{v,s} - F^{-1}_{*,v}) - (F^{-1}_{v',s} - F^{-1}_{*,v'}) \right| \\
        &\le \left| F^{-1}_{v,s} - F^{-1}_{v',s} \right| + \left| F^{-1}_{*,v} - F^{-1}_{*,v'} \right|.
    \end{align*}
    We now bound the term $\int_0^1 |A(t) - B(t)| dt$. We have 
    \begin{equation*}
        \int_0^1 |A(t) - B(t)| dt \le \int_{0}^{1}\left(\left| F^{-1}_{v,s}(t) - F^{-1}_{v',s}(t) \right| + \left| F^{-1}_{*,v}(t) - F^{-1}_{*,v'}(t) \right|\right)dt.
    \end{equation*}
  By Proposition 2.17 in \citep{Santambrogio2015}), we have for probabilty measures $\mu, \nu$
    \begin{equation*}
        \int_0^1 |F_\mu^{-1}(t) - F_\nu^{-1}(t)| dt = \int_{\mathbb{R}} |F_\mu(x) - F_\nu(x)| dx.
    \end{equation*}
    Since the distributions are supported on $[-M, M]$, the integral over $\mathbb{R}$ restricts to $[-M, M]$.
    We now apply Property~\ref{ass:lipschitz_cdf}. For the first term:
    \begin{align*}
        \int_{-M}^M |F_{v,s}(x) - F_{v',s}(x)| dx 
        &\le \int_{-M}^M \Lcdf |v-v'| dx \\
        &= 2M \Lcdf |v-v'|.
    \end{align*}
    For the second term (the barycenter), we use the linearity of the quantile function for the Wasserstein barycenter.
    \begin{align}
      \int_0^1\left| F^{-1}_{*,v}(t) - F^{-1}_{*,v'}(t) \right|dt 
        &\le \sum_{s=1}^K w_s \int_0^1 |F_{v,s}^{-1}(t) - F_{v',s}^{-1}(t)| dt, \label{eq:Jensen}\\
        &\le \sum_{s=1}^K w_s (2M \Lcdf |v-v'|), \label{eq:santambrogio}\\
        &= 2M \Lcdf |v-v'|, \notag
    \end{align}
    where equality \eqref{eq:Jensen} follows from the closed-form of the quantile function of the barycenter and inequality \eqref{eq:santambrogio} follows from Jensen's inequality applied to the convex square function. 

    Substituting these back into the bound for $\Delta_v$
    \begin{equation*}
        \Delta_v \le 4M \left( 2M \Lcdf |v-v'| + 2M \Lcdf |v-v'| \right) = 16 M^2 \Lcdf |v-v'|.
    \end{equation*}
    Thus, $\mathcal{U}_v$ is Lipschitz with constant $L_{\mathcal{U}} = 16 M^2 \Lcdf$.

\subsubsection{Proof of Lemma \ref{lemma:KS_W2}} \label{proof:lem_KS_W2}

We want to show the following result for two univariate probability measures $\mu$ and $\nu$ with suport bounded by $M >0$:
    \begin{equation*}
      \mathcal{W}_2^2(\mu, \nu) \leq 4M^2 \sup_{t \in \mathbb{R}} \vert F_\mu(t) - F_\nu(t) \vert.
    \end{equation*}
    Using the properties of the Wasserstein-2 distance stated in Lemma~\ref{prop:wasserstein_quantiles}, we have
    \begin{align*}
      \mathcal{W}_2^2(\mu, \nu) &= \int_0^1 \left\vert F_{\mu}^{-1}(t) - F_{\nu}^{-1} (t) \right\vert^2 dt, \\
      &\leq 2M \int_0^1 \left\vert F_{\mu}^{-1}(t) - F_{\nu}^{-1} (t) \right\vert dt,\\
      &= 2M \int_0^1 \left\vert F_{\mu}(x) - F_{\nu}(x) \right\vert dx,\\
      &\leq 2M^2 \sup_{t \in \mathbb{R}} \vert F_\mu(t) - F_\nu(t) \vert.
    \end{align*}
    Where the second line follows from the compact support assumption and the third line is derived from Proposition 2.17 in \citep{Santambrogio2015}. 

\subsubsection{Proof of Lemma \ref{lem:variance_unfairness}} \label{proof:lem_variance_unfairness}

We prove the following more general result : 
    \begin{lemma}\label{lem:variance_unfairness_general}
        Fix $\ell\in[L]$ and let $\widehat f_\ell$ be the local post-processed predictor and sample-split defined in Section.~\ref{sec:form_estimator}.
        $\widehat f_\ell$ satisfies Property~\ref{ass:distrib_densities}.
        Let $\kappa, \gamma \in (0, 1)$ and define 
        $\eta_{\kappa, \gamma}(L):=\kappa+KL\exp(-\tfrac{\gamma^2}{2}n \frac{\min_s w_{s}}{L})$.
        With probability at least $1-\eta_{\kappa, \gamma}(L)$ uniformly over all $\ell$ and $s$
        \begin{equation}\label{eq:variance_term_bound_main}
        \mathcal{U}_{\ell}(\widehat{f}_\ell) \leq 8M^{2}\!
        \left[
        \sum_{s=1}^K w_s \,\varepsilon_{\ell,s}
        \right],
        \end{equation}
        where
        \begin{equation*}
        \varepsilon_{\ell,s}\ :=\
        \sup_{t} \big| \widehat F_{{f^{bb}|\ell,s}}(t) - F_{{f^{bb}|\ell,s}}(t)\big|
        \ \le\
        \sqrt{\frac{1}{(1-\gamma)\,n\, p_{\ell,s}}\,
\log\!\Big(\frac{2KL}{\kappa}\Big)},
      \end{equation*}
      where $ p_{\ell, s} = \mathbb{P}(\U\in \mathcal{I}_\ell, S=s)$.
\end{lemma}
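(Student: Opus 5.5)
The plan is to split the argument into a deterministic reduction and a probabilistic control. For the reduction, fix $\ell$ and condition on the sample. Write $\widehat T_{\ell,s}=\widehat F^{-1}_{\ell,\star}\circ\widehat F_{f^{bb}\mid\ell,s}$. Since $\widehat F_{f^{bb}\mid\ell,s}$ is built from one fold and the quantiles from the other, $\widehat T_{\ell,s}$ is a fixed nondecreasing map when we look at a fresh point. Hence $\mu_{\widehat f_\ell\mid\ell,s}$ is the pushforward of $\mu_{f^{bb}\mid\ell,s}$ by $\widehat T_{\ell,s}$.

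The key observation is that, because $\mu_{f^{bb}\mid\ell,s}$ is non-atomic (an average over $v$ of non-atomic laws), $U_s:=F_{f^{bb}\mid\ell,s}(f^{bb}(\mathbf X,s))$ is uniform on $[0,1]$. Therefore
\begin{equation*}
\widehat f_\ell(\mathbf X,s)=\widehat F^{-1}_{\ell,\star}\big(\widehat F_{f^{bb}\mid\ell,s}\circ F^{-1}_{f^{bb}\mid\ell,s}(U_s)\big).
\end{equation*}
The inner map $G_s:=\widehat F_{f^{bb}\mid\ell,s}\circ F^{-1}_{f^{bb}\mid\ell,s}$ is within $\varepsilon_{\ell,s}$ of the identity in sup norm. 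It follows that the c.d.f. of $G_s(U_s)$ lies within $\varepsilon_{\ell,s}$ of the uniform c.d.f.

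Now bound $\mathcal U_\ell(\widehat f_\ell)$ by the cost against the candidate barycenter $\mu:=(\widehat F^{-1}_{\ell,\star})_\#\mathrm{Unif}[0,1]$:
\begin{equation*}
\mathcal U_\ell(\widehat f_\ell)\le\sum_s w_s\,\mathcal W_2^2\big((\widehat F^{-1}_{\ell,\star})_\#\mathrm{Law}(G_s(U_s)),\,(\widehat F^{-1}_{\ell,\star})_\#\mathrm{Unif}\big).
\end{equation*}
Both measures are supported in $[-M,M]$, since $\widehat F^{-1}_{\ell,\star}$ takes values there, as in Lemma~\ref{lem:stability_monotone}. By Lemma~\ref{lemma:KS_W2}, each term is at most $4M^2$ times the Kolmogorov distance between the two pushforwards. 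Pushing forward by a common nondecreasing map cannot increase the Kolmogorov distance, since $\{\widehat F^{-1}_{\ell,\star}\le t\}$ is an interval $[0,u_t]$. So each term is at most $4M^2\varepsilon_{\ell,s}$. A factor $2$ of slack, for example from handling the jumps of the empirical c.d.f. and the left/right limits in the generalized inverses, gives the stated $8M^2\sum_s w_s\varepsilon_{\ell,s}$. I expect this deterministic step to be the main obstacle: making the "$G_s(U_s)$ is nearly uniform" claim rigorous with generalized inverses and the atoms of empirical laws. I would handle it via Lemma~\ref{prop:equ_cdf_quantile} and the continuity of $F_{f^{bb}\mid\ell,s}$.

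For the probabilistic part, two bounds are needed.
\begin{enumerate}
\item \textbf{Cell sizes.} By a multiplicative Chernoff bound, $N_{\ell,s}\ge(1-\gamma)np_{\ell,s}$ fails with probability at most $\exp(-\gamma^2np_{\ell,s}/2)$. Using $p_{\ell,s}\ge\min_s w_s/L$ and a union bound over the $KL$ cells gives the term $KL\exp(-\tfrac{\gamma^2}{2}n\min_sw_s/L)$.
\item \textbf{Uniform deviation.} Conditionally on the cell sizes, the fold used for $\widehat F$ consists of $N_{\ell,s}/2$ i.i.d. draws from $\mu_{f^{bb}\mid\ell,s}$. The one-sample DKW inequality (Theorem~\ref{lem:2sampleDKW} with $G$ degenerate) then gives $\varepsilon_{\ell,s}\le\sqrt{\log(2/\kappa')/N_{\ell,s}}$ with probability at least $1-\kappa'$. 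Taking $\kappa'=\kappa/(KL)$, a union bound over cells, and plugging in $N_{\ell,s}\ge(1-\gamma)np_{\ell,s}$ yields the stated $\varepsilon_{\ell,s}$ bound.
\end{enumerate}
Intersecting the two events gives total failure probability at most $\eta_{\kappa,\gamma}(L)$, uniformly over $\ell$ and $s$. Lemma~\ref{lem:variance_unfairness} then follows by taking $\gamma=1/2$ and $\kappa=1/n$.
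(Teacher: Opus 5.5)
Your proof is correct. The probabilistic half is the paper's argument: multiplicative Chernoff on the counts $N_{\ell,s}$, one-sample DKW on the $N_{\ell,s}/2$ points of the c.d.f.\ fold, a union bound over the $KL$ cells, and intersection of the two events. Your constant matches the statement, whereas the paper's final display carries a stray factor $2$.

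The deterministic half is organized differently. The paper keeps the exact barycenter of the post-processed laws and uses Jensen to pass to pairwise costs $\sum_{s,s'}w_sw_{s'}\mathcal W_2^2(\mu_{\widehat f_\ell|\ell,s},\mu_{\widehat f_\ell|\ell,s'})$. It applies Lemma~\ref{lemma:KS_W2} to each pair and bounds each pairwise Kolmogorov gap by a triangle inequality through the uniform law. This gives $\varepsilon_{\ell,s}+\varepsilon_{\ell,s'}$ and hence the constant $8M^2$. You instead test against the empirical barycenter $(\widehat F^{-1}_{\ell,\star})_\#\mathrm{Unif}[0,1]$. Each post-processed law is the pushforward of a nearly uniform law by that same monotone map, so one Kolmogorov comparison per group suffices and the Jensen step disappears. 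You get $4M^2\sum_s w_s\varepsilon_{\ell,s}$, a factor $2$ sharper. The underlying ingredients are the same: sample splitting freezes the map, the probability integral transform, monotone contraction of the Kolmogorov distance, and Lemma~\ref{lemma:KS_W2}.

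The step you flag as the main obstacle is routine, and no slack factor is needed.
\begin{itemize}
\item Continuity of $F_{f^{bb}|\ell,s}$ gives $F_{f^{bb}|\ell,s}\circ F^{-1}_{f^{bb}|\ell,s}=\mathrm{id}$ on $(0,1)$, hence $\sup_u|G_s(u)-u|\le\varepsilon_{\ell,s}$.
\item The inclusions $\{U_s\le t-\varepsilon_{\ell,s}\}\subseteq\{G_s(U_s)\le t\}\subseteq\{U_s\le t+\varepsilon_{\ell,s}\}$ then give the Kolmogorov bound, whatever the atoms of the empirical c.d.f.
\item When $\{\widehat F^{-1}_{\ell,\star}\le t\}$ is a half-open interval $[0,u_t)$, the difference of left limits is bounded by the same supremum.
\end{itemize}

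The one place where the generalized inverse genuinely needs care is $u=0$. Since $G_s(U_s)=0$ with positive probability, you need the convention that $\widehat F^{-1}_{\ell,s'}(0)$ is the smallest sample point rather than $-\infty$. The same convention is implicit in Lemma~\ref{lem:stability_monotone}.
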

The proof of Lemma \ref{lem:variance_unfairness} then follows by choosing $\gamma = 1/2$ and $\kappa = \frac{1}{n}$.
          
From the definition of unfairness in Definition~\eqref{def:unfairness}, we know that
          \begin{equation*}
            \mathcal{U}_{\ell}(\widehat{f}_\ell) := \min_{\mu \in \mathcal{P}_2(\mathbb{R})}\sum_{s=1}^K  w_s \mathcal{W}_2^2(\mu_{\widehat{f}_\ell \vert \ell, s}, \mu).
          \end{equation*}
          We denote by $\mu_{\widehat{f}_\ell \vert \ell}^*$ the minimizer of the above expresion. 
          By the expresssion of the cumulative distribution function of the Wasserstein barycenter in Lemma~\ref{prop:cumulative_bary}, we have
          \begin{equation*}
            \widehat{F}_{\mu_{\widehat{f}_\ell \vert \ell}^*}^{-1}(\cdot) = \left( \sum_{s=1}^{K}  w_s \widehat{F}_{{\widehat{f}_\ell \vert \ell, s}}^{-1} \right)(\cdot).
          \end{equation*}
          Using the properties of the Wasserstein distance stated in Lemma~\ref{prop:wasserstein_quantiles}, we obtain
          \begin{align*}
            \mathcal{U}_{\ell}(\widehat{f}_\ell) &= \sum_{s=1}^K  w_s \mathcal{W}_2^2(\mu_{\widehat{f}_\ell \vert \ell, s}, \mu_{\widehat{f}_\ell \vert \ell}^*), \\
            &\leq \sum_{s=1}^K  w_s \int_0^1 \left\vert F_{\mu_{\widehat{f}_\ell \vert \ell, s}}^{-1}(t) - F_{\mu_{\widehat{f}_\ell \vert \ell}^*}^{-1} (t) \right\vert^2 dt, \\
            &= \sum_{s=1}^K w_s \int_0^1 \left\vert F_{\mu_{\widehat{f}_\ell \vert \ell, s}}^{-1}(t) - \sum_{s'=1}^K  w_{s'} F_{\mu_{\widehat{f}_\ell \vert \ell, s'}}^{-1}(t) \right\vert^2 dt, \\
          \end{align*}
          where the second line comes from the Lemma~\ref{prop:wasserstein_quantiles} and the last from the expression of the cumulative distribution function of the barycenter in Prop.~\ref{prop:cumulative_bary}.
          Then, by applying Jensen's inequality and the characterization of the Wassertein distance in terms of quantiles again (Lemma~\ref{prop:wasserstein_quantiles}) and the previous Lemma~\ref{lemma:KS_W2}, we get
          \begin{align}
          \mathcal{U}_{\ell}(\widehat{f}_\ell)& \leq \sum_{s, s' = 1}^K w_s  w_{s'} \mathcal{W}_2^2(\mu_{\widehat{f}_\ell \vert \ell, s}, \mu_{\widehat{f}_\ell \vert \ell, s'}),\notag \\
          &\leq 4M^2 \sum_{s, s' = 1}^K w_s  w_{s'} \sup_{t \in \mathbb{R}} \vert F_{\mu_{\widehat{f}_\ell \vert \ell, s}}(t) - F_{\mu_{\widehat{f}_\ell \vert \ell, s'}}(t) \vert. \label{eq:U_ell_to_KS}
        \end{align}
        
        It remains to control the Kolmogorov gap between the \emph{post-processed} laws.
        Write, for simpler notations within intervall $\ell$,
        \begin{equation*}
        G_{\ell,s}(t)\ := F_{\mu_{\widehat f_\ell|\ell,s}}(t)
        \ =\ \mathbb P\!\left(\sum_{s'} w_{s'} \widehat F^{-1}_{{f^{bb}|\ell,s'}}\!\big(\widehat F_{{f^{bb}|\ell,s}}(Y_{\ell, s})\big)\le t\right),
      \end{equation*}

        where $Law(Y_{\ell, s}):=\mu_{f^{bb}(X,s)\vert \ell, s}$ is the base score in group $s$ and is the empirical barycentric quantile in interval $\ell$.

        The goal is to bound $\sup_t |G_{\ell,s}(t)-G_{\ell,s'}(t)|$ for $s\neq s'$.
        Since $\sum_{s'} \widehat w_{s'} \widehat F^{-1}_{f^{bb}|\ell,s'}$ is the generalized inverse of the empirical barycenter c.d.f.\ $\widehat F_{\ell,\star}$, we have
        \begin{equation*}
          \left( \sum_{s' \in [K]} w_{s'} \widehat{F}_{{f^{bb} \vert \ell, s'}}^{-1} \right)\circ \widehat{F}_{{f^{bb} \vert \ell, s}}(f^{bb}(x, s)) \leq t \iff \widehat{F}_{{f^{bb} \vert \ell, s}}(f^{bb}(x, s)) \leq \left( \sum_{s' \in [K]}  w_{s'} \widehat{F}_{{f^{bb} \vert \ell, s'}}^{-1} \right)^{-1}(t).
        \end{equation*}
        Which gives 
        \begin{equation*}
          G_{\ell,s}(t)\ =\ \mathbb P\!\left(\widehat F_{{f^{bb}|\ell,s}}(Y_{\ell, s})\le \left(\sum_{s' \in [K]}  w_{s'} \widehat{F}_{{f^{bb} \vert \ell, s'}}^{-1}\right)^{-1}(t)\right).
      \end{equation*}
        Define $H_{\ell,s}(u):=\mathbb P\big(\widehat F_{{f^{bb}|\ell,s}}(Y_s)\le u\big)$. 
        Then
        $\sup_t |G_{\ell,s}(t)-G_{\ell,s'}(t)|\le \sup_u |H_{\ell,s}(u)-H_{\ell,s'}(u)|$.
        A triangle inequality with the identity map $u\mapsto u$ yields
        \begin{equation*}
        \sup_u |H_{\ell,s}(u)-H_{\ell,s'}(u)|
        \ \le\
        \sup_u |H_{\ell,s}(u)-u|+\sup_u |u-H_{\ell,s'}(u)|.
      \end{equation*}

        For all fixed $u$, we have that $\widehat F_{f^{bb}|\ell,s}(u)$ is non-decreasing. 
        Therefore
        \begin{align*}
        H_{\ell, s}(u)&= \mathbb{P}(Y_{\ell, s} \leq \widehat F^{-1}_{{f^{bb} \vert \ell, s}}(u)),\\
        &= F_{f^{bb}\vert \ell, s}(\widehat F^{-1}_{f^{bb} \vert \ell, s}(u)).
      \end{align*}
      Thus, 
        \begin{equation*}
        |H_{\ell,s}(u)-u|
        = \left|F_{{f^{bb}|\ell,s}}\!\left(\widehat F^{-1}_{{f^{bb}|\ell,s}}(u)\right)-u\right|.
        \end{equation*}

        Taking the supremum over $u$ gives
        \begin{equation*}
        \sup_u |H_{\ell,s}(u)-u|
        \ \le\
        \underbrace{\sup_t |\widehat F_{{f^{bb}|\ell,s}}(t)-F_{{f^{bb}|\ell,s}}(t)|}_{\varepsilon_{\ell,s}}.
      \end{equation*}
\begin{remark}
  Crucially, because $\widehat F_{f^{bb}|\ell,s}$ is estimated on a fold independent of the samples $Y_{\ell,s}$, the term $H_{\ell,s}(u)$ represents the c.d.f. of the probability integral transform of a fresh sample. 
  This independence is what allows us to bound $|H_{\ell,s}(u) - u|$ directly by the DKW error $\varepsilon_{\ell,s}$ of the marginal distribution. 
  Without this two-fold separation, $Y_{\ell,s}$ and $\widehat F$ would be coupled, and the distribution of their composition would no longer converge uniformly to the identity map at the required rate.
\end{remark}
        By the same reasoning on $s'$ we get
        \begin{equation*}
        \sup_t |G_{\ell,s}(t)-G_{\ell,s'}(t)|
        \ \le\
        \varepsilon_{\ell,s}+\varepsilon_{\ell,s'}.
      \end{equation*}

        Plugging this into~\eqref{eq:U_ell_to_KS} and using $\sum_{s'}\widehat w_{s'}=1$ twice yields
        \begin{equation*}
        \mathcal{U}_{\ell}(\widehat{f}_\ell)
        \ \le\
        8M^2\!\left[
        \sum_{s=1}^K  w_s\,\varepsilon_{\ell,s}
        \right].
      \end{equation*}

        Finally, by Dvoretzky–Kiefer–Wolfowitz with a union bound over all $(\ell,s)$ conditionally on $N_{\ell, s}$,
        \begin{equation}\label{eq:DKW_deviation}
        \varepsilon_{\ell,s}\ \le \sqrt{\frac{1}{N_{\ell,s}}\,
        \log\!\Big(\frac{2KL}{\kappa}\Big)}.
      \end{equation}
        holds with probability at least $1-\kappa$.\\

        Given this bound, it remains to control the random denominators $N_{\ell,s}$. 
        To this end, we start by deriving a multinomial Chernoff bound to ensure that all $N_{\ell,s}$ are uniformly large with high probability.

Let $p_{\ell,s}:=\mathbb P(\U\in\mathcal I_\ell,S=s)$ and $p_{\min}:=\min_{\ell,s}p_{\ell,s}= \frac{\min_s w_s}{L}>0$.
For all $\ell \in [L]$, $s \in [K]$ we have $N_{\ell, s} \sim Bin(n, p_{\ell, s})$, therefore $(N_{\ell,s})_{(\ell,s)\in[L]\times[K]}\sim\mathrm{Multinomial}\!\big(n,\,(p_{\ell,s})\big)$.

Fix $\gamma\in(0,1)$ and define the event
\begin{equation*}
\mathcal E_{mult}(\gamma)\ :=\ \big\{\,N_{\ell,s}\ge (1-\gamma)\,n\,p_{\ell,s}\ \ \text{for all }(\ell,s)\,\big\}.
\end{equation*}
By a multiplicative Chernoff bound and a union bound over the $KL$ cells
\begin{equation*}
\mathbb P\big(\mathcal E_{mult}(\gamma)\big)\ \ge\ 1\ -\ KL\exp\!\Big(-\tfrac{\gamma^2}{2}\,n\,p_{\min}\Big).
\end{equation*}
On $\mathcal E_{mult}(\gamma)$, the DKW deviation from Eq.~\eqref{eq:DKW_deviation} yields, simultaneously for all $(\ell,s)$ and with probability $1- \kappa$, 
\begin{equation}\label{eq:DKW_count}
\varepsilon_{\ell,s}\ \le\ \sqrt{\frac{2}{(1-\gamma)\,n\,p_{\ell,s}}\,
\log\!\Big(\frac{2KL}{\kappa}\Big)}.
\end{equation}
Plugging this into~\eqref{eq:U_ell_to_KS} gives, for each interval $\ell$
\begin{equation*}
\mathcal{U}_{\ell}(\widehat{f}_\ell)
\ \le\
8M^2\sum_{s=1}^K
w_s\sqrt{\frac{2}{(1-\gamma)\,n\,p_{\ell,s}}\,
\log\!\Big(\frac{2KL}{\kappa}\Big)}.
\end{equation*}
Finally, we intersect $\mathcal E_{mult}(\gamma)$ with the DKW event (probability $\ge 1-\kappa$) and use a union bound.
Let \begin{equation*}
\;\mathcal E_{\mathrm{DKW}}:=\big\{\forall(\ell,s):~\varepsilon_{\ell,s}\le
\sqrt{\tfrac{1}{N_{\ell,s}}\log(\tfrac{2KL}{\kappa})}\big\}\;
\text{ so that }\;\mathbb P(\mathcal E_{\mathrm{DKW}})\ge 1-\kappa,
\end{equation*}
and 
\begin{equation*}
  \;\mathcal E_{mult}(\gamma):=\{\,\forall(\ell,s):~N_{\ell,s}\ge (1-\gamma)n p_{\ell,s}\,\},
\end{equation*}
with 
\begin{equation*}
\mathbb P(\mathcal E_{mult}(\gamma))\ge 1-\kappa_{mult}, \quad 
\kappa_{mult}:=KL\exp\!\big(-\tfrac{\gamma^2}{2}np_{\min}\big).
\end{equation*}

Then, by a union bound on complements
\begin{equation*}
  \mathbb P\big(\mathcal E_{\mathrm{DKW}}\cap \mathcal E_{mult}(\gamma)\big)
=1-\mathbb P\big(\mathcal E_{\mathrm{DKW}}^{\mathrm c}\cup \mathcal E_{mult}(\gamma)^{\mathrm c}\big)
\ge 1-\big[\mathbb P(\mathcal E_{\mathrm{DKW}}^{\mathrm c})+\mathbb P(\mathcal E_{mult}(\gamma)^{\mathrm c})\big]
\ge 1-(\kappa+\kappa_{mult}).
\end{equation*}

On the intersection event, using $N_{\ell,s}\ge (1-\gamma)np_{\ell,s}$ inside the DKW deviation
\begin{equation*}
\mathcal{U}_{\ell}(\widehat{f}_\ell)
\ \le\
8M^2\sum_{s=1}^K
w_s\sqrt{\frac{2}{(1-\gamma)\,n\,p_{\ell,s}}\,
\log\!\Big(\frac{2KL}{\kappa}\Big)}
\end{equation*}

Hence the bound holds with probability at least
\begin{equation*}
1-\eta,\qquad \text{where }\ \eta:=\kappa+\kappa_{mult}
=\kappa+KL\exp\!\Big(-\tfrac{\gamma^2}{2}\,n\,p_{\min}\Big).
\end{equation*}
This completes the proof.

\subsubsection{Proof of Lemma \ref{lem:disc_error}} \label{proof:lem_disc_error}

  Fix an interval $\ell$ and group $s$, and let $v\in\mathcal I_\ell$.
  Let $Y_{v,s}\sim\mu_{f^{bb}\mid v,s}$ and note that $U:=F_{{f^{bb}|v,s}}(Y_{v,s})\sim\mathrm{Unif}(0,1)$.
  Recall that 
  \begin{equation*}
  T_{v,s}(y)=F^{-1}_{v, \star}(F_{v,s}(y)),\qquad
  T_{\ell,s}(y)=F^{-1}_{\ell, \star}(F_{\ell,s}(y)),
  \end{equation*}
  with $F^{-1}_{v, \star}(u)=\sum_s w_s F^{-1}_{v, s}(u)$ and $F^{-1}_{\ell ,\star}(u)=\sum_s w_s F^{-1}_{\ell,s}(u)$.
  By $(a+b)^2\le 2a^2+2b^2$ and monotonicity,
  \begin{align*}
  \mathbb E\!\left[(T_{\ell,s}(Y_{v,s})-T_{v,s}(Y_{v,s}))^2\right]
  &= \mathbb E\!\left[\big(F^{-1}_{\ell, \star}(F_{\ell,s}(Y_{v,s}))-F^{-1}_{v, \star}(F_{v,s}(Y_{v,s}))\big)^2\right]\\
  &\le 2\,\underbrace{\mathbb E\!\left[\big(F^{-1}_{\ell, \star}(F_{v,s}(Y_{v,s}))-F^{-1}_{v, \star}(F_{v,s}(Y_{v,s}))\big)^2\right]}_{(\mathrm{I})}\\
  &\hspace{2em}+\;2\,\underbrace{\mathbb E\!\left[\big(F^{-1}_{\ell, \star}(F_{\ell,s}(Y_{v,s}))-F^{-1}_{\ell, \star}(F_{v,s}(Y_{v,s}))\big)^2\right]}_{(\mathrm{II})}.
  \end{align*}
  The first term (I) corresponds to the barycenter shift, while the second term (II) corresponds to the c.d.f. shift under a fixed quantile.

  \paragraph{(I) Barycenter shift.}
  Since $U=F_{{f^{bb}|v,s}}(Y_{v,s})\sim\mathrm{Unif}(0,1)$,
  \begin{align*}
  (\mathrm{I})&=\int_0^1\!\big(F^{-1}_{\ell, \star}(u)-F^{-1}_{v, \star}(u)\big)^2\,du, \\
  &=\int_0^1\!\Big(\sum_r w_r\big(F^{-1}_{\ell, r}(u)-F^{-1}_{v, r}(u)\big)\Big)^2 du, \\
  &\le \sum_r w_r \!\int_0^1\!(F^{-1}_{\ell, r}-F^{-1}_{v, r})^2.
  \end{align*}
  The last integral is $\mathcal W_2^2(\mu_{f^{bb}\mid \ell,r},\mu_{f^{bb}\mid v,r})$, which, by Lemma~\ref{lemma:KS_W2}, satisfies
  \begin{equation*}
  \mathcal W_2^2(\mu_{f^{bb}\mid \ell,r},\mu_{f^{bb}\mid v,r})
  \ \le\ 4M^2\,\|F_{\ell,r}-F_{v,r}\|_\infty.
  \end{equation*}
  By Property~\ref{ass:lipschitz_cdf}, for a uniform partition of $[0,1]$ into $L$ intervals,
  \begin{align*}
  \|F_{\ell,r}-F_{v,r}\|_\infty
  &=\Big\|\frac{1}{p_\ell}\!\int_{\mathcal I_\ell}(F_{u,r}-F_{v,r})\,d\nu(u)\Big\|_\infty, \\
  &\le \frac{1}{|\mathcal I_\ell|}\!\int_{\mathcal I_\ell}\!\Lcdf\,|u-v|\,d\nu(u), \\
  &\le \frac{\Lcdf}{2L}.
  \end{align*}
  Therefore,
  \begin{equation*}
  (\mathrm{I})\ \le\ 4M^2\sum_r w_r\,\frac{\Lcdf}{2L}
  \ =\ \frac{4M^2\,\Lcdf}{2L}.
  \end{equation*}

\paragraph{(II) c.d.f.\ shift under a fixed quantile.}
Let $U:=F_{v,s}(Y_{v,s})\sim\mathrm{Unif}(0,1)$, $h:=F^{-1}_{\ell,\star}$ and $G:=F_{\ell,s}\circ F_{v,s}^{-1}$. Note that $h$ and $G$ are non-decreasing.
  We can rewrite (II) as
  \begin{equation*}
  (\mathrm{II})
  =\mathbb E\!\left[\big(h(G(U))-h(U)\big)^2\right].
  \end{equation*}
  Since $h$ and $G$ are non-decreasing, the random variables $h(G(U))$ and $h(U)$ are comonotonic (coupled via the identity on $U$). 
  This coupling is optimal for the 1-Wasserstein distance, hence the expectation equals the squared 2-Wasserstein distance
  \begin{equation*}
  (\mathrm{II}) = \mathcal W_2^2\!\big(h(G(U)),\,h(U)\big).
  \end{equation*}
  Using the bound $\mathcal W_2^2(X,Y)\le 4M^2\|F_X-F_Y\|_\infty$ (valid on $[-M,M]$) and the contractivity of the Kolmogorov distance under the non-decreasing map $h$, we get
  \begin{equation*}
  (\mathrm{II})\le 4M^2\,\|F_{h(G(U))}-F_{h(U)}\|_\infty
  \;\le\;4M^2\,\|F_{G(U)}-F_U\|_\infty.
  \end{equation*}
  For $t\in[0,1]$,
  \begin{equation*}
  F_{G(U)}(t)=\mathbb P\!\big(G(U)\le t\big)
  =F_{v,s}\!\big(F^{-1}_{\ell,s}(t)\big),
  \end{equation*}
  hence $\|F_{G(U)}-F_U\|_\infty=\|F_{v,s}-F_{\ell,s}\|_\infty$. Therefore, using the same Lipschitz bound as in (I)
  \begin{equation*}
  (\mathrm{II})\ \le\ 4M^2\,\|F_{\ell,s}-F_{v,s}\|_\infty
  \ \le\ 4M^2\cdot\frac{\Lcdf}{2L}
  \ =\ \frac{2M^2\,\Lcdf}{L}.
  \end{equation*}
  
  \paragraph{Conclusion.}
  Combining (I) and (II),
  \begin{equation*}
  \mathbb E\!\left[(T_{\ell,s}(Y_{v,s})-T_{v,s}(Y_{v,s}))^2\right]
  \ \le\ \frac{2M^2\,\Lcdf}{L}\;+\;2M^2\frac{\Lcdf}{L}.
  \end{equation*}
  Since the weights $w_s$ and $p_\ell$ sum to 1 we have the following bound on the discretization error
  \begin{align*}
    \mathcal{R}^{disc}_v &\le \frac{4M^2\,\Lcdf}{L}\;+\;\frac{4M^2\,\Lcdf}{L}, \\
    &\leq\frac{8M^2\Lcdf}{L}.
  \end{align*}

  \subsubsection{Proof of Lemma \ref{lem:estimation_risk}}

    Fix $\ell$ and $s$, we have
      $T_{\ell,s}(y):=F^{-1}_{\ell, \star}(F_{{f^{bb}|\ell,s}}(y))$ (population CF map) and
      $\widehat T_{\ell,s}(y):=\widehat F^{-1}_{\ell, \star}(\widehat F_{{f^{bb}|\ell,s}}(y))$ (empirical map),
      with $F^{-1}_{\ell, \star}(u)=\sum_{s}w_s F^{-1}_{\ell, s}(u)$. For $Y_{\ell, s}$ such that $Law(Y_{\ell, s})=\mu_{f^{bb}|\ell,s}$,
      conditionally on $(\U\in\mathcal I_\ell,S=s)$, $U:=F_{{f^{bb}|\ell,s}}(Y_{\ell, s})$ is uniform on $(0,1)$, hence

      Set $\varepsilon_{\ell,s}:=\|\widehat F_{f^{bb}|\ell,s}-F_{f^{bb}|\ell,s}\|_\infty$ and, conditional on the calibration sample,
      let $U:=F_{f^{bb}|\ell,s}(Y_{\ell,s})\sim\mathrm{Unif}(0,1)$ and
      $\widehat U:=\widehat F_{f^{bb}|\ell,s}(Y_{\ell,s})$, so that $|\,\widehat U-U\,|\le \varepsilon_{\ell,s}$ a.s.
      Then
      \begin{equation*}
      \widehat T_{\ell,s}(Y_{\ell,s})-T_{\ell,s}(Y_{\ell,s})
      =\widehat F^{-1}_{\ell,\star}(\widehat U)-F^{-1}_{\ell,\star}(U).
    \end{equation*}

      Add and subtract $\widehat F^{-1}_{\ell,\star}(U)$ and use $(a+b)^2\le 2a^2+2b^2$:
      \begin{align*}
      \big(\widehat F^{-1}_{\ell,\star}(\widehat U)-F^{-1}_{\ell,\star}(U)\big)^2
      &=\Big(\underbrace{\widehat F^{-1}_{\ell,\star}(\widehat U)-\widehat F^{-1}_{\ell,\star}(U)}_{(\mathrm{A})}
      +\underbrace{\widehat F^{-1}_{\ell,\star}(U)-F^{-1}_{\ell,\star}(U)}_{(\mathrm{B})}\Big)^2\\
      &\le 2\,(\mathrm{A})^2+2\,(\mathrm{B})^2.
      \end{align*}
      Taking expectations and using $U\sim\mathrm{Unif}(0,1)$ for the second term gives
      \begin{align*}
      \mathbb E\!\left[(\widehat T_{\ell,s}(Y_{\ell,s})-T_{\ell,s}(Y_{\ell,s}))^2\right]
      &\le 2\,\mathbb E\!\left[\big(\widehat F^{-1}_{\ell,\star}(\widehat U)-\widehat F^{-1}_{\ell,\star}(U)\big)^2\right]
      +2\,\int_0^1\!\big(\widehat F^{-1}_{\ell,\star}(u)-F^{-1}_{\ell,\star}(u)\big)^2\,du. \tag{$\ast$}
      \end{align*}
      For the first expectation, since $\widehat F^{-1}_{\ell,\star}$ is nondecreasing with range in $[-M,M]$ and
      $|\,\widehat U-U\,|\le \varepsilon_{\ell,s}$ a.s., we have the coarse bound
      \begin{equation*}
      \big(\widehat F^{-1}_{\ell,\star}(\widehat U)-\widehat F^{-1}_{\ell,\star}(U)\big)^2
      \le (2M)^2\,\mathbf 1\{\,\widehat U\neq U\,\}
      \le 4M^2\,\varepsilon_{\ell,s}^2,
    \end{equation*}

      whence
      \begin{equation*}
      2\,\mathbb E\!\left[\big(\widehat F^{-1}_{\ell,\star}(\widehat U)-\widehat F^{-1}_{\ell,\star}(U)\big)^2\right]
      \le 8M^2\,\varepsilon_{\ell,s}^2.
    \end{equation*}

      Therefore, from ($\ast$),
      \begin{align*}
      \mathbb E\left[(\widehat T_{\ell,s}(Y_{\ell, s})-T_{\ell,s}(Y_{\ell, s}))^2\right]
      &\leq 2\!\int_0^1\!\left(\widehat F^{-1}_{\ell, \star}(u)-F^{-1}_{\ell, \star}(u)\right)^2du
      +8M^2\sup_t \vert \widehat F_{{f^{bb}|\ell,s}}(t)-F_{{f^{bb}|\ell,s}}(t)\vert^2.
    \end{align*}
      For the first term, Jensen on quantiles yields
      \begin{align*}
      \int_0^1\!\big(\widehat F^{-1}_{\ell, \star}(u)-F^{-1}_{\ell, \star}(u)\big)^2du
      &=\int_0^1\!\Big(\sum_r w_r(\widehat F^{-1}_{\ell, r}(u)-F^{-1}_{\ell, r}(u))\Big)^2du\\
      &\le \sum_r w_r \int_0^1(\widehat F^{-1}_{\ell, r}(u)-F^{-1}_{\ell, r}(u))^2du\\
      &=\sum_r w_r\,\mathcal W_2^2(\widehat\mu_{\ell,r},\mu_{\ell,r}).
    \end{align*}
    
      By Lemma~\ref{lemma:KS_W2}, each $\mathcal W_2^2(\widehat\mu_{\ell,r},\mu_{\ell,r})\le 4M^2 \Vert \widehat F_{f^{bb}|\ell,r}-F_{f^{bb}|\ell,r}\Vert_\infty$. Altogether,
      \begin{equation*}
      \mathbb E\big[(\widehat T_{\ell,s}(Y_s)-T_{\ell,s}(Y_s))^2\big]
      \;\le\;
      8M^2\sum_{r} w_r\,\varepsilon_{\ell,r}+8M^2\,\varepsilon_{\ell,s}^2,\qquad
      \varepsilon_{\ell,r}:=\|\widehat F_{f^{bb}|\ell,r}-F_{f^{bb}|\ell,r}\|_\infty.
    \end{equation*}
    Since $\varepsilon_{\ell,r}\leq 1$ taking the $w_s$–average over $s$ and using $\sum_s w_s\varepsilon_{\ell,s}^2\le \max_s\varepsilon_{\ell,s}\sum_s w_s\varepsilon_{\ell,s}$,
      \begin{equation*}
      \sum_s w_s\,\mathbb E[(\widehat T_{\ell,s}-T_{\ell,s})^2]
      \;\le\; 16M^2 \sum_{s} w_s\,\varepsilon_{\ell,s}.
    \end{equation*}
    
      Now apply the uniform DKW as in Eq.~\eqref{eq:DKW_count} in Lemma~\ref{lem:variance_unfairness}.
      On $\mathcal E_{\mathrm{DKW}}\cap\mathcal E_{\mathrm{mult}}(\gamma)$ (probability $\ge 1-\eta_L$),
      \begin{equation*}
      \varepsilon_{\ell,s}\ \le\ \sqrt{\frac{2L\,\log\!\left( 2KLn\right)}{n}}\,
  \sum_{s=1}^K \sqrt{w_s}.
    \end{equation*}
    
Since $\sum_s w_s = 1$, we have
      \begin{align*}
      \sum_s w_s\,\varepsilon_{\ell,s}
      &\le \sqrt{\frac{2L\,\log\!\left( 2KLn\right)}{n}}\,
  \sum_{s=1}^K \sqrt{w_s}.
    \end{align*}
    Finally, by summing over $\ell$ and $s$ we get that the excess risk of taking the estimated post-processed predictor $\widehat f_{L}$ over the non estimated discretized $g_{0, L}$, $\mathcal{R}^{est}_v$ is bounded by
    \begin{equation*}
      \mathcal{R}^{est}_v \leq 16M^2\sqrt{\frac{2L\,\log\!\left( 2KLn\right)}{n}}\,
  \sum_{s=1}^K \sqrt{w_s}.
    \end{equation*}

\subsubsection{Proof of lemma \ref{lem:assouad}}\label{proof:lem_assouad}

       Lemma~\ref{lem:assouad} is derived from the classical statement
        \citep[Lemma~2.12]{tsybakov2009introduction}.  In Tsybakov's notation,
        let $\Omega=\{0,1\}^J$ and $\{\mathbb P_\eta:\eta\in\Omega\}$ be a
        family of $2^J$ probability measures.  Denote by
        $\rho(\eta,\eta')$ the Hamming distance between $\eta$ and $\eta'$.
        Lemma~2.12 states that
        \begin{equation*}
          \inf_{\widehat\eta} \max_{\eta\in\Omega}
          \mathbb E_\eta \rho(\widehat\eta,\eta)
          \;\ge\;
          \frac{J}{2}\,
          \min_{\substack{\eta,\eta':\\ \rho(\eta,\eta')=1}}
          \inf_\psi \big(\mathbb P_\eta(\psi\neq 0)+\mathbb P_{\eta'}(\psi\neq 1)\big),
        \end{equation*}
        where the infimum is over all tests $\psi$ taking values in $\{0,1\}$.
        
        In our setting, we assume a Kullback--Leibler bound between
        neighbouring models,
        \begin{equation*}
          \max_{\rho(\eta,\eta')=1}
          \mathrm{KL}(\mathbb P_\eta\ \vert\mathbb P_{\eta'})\;\le\;\kappa.
        \end{equation*}
      
        Using Pinsker's inequality,
        $\mathrm{TV}(\mathbb P_\eta,\mathbb P_{\eta'})^2
         \le \tfrac12\mathrm{KL}(\mathbb P_\eta\ \vert\mathbb P_{\eta'})$,
        we obtain for neighbours
        \begin{equation*}
          \inf_\psi \big(\mathbb P_\eta(\psi\neq 0)+\mathbb P_{\eta'}(\psi\neq 1)\big)
          \;=\;1-\mathrm{TV}(\mathbb P_\eta,\mathbb P_{\eta'})
          \;\ge\;1-\sqrt{\tfrac{\kappa}{2}}.
        \end{equation*}
      
        Plugging this into Tsybakov's lemma yields the bound
        \begin{equation}\label{eq:assouad_hamming}
          \inf_{\widehat\eta} \max_{\eta\in\Omega}
          \mathbb E_\eta \rho(\widehat\eta,\eta)
          \;\ge\;
          \frac{J}{2}\Big(1-\sqrt{\tfrac{\kappa}{2}}\Big).
        \end{equation}
        
        To obtain our formulation, we introduce $\phi(\eta)$ taking values in a metric space
        $(\mathcal T,d)$ and assume a neighboor separation condition:
        whenever $\rho(\eta,\eta')=1$,
        \begin{equation*}
          d\big(\phi(\eta),\phi(\eta')\big)\;\ge\;\Delta.
        \end{equation*}
      
        Intuitively, changing one coordinate of $\eta$ moves the true parameter
        by at least $\Delta$ in the metric $d$.
        From any estimator $\widehat\phi$ we can construct an estimator
        $\widehat\eta$ of the discrete index by testing, coordinate-wise,
        which of two neighbooring parameters
        $\phi(\eta)$ or $\phi(\eta')$ is closer to $\widehat\phi$.  By the
        triangle inequality, an error on coordinate $j$ implies
        $d(\widehat\phi,\phi(\eta))\ge \Delta/2$, hence
        \begin{equation*}
          d(\widehat\phi,\phi(\eta))
          \;\ge\; \frac{\Delta}{2}\,\rho(\widehat\eta,\eta)
          \quad\text{and thus}\quad
          \mathbb E_\eta d(\widehat\phi,\phi(\eta))
          \;\ge\; \frac{\Delta}{2}\,\mathbb E_\eta \rho(\widehat\eta,\eta).
        \end{equation*}
      
        Taking the supremum over $\eta$ and then the infimum over estimators,
        and combining with \eqref{eq:assouad_hamming}, we obtain
        \begin{equation*}
          \inf_{\widehat\phi} \sup_{\eta\in\{-1,+1\}^J}
          \mathbb E_\eta d(\widehat\phi,\phi(\eta))
          \;\ge\;
          \frac{J\,\Delta}{2}\Big(1-\sqrt{\tfrac{\kappa}{2}}\Big),
        \end{equation*}
      
        which is precisely the version of Assouad's lemma stated and used in
        our proof.
        In our application, the metric space is $(L^1([0,1]),\|\cdot\|_{L^1})$
        and the neighbor separation
        $d(\phi(\eta),\phi(\eta'))\ge\Delta$ follows from the bump construction
        on disjoint blocks of the covariate space.

\subsubsection{Proof of lemma \ref{lem:interval-convexity}} \label{proof:lem_interval_convexity}

  Let, for each $s\in[K]$, $\mu_{f\mid \ell,s}$ (resp.\ $\mu_{g\mid \ell,s}$, $\mu_{h\mid \ell,s}$) denote the law of $f(\mathbf X,s)$ (resp.\ $g(\mathbf X,s)$, $h(\mathbf X,s)$) conditional on $\{\,\U\in\mathcal I_\ell,\ S=s\,\}$.

Let $\mu_{f,\ell}^\star\in\argmin_{\mu}\sum_{s=1}^K w_s\,\mathcal W_2^2(\mu_{f\mid \ell,s},\mu)$ and
$\mu_{g,\ell}^\star\in\argmin_{\mu}\sum_{s=1}^K w_s\,\mathcal W_2^2(\mu_{g\mid \ell,s},\mu)$ be (any) interval barycenters for $f$ and $g$.
Define the linear map $T:\mathbb R^2\to\mathbb R$ by
\begin{equation*}
T(x,y)\ :=\ \lambda x + (1-\lambda)y,
\end{equation*}
and set the \emph{candidate} barycenter for $h$ as the $T$-image of the product of the two barycenters,
\begin{equation*}
\mu_{\ell}^{(\lambda)\star}\ :=\ T_{\#}\,(\mu_{f,\ell}^\star\otimes \mu_{g,\ell}^\star).
\end{equation*}

Fix $s\in[K]$. Let $\rho_{\ell,s}$ be the \emph{joint} conditional law of the pair
\begin{equation*}
(F_s,G_s)\ :=\ \big(f(\mathbf X,s),\,g(\mathbf X,s)\big)
\quad\text{given}\quad \{\U\in\mathcal I_\ell,\ S=s\},
\end{equation*}
so that $\mu_{f\mid \ell,s}$ and $\mu_{g\mid \ell,s}$ are the first and second marginals of $\rho_{\ell,s}$. Then, we have
\begin{equation*}
\mu_{h\mid \ell,s}\ =\ T_{\#}\,\rho_{\ell,s}.
\end{equation*}
We aim at finding a coupling between $\mu_{h\mid \ell,s}$ and $\mu_{\ell}^{(\lambda)\star}$ to bound $\mathcal W_2(\mu_{h\mid \ell,s},\mu_{\ell}^{(\lambda)\star})$.
The goal is to minimize 
\begin{equation*}
  \mathcal U_\ell(h) \leq \sum_s w_s\mathcal{W}_2^2(\mu_{h\vert \ell, s}, \mu_{\ell}^{(\lambda)\star}).
\end{equation*}
To find the susmentioned coupling, we start by taking the optimal couplings between $\mu_{f\mid \ell,s}$ and $\mu_{f,\ell}^\star$, and between $\mu_{g\mid \ell,s}$ and $\mu_{g,\ell}^\star$.
Then we use the product of these two couplings to build a coupling between $\nu_{\ell,s}$ and $\mu_{f,\ell}^\star\otimes \mu_{g,\ell}^\star$.
Finally, we push this coupling forward by $T\times T$ to get a coupling between $\mu_{h\mid \ell,s}$ and $\mu_{\ell}^{(\lambda)\star}$.

\paragraph{Construction of the coupling.}
Let $\pi_f^s\in\Pi(\mu_{f\mid \ell,s},\mu_{f,\ell}^\star)$ (resp.$\pi_g^s\in\Pi(\mu_{g\mid \ell,s},\mu_{g,\ell}^\star)$) be an \emph{optimal couplings} for the optimal transport problem obtained with the distance $\mathcal{W}_2^2(\mu_{f\mid \ell,s},\mu_{f,\ell}^\star)$ (resp. $\mathcal{W}_2^2(\mu_{g\mid \ell,s},\mu_{g,\ell}^\star)$)
  (not necessarily induced by maps). 
Disintegrate them as
\begin{equation*}
\pi_f^s(dx,dx^\star)=\mu_{f\mid \ell,s}(dx)\,\kappa_f^s(x,dx^\star),
\qquad
\pi_g^s(dy,dy^\star)=\mu_{g\mid \ell,s}(dy)\,\kappa_g^s(y,dy^\star),
\end{equation*}

for stochastic kernels $\kappa_f^s$ and $\kappa_g^s$.

$\kappa_f^s$ and $\kappa_g^s$ are defined as follows. 
For $(X, X^\ast) \sim \pi_g^S$, let $\kappa_f^s(x,\cdot)$ be the conditional law of $X$
under $\pi_f^s$, and similarly for $\kappa_g^s$.
Since $\mathbb{R}$ is a standard Borel space, the optimal couplings
$\pi_f^s$ and $\pi_g^s$ admit \emph{regular conditional probabilities}
with respect to their first marginals. 
The kernels $\kappa_f^s$ and $\kappa_g^s$ are precisely these conditionals. In particular, when
$\mu_{f\mid \ell,s}$ (respectively $\mu_{g\mid \ell,s}$) is non-atomic
in one dimension, the optimal plan is induced by the monotone map
$T_f^s=F^{-1}_{\mu_{f,\ell}^\star}\circ F_{\mu_{f\mid \ell,s}}$
(respectively $T_g^s$) and the kernel degenerates to a Dirac mass:
$\kappa_f^s(x,\cdot)=\delta_{T_f^s(x)}$ and
$\kappa_g^s(y,\cdot)=\delta_{T_g^s(y)}$. With atoms, the kernels may
randomize (split mass), which is exactly what is needed to keep the
marginals correct.

Define a coupling $\Lambda_s$ between $\rho_{\ell,s}$ and $\mu_{f,\ell}^\star\otimes\mu_{g,\ell}^\star$ on $\mathbb R^2\times\mathbb R^2$ by
\begin{equation*}
\Lambda_s(dxdydx^\star dy^\star)
:=\rho_{\ell,s}(dx,dy)\,\kappa_f^s(x,dx^\star)\,\kappa_g^s(y,dy^\star).
\end{equation*}

By construction, the first marginal of $\Lambda_s$ is $\rho_{\ell,s}$ and the second marginal is
\begin{equation*}
\left(\int \kappa_f^s(x,dx^\star)\,\mu_{f\mid \ell,s}(dx)\right)
\otimes
\left(\int \kappa_g^s(y,dy^\star)\,\mu_{g\mid \ell,s}(dy)\right)
=\mu_{f,\ell}^\star\otimes\mu_{g,\ell}^\star.
\end{equation*}

Finally, push forward by $T\times T$ to obtain
\begin{equation*}
\zeta_s:=(T\times T)_{\#}\Lambda_s,
\end{equation*}

which is the wanted coupling.
To see this, note that the first marginal is
\begin{equation*}
T_{\#}\,\nu_{\ell,s}\ =\ \mu_{h\mid \ell,s},
\end{equation*}
and the second marginal is
\begin{equation*}
T_{\#}\,(\mu_{f,\ell}^\star\otimes \mu_{g,\ell}^\star)\ =\ \mu_{\ell}^{(\lambda)\star}.
\end{equation*}

Let \(((X,Y),(X^\star,Y^\star))\sim\Lambda_s\) and define
\begin{equation*}
Z:=T(X,Y),\qquad Z^\star:=T(X^\star,Y^\star),
\end{equation*}

so that \((Z,Z^\star)\sim\zeta_s\). Since \(\zeta_s\) is a coupling between
\(\mu_{h\mid \ell,s}\) and \(\mu_{\ell}^{(\lambda)\star}\), we have
\begin{equation*}
\mathcal W_2^2\!\big(\mu_{h\mid \ell,s},\,\mu_{\ell}^{(\lambda)\star}\big)
\;\le\;\mathbb E_{\zeta_s}\big[\,|Z-Z^\star|^2\,\big]
\;=\;\mathbb E_{\Lambda_s}\big[\,|T(X,Y)-T(X^\star,Y^\star)|^2\,\big].
\end{equation*}

\paragraph{Bounding the transportation cost.}
Using the coupling described above as an admissible coupling for $\mathcal W_2$, we get
\begin{align*}
  \mathcal W_2^2\!\big(\mu_{h\mid \ell,s},\,\mu_{\ell}^{(\lambda)\star}\big)
&\le \mathbb E_{\Lambda_s}\!\Big[\big|T(X,Y)-T(X^\star,Y^\star)\big|^2\Big],\\
&= \mathbb E_{\Lambda_s}\!\Big[\big|\lambda(X-X^\star)+(1-\lambda)(Y-Y^\star)\big|^2\Big],\\
&\le \lambda\,\mathbb E_{\Lambda_s}\!\big[(X-X^\star)^2\big] + (1-\lambda)\,\mathbb E_{\Lambda_s}\!\big[(Y-Y^\star)^2\big],\\
\end{align*}
where we used the convexity inequality $|\lambda a+(1-\lambda)b|^2\le \lambda|a|^2+(1-\lambda)|b|^2$ in the third line, and the optimality of the couplings $\pi_f^s$ and $\pi_g^s$ in the last line.

By construction, the \((X,X^\star)\)-marginal of \(\Lambda_s\) is the optimal plan
\(\pi_f^s\), hence
\begin{equation*}
  \mathbb E_{\Lambda_s}\big[(X-X^\star)^2\big]
=\int |x-x^\star|^2\,d\pi_f^s(x,x^\star)
=\mathcal W_2^2(\mu_{f\mid \ell,s},\mu_{f,\ell}^\star),
\end{equation*}

and similarly for \(g\).
Thus, we have 
\begin{equation*}
\mathcal W_2^2\!\big(\mu_{h\mid \ell,s},\,\mu_{\ell}^{(\lambda)\star}\big)
\leq \lambda\,\mathcal W_2^2\!\big(\mu_{f\mid \ell,s},\,\mu_{f,\ell}^\star\big)+ (1-\lambda)\,\mathcal W_2^2\!\big(\mu_{g\mid \ell,s},\,\mu_{g,\ell}^\star\big).
\end{equation*}

Finally, weight by $w_s$ and sum over $s$
\begin{equation*}
\sum_{s=1}^K w_s\,\mathcal W_2^2\!\big(\mu_{h\mid \ell,s},\,\mu_{\ell}^{(\lambda)\star}\big)
\ \le\
\lambda\,\underbrace{\sum_{s=1}^K w_s\,\mathcal W_2^2\!\big(\mu_{f\mid \ell,s},\,\mu_{f,\ell}^\star\big)}_{=\ \mathcal U_\ell(f)}
\;+\;
(1-\lambda)\,\underbrace{\sum_{s=1}^K w_s\,\mathcal W_2^2\!\big(\mu_{g\mid \ell,s},\,\mu_{g,\ell}^\star\big)}_{=\ \mathcal U_\ell(g)}.
\end{equation*}
Since $\mu_{\ell}^{(\lambda)\star}$ is just a \emph{candidate} barycenter for $h$, taking the minimum over $\mu$ on the left-hand side yields
\begin{equation*}
\mathcal U_\ell(h)\ =\ \min_{\mu}\sum_{s=1}^K w_s\,\mathcal W_2^2\!\big(\mu_{h\mid \ell,s},\,\mu\big)
\ \le\ \sum_{s=1}^K w_s\,\mathcal W_2^2\!\big(\mu_{h\mid \ell,s},\,\mu_{\ell}^{(\lambda)\star}\big)
\ \le\ \lambda\,\mathcal U_\ell(f)+(1-\lambda)\,\mathcal U_\ell(g),
\end{equation*}
as claimed.

\subsubsection{Proof of Lemma \ref{lem:unfairness_bb_bound}} \label{proof:lem_unfairness_bb_bound}

Fix $\ell$. 
We have 
\begin{equation*}
  \widehat F^{-1}_{\ell,\star}(u):=\sum_{r=1}^K w_r\,\widehat F^{-1}_{\ell,r}(u),
\end{equation*}
Let $\mu_{\ell,\star}$ be the population barycenter of $\{\mu_{\ell,s}\}_s$ with weights $\mathbf{w}$, and $\widehat\mu_{\ell,\star}$ the empirical barycenter of $\{\widehat\mu_{\ell,s}\}_s$ with weights $\mathbf{w}$. By definition of $\mathcal U_\ell$ and suboptimality of $\widehat\mu_{\ell,\star}$ as a candidate barycenter,
\begin{equation*}
\mathcal U_\ell(f^{bb})
=\min_\nu \sum_s w_s\,\mathcal W_2^2(\mu_{\ell,s},\nu)
\ \le\ \sum_s w_s\,\mathcal W_2^2(\mu_{\ell,s},\widehat\mu_{\ell,\star}).
\end{equation*}
By the triangle inequality in $\mathcal W_2$ and $(a+b)^2\le 2a^2+2b^2$, so 
\begin{equation*}
\mathcal W_2^2(\mu_{\ell,s},\widehat\mu_{\ell,\star})
\ \le\ 2\,\mathcal W_2^2(\mu_{\ell,s},\widehat\mu_{\ell,s})
\;+\;2\,\mathcal W_2^2(\widehat\mu_{\ell,s},\widehat\mu_{\ell,\star}),
\end{equation*}

hence
\begin{equation}\label{eq:A-sum}
\mathcal U_\ell(f^{bb})
\ \le\
2\sum_s w_s\,\mathcal W_2^2(\mu_{\ell,s},\widehat\mu_{\ell,s})
\;+\;
2\sum_s w_s\,\mathcal W_2^2(\widehat\mu_{\ell,s},\widehat\mu_{\ell,\star}).
\end{equation}

Apply this to the RHS of \eqref{eq:A-sum} to get
\begin{equation*}
\mathcal U_\ell(f^{bb})
\ \le\
2\sum_s  w_s\,\mathcal W_2^2(\mu_{\ell,s},\widehat\mu_{\ell,s})
\;+\;
2\,\widehat{\mathcal U}_\ell(f^{bb}).
\end{equation*}

In one dimension, Lemma~\ref{lemma:KS_W2} gives
\begin{equation*}
\mathcal W_2^2(\mu_{\ell,s},\widehat\mu_{\ell,s})
\ \le\ 4M^2\,\varepsilon_{\ell,s},
\qquad
\varepsilon_{\ell,s}:=\big\|\widehat F_{f^{bb}\mid \ell,s}-F_{f^{bb}\mid \ell,s}\big\|_\infty.
\end{equation*}

Thus
\begin{equation*}
  \mathcal U_\ell(f^{bb})
\ \le\
8M^2\sum_s  w_s\,\varepsilon_{\ell,s}
\;+\;
2\,\widehat{\mathcal U}_\ell(f^{bb}).
\end{equation*}

Multiply by $p_\ell$ and sum over $\ell$
\begin{equation*}
\mathcal U_L(f^{bb})
\ \le\
8M^2\sum_{\ell=1}^L p_\ell\sum_s w_s\,\varepsilon_{\ell,s}
\;+\;
2\,\widehat{\mathcal U}_L(f^{bb}).
\end{equation*}

Since $\varepsilon_{\ell, s}$ is the same as in the proof of Theorem~\ref{thm:l_star} we have with probability at least $1-\eta_L$ uniformly over all $\ell$ and $s$, for $n\geq \sqrt{\frac{\Lcdf}{M}}$, $\log(2KnL) \leq 2 \log(2Kn)$,
\begin{equation*}
  \varepsilon_{\ell,s}\ \le\ \sqrt{\frac{2}{\,n\,p_{\ell,s}}\,
  \log\!\left(2KLn\right)},
\end{equation*}
Since we have $\U \perp S$ and the discretization is uniform, we have $p_\ell = 1/L$ and $p_{\ell,s}=w_s/L$. The previous inequality yields
\begin{equation*}
  \sum_{\ell=1}^L p_\ell\sum_s w_s\,\varepsilon_{\ell,s}
  \leq \sqrt{\frac{2L}{\,n}\,
  \log\!\Big(2KLn\Big)} \sqrt{K}.
\end{equation*}

Intersecting the two events (DKW and Chernoff on counts) gives a bound on $\mathcal U_L(f^{bb})$ holding with probability at least $1-\eta_L$.
To get the bound on $\mathcal{U}(f^{bb})$ we use the result from Lemma~\ref{proof:lemma_bias_unfairness}. Since $\mathcal{U}(f^{bb}) \leq \mathcal{U}_L(f^{bb}) + \text{Bias}$, we substitute the bound for $\mathcal{U}_L(f^{bb})$ derived above:
\begin{align*}
  \mathcal{U}(f^{bb}) &\leq 2\widehat{\mathcal{U}}_L(f^{bb}) + 8M^2\sqrt{\frac{2LK}{\,n}\,
  \log\!\left(2KLn\right)} + \frac{16M^2 \Lcdf}{L}.
\end{align*}
Which concludes the proof.

 \newpage 
\section{Experimental Details}
\label{an:experiments}

This section provides the complete details necessary to reproduce our experiments, including data generation, variable estimation, model architectures, and evaluation metrics.

\subsection{Datasets} \label{an:datasets}

\paragraph{Synthetic Data.}

All synthetic experiments use $n_{\mathrm{train}}=1000$ and $n_{\mathrm{test}}=10\,000$ samples. We generate the latent variable $V$ and sensitive attribute $S$ as
\begin{equation*}
  V \sim \mathrm{Unif}(0,1),\qquad S \sim \mathrm{Bernoulli}(0.5).
\end{equation*}
To intentionally expose the information loss inherent in causal methods, we map the sensitive attribute to a Rademacher random variable $S' = 2S - 1 \in \{-1, 1\}$. 
We then construct a two-dimensional observable feature vector $\mathbf{X} = (X_1, X_2)$ and the target outcome $Y$ according to
\begin{align*}
  X_1 &= S' V, \\
  X_2 &\sim \mathrm{Unif}(-\epsilon_x, \epsilon_x), \\
  Y &= X_1 + X_2 + \varepsilon_y,
\end{align*}
where the target micro-noise is $\varepsilon_y \sim \mathrm{Unif}(-0.01, 0.01)$. 
In our experiments, we set the feature noise scale to $\epsilon_x = 0.5$. 

This data generation process serves a dual purpose. 
First, the $X_1$ component shifts the marginal distributions of the groups in opposite directions, resulting in nearly disjoint structural supports. 
This separation guarantees an accuracy failure for methods enforcing global Demographic Parity. 
Second, the variance of $X_2$ injects an unbiased predictive signal that is completely independent of the sensitive attribute. 
Because strict causal methods (such as \texttt{Fair K}) explicitly discard the observable $\mathbf{X}$ to predict using only the latent $V$, they are blind to $X_2$. 
This results in a severe, artificial degradation of their predictive utility. 
Conversely, our Counterfactual Fairness framework post-processes an unconstrained base model that has full access to $\mathbf{X}$. 
This allows it to repair the biased dependency on $X_1$ while preserving the legitimate predictive information $X_2$.

\paragraph{Law School (LSAC) Data.}
We use the dataset provided by \citet{kusner2018counterfactual}\footnote{\url{https://github.com/mkusner/counterfactual-fairness}}. The outcome variable $Y$ is the first-year law school average (ZFYA).
\begin{itemize}
    \item \textbf{Sensitive Attribute ($S$):} Race, restricted to the two largest groups: White ($S=0$) and Black ($S=1$).
    \item \textbf{Features ($X$):} For the SCM estimation, we use LSAT, UGPA, race, and sex. For the black-box predictor input, we restrict $X$ to LSAT (one-dimensional) to strictly visualize the trade-offs, while $V$ captures the remaining signal.
\end{itemize}

\subsection{Estimation of the Exogenous Variable \U} \label{an:estimation_V}

A core assumption of our framework is access to a proxy $V$ for the unobserved latent variable $\U$. We compare two distinct approaches to estimate this proxy from observational data $(\mathbf{X}, S)$: the Structural Causal Model method from \citet{kusner2018counterfactual} and a VAE method.

\subsubsection{Structural Causal Model (SCM)}
As stated that we adopt the ``Level 2'' framework of \citet{kusner2018counterfactual}. Here, we detail the specific generative model used to infer the latent variable $\U$.

The LSAC dataset is modeled using a factor analysis structure where the latent knowledge $\U$ is the sole driver of correlation between outcomes, conditional on race and sex. Let $S$ denote the vector of sensitive attributes (race indicators and sex). The observed variables are generated as follows:
\begin{enumerate}
    \item \textbf{Latent Knowledge:} The unobserved ability is drawn from a standard normal distribution:
    \[ \U \sim \mathcal{N}(0, 1) \]
    \item \textbf{UGPA:} Modeled as a linear function of race, sex, and knowledge:
    \[ \text{UGPA} \sim \mathcal{N}(\mu_{\text{GPA}} + w_{\text{GPA}}^\top S + \lambda_{\text{GPA}}\,\U, \;\sigma_{\text{GPA}}^2) \]
    \item \textbf{LSAT Score:} Modeled as a count variable (approximated via Poisson) driven by the same factors:
    \[ \text{LSAT} \sim \text{Poisson}(\exp(\mu_{\text{LSAT}} + w_{\text{LSAT}}^\top S + \lambda_{\text{LSAT}}\,\U)) \]
    \item \textbf{First-Year Average (ZFYA):} The outcome variable is also a noisy linear function:
    \[ \text{ZFYA} \sim \mathcal{N}(\mu_{\text{ZFYA}} + w_{\text{ZFYA}}^\top S + \lambda_{\text{ZFYA}}\,\U, \;1) \]
\end{enumerate}

\textbf{Inference Procedure.}
We use the Stan implementation provided by \citet{kusner2018counterfactual}. The model is fitted using Hamiltonian Monte Carlo (HMC) with 2000 iterations. For each student $i$, the proxy $V_i$ used in our experiments is the posterior mean of the latent variable given their observed data:
\[ V_i = \mathbb{E}[\U_i \mid \text{UGPA}_i, \text{LSAT}_i, \text{ZFYA}_i, S_i]. \]

\subsubsection{Standard Variational Autoencoder (VAE)} \label{an:standard_VAE}
To assess robustness against model misspecification, we also estimate $V$ using a Variational Autoencoder (VAE) that does not assume the rigid linear/Poisson structure of the SCM.

\textbf{Network Architecture.}
To test robustness, we also estimate $V$ using a Variational Autoencoder (VAE).
\begin{itemize}
    \item \textbf{Architecture:} We use a fully connected network with LeakyReLU activations ($0.2$) and Batch Normalization. The encoder compresses the $d$-dimensional input to a 1D latent space via hidden layers of size 32 and 16. The decoder mirrors this topology ($1 \to 16 \to 32 \to d$) to reconstruct the inputs.
    \item \textbf{Training:} We optimize a $\beta$-VAE loss ($\beta=0.5$) for 100 epochs using the Adam optimizer (learning rate $10^{-3}$) to balance reconstruction quality with latent regularization.
    \item \textbf{Proxy Extraction:} The proxy $V_i$ is the mean of the approximate posterior distribution, min-max normalized to $[0, 1]$.
\end{itemize}

\textbf{Hyperparameters.}
\begin{itemize}
    \item \textbf{Loss Function:} We use a $\beta$-VAE loss with $\beta=0.5$ to reduce the regularization strength, ensuring the latent variable $V$ retains significant information about the input features rather than collapsing to the prior. The objective is:
    \[ \mathcal{L} = \text{MSE}(\mathbf{x}, \hat{\mathbf{x}}) + 0.5 \cdot D_{KL}(q_\phi(z|\mathbf{x}) \,||\, \mathcal{N}(0,1)), \]
    where $\mathbf{x}$ is the standardized input vector, $\hat{\mathbf{x}}$ is the decoder reconstruction, and $D_{KL}$ measures the divergence between the encoder's approximate posterior $q_\phi(z|\mathbf{x})$ and the standard normal prior.
    \item \textbf{Optimization:} Adam optimizer with learning rate $10^{-3}$ and weight decay $10^{-5}$.
    \item \textbf{Training:} 100 epochs with a batch size of 64.
\end{itemize}
\textbf{Post-Processing.}
Since the VAE latent space is unbounded ($\mathbb{R}$), we normalize the extracted means $V_i = \mu_\phi(\mathbf{x}_i)$ to the interval $[0, 1]$ via min-max scaling before feeding them into our discretization algorithm.

\subsubsection{Fair Variational Autoencoder (Fair VAE)} \label{an:fair_VAE}
To mitigate systematic proxy leakage and actively enforce statistical independence between the latent proxy and the sensitive attribute ($V \perp S$), we introduce a Fair VAE variant used in our robustness experiments. 

\textbf{Network Architecture.}
This model shares the exact same underlying architecture as the standard VAE, but modifies the training objective to disentangle the latent space.
\begin{itemize}
    \item \textbf{Architecture:} We use a fully connected network with LeakyReLU activations ($0.2$) and Batch Normalization. The encoder compresses the $d$-dimensional input to a 1D latent space via hidden layers of size 32 and 16. The decoder mirrors this topology ($1 \to 16 \to 32 \to d$) to reconstruct the inputs.
    \item \textbf{Training:} We optimize a joint objective combining a $\beta$-VAE loss and a Maximum Mean Discrepancy (MMD) penalty for 100 epochs using the Adam optimizer to balance reconstruction, latent regularization, and demographic independence.
    \item \textbf{Proxy Extraction:} The proxy $V_i$ is the mean of the approximate posterior distribution, min-max normalized to $[0, 1]$.
\end{itemize}

\textbf{Hyperparameters.}
\begin{itemize}
    \item \textbf{Loss Function:} The model optimizes a standard $\beta$-VAE loss ($\beta=0.5$) combined with an MMD penalty to enforce parity in the latent space. During training, the MMD is computed across the mini-batch using a Radial Basis Function (RBF) kernel ($\gamma=1.0$) to measure the distributional distance between the encoded latent means ($\mu$) of the different demographic groups ($S=0$ and $S=1$). The total objective is:
    \[ \mathcal{L}_{\text{Fair}} = \text{MSE}(\mathbf{x}, \hat{\mathbf{x}}) + 0.5 \cdot D_{KL}(q_\phi(z|\mathbf{x}) \,||\, \mathcal{N}(0,1)) + \lambda_{\text{MMD}} \cdot \text{MMD}(\mu_{S=0}, \mu_{S=1}), \]
    where we set the MMD penalty weight to $\lambda_{\text{MMD}} = 10.0$. This explicit regularization forces the encoder to strip away demographic-specific structural biases.
    \item \textbf{Optimization:} Adam optimizer with learning rate $10^{-3}$ and weight decay $10^{-5}$.
    \item \textbf{Training:} 100 epochs with a batch size of 64.
\end{itemize}

\textbf{Post-Processing.}
Since the VAE latent space is unbounded ($\mathbb{R}$), we normalize the extracted means $V_i = \mu_\phi(\mathbf{x}_i)$ to the interval $[0, 1]$ via min-max scaling before feeding them into our discretization algorithm.

\subsection{Black-Box Predictors}

\paragraph{Synthetic.}
We use a ridge-regularized linear model:
\begin{equation*}
  f^{bb}(x,v,s) = \theta_0 + \theta_1 x + \theta_2 v + \theta_3 s.
\end{equation*}
(In the unaware case, $\theta_3$ is forced to 0). The coefficients are obtained via ridge regression with regularization 0.1.

\paragraph{LSAC.}
We use a flexible polynomial ridge regression. For each student, we form an input vector $z=(X,V,S)$. We standardize $z$, expand it to all monomials of total degree at most 2 (including squares and pairwise interactions), and fit a ridge regression:
\begin{equation*}
    f^{bb}(x,v,s) \;=\; \beta_0 \;+\; \sum_j \beta_j \,\tilde z_j \;+\; \sum_{j \le k} \gamma_{jk}\,\tilde z_j \tilde z_k.
\end{equation*}
Because LSAC only has a few features, this yields a smooth, bounded predictor that satisfies our regularity assumptions without overfitting.

\paragraph{Communities and Crime.}
For the high-dimensional Crime dataset, we use a standard linear ridge regression ($\lambda = 1.0$) on the observables:
\begin{equation*}
    f^{bb}(x,v,s) = \theta_0 + \theta_x^T x + \theta_v v + \theta_s s.
\end{equation*}
Unlike LSAC, we intentionally restrict this predictor to a linear model without polynomial expansion. Because the Crime dataset contains over 100 features, a degree-2 polynomial expansion would drastically explode the feature space, leading to severe overfitting and matrix instability. The linear ridge regression ensures a stable, well-regularized, and continuous predictor.

\subsection{Implementation of the Discretized Post-Processor} \label{subsec:implementation_postprocessor}

\paragraph{Algorithm.}
Given a predictor $f^{bb}$ and a discretization level $L$
\begin{enumerate}
    \item We partition $V$ into $L$ equal-width intervals $\{\mathcal I_\ell\}_{\ell=1}^L$.
    \item In each interval $\ell$ and group $s$, we collect the scores $z_i = f^{bb}(x_i, s_i)$.
    \item We compute empirical quantiles $q_{\ell,s}$ on a fixed grid and form the barycenter quantiles $q_{\ell,\star} = \sum_s w_s q_{\ell,s}$.
    \item For a new test point with score $z_{new}$ in group $s$ and interval $\ell$, we:
    \begin{itemize}
        \item Compute its percentile $\tau$ within the group $s$ distribution via linear interpolation.
        \item Map $\tau$ to the barycenter distribution: $\widehat{y} = q_{\ell,\star}(\tau)$.
    \end{itemize}
\end{enumerate}

\paragraph{Plug-in Selection of $L^*$.}
We select the optimal discretization level $L^*$ using the formula derived in Theorem~\ref{thm:l_star}. We estimate the Lipschitz constant $L_{\mathrm{cdf}}$ using finite differences on the training data and set:
\begin{equation*}
  L^* \approx \left\lfloor \left( \frac{8 L_{\mathrm{cdf}}^2 \,n}{K \log(2 K n)} \right)^{1/3} \right\rfloor.
\end{equation*}
We cap $L^*$ to ensure a minimum number of samples per cell (typically $\ge 10$) to preserve statistical stability. This $L^*$ is used to build the
relaxed post-processor $f_\alpha = \sqrt{\alpha}\,f^{bb} + (1-\sqrt{\alpha})\,\widehat f_{L^*}$
for $\alpha \in \{0,0.1,\dots,1\}$, from which we report the empirical
risk–unfairness trade-off curves.

\subsection{Evaluation Metrics}

\paragraph{Unfairness Approximation.}
We approximate the integral $\mathcal{U}(f)$ using a Monte Carlo Riemann sum. We fix a grid of points $v_0$ on $[0,1]$. At each $v_0$, we select test points within a window $|V_i - v_0| \le h$, compute the group-wise quantiles, and calculate the squared Wasserstein-2 distance to the barycenter. These local distances are averaged over all valid windows.

\paragraph{Uncertainty Estimation.}
For all plots, we perform $B=30$ independent repetitions (reshuffling training/test splits). We report the mean and the 99\% confidence interval.

\subsection{Computational complexity} \label{an:complexity}

Since we operate strictly on 1D scalar predictions, Wasserstein distances and barycenters admit closed-form solutions via quantile functions. 
Thus, the computational complexity scales as $\tilde{\mathcal O}(n \log(n))$: according to the steps from \cref{alg:post_processing_compact}, partitioning the $n$ samples into $L^*$ intervals based on their proxy $V$ takes $\tilde{\mathcal O}(n)$; 
sorting the black box predictions (1D) within each interval $\ell$ to construct empirical c.d.f. $\widehat{F}_{\ell,s}$ (and quantile functions) takes $\tilde{\mathcal O}(n \log(n))$; 
computing the barycenter quantiles is $O(L^* K)$, which is negligible compared to sorting; and linearly aggregating these $K$ group quantiles to compute the local Wasserstein barycenters takes $\tilde{\mathcal{O}}(Kn)$.
Since $K$ is typically a small constant, the overall complexity is dominated by the sorting step, yielding $\tilde{\mathcal O}(n \log(n))$.

We empirically validate this complexity in \cref{fig:complexity} from \cref{an:additional_results}.

\section{Additional Experimental Results}
\label{an:additional_results}

In this section, we provide extensive supplementary evaluations to validate our theoretical claims, test the framework's robustness against its core assumptions, and compare against additional baselines and datasets.
\subsection{Real-World Datasets: Law School and Communities and Crime}

We provide here the absolute values for our metrics and extend our analysis to an additional real-world dataset. 
The \textbf{Base} model refers to a predictor trained to minimize standard squared risk without fairness constraints.
Results are given relative to base as described in Appendix~\ref{an:experiments}.

\textbf{Law School results (Table~\ref{tab:law_abs}).} 
The results are categorized by the method used to infer the latent proxy $V$:
\begin{itemize}
    \item \textbf{SCM Proxy:} Using the structural causal model from \citet{kusner2018counterfactual}, our method achieves nearly perfect Demographic Parity ($0.0001$) while maintaining an RMSE ($0.9078$) significantly closer to the Base model than the \texttt{Fair K} baseline ($0.9356$).
    \item \textbf{Standard VAE Proxy:} 
    In the standard VAE setting (see \cref{an:standard_VAE} for details), the Base model appears ``fairer'' in terms of CF initially ($0.0005$), suggesting that the VAE's latent space captures different data manifold aspects than the SCM. 
    However, our method still reduces the CF further to $0.0001$ while maintaining the best RMSE among all fairness-constrained competitors.
    \item \textbf{Fair VAE Proxy:} 
    Under a Standard VAE, the extracted proxy heavily encodes the sensitive attribute (proxy leakage), impacting both CF and DP for methods conditioning on it. 
    The Fair VAE corrects this by explicitly regularizing the latent space (adding the MMD penalty) to be independent of the sensitive attribute (see \cref{an:fair_VAE} for details). 
    On this disentangled proxy, our method successfully restores CF ($0.0142$) and DP ($0.0099$).
\end{itemize}

\begin{table}[h!]
  \centering
\small 
  \begin{tabular*}{\columnwidth}{@{\extracolsep{\fill}}llccc}
      \toprule
      Proxy & Method & RMSE $\downarrow$ & CF $\downarrow$ & DP $\downarrow$ \\
      \midrule
           & \texttt{Fair K} & 1.0861 & \textbf{0.0008} & \textbf{0.0011} \\
      SCM  & WFR             & \textbf{1.0518} & 0.0885 & 0.0050 \\
           & Ours            & 1.0539 & 0.0084 & 0.0022 \\
      \midrule
                   & \texttt{Fair K} & \textbf{1.0032} & 0.3467 & 0.6869 \\
      Standard VAE & WFR             & 1.0238 & 15.4846 & \textbf{0.0042} \\
                   & Ours            & \textbf{1.0032} & \textbf{0.2895} & 0.7155 \\
      \midrule
               & \texttt{Fair K} & 1.0380 & \textbf{0.0003} & 0.0093 \\
      Fair VAE & WFR             & \textbf{1.0241} & 0.4322 & \textbf{0.0043} \\
               & Ours            & 1.0266 & 0.0142 & 0.0099 \\
      \bottomrule
  \end{tabular*}
  \vspace{0.3em}
  \caption{\textbf{Relative to Base erformance on the Law School Dataset across different proxy $V$.} 
  We evaluate the impact of extracting the unobserved proxy $V$ via a known SCM, a Standard VAE, and a Fair VAE relative to the Base performances. 
  Under a Standard VAE, the extracted proxy heavily encodes the sensitive attribute (proxy leakage), impactong both CF and DP for methods conditioning on it. 
  A Fair VAE corrects this by explicitly regularizing the latent space (adding a MMD penalty) to be independent of the sensitive attribute. 
  On this disentangled Fair VAE proxy, our method successfully restores CF ($0.0142$) and DP ($0.0099$).}    
  \label{tab:law_abs}
\end{table}

\textbf{Communities and Crime results (Table~\ref{tab:crime_rel}).} 
To further evaluate the impact of proxy leakage, we apply our framework to the Communities and Crime dataset. 
As shown in Table~\ref{tab:crime_rel}, when using a Standard VAE, the extracted proxy remains entangled with the sensitive attribute. 
Because our method strictly conditions on this proxy, it inherits this ``proxy leakage," leading to degraded CF and DP. 
Conversely, employing the Fair VAE successfully disentangles the intrinsic ability from the sensitive attribute. 
Under this corrected proxy, our method reaches perfect CF ($0.000$) and DP ($0.001$). 
Furthermore, our method exhibits a superior fairness-utility trade-off: while the strict causal selection of Fair K also achieves perfect fairness under the Fair VAE, it degrades utility (RMSE: $1.632$), whereas our post-processor preserves better base accuracy (RMSE: $1.324$).

\begin{table}[ht!]
    \centering
    \caption{\textbf{Relative to Base performance on the Communities and Crime Dataset (relative to base).} This validates that mitigating proxy leakage via a Fair VAE allows our method to achieve perfect fairness while dominating the utility of Fair K.}
    \label{tab:crime_rel}
    \small
    \begin{tabular*}{\linewidth}{@{\extracolsep{\fill}}llccc}
        \toprule
        Proxy & Method & RMSE $\downarrow$ & CF $\downarrow$ & DP $\downarrow$ \\
        \midrule
        \multirow{3}{*}{Standard VAE} 
        & \texttt{Fair K} & 1.344 & \textbf{0.043} & 0.348 \\
        & WFR  & 1.296 & 0.275 & \textbf{0.006} \\
        & Ours & \textbf{1.106} & 0.065 & 0.147 \\
        \midrule
        \multirow{3}{*}{Fair VAE} 
        & \texttt{Fair K} & 1.632 & \textbf{0.000} & \textbf{0.001} \\
        & WFR  & \textbf{1.296} & 0.081 & 0.006 \\
        & Ours & 1.324 & 0.000 & 0.001 \\
        \bottomrule
    \end{tabular*}
\end{table}

\textbf{Optimal Discretization and Trade-offs (Law School).}
Figure~\ref{fig:lsac_analysis} replicates our algorithmic analysis on the real-world LSAC dataset. 
\textbf{Left:} The relationship between conditional unfairness and $L$ exhibits a clear bias-variance trade-off. 
Our theoretical $L^*_{th}$ (dashed line) closely aligns with the empirical minimum, confirming that our selection rule effectively scales to real-world distributions. 
\textbf{Center:} The Pareto frontier demonstrates a smooth, convex interpolation between risk and CF as the relaxation parameter $\alpha$ increases, empirically validating Proposition~\ref{prop:closed_form_relaxed}. 
\textbf{Right:} Consistent with our synthetic findings, when evaluated on global DP, WFR achieves a lower risk tradeoff since it explicitly targets global parity, whereas our method preserves conditional fairness.

\begin{figure}[ht!]
  \centering
  \begin{minipage}[t]{0.32\textwidth}
    \centering
    \includegraphics[width=\linewidth]{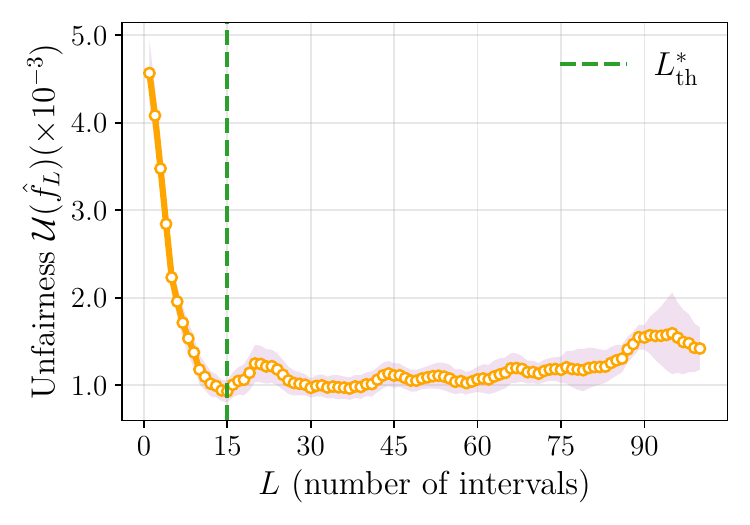}
  \end{minipage}
  \hfill 
  \begin{minipage}[t]{0.32\textwidth}
    \centering
    \includegraphics[width=\linewidth]{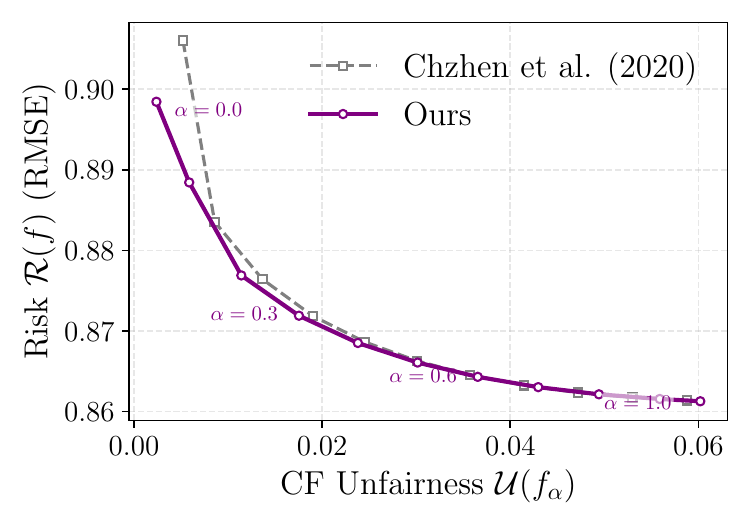}
  \end{minipage}
  \hfill
  \begin{minipage}[t]{0.32\textwidth}
    \centering
    \includegraphics[width=\linewidth]{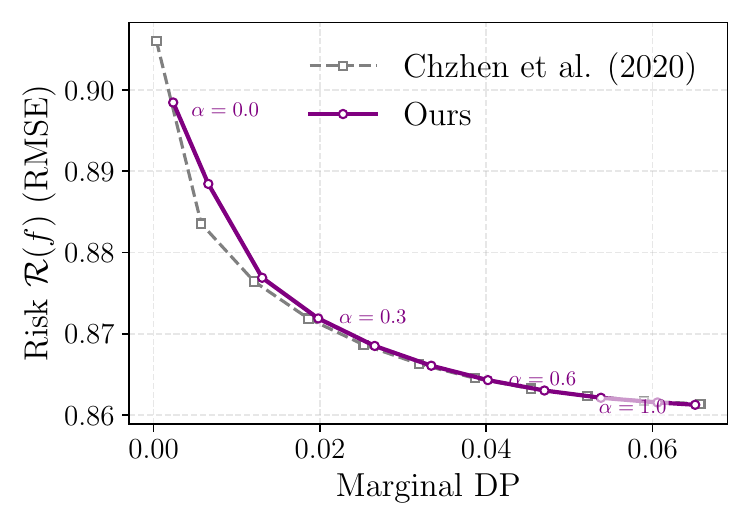}
  \end{minipage}
  \caption{\textbf{LSAC: Analysis of fairness and risk trade-offs.} 
  (Left) Unfairness vs. number of intervals $L$, showing the bias-variance trade-off with the theoretical $L^*_{\mathrm{th}}$ indicated by the vertical dashed line. 
  (Center) Pareto frontier of risk vs. unfairness for varying relaxation parameter $\alpha$, demonstrating smooth interpolation. 
  (Right) Pareto frontier of risk vs. Demographic Parity for different $\alpha$. Shaded regions represent 99\% confidence intervals.}
  \label{fig:lsac_analysis}
\end{figure}

\subsection{Absolute metrics on Synthetic dataset}

While the main text focuses on relative performance to highlight the impact of fairness interventions, we provide here the absolute values for the RMSE, Counterfactual Unfairness (CF), and Demographic Parity (DP) in Table~\ref{tab:synthetic_abs}.

The \textbf{Base} model refers to a predictor trained to minimize standard squared risk without fairness constraints, as described in Appendix~\ref{an:experiments}.

\begin{table}[ht!]
\centering
\caption{\textbf{Absolute performance metrics on Synthetic data.}}
\label{tab:synthetic_abs}
\small
\begin{tabular}{@{}lccc@{}}
  \toprule
  Method & RMSE $\downarrow$ & CF $\downarrow$ & DP $\downarrow$ \\
  \midrule
  Base            & \textbf{0.0058} & 0.3303 & 0.2433 \\
  \texttt{Fair K} & 0.6416 & \textbf{0.0000} & \textbf{0.0000} \\
  WFR             & 0.4946 & 0.0833 & 0.0001 \\
  Ours            & 0.5853 & 0.0005 & 0.0001 \\
  \bottomrule
\end{tabular}
\end{table}

\textbf{Synthetic Results (Table~\ref{tab:synthetic_abs}).} 
The absolute metrics highlight the information loss suffered by strict purely causal methods under our new setup. 
The unconstrained Base model achieves an exceptionally low RMSE ($0.0058$) because it fully utilizes both the biased and unbiased signals, but this comes at the cost of severe Counterfactual Unfairness ($0.3303$). 
Enforcing fairness requires breaking this statistical dependence, illustrating a natural ``price of fairness.'' 
However, because \texttt{Fair K} discards the observables to predict using only the latent variable $V$, it throws away the wide $\varepsilon_x$ variance, causing its RMSE to inflate to $0.6416$. 
In contrast, our method achieves near-perfect Counterfactual Fairness ($0.0005$) while preserving the unbiased predictive signal from the base model, yielding a superior RMSE ($0.5853$) compared to \texttt{Fair K}.

\subsection{Empirical Convergence and Computational Complexity}

Our theoretical analysis (\cref{an:complexity}) establishes a worst-case convergence rate of $\mathcal{O}(n^{-1/3})$. 
Figure~\ref{fig:convergence} evaluates this empirically on synthetic data. 
The log-log plot demonstrates that empirical unfairness decays significantly faster than the theoretical bound, reaching near-zero levels ($\mathcal{U}(\widehat{f}_{L^*}) < 10^{-5}$) with just $n \approx 2000$ samples. 

Furthermore, because our post-processor operates entirely on 1D scalars rather than high-dimensional features, it is highly scalable. 
Figure~\ref{fig:complexity} confirms an empirical $\mathcal{O}(n \log n)$ runtime scaling, efficiently processing $n=10^6$ samples in approximately $10$ seconds.

\begin{figure}[h!]
    \centering
    \begin{minipage}[t]{0.48\textwidth}
        \centering
        \includegraphics[width=\linewidth]{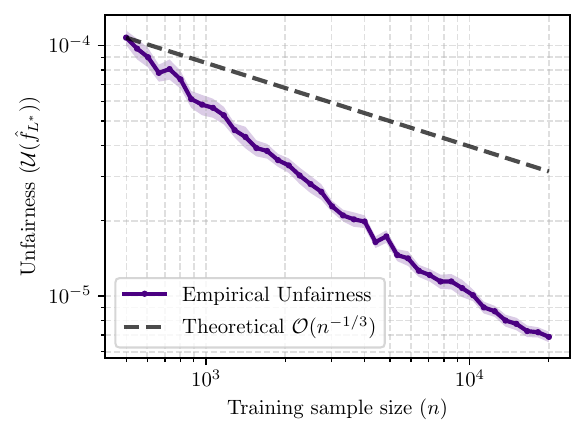}
        \caption{\textbf{Empirical convergence vs. Theoretical bound.} The theoretical rate is aligned at $n=500$. Empirical decay is notably faster than the worst-case bound, reaching near-zero unfairness rapidly.}
        \label{fig:convergence}
    \end{minipage}
    \hfill
    \begin{minipage}[t]{0.48\textwidth}
      \centering
      \includegraphics[width=\linewidth]{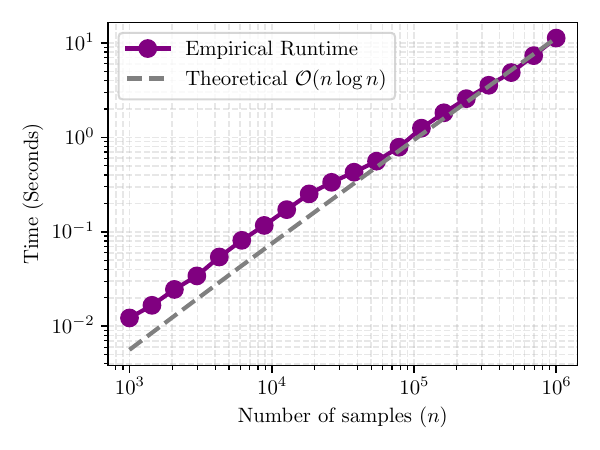}
      \caption{\textbf{Empirical runtime vs. sample size $n$.} The log-log slope tracks the theoretical $\mathcal{O}(n \log n)$ scaling, confirming the efficiency of our 1D optimal transport formulations.}
      \label{fig:complexity}
  \end{minipage}
\end{figure}

\subsection{Multi-Group Extensions ($K > 2$)}

\textbf{Setup.} We generalize our synthetic dataset to $K$ groups by sampling $S \in \{0, \dots, K-1\}$ uniformly and $V \sim \mathcal{U}(0,1)$. 
To create disjoint supports across all groups, we define the feature as $X = 2S - (K-1) + V + \varepsilon_x$ and the target as $Y = X + \varepsilon_y$. 
Crucially, we set the feature noise to $\varepsilon_x \sim \mathcal{U}(-0.5, 0.5)$ and the target noise to $\varepsilon_y \sim \mathcal{U}(-0.01, 0.01)$. 
This construction serves a dual purpose: the distinct group shifts penalize global Demographic Parity, while the wide variance of $\varepsilon_x$ injects legitimate predictive signal that exposes the information loss of strict causal methods which discard $X$.

\textbf{Results.} Our post-processor naturally accommodates settings with more than two sensitive groups. 
Our theoretical bounds indicate that while the discretization bias ($\mathcal{O}(1/L)$) is independent of $K$, the estimation error scales as $\mathcal{O}(\sqrt{K L/n})$. 
Our closed-form theoretical value of $L$ adapts by scaling as $L^\ast \propto K^{-1/3}$, shifting leftward to prevent overfitting. 

Figure~\ref{fig:unfairness_scaling_k} confirms this on synthetic data for $K \in \{3, 5, 10\}$: as $K$ increases and data sparsity becomes extreme, the theoretical $L^*_{th}$ dynamically adapts to locate the empirical minimum. 

Table~\ref{tab:multi_group} demonstrates that despite the separation of the $K$ groups, our method enforces near-perfect Counterfactual Fairness. 
Crucially, by post-processing a base model that observes $X$, our framework captures the legitimate $\varepsilon_x$ variance and achieves better accuracy (lower RMSE) than \texttt{Fair K}, which discards this signal.

\begin{figure}[h!]
    \centering
    \begin{minipage}{0.32\linewidth}
        \centering
        \includegraphics[width=\linewidth]{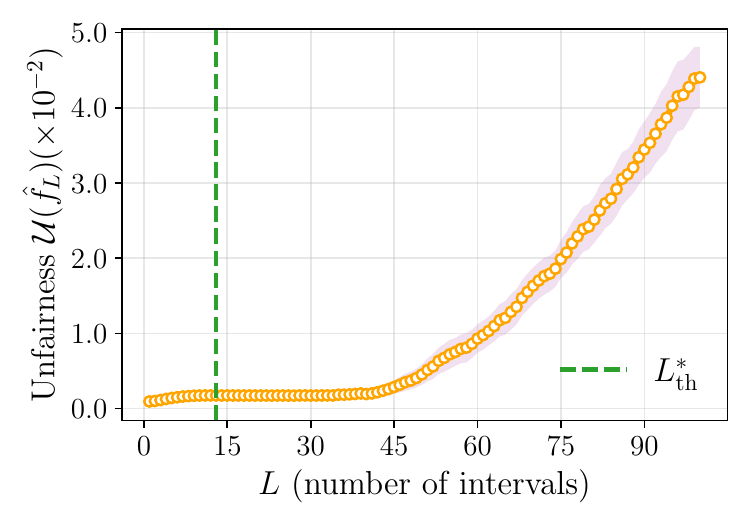}
    \end{minipage}
    \hfill
    \begin{minipage}{0.32\linewidth}
        \centering
        \includegraphics[width=\linewidth]{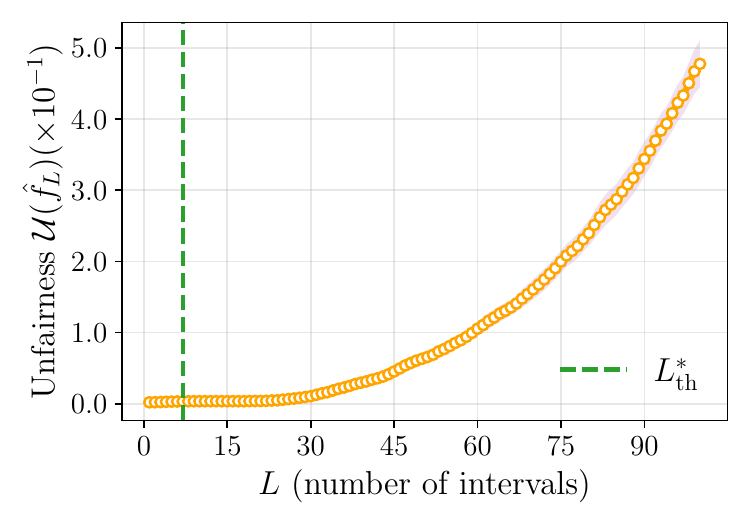}
    \end{minipage}
    \hfill
    \begin{minipage}{0.32\linewidth}
        \centering
        \includegraphics[width=\linewidth]{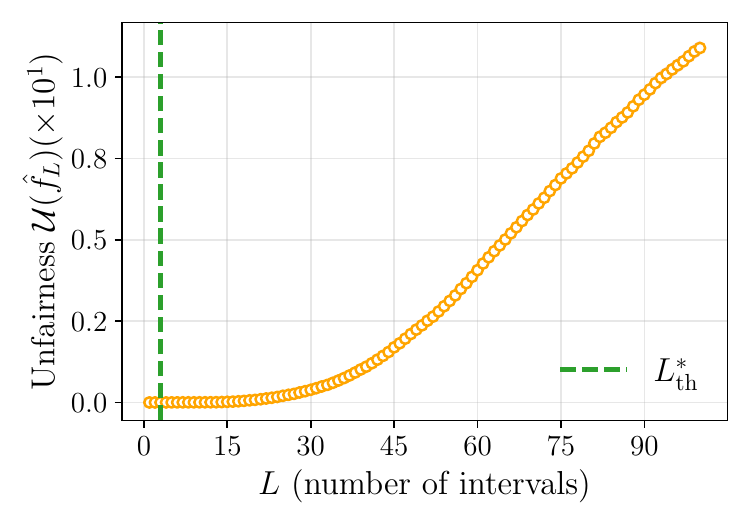}
    \end{minipage}
    \caption{\textbf{Empirical validation of the bias-error trade-off (Thm 1) across multi-class sensitive attributes ($K \in \{3, 5, 10\}$).} 
    Solid lines represent mean conditional unfairness (30 runs) with 95\% CI shaded. 
    As $K$ increases, extreme data sparsity causes the empirical c.d.f. estimates to degrade rapidly as $L$ grows (seen in the $K=10$ spike). 
    Our theoretical $L^*_{th}$ dynamically adapts to locate the empirical minimum.}
    \label{fig:unfairness_scaling_k}
\end{figure}

\begin{table}[h!]
  \centering
  \small
  \begin{tabular*}{\columnwidth}{@{\extracolsep{\fill}}l|ccc|ccc|ccc}
      \toprule
      & \multicolumn{3}{c|}{$K=3$} & \multicolumn{3}{c|}{$K=5$} & \multicolumn{3}{c}{$K=10$} \\
      Method & RMSE $\downarrow$ & CF $\downarrow$ & DP $\downarrow$ & RMSE $\downarrow$ & CF $\downarrow$ & DP $\downarrow$ & RMSE $\downarrow$ & CF $\downarrow$ & DP $\downarrow$ \\
      \midrule
      Base     & 1.0000 & 1.0000 & 1.0000 & 1.0000 & 1.0000 & 1.0000 & 1.0000 & 1.0000 & 1.0000 \\
      \texttt{Fair K} & 5.8251 & \textbf{0.0000} & \textbf{0.0000} & 9.8851 & \textbf{0.0000} & \textbf{0.0000} & 19.8666 & \textbf{0.0000} & \textbf{0.0000} \\
      WFR      & 5.6661 & 0.0004 & 0.0003 & 9.7596 & 0.0002 & 0.0002 & 19.7440 & 0.0001 & 0.0001 \\
      Ours     & 5.7223 & 0.0003 & 0.0002 & 9.7596 & 0.0002 & 0.0002 & 19.7440 & 0.0001 & 0.0001 \\
      \bottomrule
  \end{tabular*}
  \vspace{0.3em}
  \caption{\textbf{Relative to base summary for multiclass sensitive attribute settings ($K \in \{3, 5, 10\}$).} 
  Values are normalized relative to the Base model. 
  Because the $K$ groups have disjoint supports, enforcing fairness causes a massive relative increase in RMSE across all baselines. 
  Despite the data sparsity for $K=10$, our method successfully adapts its discretization to enforce Counterfactual Fairness while strictly improving upon the accuracy of \texttt{Fair K} by leveraging the legitimate feature variance $\varepsilon_x$ that strict causal methods discard.}
  \label{tab:multi_group}
\end{table}

\subsection{Robustness to model assumptions and hyperparameters}

Our theoretical framework relies on assumptions regarding the smoothness of the base predictor \cref{ass:distrib_densities,ass:lipschitz_cdf} and the estimation of hyperparameters like $L_{cdf}$. 
Here, we test the robustness of the method in cases where these assumptions are not satisfied.

\textbf{Base Predictors not satisfying \cref{ass:distrib_densities,ass:lipschitz_cdf}.} 
While we theoretically assume a Lipschitz continuous c.d.f. to bound discretization bias, Figure~\ref{fig:nonlinear_robustness} shows our method remains highly robust when applied to non-smooth models like Random Forests and Gradient Boosted Trees. 
While the initial spike in unfairness for small $L$ is more pronounced, the empirical discretization bias rapidly vanishes as $L$ increases.

\begin{figure}[ht!]
  \centering
  \includegraphics[width=0.6\linewidth]{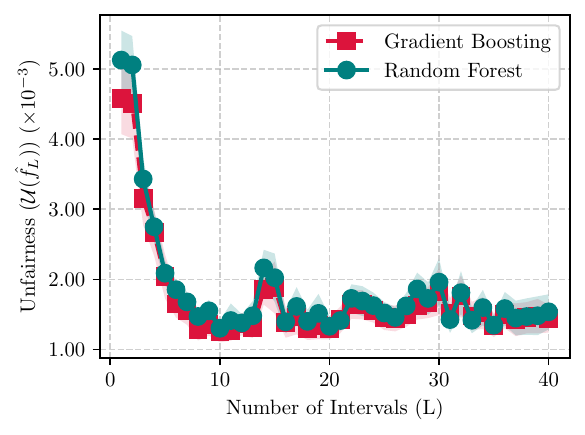}
  \caption{\textbf{Discretization Trade-off on the Law School Dataset for black-box models not satisfying \cref{ass:distrib_densities,ass:lipschitz_cdf}.} Applied to Gradient Boosting (red squares) and Random Forest (teal circles). The impact of non-smoothness dissipates as $L$ increases, proving the method works effectively without the Lipschitz assumption on the black-box model.}
  \label{fig:nonlinear_robustness}
\end{figure}

\textbf{Minority Group Imbalance.} 
To evaluate robustness to demographic scarcity, we generate our synthetic dataset while varying the minority sampling probability $w_{\min}$ from $0.5$ down to near-zero. 
Figure~\ref{fig:imbalance_robustness} illustrates the impact on RMSE and unfairness. 
As $w_{\min} \to 0$, the global RMSE (red) decreases; because the dataset becomes dominated by the majority group, the global barycenter converges to the majority distribution, resulting in negligible accuracy penalties. 
The empirical unfairness (orange) remains completely stable and near-zero ($\approx 5 \times 10^{-4}$) for the vast majority of the regime. 
It is only under extreme scarcity ($w_{\min} < 0.05$) that the unfairness exhibits a spike. 
When the groups are too imbalanced, the minority group contains too few samples to reliably estimate its empirical c.d.f. over the variance of the feature space, causing the OT mapping to fail.

\textbf{Misestimation of $L_{cdf}$.} 
Figure~\ref{fig:lcdf_robustness} evaluates robustness to misestimating the theoretical constant $L_{cdf}$ by a factor $c = \widehat{L}_{cdf}/L_{cdf}$. 
The method exhibits a remarkably wide stable zone around the true value ($c=1$). 
Degradation strictly requires severe misestimation: extreme underestimation ($c \le 0.1$) induces high approximation bias, while an order-of-magnitude overestimation ($c \ge 10$) is required to finally destabilize the OT estimation, strongly corroborating the robustness predicted by our theoretical $c^{1/3}$ scaling.

\begin{figure}[ht!]
  \centering
  \begin{minipage}[t]{0.48\textwidth}
      \centering
      \includegraphics[width=\linewidth]{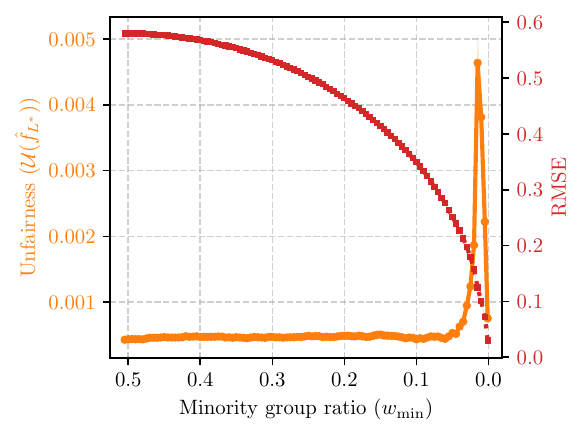}
      \caption{\textbf{Robustness to minority group imbalance ($w_{\min}$).} The method maintains stable, near-zero unfairness down to $w_{\min} \approx 0.05$. At extreme scarcity, the inability to reliably estimate the minority empirical c.d.f. causes a sharp spike in approximation bias, while global RMSE drops as the majority group dominates the metric.}
       \label{fig:imbalance_robustness}
  \end{minipage}
  \hfill
  \begin{minipage}[t]{0.48\textwidth}
      \centering
      \includegraphics[width=\linewidth]{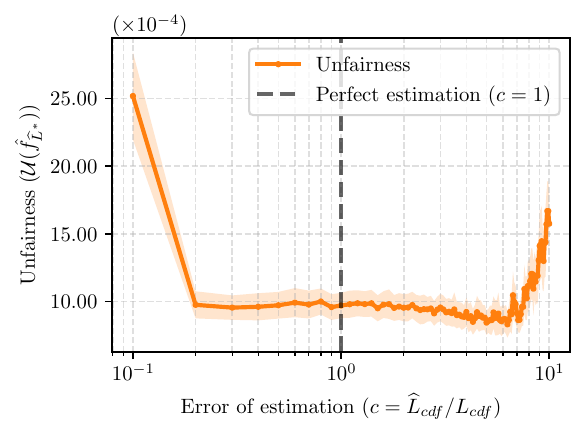} 
      \caption{\textbf{Ablation study on the robustness to $L_{cdf}$ estimation errors.} The method exhibits striking empirical robustness, corroborating the $c^{1/3}$ scaling predicted by our theoretical bounds.}
      \label{fig:lcdf_robustness}
  \end{minipage}
\end{figure}

\subsection{Impact of proxy quality and causal leakage}

The guarantee of our framework depends on the causal validity of the proxy $V$. 
We investigate two ways of proxy degradation: random measurement error and systematic proxy leakage. 

As shown in Figure~\ref{fig:random_noise}, if the proxy is estimated with independent Gaussian noise ($\widehat{V} = V_{true} + \mathcal{N}(0, \sigma^2)$), the injected noise is statistically independent of $S$. 
Consequently, fairness degrades only mildly (remaining below $0.05$), while RMSE smoothly decreases (dual axis). 
However, if the proxy systematically absorbs bias from observable features (proxy leakage, $\widehat{V}_\lambda = (1-\lambda)V_{true} + \lambda X$), Figure~\ref{fig:proxy_leakage} reveals a trade-off. 
Unfairness scales linearly with leakage before stabilizing, while RMSE decreases toward zero because the post-processor collapses back to the exact (but biased) unconstrained base predictions.
This validates our premise: CF relies heavily on accurately isolating ability from structural bias.

\begin{figure}[ht!]
  \centering
  \begin{minipage}[t]{0.48\textwidth}
      \centering
      \includegraphics[width=\linewidth]{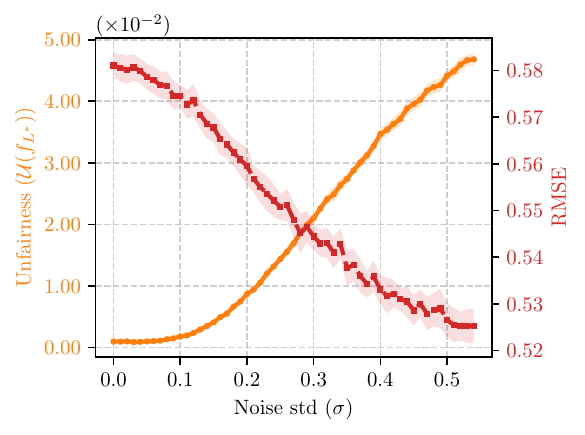}
      \caption{\textbf{Robustness to unbiased proxy noise.} Dual axes show that as noise $\sigma$ increases, unfairness rises mildly (peaking below $0.05$) while RMSE decreases.}
      \label{fig:random_noise}
  \end{minipage}
  \hfill
  \begin{minipage}[t]{0.48\textwidth}
      \centering
      \includegraphics[width=\linewidth]{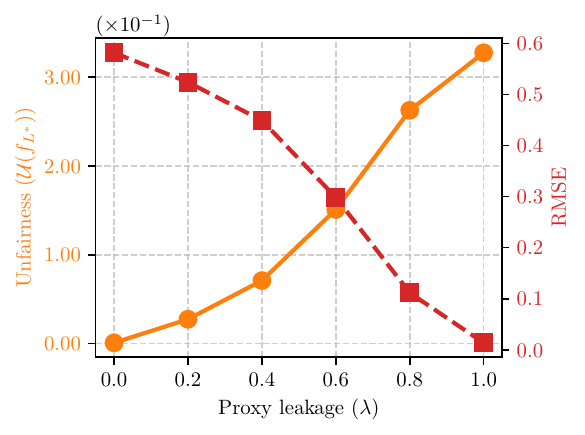}
      \caption{\textbf{Sensitivity to systematic proxy leakage.} Systematic leakage ($\lambda$) forces a sharp trade-off: unfairness scales linearly, while RMSE drops toward zero as the model recovers the biased base predictions.}
      \label{fig:proxy_leakage}
  \end{minipage}
\end{figure}

\begin{table}[ht!]
    \centering
    \small
    \begin{tabular*}{\linewidth}{@{\extracolsep{\fill}}llccc}
        \toprule
        Proxy & Method & RMSE & CF & DP \\
        \midrule
        \multirow{3}{*}{Standard  VAE} 
        & \texttt{Fair K} & 1.344 & \textbf{0.043} & 0.348 \\
        & WFR  & 1.296 & 0.275 & \textbf{0.006} \\
        & Ours & \textbf{1.106} & 0.065 & 0.147 \\
        \midrule
        \multirow{3}{*}{Fair VAE} 
        & \texttt{Fair K} & 1.632 & \textbf{0.000} & \textbf{0.001} \\
        & WFR  & \textbf{1.296} & 0.081 & 0.006 \\
        & Ours & 1.324 & 0.000 & 0.001 \\
        \bottomrule
    \end{tabular*}
    \vspace{0.3em}
    \caption{\textbf{Performance on the Communities and Crime Dataset and the Impact of Proxy Leakage (relative to base).} When using a Standard VAE, the extracted proxy remains entangled with the sensitive attribute, leading to degraded CF and DP. Employing a Fair VAE successfully disentangles the intrinsic ability. Under this corrected proxy, our post-processor preserves better base accuracy (RMSE: $1.324$) than the strict causal selection of Fair K (RMSE: $1.632$).}
    \label{tab:crime_rel}
\end{table}

\subsection{Comparison to in-processing Baselines}

Finally, we compare against \citet{zhou2025counterfactualfairnesscombiningfactual}, a recent in-processing baseline, in Table~\ref{tab:zhou_comparison}. 
We select this method as our primary baseline because it represents the state-of-the-art in characterizing and optimizing the inherent trade-off between predictive utility and counterfactual fairness. 
However, we note that direct comparisons are structurally asymmetric: \citet{zhou2025counterfactualfairnesscombiningfactual} operates under a deterministic, invertible SCM (Level 1 CF) that assumes exact counterfactuals can be uniquely generated, allowing them to evaluate fairness by explicitly penalizing pointwise differences during training. 
Conversely, our approach operates under causal uncertainty (Level 2 CF) as a model-agnostic post-processor.

Using \citet{zhou2025counterfactualfairnesscombiningfactual}'s extracted VAE latent space, their method successfully drives the deterministic Total Effect (TE) error to exactly $0.000$ due to explicit loss penalties, but this strict constraint severely degrades predictive utility. 
Despite using their highly clustered latent space (which violates our non atomic assumptions), our post-processor preserves the base model's accuracy much better while still achieving highly competitive conditional CF and exact Demographic Parity.

\begin{table}[ht!]
    \centering
    \small
    \begin{tabular*}{\linewidth}{@{\extracolsep{\fill}}lcccc}
        \toprule
        Method & RMSE $\downarrow$ & CF $\downarrow$ & TE $\downarrow$ & DP $\downarrow$ \\
        \midrule
        Base      & 1.032 & 0.002 & 0.136 & 0.002 \\
        Fair K    & 1.095 & \textbf{0.000} & \textbf{0.000} & \textbf{0.000} \\
        WFR       & 1.038 & 0.001 & 0.143 & \textbf{0.000} \\
        Zhou      & 1.057 & 0.000 & \textbf{0.000} & \textbf{0.000} \\
        Ours      & \textbf{1.038} & 0.001 & 0.148 & \textbf{0.000} \\
        \bottomrule
    \end{tabular*}
    \vspace{0.3em}
    \caption{\textbf{Comparison to Zhou et al. on the Law School dataset.} Using Zhou et al.'s extracted VAE latent space and deterministic exact counterfactuals (TE). While Zhou et al. drives TE to zero, it severely degrades predictive utility. Our post-processor preserves the base model's accuracy much better (RMSE: $1.038$) while achieving highly competitive conditional CF ($0.0012$).}
    \label{tab:zhou_comparison}
\end{table}

\clearpage

\newpage

\end{document}